\providecommand{\customgenericname}{}
\providecommand{\keywords}[1]
{
  \small	
  \textbf{\textit{Keywords---}} #1
}
\newtheorem{lemma}{Lemma}
\newtheorem{definition}{Definition}
\newtheorem{remark}{Remark}
\newtheorem{assumption}{Assumption}
\newtheorem{proposition}{Proposition}
\newtheorem{theorem}{Theorem}
\begin{document}
\numberwithin{equation}{section}
\title{Phase Diagram of Initial Condensation for Two-layer Neural Networks}
\author{
Zhengan Chen\textsuperscript{\rm 1,2}, %\thanks{Corresponding author: zhyy.sjtu@sjtu.edu.cn.}, 
Yuqing Li\textsuperscript{\rm 1,2} \thanks{Corresponding author: liyuqing\underline{~}551@sjtu.edu.cn},
Tao Luo\textsuperscript{\rm 1,2,3,4,6}, 
Zhangchen Zhou\textsuperscript{\rm 1,5}, 
Zhi-Qin John Xu\textsuperscript{\rm 1,3,4}\thanks{Corresponding author: xuzhiqin@sjtu.edu.cn.} \\
\textsuperscript{\rm 1}  School of Mathematical Sciences, Shanghai Jiao Tong University \\
\textsuperscript{\rm 2}  CMA-Shanghai, Shanghai Jiao Tong University\\
\textsuperscript{\rm 3} Institute of Natural Sciences, MOE-LSC, Shanghai Jiao Tong University \\
\textsuperscript{\rm 4} Qing Yuan Research Institute, Shanghai Jiao Tong University\\
\textsuperscript{\rm 5} Zhiyuan College, Shanghai Jiao Tong University\\
\textsuperscript{\rm 6} Shanghai Artificial Intelligence Laboratory

\{zhengan\underline{~}chen, liyuqing\underline{~}551, luotao41, zczhou1115, xuzhiqin\}@sjtu.edu.cn.
}
 
\date{\today}
\maketitle
\begin{abstract}

The phenomenon of distinct behaviors exhibited by neural networks under varying scales of initialization remains an enigma in deep learning research. In this paper,  based on the earlier work  by Luo et  al.~\cite{luo2021phase}, we present a phase diagram of initial condensation for two-layer neural networks. Condensation is  a phenomenon wherein the weight vectors of neural networks concentrate on isolated orientations during the training process, and it is a  feature in non-linear learning process  that enables neural networks to possess better generalization abilities. Our phase diagram serves to provide a comprehensive understanding of the dynamical regimes of neural networks and their dependence on the choice of hyperparameters related to initialization. Furthermore, we demonstrate in detail the underlying mechanisms by which small initialization leads to condensation at the initial training stage.

\end{abstract}
\keywords{two-layer  neural network, phase diagram, dynamical regime, condensation}
\allowdisplaybreaks
%\linenumbers
%%%%%%%%%%%%%%%%%%%%%%%%%%%%%%%%%%%%%%%%%%%%%%%%%%%%%%%%%%%%
\section{Introduction} \label{sec...Introduction}
%%%%%%%%%%%%%%%%%%%%%%%%%%%%%%%%%%%%%%%%%%%%%%%%%%%%%%%%%%%%
 
In deep learning, one intriguing observation is the distinct behaviors exhibited by Neural Networks (NNs) depending on the scale of initialization.
Specifically, in a particular regime, NNs trained with gradient descent can be viewed as a kernel regression predictor known as the Neural Tangent Kernel (NTK) ~\cite{Jacot2018Neural,Du2018Gradient,Huang2019Dynamics,Yuqing2022ResNet}, and   Chizat et al.~\cite{chizat2019lazy} identify it as the lazy training regime in which the parameters of   overparameterized NNs trained with gradient based methods hardly varies.  However, under a  different scaling, the Gradient Flow~(GF) of NN shows highly nonlinear features and a mean-field analysis~\cite{mei2018mean,rotskoff2018parameters,chizat2018global,sirignano2020mean}  has been established for infinitely wide two-layer networks  to analyze its behavior.  Additionally, small initialization is proven to give rise to condensation~\cite{maennel2018gradient,luo2021phase,zhou2021towards,zhou2022empirical}, a phenomenon where the weight vectors of NNs concentrate on isolated orientations during the training process. This is significant as NNs with condensed weight vectors are equivalent to  ``smaller'' NNs with fewer parameters, as revealed by the embedding principle (the loss landscape of a DNN ``contains'' all the critical points of all the narrower DNNs~\cite{zhang2021embedding,zhang2022embedding}),  thus reducing the complexity of the output functions of NNs. As the generalization error can be bounded in terms of the complexity~\cite{bartlett2002rademacher}, NNs with condensed parameters tend to possess  better generalization abilities. In addition, the study of the embedding principle also found the number of the descent directions in a condensed large network is no less than that of the equivalent small effective network, which may lead to easier training of a large network~\cite{zhang2021embedding,zhang2022embedding}. 

Taken together, identifying the regime of condensation and understanding the mechanism of condensation are important to understand the non-linear training of neural networks. Our contributions can be categorized into two aspects.

%%%%%%%%%%%%%%%%%%%%%%%%%%%%%%%%%%%%%%%%%%%%%%%%%%%%%%%%%%%%%%
\begin{figure}[ht]
    \includegraphics[width=\textwidth]{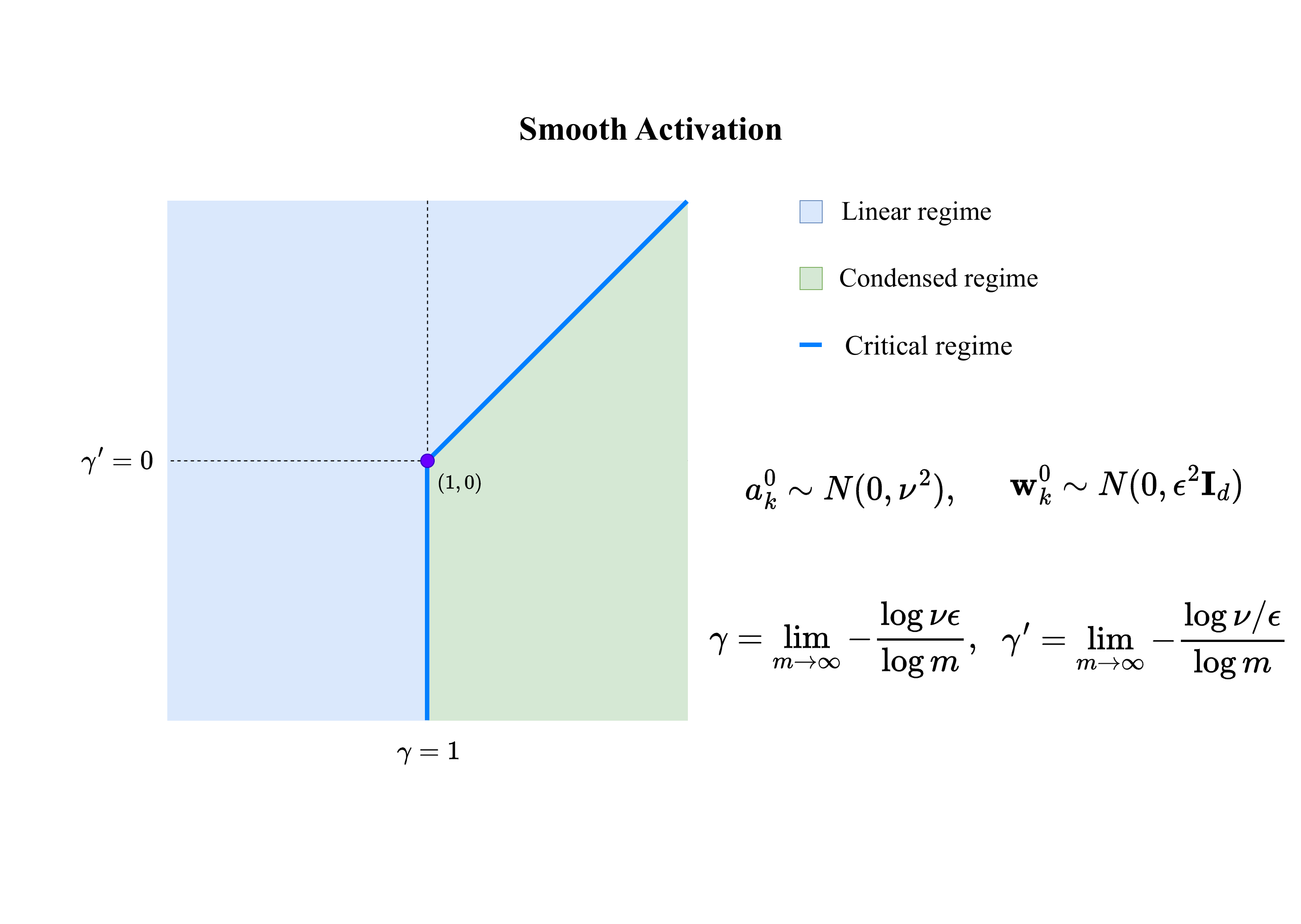}
    \caption{Phase diagram of two-layer NNs.}
    \label{fig:phase-diagram}
\end{figure}
%%%%%%%%%%%%%%%%%%%%%%%%%%%%%%%%%%%%%%%%%%%%%%%%%%%%%%%%%%%%%%
 Firstly, we established the phase diagram of initial condensation for two-layer neural networks (NNs) with a wide class of smooth activation functions, as illustrated \Cref{fig:phase-diagram}.  Note that the phase diagram drawn in \cite{luo2021phase} is only for two-layer wide ReLU networks and the phase diagram in \cite{zhou2022empirical} is empirical for three-layer wide ReLU networks. The phase diagram of a two-layer neural network refers to a graphical representation of the dynamical behavior of the network as a function of its initialization scales. In this diagram, different regions correspond to different types of behaviors exhibited by NNs, such as the linear regime, where the network behaves like a linear model, and the condensed regime, where the network exhibits the initial condensation phenomenon. 
 
Secondly, we reveal the mechanism of initial condensation for two-layer NNs and identify the directions towards which the weight parameters condense. There has been a flurry of recent papers  endeavor to  analyze    the mechanism underlying the condensation of NNs at the initial training stage under  small initialization ~\cite{maennel2018gradient,pellegrini2021analytic,luo2021phase,lyu2021gradient,zhou2022empirical}. For instance, Maennel et al.~\cite{maennel2018gradient} uncovered that for two-layer ReLU
NNs, the  GF limits the weight vectors to a certain number of directions depending sorely on  the input data. Zhou et al.~\cite{zhou2022empirical} showed empirically  that condensation is a common feature in non-linear training regime for three-layer ReLU NNs. Theoretically, Maennel et al.~\cite{maennel2018gradient} argued that GF prefers ``simple'' functions over  ``complex'' ones, and Zhou et. al.~\cite{zhou2021towards} demonstrated that the maximal number of condensed orientations at initial training stage is twice the multiplicity~(\Cref{def}) of the activation function. However, these proofs are heuristic as they do not account for the dynamics of parameters. Pellegrini and Biroli~\cite{pellegrini2021analytic} derived a  mean-field model  demonstrating that two-layer ReLU NNs, when trained with hinge loss and infinite data, lead to a linear classifier. Nonetheless, their analysis does not illustrate how the initial condensation depends on the scale of initialization and does not specify which directions NNs condense on.

% requires technical conditions   such as   the spherically symmetric data distribution and the initialization of parameters going to zero, both of which are unpractical in nature.   

 The organization of the paper is listed  as follows. In \Cref{section....RelatedWorks}, we  discuss some related works. In \Cref{section....Preliminaries},  we give  some preliminary introduction to our problems.  
 In \Cref{section....MainResults}, we state our main results and show some  empirical evidence.  In \Cref{section....Technique}, we give out the outline of proofs for our main results, and conclusions are drawn in \Cref{section....Conclusion}. All the details of the proof are deferred to the Appendix.
%%%%%%%%%%%%%%%%%%%%%%%%%%%%%%%%%%%%%%%%%%%%%%%%%%%%%%%%%%%%
%%%%%%%%%%%%%%%%%%%%%%%%%%%%%%%%%%%%%%%%%%%%%%%%%%%%%%%%%%%%
%%%%%%%%%%%%%%%%%%%%%%%%%%%%%%%%%%%%%%%%%%%%%%%%%%%%%%%%%%%%
%%%%%%%%%%%%%%%%%%%%%%%%%%%%%%%%%%%%%%%%%%%%%%%%%%%%%%%%%%%%
%%%%%%%%%%%%%%%%%%%%%%%%%%%%%%%%%%%%%%%%%%%%%%%%%%%%%%%%%%%%
%%%%%%%%%%%%%%%%%%%%%%%%%%%%%%%%%%%%%%%%%%%%%%%%%%%%%%%%%%%%
\section{Related Works}\label{section....RelatedWorks}
%%%%%%%%%%%%%%%%%%%%%%%%%%%%%%%%%%%%%%%%%%%%%%%%%%%%%%%%%%%%
There has been a rich literature on   the choice of initialization schemes   in order to
facilitate neural network training~\cite{glorot2010understanding,He2016Deep,mei2018mean,sirignano2020mean}, and most of the work identified the width $m$ as a hyperparameter, where the kernel regime is reached when the width grows towards infinity~\cite{Jacot2018Neural,yang2019scaling,Du2018Gradienta}. 
However, with the introduction of lazy training by Chizat et al.~\cite{chizat2019lazy},  instead of the width $m$, one shall take the initialization scale   as the relevant hyperparameter. The lazy training refers to the phenomenon in which a heavily over-parameterized NN trained with gradient-based methods could converge exponentially fast to zero training loss  with its parameters hardly varying, and  such   phenomenon can be observed in any non-convex model  accompanied by the choice of an appropriate
scaling  factor of the initialization. Follow-up
work by Woodworth et al.~\cite{woodworth2020kernel}   focus on
how the scale of initialization acts as a controlling quantity for the transition  between two very different
regimes, namely the kernel regime and the rich regime, for the matrix factorization problems. As for two-layer ReLU NNs, the  phase digram in Luo et al.~\cite{luo2021phase}  identified three regimes,  namely the   {linear} regime, the {critical} regime and the {condensed} regime, based on the relative change of input weights as the width $m$ approaches infinity. In summary, the selection  of  appropriate initialization scales plays a crucial role in the training of NNs. % As the scaling factors are always in specific power-law relations to the width $m$, studying the effect of initialization scale can therefore also be understood as studying the effect of width.

Several theoretical works studying the dynamical behavior of NNs with small initialization can be connected to implicit regularization effect provided by the weight initialization schemes,  and the condensation phenomenon has also been studied under different names. Ji and Telgarsky~\cite{jiTelgarsky2018align} analyzed the implicit regularization of GF on deep linear
networks  and observed the matrix alignment phenomena, i.e., weight matrices belonging to different layers share the same direction.  %In  Maennel et al.~\cite{maennel2018gradient},   the condensation phenomenon is disguised 
The weight quantization
effect~\cite{maennel2018gradient} in training two-layer ReLU NNs with small initialization is really the condensation phenomenon in disguise, and %it is 
so is the case for the weight cluster effect~\cite{brutzkus2019larger} in  learning the  MNIST task for a three-layer CNN. Luo et al.~\cite{luo2021phase} focused on how the   condensation phenomenon can be clearly detected by the choice of initialization schemes, but they did not show the reason behind it. Zhang et al.~\cite{zhang2021understanding,zhang2021embedding} proposed a general Embedding Principle of loss landscape of  DNNs,  showing  that a larger DNN can
experience critical points with condensed parameter, and its output is the same as that of a much
smaller  DNN, but their analysis did not involve its dynamical behavior. Zhou et al.~\cite{zhou2021towards} presented a theory for the initial direction towards which the weight vector  condenses, yet it is 
far from satisfactory.

%%%%%%%%%%%%%%%%%%%%%%%%%%%%%%%%%%%%%%%%%%%%%%%%%%%%%%%%%%%%
%%%%%%%%%%%%%%%%%%%%%%%%%%%%%%%%%%%%%%%%%%%%%%%%%%%%%%%%%%%%
%%%%%%%%%%%%%%%%%%%%%%%%%%%%%%%%%%%%%%%%%%%%%%%%%%%%%%%%%%%%
%%%%%%%%%%%%%%%%%%%%%%%%%%%%%%%%%%%%%%%%%%%%%%%%%%%%%%%%%%%%
%%%%%%%%%%%%%%%%%%%%%%%%%%%%%%%%%%%%%%%%%%%%%%%%%%%%%%%%%%%%
\section{Preliminaries}\label{section....Preliminaries}
%%%%%%%%%%%%%%%%%%%%%%%%%%%%%%%%%%%%%%%%%%%%%%%%%%%%%%%%%%%%
\subsection{Notations}\label{subsection...Notations}
%%%%%%%%%%%%%%%%%%%%%%%%%%%%%%%%%%%%%%%%%%%%%%%%%%%%%%%%%%%%
We begin this section by introducing some notations that will be used in the rest of this paper.  We set $n$ for the number of input samples and $m$ for the width of the neural network.
We set   $\fN(\vmu, \Sigma)$ as the normal distribution with mean $\vmu$ and covariance $\Sigma$. 
We let $[n]=\{1,2, \ldots, n\}$.
We denote vector $L^2$ norm as $\Norm{\cdot}_2$, vector or function $L_{\infty}$ norm as $\Norm{\cdot}_{\infty}$, matrix spectral~(operator) norm as $\Norm{\cdot}_{2\to 2}$,   matrix infinity norm as $\Norm{\cdot}_{\infty\to\infty}$, and  matrix Frobenius norm as $\Norm{\cdot}_{\mathrm{F}}.$
 For a matrix $\mathbf{A}$, we use $\mathbf{A}_{i, j}$ to denote its $(i, j)$-th entry. We will also use $\mathbf{A}_{i, \text { : }}$ to denote the $i$-th row vector of $\mathbf{A}$ and define $\mathbf{A}_{i, j: k}=$ $\left[\mathbf{A}_{i, j}, \mathbf{A}_{i, j+1}, \cdots, \mathbf{A}_{i, k}\right]^\T$ as part of the vector. Similarly, $\mathbf{A}_{:, i}$ is the $i$-th column vector and $\mathbf{A}_{j: k, i}$ is a part of the $i$-th column vector.
For a semi-positive-definite  matrix  $\mA,$ we denote its smallest eigenvalue by $\lambda_{\min}(\mA),$ and correspondingly,  its largest eigenvalue by $\lambda_{\max}(\mA)$.
We use $\fO(\cdot)$ and $\Omega(\cdot)$ for the standard Big-O and Big-Omega notations. %We take $C$ and $c$ for some universal constants, which might vary from line to line. 
We finally denote the set of continuous functions $f(\cdot):\sR\to\sR$ possessing   continuous derivatives of order up to and including $r$ by $\fC^{r}(\sR)$,  the set of analytic functions $f(\cdot):\sR\to\sR$   by $\fC^{\omega}(\sR)$, and $\left<\cdot,\cdot\right>$ for standard inner product between two vectors.

%%%%%%%%%%%%%%%%%%%%%%%%%%%%%%%%%%%%%%%%%%%%%%%%%%%%%%%%%%%%
\subsection{Problem Setup}\label{subsection....ProblemSetup}
%%%%%%%%%%%%%%%%%%%%%%%%%%%%%%%%%%%%%%%%%%%%%%%%%%%%%%%%%%%%

We use almost the same settings in Luo et al.~\cite{luo2021phase} by  starting with the original model%~\eqref{eq: 2LNN}
%%%%%%%%%%%%%%%%%%%%%%%%%%%%%%%%%%%%%%%%%%%%%%%%%%%%%%%%%%%%
\begin{equation}
    f_{\vtheta}(\vx) = \sum_{k=1}^{m}a_k\sigma(\vw_k^{\T}\vx),
\end{equation}
%%%%%%%%%%%%%%%%%%%%%%%%%%%%%%%%%%%%%%%%%%%%%%%%%%%%%%%%%%%%
whose parameters $\vtheta^0:=\mathrm{vec}(\vtheta_a^0,\vtheta_{\vw}^0)$ are initialized by
 %%%%%%%%%%%%%%%%%%%%%%%%%%%%%%%%%%%%%%%%%%%%%%%%%%%%%%%%%%%%
\begin{equation}
    a_k^0\sim \fN(0, \nu^2), \quad \vw_k^0\sim \fN(\vzero, \eps^2 \mI_d),%\quad \nu=\eps=\eps,
\end{equation}
%%%%%%%%%%%%%%%%%%%%%%%%%%%%%%%%%%%%%%%%%%%%%%%%%%%%%%%%%%%%
and the empirical risk is
%%%%%%%%%%%%%%%%%%%%%%%%%%%%%%%%%%%%%%%%%%%%%%%%%%%%%%%%%%%%
\begin{equation}
    \RS(\vtheta)=\frac{1}{2n}\sum_{i=1}^n {(f_{\vtheta}(\vx_i)-y_i)}^2.
\end{equation}
%%%%%%%%%%%%%%%%%%%%%%%%%%%%%%%%%%%%%%%%%%%%%%%%%%%%%%%%%%%%
Then the training dynamics based on gradient descent~(GD) at the continuous limit obeys the following gradient flow precisely reads: For $k\in[m]$,  
%%%%%%%%%%%%%%%%%%%%%%%%%%%%%%%%%%%%%%%%%%%%%%%%%%%%%%%%%%%%
\begin{equation}\label{eq...text...OriginalDynamics}
\begin{aligned}
\frac{\D a_k}{\D t}& = -\frac{1}{n}\sum_{i=1}^n \left(\sum_{k'=1}^{m}{a}_{k'}{\sigma({\vw}_{k'}^{\T}\vx_i)}-y_i\right)\sigma(\vw_k^{\T}\vx_i),\\
%%%%%%%%%%%%%%%%%%%%%%%%%%%%%%%%%%%%%%%%%%%%%%%%%%%%%%%%%%%%
\frac{\D \vw_k}{\D t}& = -\frac{1}{n}\sum_{i=1}^n \left(\sum_{k'=1}^{m}{a}_{k'}{\sigma({\vw}_{k'}^{\T}\vx_i)}-y_i\right)a_k\sigma^{(1)}(\vw_k^{\T}\vx_i)\vx_i.
\end{aligned}
\end{equation}
%%%%%%%%%%%%%%%%%%%%%%%%%%%%%%%%%%%%%%%%%%%%%%%%%%%%%%%%%%%%\
We identify the parameters $\vtheta_a:=\mathrm{vec}(\{a_k\}_{k=1}^{m})$  and $\vtheta_{\vw}:=\mathrm{vec}(\{\vw_k\}_{k=1}^{m})$ as variables of order one by setting 
%%%%%%%%%%%%%%%%%%%%%%%%%%%%%%%%%%%%%%%%%%%%%%%%%%%%%%%%%%%%
\[ 
 a_k=\nu \Bar{a}_k,\quad   \vw_k=\eps \Bar{\vw}_k,
\]
%%%%%%%%%%%%%%%%%%%%%%%%%%%%%%%%%%%%%%%%%%%%%%%%%%%%%%%%%%%%
then the rescaled dynamics can be written as 
%%%%%%%%%%%%%%%%%%%%%%%%%%%%%%%%%%%%%%%%%%%%%%%%%%%%%%%%%%%%
\begin{equation}
\begin{aligned}
\nu\frac{\D \bar{a}_k}{\D {t}}
& = -\frac{1}{n}\sum_{i=1}^n \left( \sum_{k'=1}^{m}\nu\eps\bar{a}_{k'}\frac{\sigma(\eps\bar{\vw}_{k'}^{\T}\vx_i)}{\eps}-y_i\right)\eps\frac{\sigma(\eps\bar{\vw}_k^\T\vx_i)}{\eps} , \\
%%%%%%%%%%%%%%%%%%%%%%%%%%%%%%%%%%%%%%%%%%%%%%%%%%%%%%%%%%%%     
\eps\frac{\D \bar{\vw}_k}{\D {t}}
& = -\frac{1}{n}\sum_{i=1}^n\left( \sum_{k'=1}^{m}\nu\eps\bar{a}_{k'}\frac{\sigma(\eps\bar{\vw}_{k'}^{\T}\vx_i)}{\eps}-y_i\right)\nu\bar{a}_k\sigma^{(1)}(\eps\bar{\vw}_k^\T\vx_i)\vx_i. 
\end{aligned}     
\end{equation}
%%%%%%%%%%%%%%%%%%%%%%%%%%%%%%%%%%%%%%%%%%%%%%%%%%%%%%%%%%%%
For the case where $\eps\ll 1$ and $\eps\gg 1$, the expressions $\frac{\sigma(\eps\bar{\vw}_k^\T\vx_i)}{\eps}$ and ${\sigma^{(1)}(\eps\bar{\vw}_k^\T\vx_i)}$ are hard to handle   at first glance. However, in the case where $\eps\ll 1$, under the condition~(\Cref{Assumption....ActivationFunctions}) that $\sigma(0)=0$ and $\sigma^{(1)}(0)=1$,  we obtain that 
%%%%%%%%%%%%%%%%%%%%%%%%%%%%%%%%%%%%%%%%%%%%%%%%%%%%%%%%%%%%
\[
\frac{\sigma(\eps\bar{\vw}_k^\T\vx_i)}{\eps}\approx \bar{\vw}_k^\T\vx_i, \quad {\sigma^{(1)}(\eps\bar{\vw}_k^\T\vx_i)}\approx 1,
\]
%%%%%%%%%%%%%%%%%%%%%%%%%%%%%%%%%%%%%%%%%%%%%%%%%%%%%%%%%%%%
hence $\frac{\sigma(\eps\bar{\vw}_k^\T\vx_i)}{\eps}$ and ${\sigma^{(1)}(\eps\bar{\vw}_k^\T\vx_i)}$ are  of order one. 

In the case where $\eps\gg 1$,   under the condition~(\Cref{Assumption....ActivationFunctions...NTK}) that 
%%%%%%%%%%%%%%%%%%%%%%%%%%%%%%%%%%%%%%%%%%%%%%%%%%%%%%%%%%%%
\[   \lim_{x\to-\infty}{\sigma^{(1)}(x)}=a,\quad\lim_{x\to+\infty}{\sigma^{(1)}(x)}=b, \] 
%%%%%%%%%%%%%%%%%%%%%%%%%%%%%%%%%%%%%%%%%%%%%%%%%%%%%%%%%%%%
we obtain that 
%%%%%%%%%%%%%%%%%%%%%%%%%%%%%%%%%%%%%%%%%%%%%%%%%%%%%%%%%%%%
\[
\frac{\sigma(\eps\bar{\vw}_k^\T\vx_i)}{\eps}\approx \sigma^{(1)}(\eps\bar{\vw}_k^\T\vx_i),
\]
%%%%%%%%%%%%%%%%%%%%%%%%%%%%%%%%%%%%%%%%%%%%%%%%%%%%%%%%%%%%
hence $\frac{\sigma(\eps\bar{\vw}_k^\T\vx_i)}{\eps}$ and ${\sigma^{(1)}(\eps\bar{\vw}_k^\T\vx_i)}$ are also of order one. Under these two aforementioned  conditions, $\sigma(\cdot)$ acts like a linear activation in the case where $\eps\ll 1$, and  acts like a leaky-ReLU activation activation in the case where $\eps\gg 1$, both of which are homogeneous functions.
Hence the above dynamics can be simplified into
%%%%%%%%%%%%%%%%%%%%%%%%%%%%%%%%%%%%%%%%%%%%%%%%%%%%%%%%%%%%
%%%%%%%%%%%%%%%%%%%%%%%%%%%%%%%%%%%%%%%%%%%%%%%%%%%%%%%%%%%%
\begin{equation}\label{eq...text...Prelim...ProbSetup...1}%NormalizedDynamics}
\begin{aligned}
\frac{\D \bar{a}_k}{\D {t}}& = -\frac{1}{n}\sum_{i=1}^n \left( \sum_{k'=1}^{m}\nu\eps\bar{a}_{k'}\frac{\sigma(\eps\bar{\vw}_{k'}^{\T}\vx_i)}{\eps}-y_i\right) \frac{\eps}{\nu} \frac{\sigma(\eps\bar{\vw}_k^\T\vx_i)}{\eps}, \\
%%%%%%%%%%%%%%%%%%%%%%%%%%%%%%%%%%%%%%%%%%%%%%%%%%%%%%%%%%%%     
\frac{\D \bar{\vw}_k}{\D {t}}& = -\frac{1}{n}\sum_{i=1}^n\left( \sum_{k'=1}^{m}\nu\eps\bar{a}_{k'}\frac{\sigma(\eps\bar{\vw}_{k'}^{\T}\vx_i)}{\eps}-y_i\right) \frac{\nu}{\eps} \bar{a}_k\sigma^{(1)}(\eps\bar{\vw}_k^\T\vx_i)\vx_i.
\end{aligned}     
\end{equation}
%%%%%%%%%%%%%%%%%%%%%%%%%%%%%%%%%%%%%%%%%%%%%%%%%%%%%%%%%%%%
We  hereby introduce two scaling parameters
%%%%%%%%%%%%%%%%%%%%%%%%%%%%%%%%%%%%%%%%%%%%%%%%%%%%%%%%%%%%
\begin{equation}
    \kappa :=  {\nu\eps}, \quad \kappa' :=\frac{\nu}{\eps},
\end{equation}
%%%%%%%%%%%%%%%%%%%%%%%%%%%%%%%%%%%%%%%%%%%%%%%%%%%%%%%%%%%%
then the   dynamics \eqref{eq...text...Prelim...ProbSetup...1} can be written as a {\emph{normalized}} flow
%%%%%%%%%%%%%%%%%%%%%%%%%%%%%%%%%%%%%%%%%%%%%%%%%%%%%%%%%%%%
\begin{equation}\label{eq...text...Prelim...ProbSetup...NormalizedDynamics}
\begin{aligned}
\frac{\D \bar{a}_k}{\D {t}}& = -\frac{1}{n}\sum_{i=1}^n \left( \sum_{k'=1}^{m}\kappa\bar{a}_{k'}\frac{\sigma(\eps\bar{\vw}_{k'}^{\T}\vx_i)}{\eps}-y_i\right)  \frac{1}{\kappa'} \frac{\sigma(\eps\bar{\vw}_k^\T\vx_i)}{\eps} ,               \\
%%%%%%%%%%%%%%%%%%%%%%%%%%%%%%%%%%%%%%%%%%%%%%%%%%%%%%%%%%%%     
\frac{\D \bar{\vw}_k}{\D {t}}& = -\frac{1}{n}\sum_{i=1}^n\left( \sum_{k'=1}^{m}\kappa\bar{a}_{k'}\frac{\sigma(\eps\bar{\vw}_{k'}^{\T}\vx_i)}{\eps}-y_i\right)  \kappa' \bar{a}_k\sigma^{(1)}(\eps\bar{\vw}_k^\T\vx_i)\vx_i.
\end{aligned}     
\end{equation}
%%%%%%%%%%%%%%%%%%%%%%%%%%%%%%%%%%%%%%%%%%%%%%%%%%%%%%%%%%%%
with the following initialization
\begin{equation}
\bar{a}_k^0\sim \fN(0,1), \quad \bar{\vw}_k^0\sim \fN(\vzero,\mI_d).
\end{equation}
% where we observe that   they are of standard normal distributions.
In the following discussion throughout this paper, we always refer to this rescaled model~\eqref{eq...text...Prelim...ProbSetup...NormalizedDynamics}  and drop all the ``bar''s of $\{a_k\}_{k=1}^m$ and  $\{\vw_k\}_{k=1}^m$  for  notational simplicity.

As  $\kappa$ and $\kappa'$ are always in specific power-law relations to the width $m$, we introduce two independent coordinates
\begin{equation}
\gamma:=\lim_{m\to\infty}-\frac{\log \kappa}{\log m}, \quad \gamma':=\lim_{m\to\infty}-\frac{\log\kappa'}{\log m},
\end{equation}
which meet all the guiding principles \cite{luo2021phase} for finding the coordinates of a phase diagram.
  
Before we end this section,  we list out some commonly-used initialization methods   with their scaling parameters as shown in  \Cref{table}. 
%%%%%%%%%%%%%%%%%%%%%%%%%%%%%%%%%%%%%%%%%%%%%%%%%%%%%%%%%%%%
%%%%%%%%%%%%%%%%%%%%%%%%%%%%%%%%%%%%%%%%%%%%%%%%%%%%%%%%%%%%
%%%%%%%%%%%%%%%%%%%%%%%%%%%%%%%%%%%%%%%%%%%%%%%%%%%%%%%%%%%%
%%%%%%%%%%%%%%%%%%%%%%%%%%%%%%%%%%%%%%%%%%%%%%%%%%%%%%%%%%%%
%%%%%%%%%%%%%%%%%%%%%%%%%%%%%%%%%%%%%%%%%%%%%%%%%%%%%%%%%%%%
%%%%%%%%%%%%%%%%%%%%%%%%%%%%%%%%%%%%%%%%%%%%%%%%%%%%%%%%%%%%
%%%%%%%%%%%%%%%%%%%%%%%%%%%%%%%%%%%%%%%%%%%%%%%%%%%%%%%%%%%%
\begin{table}
\begin{tabular}{ |p{3cm}|p{1cm}p{1cm}p{2cm}p{1.7cm}p{0.3cm}p{0.5cm} | }
 \hline
  Name& $\nu$  &$\eps$&$\kappa~(\nu\eps)$& $\kappa'~({\nu}/{\eps})$&$\gamma$   &$\gamma'$\\
 %%%%%%%%%%%%%%%%%%%%%%%%%%%% 
 \hline
 LeCun~\cite{lecun2012efficient}   & $\sqrt{\frac{1}{m}} $  & $\sqrt{\frac{1}{d}}$  &   $\sqrt{\frac{1}{md}}$& $\sqrt{\frac{d}{m}}$&$\frac{1}{2}$&$\frac{1}{2}$\\
%%%%%%%%%%%%%%%%%%%%%%%%%%%%
 % \hline
%%%%%%%%%%%%%%%%%%%%%%%%
He~\cite{he2015delving}   & $\sqrt{\frac{2}{m}} $  & $\sqrt{\frac{2}{d}}$  &   $\sqrt{\frac{4}{md}}$& $\sqrt{\frac{d}{m}}$&$\frac{1}{2}$&$\frac{1}{2}$\\
%%%%%%%%%%%%%%%%%%%%%%%%%%%%
 % \hline
%%%%%%%%%%%%%%%%%%%%%%%%
 Xavier~\cite{glorot2010understanding} & $\sqrt{\frac{2}{m+1}} $  & $\sqrt{\frac{2}{m+d}}$  &   $\sqrt{\frac{4}{(m+1)(d+1)}}$& $\sqrt{\frac{m+d}{m+1}}$&$1$&$0$\\
%%%%%%%%%%%%%%%%%%%%%%%%%%%%
 % \hline 
%%%%%%%%%%%%%%%%%%%%%%%%%%%%
Huang~\cite{Huang2019Dynamics} & $1 $  & $\sqrt{\frac{1}{m}}$  &   $\sqrt{\frac{1}{m}}$& $\sqrt{m}$&$\frac{1}{2}$&$-\frac{1}{2}$\\
%%%%%%%%%%%%%%%%%%%%%%%%%%%%
\hline 
\end{tabular}
%%%%%%%%%%%%%%%%%%%%%%%%%%%%%%%%%%%%%%%%%%%%%%%%%%%%%%%%%%%%
\caption{\label{table}Initialization methods with their scaling parameters}
\end{table}
%%%%%%%%%%%%%%%%%%%%%%%%%%%%%%%%%%%%%%%%%%%%%%%%%%%%%%%%%%%%
%%%%%%%%%%%%%%%%%%%%%%%%%%%%%%%%%%%%%%%%%%%%%%%%%%%%%%%%%%%%
%%%%%%%%%%%%%%%%%%%%%%%%%%%%%%%%%%%%%%%%%%%%%%%%%%%%%%%%%%%%
%%%%%%%%%%%%%%%%%%%%%%%%%%%%%%%%%%%%%%%%%%%%%%%%%%%%%%%%%%%%
%%%%%%%%%%%%%%%%%%%%%%%%%%%%%%%%%%%%%%%%%%%%%%%%%%%%%%%%%%%%
%%%%%%%%%%%%%%%%%%%%%%%%%%%%%%%%%%%%%%%%%%%%%%%%%%%%%%%%%%%%
%%%%%%%%%%%%%%%%%%%%%%%%%%%%%%%%%%%%%%%%%%%%%%%%%%%%%%%%%%%%
%%%%%%%%%%%%%%%%%%%%%%%%%%%%%%%%%%%%%%%%%%%%%%%%%%%%%%%%%%%%
%%%%%%%%%%%%%%%%%%%%%%%%%%%%%%%%%%%%%%%%%%%%%%%%%%%%%%%%%%%%
%%%%%%%%%%%%%%%%%%%%%%%%%%%%%%%%%%%%%%%%%%%%%%%%%%%%%%%%%%%%
%%%%%%%%%%%%%%%%%%%%%%%%%%%%%%%%%%%%%%%%%%%%%%%%%%%%%%%%%%%%
%%%%%%%%%%%%%%%%%%%%%%%%%%%%%%%%%%%%%%%%%%%%%%%%%%%%%%%%%%%%
%%%%%%%%%%%%%%%%%%%%%%%%%%%%%%%%%%%%%%%%%%%%%%%%%%%%%%%%%%%%
%%%%%%%%%%%%%%%%%%%%%%%%%%%%%%%%%%%%%%%%%%%%%%%%%%%%%%%%%%%%
%%%%%%%%%%%%%%%%%%%%%%%%%%%%%%%%%%%%%%%%%%%%%%%%%%%%%%%%%%%%
\section{Main Results}\label{section....MainResults}
%%%%%%%%%%%%%%%%%%%%%%%%%%%%%%%%%%%%%%%%%%%%%%%%%%%%%%%%%%%%
\subsection{Activation function and input }
%%%%%%%%%%%%%%%%%%%%%%%%%%%%%%%%%%%%%%%%%%%%%%%%%%%%%%%%%%%%
In this part, we shall impose some   technical conditions  on the activation function and input samples.
We start with a   technical condition~\cite[Definition 1]{zhou2021towards}  on the activation function $\sigma(\cdot)$ 
%%%%%%%%%%%%%%%%%%%%%%%%%%%%%%%%%%%%%%%%%%%%%%%%%%%%%%%%%%%%
%%%%%%%%%%%%%%%%%%%%%%%%%%%%%%%%%%%%%%%%%%%%%%%%%%%%%%%%%%%%
\begin{definition}[Multiplicity $p$]\label{def}
    %Suppose that $\sigma(x)$ satisfies the following condition, 
   $\sigma(\cdot):\sR\to\sR$ has multiplicity $p$
    if there exists an integer $p\geq 1$, such that  for all $0\leq s\leq p-1$, the $s$-th order derivative satisfies $\sigma^{(s)}(0)=0$, and $\sigma^{(p)}(0) \neq 0$.
\end{definition}
%%%%%%%%%%%%%%%%%%%%%%%%%%%%%%%%%%%%%%%%%%%%%%%%%%%%%%%%%%%%
\noindent We   list out some examples of activation functions with different multiplicity.
%%%%%%%%%%%%%%%%%%%%%%%%%%%%%%%%%%%%%%%%%%%%%%%%%%%%%%%%%%%%
\begin{remark}
~\\
\begin{itemize}
\item $tanh(x):=\frac{\exp(x)-\exp(-x)}{\exp(x)+\exp(-x)}$ is with multiplicity $p=1$;\\
\item $SiLU(x):=\frac{x}{1+ \exp(-x)}$ is with multiplicity $p=1$;\\
\item $xtanh(x):=\frac{x\exp(x)-x\exp(-x)}{\exp(x)+\exp(-x)}$ is with multiplicity $p=2$.
\end{itemize}
\end{remark}
%%%%%%%%%%%%%%%%%%%%%%%%%%%%%%%%%%%%%%%%%%%%%%%%%%%%%%%%%%%%
% \noindent We assume that
%%%%%%%%%%%%%%%%%%%%%%%%%%%%%%%%%%%%%%%%%%%%%%%%%%%%%%%%%%%%
\begin{assumption}[Multiplicity $1$]\label{Assumption....ActivationFunctions}
The activation function $\sigma\in\fC^2(\sR)$, and there exists a universal constant $C_L>0$, such that   its first and second  derivatives   satisfy 
%%%%%%%%%%%%%%%%%%%%%%%%%%%%%%%%%%%%%%%%%%%%%%%%%%%%%%%%%%%%
\begin{equation} 
   \Norm{\sigma^{(1)}(\cdot)}_{\infty}\leq C_L,\quad \Norm{\sigma^{(2)}(\cdot)}_{\infty}\leq C_L.
\end{equation}
%%%%%%%%%%%%%%%%%%%%%%%%%%%%%%%%%%%%%%%%%%%%%%%%%%%%%%%%%%%%
Moreover,
%%%%%%%%%%%%%%%%%%%%%%%%%%%%%%%%%%%%%%%%%%%%%%%%%%%%%%%%%%%%
\begin{equation} 
\sigma(0)=0,\quad \sigma^{(1)}(0)=1.
\end{equation}
%%%%%%%%%%%%%%%%%%%%%%%%%%%%%%%%%%%%%%%%%%%%%%%%%%%%%%%%%%%%
\end{assumption}
%%%%%%%%%%%%%%%%%%%%%%%%%%%%%%%%%%%%%%%%%%%%%%%%%%%%%%%%%%%%
\begin{remark}
We remark that $\sigma$ has multiplicity $1$.
$\sigma^{(1)}(0)=1$   can be replaced by  $\sigma^{(1)}(0)\neq 0$, and we set   $\sigma^{(1)}(0)=1$  for simplicity, and it can be easily satisfied by replacing the original activation $\sigma(\cdot)$ with $\frac{\sigma(\cdot)}{\sigma^{(1)}(0)}$.

We note that \Cref{Assumption....ActivationFunctions} can be satisfied by using the tanh activation:
%%%%%%%%%%%%%%%%%%%%%%%%%%%%%%%%%%%%%%%%%%%%%%%%%%%%%%%%%%%%
\[
 \sigma(x)=\frac{\exp(x)-\exp(-x)}{\exp(x)+\exp(-x)},
\]
%%%%%%%%%%%%%%%%%%%%%%%%%%%%%%%%%%%%%%%%%%%%%%%%%%%%%%%%%%%%
and the scaled SiLU activation
%%%%%%%%%%%%%%%%%%%%%%%%%%%%%%%%%%%%%%%%%%%%%%%%%%%%%%%%%%%%
\[
 \sigma(x)=\frac{2x}{1+ \exp(-x)}.
\]
%%%%%%%%%%%%%%%%%%%%%%%%%%%%%%%%%%%%%%%%%%%%%%%%%%%%%%%%%%%%
\end{remark}
%%%%%%%%%%%%%%%%%%%%%%%%%%%%%%%%%%%%%%%%%%%%%%%%%%%%%%%%%%%%
%%%%%%%%%%%%%%%%%%%%%%%%%%%%%%%%%%%%%%%%%%%%%%%%%%%%%%%%%%%%
%%%%%%%%%%%%%%%%%%%%%%%%%%%%%%%%%%%%%%%%%%%%%%%%%%%%%%%%%%%%
\begin{assumption}\label{Assumption....ActivationFunctions...NTK}
The activation function $\sigma\in\fC^{\omega}(\sR)$ and is not a polynomial function, also % there exists a universal constant $C_L>0$   such that for any $r\geq 1,$ its $r$-th  derivative and 
its function value at $0$ satisfy 
\begin{equation}\label{eq for assumption...the uniform constant on activation function +derivatives}
     {\sigma(0)}=0,
\end{equation}
%%%%%%%%%%%%%%%%%%%%%%%%%%%%%%%%%%%%%%%%%%%%%%%%%%%%%%%%%%%%
also   there exists a universal constant $C_L>0$,   such that  its first  and second derivatives  satisfy 
\begin{equation}\label{eq for assumption...the uniform constant on activation function +derivatives+more}
{\sigma^{(1)}(0)}=1,\quad\Norm{\sigma^{(1)}(\cdot)}_{\infty}\leq C_L,\quad\Norm{\sigma^{(2)}(\cdot)}_{\infty}\leq C_L.
\end{equation}
%%%%%%%%%%%%%%%%%%%%%%%%%%%%%%%%%%%%%%%%%%%%%%%%%%%%%%%%%%%%
Moreover, 
\begin{equation}\label{eq for assumption...limitexists}
\lim_{x\to-\infty}{\sigma^{(1)}(x)}=a,\quad\lim_{x\to+\infty}{\sigma^{(1)}(x)}=b, 
\end{equation}
%%%%%%%%%%%%%%%%%%%%%%%%%%%%%%%%%%%%%%%%%%%%%%%%%%%%%%%%%%%%
 and $a\neq b$.
\end{assumption}
%%%%%%%%%%%%%%%%%%%%%%%%%%%%%%%%%%%%%%%%%%%%%%%%%%%%%%%%%%%%
\begin{remark}

We note that \Cref{Assumption....ActivationFunctions...NTK} can be satisfied by by using the scaled SiLU activation: $$\sigma(x)=\frac{2x}{1+\exp(-x)},$$
where $a=0$ and $b=2$.
 
Some other functions also satisfy this assumption, for instance, the modified scaled softplus activation: $$\sigma(x)=2\left(\log(1+\exp(x))-\log 2\right),$$where $a=0$ and $b=2$.
\end{remark}
%%%%%%%%%%%%%%%%%%%%%%%%%%%%%%%%%%%%%%%%%%%%%%%%%%%%%%%%%%%%
%%%%%%%%%%%%%%%%%%%%%%%%%%%%%%%%%%%%%%%%%%%%%%%%%%%%%%%%%%%%
%%%%%%%%%%%%%%%%%%%%%%%%%%%%%%%%%%%%%%%%%%%%%%%%%%%%%%%%%%%%
\begin{assumption}[Non-degenerate data]\label{assumption...GenericData}
The training inputs and labels  $\fS =\{(\vx_i,y_i)\}_{i=1}^n$ satisfy  that   there exists a universal constant $c>0$, such that for all $i\in[n]$, 
%%%%%%%%%%%%%%%%%%%%%%%%%%%%%%%%%%%%%%%%%%%%%%%%%%%%%%%%%%%%
\[  \frac{1}{c}\leq \Norm{\vx_{i}}_2, \quad\Abs{y_{i}}\leq c,\] 
%%%%%%%%%%%%%%%%%%%%%%%%%%%%%%%%%%%%%%%%%%%%%%%%%%%%%%%%%%%%
and
%%%%%%%%%%%%%%%%%%%%%%%%%%%%%%%%%%%%%%%%%%%%%%%%%%%%%%%%%%%%
\begin{equation} 
\sum_{i=1}^n y_i {\vx}_i \neq \mathbf{0}.
\end{equation}
%%%%%%%%%%%%%%%%%%%%%%%%%%%%%%%%%%%%%%%%%%%%%%%%%%%%%%%%%%%%
We denote by
%%%%%%%%%%%%%%%%%%%%%%%%%%%%%%%%%%%%%%%%%%%%%%%%%%%%%%%%%%%%
\begin{equation} \label{eq...assump...GenericData...Z}
\vz:=%\frac{
\frac{1}{n}\sum_{i=1}^n y_i {\vx}_i,  
%{\Norm{\sum_{i=1}^n y_i {\vx}_i }_2},
\end{equation}
%%%%%%%%%%%%%%%%%%%%%%%%%%%%%%%%%%%%%%%%%%%%%%%%%%%%%%%%%%%%
and assume further that for some  universal constant $c>0$, the following holds
%%%%%%%%%%%%%%%%%%%%%%%%%%%%%%%%%%%%%%%%%%%%%%%%%%%%%%%%%%%%
\begin{equation} \label{eq...assump...GenericData...Z...}
\frac{1}{c}\leq \Norm{\vz}_2\leq c,
\end{equation}
%%%%%%%%%%%%%%%%%%%%%%%%%%%%%%%%%%%%%%%%%%%%%%%%%%%%%%%%%%%%
and its unit vector  
%%%%%%%%%%%%%%%%%%%%%%%%%%%%%%%%%%%%%%%%%%%%%%%%%%%%%%%%%%%%
\begin{equation} \label{eq...assump...GenericData...UnitZ}
\hat{\vz}:=\frac{ \sum_{i=1}^n y_i {\vx}_i}{\Norm{ \sum_{i=1}^n y_i {\vx}_i}_2}. 
%{\Norm{\sum_{i=1}^n y_i {\vx}_i }_2},
\end{equation}
%%%%%%%%%%%%%%%%%%%%%%%%%%%%%%%%%%%%%%%%%%%%%%%%%%%%%%%%%%%%
\end{assumption}
%%%%%%%%%%%%%%%%%%%%%%%%%%%%%%%%%%%%%%%%%%%%%%%%%%%%%%%%%%%%
%%%%%%%%%%%%%%%%%%%%%%%%%%%%%%%%%%%%%%%%%%%%%%%%%%%%%%%%%%%%
\begin{assumption}\label{Assump...Unparallel}
The training inputs and labels  $\fS =\{(\vx_i,y_i)\}_{i=1}^n$  satisfy that there exists a universal constant $c>0$, such that  for all $i\in[n]$, 
%%%%%%%%%%%%%%%%%%%%%%%%%%%%%%%%%%%%%%%%%%%%%%%%%%%%%%%%%%%%
\[\frac{1}{c}\leq\Norm{\vx_{i}}_2, \quad\Abs{y_{i}}\leq c,\] 
%%%%%%%%%%%%%%%%%%%%%%%%%%%%%%%%%%%%%%%%%%%%%%%%%%%%%%%%%%%%
and  all training inputs are non-parallel with each other, i.e.,~for any $i\neq j$ and~$i, j\in[n]$, \[\vx_{i}\nparallel \vx_{j}.\] 
\end{assumption}
%%%%%%%%%%%%%%%%%%%%%%%%%%%%%%%%%%%%%%%%%%%%%%%%%%%%%%%%%%%%
%%%%%%%%%%%%%%%%%%%%%%%%%%%%%%%%%%%%%%%%%%%%%%%%%%%%%%%%%%%%
We remark that the requirements in  \Cref{assumption...GenericData} are easier to meet compared with \Cref{Assump...Unparallel}, and both assumptions impose  the input sample $\fS=\{(\vx_i,y_i)\}_{i=1}^n$ to be of order one.
%%%%%%%%%%%%%%%%%%%%%%%%%%%%%%%%%%%%%%%%%%%%%%%%%%%%%%%%%%%%
\begin{assumption}\label{assump...LimitExistence}
The following limit exists
%%%%%%%%%%%%%%%%%%%%%%%%%%%%%%%%%%%%%%%%%%%%%%%%%%%%%%%%%%%%
\begin{equation} \label{eq...assump...definition...LimitExistence}
{\gamma}_1:=\lim_{m\to\infty} -\frac{\log \nu}{\log m},\quad {\gamma}_2:=\lim_{m\to\infty} -\frac{\log \eps}{\log m},
\end{equation}
%%%%%%%%%%%%%%%%%%%%%%%%%%%%%%%%%%%%%%%%%%%%%%%%%%%%%%%%%%%%
then by definition
\[
\gamma=\lim_{m\to\infty} -\frac{\log \nu\eps}{\log m}={\gamma}_1+\gamma_2,\quad\gamma'=\lim_{m\to\infty} -\frac{\log \frac{\nu}{\eps}}{\log m}={\gamma}_1-\gamma_2.
\]
\end{assumption}
%%%%%%%%%%%%%%%%%%%%%%%%%%%%%%%%%%%%%%%%%%%%%%%%%%%%%%%%%%%%
%%%%%%%%%%%%%%%%%%%%%%%%%%%%%%%%%%%%%%%%%%%%%%%%%%%%%%%%%%%%
% \subsection{Experimental Demonstration}
%%%%%%%%%%%%%%%%%%%%%%%%%%%%%%%%%%%%%%%%%%%%%%%%%%%%%%%%%%%%
%%%%%%%%%%%%%%%%%%%%%%%%%%%%%%%%%%%%%%%%%%%%%%%%%%%%%%%%%%%%
%%%%%%%%%%%%%%%%%%%%%%%%%%%%%%%%%%%%%%%%%%%%%%%%%%%%%%%%%%%%
\subsection{Regime Characterization   at Initial Stage}
%%%%%%%%%%%%%%%%%%%%%%%%%%%%%%%%%%%%%%%%%%%%%%%%%%%%%%%%%%%%
%The experimental demonstration calls for a theory  that will establish a consistent boundary to separate the linear and condensed regimes. In the following, we address this issue by providing two theorems in the form of informal statements, and their rigorous statements can be found in \Cref{appen...CellProblem}.
%%%%%%%%%%%%%%%%%%%%%%%%%%%%%%%%%%%%%%%%%%%%%%%%%%%%%%%%%%%%
% Before we state our theory that will establish a consistent boundary to separate the diagram into two areas, the linear regime area   and  the condensed regime area,  we introduce a quantity~\cite{luo2021phase} proven to be useful in the analysis of NNs.
Before presenting our theory that establishes a consistent boundary to separate the diagram into two distinct areas, namely the linear regime area and the condensed regime area, we introduce a quantity that has proven to be valuable in the analysis  of NNs.

It is known that the  output of a  two-layer NN  is linear with respect to $\vtheta_a$, hence  if the set of parameter $\vtheta_{\vw}$ remain stuck to its initialization throughout the whole training process, then the training dynamics of a  two-layer NN   can be linearized around the initialization. In the phase diagram, the linear regime area precisely corresponds to the region where the output function of  a  two-layer NN  can be well approximated by its linearized model, i.e., in the linear regime area, the following holds  
%%%%%%%%%%%%%%%%%%%%%%%%%%%%%%%%%%%%%%%%%%%%%%%%%%%%%%%%%%%%
\begin{equation}\label{eq...text...Taylor}
\begin{aligned}
%%%%%%%%%%%%%%%%%%%%%%%%%
%%%%%%%%%%%%%%%%%%%%%%%%%
 f_{\vtheta}(\vx)&\approx f\left(\vx, \vtheta(0)\right)+\left<\nabla_{\vtheta_a} f\left(\vx, \vtheta(0)\right),   \vtheta_a(t)- \vtheta_a(0) \right> \\
%%%%%%%%%%%%%%%%%%%%%%%%%
&~~~~~~~~~~~~~~~~~~+\left<\nabla_{\vtheta_{\vw}} f\left(\vx, \vtheta(0)\right),   \vtheta_{\vw}(t)- \vtheta_{\vw}(0) \right>.
\end{aligned}
%%%%%%%%%%%%%%%%%%%%%%%%%%%%%%%%%%%%%%%%%%%%%%%%%%%%%%%%%%%%
\end{equation}
%%%%%%%%%%%%%%%%%%%%%%%%%%%%%%%%%%%%%%%%%%%%%%%%%%%%%%%%%%%%
In general, this linear approximation  holds only when $\vtheta_{\vw}(t)$ remains within a small neighbourhood of $\vtheta_{\vw}(0)$.
Since the size of this neighbourhood  scales with $\norm{\vtheta_{\vw}(0)}_{2}$, therefore we use the following quantity as an indicator of how far $\vtheta_{\vw}(t)$ deviates away from $\vtheta_{\vw}(0)$ throughout the  training process 
%%%%%%%%%%%%%%%%%%%%%%%%%%%%%%%%%%%%%%%%%%%%%%%%%%%%%%%%%%%%
\begin{equation}\label{eq...text...MainResults...RegimeCharac}
    \sup\limits_{t\in[0,+\infty)}\mathrm{RD}(\vtheta_{\vw}(t))=\frac{\Norm{\vtheta_{\vw}(t)-\vtheta_{\vw}(0)}_{2}}{\norm{\vtheta_{\vw}(0)}_{2}}.
\end{equation}
%%%%%%%%%%%%%%%%%%%%%%%%%%%%%%%%%%%%%%%%%%%%%%%%%%%%%%%%%%%%%%

We demonstrate that as $m\to\infty$,  under suitable choice of the initialization scales~(the blue area in \Cref{fig:phase-diagram}), the NN training dynamics fall  into the linear regime~(\Cref{thm..linear}), and for large enough $m$,
%%%%%%%%%%%%%%%%%%%%%%%%%%%%%%%%%%%%%%%%%%%%%%%%%%%%%%%%%%%%
\[
\sup\limits_{t\in[0,+\infty)}\mathrm{RD}(\vtheta_{\vw}(t))\to 0.
\] 
%%%%%%%%%%%%%%%%%%%%%%%%%%%%%%%%%%%%%%%%%%%%%%%%%%%%%%%%%%%%

We also demonstrate that under some other choices of the initialization scales~(the green area in \Cref{fig:phase-diagram}), the NN training dynamics   fall into 
the condensed regime~(\Cref{thm..CondensedRegime}),  where 
%%%%%%%%%%%%%%%%%%%%%%%%%%%%%%%%%%%%%%%%%%%%%%%%%%%%%%%%%%%%
\[
\sup\limits_{t\in[0,+\infty)}\mathrm{RD}(\vtheta_{\vw}(t))\to +\infty,
\] 
%%%%%%%%%%%%%%%%%%%%%%%%%%%%%%%%%%%%%%%%%%%%%%%%%%%%%%%%%%%%
and  the phenomenon  of condensation   can be  observed, and $\vtheta_{\vw}$ condense toward the direction of $\vz$.  We observe that in both cases, as $\vtheta_{\vw}$ deviates   far away from initialization, the approximation \eqref{eq...text...Taylor}  fails, and NN training dynamics is essentially nonlinear with respect to $\vtheta_{\vw}$.  

Moreover, under the remaining choice of the initialization scales~(the solid blue line in \Cref{fig:phase-diagram}),   the NN training dynamics   fall into the critical regime area, and we conjecture that 
%%%%%%%%%%%%%%%%%%%%%%%%%%%%%%%%%%%%%%%%%%%%%%%%%%%%%%%%%%%%
\[
 \sup\limits_{t\in[0,+\infty)}\mathrm{RD}(\vtheta_{\vw}(t))\to \fO(1),
\] 
%%%%%%%%%%%%%%%%%%%%%%%%%%%%%%%%%%%%%%%%%%%%%%%%%%%%%%%%%%%%
whose study   is beyond the scope of this paper.
%%%%%%%%%%%%%%%%%%%%%%%%%%%%%%%%%%%%%%%%%%%%%%%%%%%%%%%%%%%%%
\begin{figure}[ht]
\centering
    \includegraphics[width=0.75\textwidth]{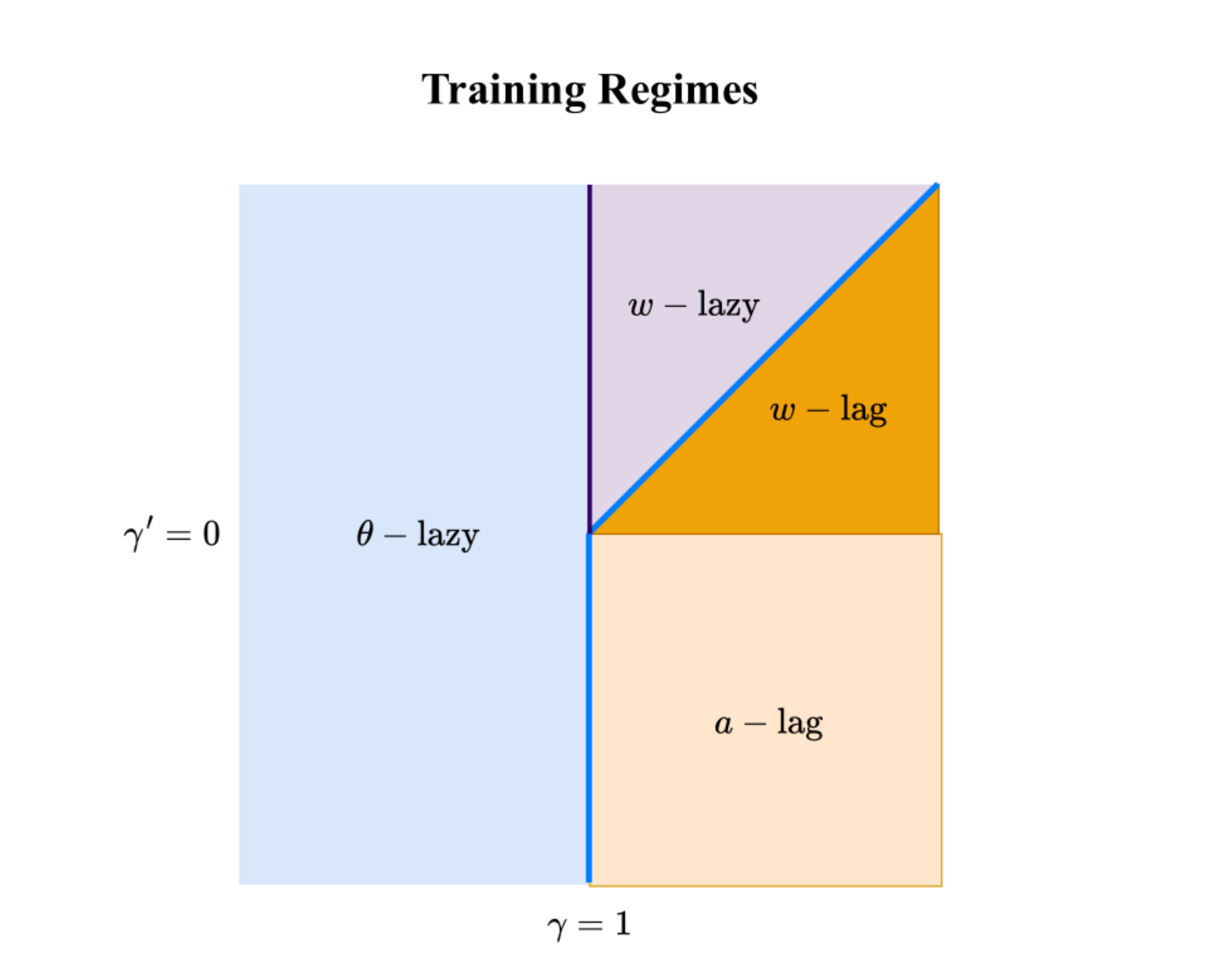}
    \caption{Different training regimes in the phase diagram.}
    \label{fig:Area}
\end{figure}
%%%%%%%%%%%%%%%%%%%%%%%%%%%%%%%%%%%%%%%%%%%%%%%%%%%%%%%%%%%%%% 
%%%%%%%%%%%%%%%%%%%%%%%%%%%%%%%%%%%%%%%%%%%%%%%%%%%%%%%%%%%%
 \begin{theorem}[Linear regime]\label{thm..linear}
Given any $\delta\in(0,1)$, under \Cref{Assumption....ActivationFunctions...NTK}, \Cref{Assump...Unparallel} and \Cref{assump...LimitExistence}, if $\gamma< 1$ or $\gamma'> \gamma-1$, then
 with probability at least $1-\delta$ over the choice of $\vtheta^0$,
%%%%%%%%%%%%%%%%%%%%%%%%%%%%%%%%%%%%%%%%%%%%%%%%%%%%%%%%%%%%
\begin{equation}
\lim_{m\to\infty}\sup\limits_{t\in[0,+\infty)}\frac{\norm{\vtheta_{\vw}(t)-\vtheta_{\vw}(0)}_2}{\norm{\vtheta_{\vw}(0)}_2}=0. 
\end{equation}  
%%%%%%%%%%%%%%%%%%%%%%%%%%%%%%%%%%%%%%%%%%%%%%%%%%%%%%%%%%%%
\end{theorem}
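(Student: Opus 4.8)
The plan is to control the dynamics of $\vtheta_{\vw}(t)$ directly via the normalized flow \eqref{eq...text...Prelim...ProbSetup...NormalizedDynamics}, showing that the parameters stay close enough to initialization that the training loss decays and $\vtheta_{\vw}$ never moves far in relative terms. First I would fix the scales and translate the hypothesis $\gamma<1$ or $\gamma'>\gamma-1$ into a power-law statement: recalling $\kappa=\nu\eps$, $\kappa'=\nu/\eps$, the condition is equivalent to $\kappa^2/\kappa' = \nu\eps^3 \to 0$ polynomially in $m$ (up to the usual $\log$-scaling subtleties encoded by $\gamma$ and $\gamma'$ via \Cref{assump...LimitExistence}); this is the product that measures the effective nonlinear coupling strength between the $a$-update and the $\vw$-update. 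I would also record the a priori bounds supplied by \Cref{Assumption....ActivationFunctions...NTK} and \Cref{Assump...Unparallel}: $\Norm{\sigma^{(1)}}_\infty,\Norm{\sigma^{(2)}}_\infty\le C_L$, $\sigma(0)=0$ so $|\sigma(\eps\vw_k^\T\vx_i)/\eps|\le C_L\Norm{\vw_k}_2\Norm{\vx_i}_2$, and $\Norm{\vx_i}_2$ bounded above and below by universal constants.

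Next I would set up a bootstrap/continuity argument. Define a stopping time $T^*$ as the first time any of a collection of controlled quantities exits a prescribed ball — e.g.\ $\Norm{\vtheta_a(t)-\vtheta_a(0)}_2$, $\Norm{\vtheta_{\vw}(t)-\vtheta_{\vw}(0)}_2$, and the residual $e_i(t):=f_{\vtheta(t)}(\vx_i)-y_i$ — and show that on $[0,T^*)$ one can prove strictly better bounds, forcing $T^*=+\infty$. Concretely: (i) with probability $\ge 1-\delta$ over $\vtheta^0$, standard Gaussian concentration gives $\Norm{\vtheta_{\vw}(0)}_2 = \Theta(\sqrt{md})$, $\Norm{\vtheta_a(0)}_2=\Theta(\sqrt m)$, and the initial residuals bounded; (ii) using the linearity of $f_{\vtheta}$ in $\vtheta_a$, the $\vtheta_a$-dynamics together with the nearly-linear $\vw$-dynamics make $\RS(\vtheta(t))$ decay — here I would either invoke a Gram-matrix/Polyak--Łojasiewicz type estimate (the relevant Gram matrix is the NTK at scale $\eps\gg1$, which by the leaky-ReLU-like limit \eqref{eq for assumption...limitexists} and the non-parallel data assumption is strictly positive definite on the sample), or more crudely bound $\int_0^\infty \Norm{e(t)}_2\,\D t$; (iii) feed the loss decay back into the $\vw$-update: integrating the second line of \eqref{eq...text...Prelim...ProbSetup...NormalizedDynamics} gives
\begin{equation}
\Norm{\vtheta_{\vw}(t)-\vtheta_{\vw}(0)}_2 \lesssim \kappa' \, C_L \, \max_k|a_k| \cdot \max_i\Norm{\vx_i}_2 \int_0^t \Norm{e(s)}_1\,\D s \lesssim \kappa'\,\nu\,\sqrt m\,\cdot(\text{bounded}),
\end{equation}
so that the relative deviation is $\lesssim \kappa'\nu\sqrt m/\sqrt{md} = \kappa'\nu/\sqrt d \asymp \nu^2/(\eps\sqrt d)$, which (after checking it matches $\kappa^2/\kappa'$ up to the harmless $\eps^2$ factor and re-deriving the condition in terms of $\gamma,\gamma'$) tends to $0$ polynomially precisely when $\gamma<1$ or $\gamma'>\gamma-1$.

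The main obstacle I anticipate is establishing the uniform-in-time loss decay, i.e.\ that $\int_0^\infty\Norm{e(s)}_1\,\D s$ (or the analogous $L^2$ quantity) is bounded uniformly in $m$ on the bootstrap interval, rather than merely finite for each $m$. This requires that the relevant Gram matrix stay bounded below by a positive constant throughout $[0,T^*)$ — which in turn needs (a) a quantitative positive-definiteness at initialization, obtained from \Cref{Assump...Unparallel} plus the fact that $\sigma$ is analytic and non-polynomial with $a\ne b$ (so the feature map $\vw\mapsto\sigma^{(1)}(\vw^\T\vx)\vx$ genuinely separates non-parallel $\vx_i$), and (b) a perturbation argument showing the Gram matrix moves by $o(1)$ once we know $\Norm{\vtheta_{\vw}(t)-\vtheta_{\vw}(0)}_2$ is small relative to $\Norm{\vtheta_{\vw}(0)}_2$ — this is the circular dependence the bootstrap is designed to break, so care is needed to order the estimates so that each improvement is strict. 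A secondary technical point is handling the $\eps\gg1$ regime (part of the linear area has large $\eps$), where $\sigma(\eps x)/\eps$ is only approximately $\sigma^{(1)}(\eps x)$; I would make this rigorous by writing $\sigma(\eps x)/\eps = \sigma^{(1)}(\xi)\,x$ for some $\xi$ via the mean value theorem and using $\Norm{\sigma^{(1)}}_\infty\le C_L$ to keep everything order one, absorbing the error terms into the bootstrap constants. Once these are in place, sending $m\to\infty$ with $\delta$ fixed yields the claimed limit, and the $\sup_{t\in[0,\infty)}$ is automatic since all bounds are uniform in $t$.
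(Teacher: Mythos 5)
Your high-level strategy is the one the paper actually uses: a stopping-time / bootstrap argument, positivity of the population Gram matrix (\Cref{append...thm...LeastEigenGram}, proved via Du et al.'s analyticity lemmas plus \Cref{Assump...Unparallel}), Bernstein concentration to transfer that positivity to the finite-width Gram at $t=0$ (\Cref{append...prop...LeastEigenvalueatInitial}), an exponential loss decay on the bootstrap interval (\Cref{append...prop...InitialDecay}, \Cref{append...prop...InitialDecay...WLazy}), then integration of the flow and division by $\Norm{\vtheta_{\vw}(0)}_2\sim\sqrt{md}$. So the skeleton is right. The gaps are all in the scaling bookkeeping, and they are not cosmetic: they are exactly what produces the condition ``$\gamma<1$ or $\gamma'>\gamma-1$.''

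First, your opening translation of the hypothesis is wrong. Writing $\nu\sim m^{-\gamma_1}$, $\eps\sim m^{-\gamma_2}$, with $\gamma=\gamma_1+\gamma_2$, $\gamma'=\gamma_1-\gamma_2$, one has $\nu\eps^3\sim m^{-(2\gamma-\gamma')}$, so ``$\nu\eps^3\to 0$ polynomially'' is the condition $\gamma'<2\gamma$, which is neither necessary nor sufficient for ``$\gamma<1$ or $\gamma'>\gamma-1$'' (take $\gamma=1/2,\gamma'=2$: the theorem's hypothesis holds, but $\nu\eps^3\to\infty$). Second, the displayed chain $\Norm{\vtheta_{\vw}(t)-\vtheta_{\vw}(0)}_2 \lesssim \kappa' C_L \max_k|a_k|\max_i\Norm{\vx_i}_2\int_0^t\Norm{e(s)}_1\,\D s\lesssim \kappa'\nu\sqrt m\cdot(\text{bounded})$ treats $\int_0^\infty\Norm{e}_1\,\D s$ as an $m$-independent constant, and that is where the proof actually lives. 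The paper's loss decay rate is $\sim \frac{m}{n}\nu^2\eps^2\left(\frac{\eps}{\nu}\lambda_a+\frac{\nu}{\eps}\lambda_{\vw}\right)$ in the $\vtheta$-lazy sub-regime (resp.\ $\sim\frac{m}{n}\nu\eps^3\lambda_a$ in the $\vw$-lazy sub-regime), so the time integral contributes a factor $\sim\frac{n}{m\nu\eps^3\lambda}$ — and that $1/m$ is precisely what turns your $\eps,\nu$-only estimate into $\mathrm{RD}(\vtheta_{\vw})\lesssim\sqrt{n\RS(\vtheta^0)}/(m\nu\eps)$ (resp.\ $\sqrt{n\RS(\vtheta^0)}/(m\eps^2)$), whose exponents reduce to $m^{\gamma-1}$ and $m^{\gamma-\gamma'-1}$ and vanish exactly on the claimed region. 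Your stated relative deviation $\nu^2/(\eps\sqrt d)\sim m^{-(\gamma+3\gamma')/2}$ gives an unrelated threshold $\gamma'>-\gamma/3$.

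A third point, not optional: the paper handles the two sub-regimes with \emph{different} Gram matrices, and your outline does not anticipate this. For $\gamma<1$ the PL-type inequality uses the full $\mG=\mG^{[a]}+\mG^{[\vw]}$ and keeps both $\vtheta_a$ and $\vtheta_{\vw}$ near initialization. For $\gamma\geq 1,\ \gamma'>\gamma-1$, $\vtheta_a$ is allowed to move a lot (so the $\mG^{[\vw]}$ block cannot be controlled uniformly in $t$), and the paper deliberately lower-bounds only $\lambda_{\min}(\mG^{[a]}(\vtheta(t)))$ and closes the bootstrap through the $a$-block alone (\Cref{subsec...theta_w}). Without noticing this asymmetry your bootstrap either breaks (if you insist on $\mG^{[\vw]}$ staying positive) or mislocates the boundary. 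Finally, ``initial residuals bounded'' is not quite right: $f_{\vtheta^0}(\vx)\sim\sqrt m\,\nu\eps$, so $\RS(\vtheta^0)$ can grow when $\gamma<1/2$; the paper carries $\sqrt{\RS(\vtheta^0)}$ through to the end, and the final exponent calculation shows it is harmless — but it does have to be tracked, not waved away.
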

%%%%%%%%%%%%%%%%%%%%%%%%%%%%%%%%%%%%%%%%%%%%%%%%%%%%%%%%%%%%
%%%%%%%%%%%%%%%%%%%%%%%%%%%%%%%%%%%%%%%%%%%%%%%%%%%%%%%%%%%%
\begin{remark}
The linear regime area is split into two parts, one is termed the  $\vtheta$-lazy area~(blue area in \Cref{fig:Area}), where  $\gamma<1$, the other is termed the  $\vw$-lazy area~(pink area in \Cref{fig:Area}), where  $\gamma\geq 1$ and $ \gamma'>\gamma-1>0$. 

In the $\vtheta$-lazy area, the following relation holds 
%%%%%%%%%%%%%%%%%%%%%%%%%%%%%%%%%%%%%%%%%%%%%%%%%%%%%%%%%%%%
\begin{equation}\label{eq...ThetaLazy}   
\lim_{m\to\infty}\sup\limits_{t\in[0,+\infty)}\frac{\norm{\vtheta(t)-\vtheta(0)}_2}{\norm{\vtheta(0)}_2}=0,
\end{equation}    
%%%%%%%%%%%%%%%%%%%%%%%%%%%%%%%%%%%%%%%%%%%%%%%%%%%%%%%%%%%%
whose detailed reasoning can be found in \Cref{subsec...thetalazy}, and in the $\vw$-lazy area, relation \eqref{eq...ThetaLazy} does not hold.
\end{remark}
%%%%%%%%%%%%%%%%%%%%%%%%%%%%%%%%%%%%%%%%%%%%%%%%%%%%%%%%%%%%
\begin{theorem}[Condensed regime]\label{thm..CondensedRegime}
Given any $\delta\in(0,1)$, under \Cref{Assumption....ActivationFunctions}, \Cref{assumption...GenericData} and \Cref{assump...LimitExistence}, if $\gamma> 1$ and $\gamma'<\gamma-1$, 
then	with probability at least $1-\delta$ over the choice of $\vtheta^0$,
there exists $T>0$, such that 
%%%%%%%%%%%%%%%%%%%%%%%%%%%%%%%%%%%%%%%%%%%%%%%%%%%%%%%%%%%% 
\begin{equation}
\lim_{m\to\infty}\sup\limits_{t\in[0,T]}\frac{\norm{\vtheta_{\vw}(t)-\vtheta_{\vw}(0)}_2}{\norm{\vtheta_{\vw}(0)}_2}=+\infty,
\end{equation}
%%%%%%%%%%%%%%%%%%%%%%%%%%%%%%%%%%%%%%%%%%%%%%%%%%%%%%%%%%%%
and
%%%%%%%%%%%%%%%%%%%%%%%%%%%%%%%%%%%%%%%%%%%%%%%%%%%%%%%%%%%% 
\begin{equation}
\lim_{m\to\infty} \sup\limits_{t\in[0,T]}\frac{\Norm{\vtheta_{\vw, \vz}(t) }_2}{\Norm{\vtheta_{\vw}(t)}_2} =1,
\end{equation}
%%%%%%%%%%%%%%%%%%%%%%%%%%%%%%%%%%%%%%%%%%%%%%%%%%%%%%%%%%%%
where $\vtheta_{\vw, \vz}(t):=  \left[\left<\vw_1, \hat{\vz}\right>, \left<\vw_2, \hat{\vz}\right>, \cdots, \left<\vw_m, \hat{\vz}\right>\right]^\T$.
\end{theorem}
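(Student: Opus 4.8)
The plan is to analyze the normalized flow \eqref{eq...text...Prelim...ProbSetup...NormalizedDynamics} in the regime $\gamma>1$, $\gamma'<\gamma-1$ (equivalently $\kappa\to 0$ and $\kappa/\kappa'\to 0$ fast enough) on a short time interval $[0,T]$, where $T$ is chosen independent of $m$. The starting point is the observation that for $\eps\ll 1$ and under \Cref{Assumption....ActivationFunctions} ($\sigma(0)=0$, $\sigma^{(1)}(0)=1$), we have $\sigma(\eps\vw_k^\T\vx_i)/\eps = \vw_k^\T\vx_i + \fO(\eps)$ and $\sigma^{(1)}(\eps\vw_k^\T\vx_i) = 1 + \fO(\eps)$ uniformly over $i\in[n]$ and over $k$ with $\Norm{\vw_k}_2$ controlled. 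Since $\kappa = \nu\eps\to 0$, the residual $\sum_{k'}\kappa\bar a_{k'}\sigma(\eps\bar\vw_{k'}^\T\vx_i)/\eps - y_i \to -y_i$, so to leading order the $\vw_k$-dynamics decouples into $m$ independent copies of a quadratic ODE driven by the data vector $\vz = \frac1n\sum_i y_i\vx_i$. The first step is therefore to make this reduction rigorous: show that on $[0,T]$ the true dynamics stays uniformly close (with high probability over $\vtheta^0$) to the ``leading-order'' dynamics
\[
\frac{\D \tilde a_k}{\D t} = \frac{1}{\kappa'}\,\vz^\T\tilde\vw_k,\qquad
\frac{\D \tilde\vw_k}{\D t} = \kappa'\,\tilde a_k\,\vz,
\]
with $\tilde a_k(0)=a_k^0$, $\tilde\vw_k(0)=\vw_k^0$. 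Controlling the error requires a Grönwall argument together with a priori bounds: one must show the residual stays $\fO(1)$ and $\Norm{\vw_k}_2$ does not blow up before time $T$, which is where the smallness of $\kappa$ relative to $\kappa'$ and the choice of a small but $m$-independent $T$ enter.

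The second step is to solve the leading-order system explicitly. Writing $u_k := \langle \tilde\vw_k,\hat\vz\rangle$ and keeping the component of $\tilde\vw_k$ orthogonal to $\hat\vz$ frozen (it is constant under the leading-order flow), the pair $(\tilde a_k, u_k)$ satisfies $\dot{\tilde a}_k = (\Norm{\vz}_2/\kappa')\,u_k$, $\dot u_k = \kappa'\Norm{\vz}_2\,\tilde a_k$, a linear hyperbolic system with conserved quantity and exponential growth rate $\Norm{\vz}_2$ along the expanding eigendirection. Hence $u_k(t)$ and $\tilde a_k(t)$ grow like $e^{\Norm{\vz}_2 t}$ with coefficients determined by the sign of $a_k^0 + (\text{sign depending on }\langle\vw_k^0,\hat\vz\rangle)$; importantly, the growing mode pumps the weight vector entirely into the $\pm\hat\vz$ direction, while the orthogonal complement stays at its $\fO(1)$ initial size. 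This immediately yields, for the leading-order flow, both $\Norm{\tilde\vtheta_\vw(t)}_2/\Norm{\vtheta_\vw(0)}_2 \to\infty$ (since $\Norm{\vtheta_\vw(0)}_2 = \Theta(\sqrt{md}\,)$ but the $m$ growing components dominate once $t>0$ is fixed — here one uses that a positive fraction of the $k$'s have $a_k^0$ and $\langle\vw_k^0,\hat\vz\rangle$ in the ``amplifying'' configuration, by a law-of-large-numbers / concentration argument over the Gaussian initialization) and the condensation ratio $\Norm{\tilde\vtheta_{\vw,\vz}(t)}_2/\Norm{\tilde\vtheta_\vw(t)}_2\to 1$.

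The third step is to transfer these two conclusions from $\tilde\vtheta_\vw$ to the true $\vtheta_\vw$ using the closeness estimate from step one: since the dominant growing part has size $\gg \Norm{\vtheta_\vw(0)}_2$ at any fixed $t\in(0,T]$ while the approximation error is lower-order, the ratios pass to the limit. I would organize this as: (i) concentration of $\vtheta^0$-dependent quantities ($\Norm{\vtheta_\vw(0)}_2$, the number of amplifying neurons, $\max_k\Norm{\vw_k^0}_2 = \fO(\sqrt{\log m}\,)$, $\max_k|a_k^0|$) on a high-probability event; (ii) a priori control of the true flow on $[0,T]$; (iii) Grönwall comparison with the leading-order flow; (iv) explicit solution and asymptotics of the leading-order flow; (v) conclusion. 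The main obstacle I anticipate is step (ii)–(iii): closing the a priori bound and the Grönwall estimate simultaneously, because the factor $1/\kappa'$ in the $a_k$-equation is large (when $\gamma_1 < \gamma_2$, i.e. $\gamma'<0$), so one must argue that it multiplies a quantity ($\sigma(\eps\vw_k^\T\vx)/\eps$) which, while $\fO(1)$, combines with the residual in a way that keeps $\D\bar a_k/\D t$ controlled over the full interval; equivalently, one needs the right bootstrap ordering so that the growth of $\tilde a_k$ (rate $\Norm{\vz}_2$, independent of $\kappa'$) is what actually governs the dynamics and the potentially-large prefactors cancel against each other in the product $\kappa = \kappa'\cdot(1/\kappa')\cdot(\cdots)$ appearing in the residual. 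Getting this cancellation bookkeeping exactly right, uniformly in $m$, is the crux.
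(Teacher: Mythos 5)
Your overall skeleton (Duhamel/Gr\"onwall comparison of the true flow with the linear dynamics driven by $\vz$, explicit solution of the hyperbolic system, decomposition of $\vtheta_{\vw}$ into the $\hat{\vz}$ and $\hat{\vz}^{\perp}$ components, and Gaussian concentration over $\vtheta^0$) is essentially the paper's, but there is a genuine gap in the central quantitative claim: you assert that $T$ can be chosen independent of $m$ and that ``once $t>0$ is fixed'' the growing components dominate, giving $\mathrm{RD}\to\infty$ and condensation ratio $\to 1$. This is false in the $\vw$-lag sub-regime $\gamma>1$, $0\leq\gamma'<\gamma-1$. There $\nu/\eps=m^{-\gamma'}\leq 1$, so the explicit linear solution gives, in normalized coordinates,
\begin{equation*}
\vw_k(t)-\vw_k(0)\approx \frac{\nu}{\eps}\sinh(\Norm{\vz}_2 t)\,a_k^0\,\hat{\vz}+\bigl(\cosh(\Norm{\vz}_2 t)-1\bigr)\left<\vw_k^0,\hat{\vz}\right>\hat{\vz},
\end{equation*}
whose norm over all $k$ is at most $C(t)\sqrt{m}$ with $C(t)=\fO(\cosh(\Norm{\vz}_2 t))$ at fixed $t$, while $\Norm{\vtheta_{\vw}(0)}_2\sim\sqrt{md}$. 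Hence at any $m$-independent $t$ the relative deviation stays $\fO(1)$ and the ratio $\Norm{\vtheta_{\vw,\vz}(t)}_2/\Norm{\vtheta_{\vw}(t)}_2$ stays bounded away from $1$ (the frozen $\hat{\vz}^{\perp}$ part is still of size $\sqrt{m(d-1)}$). Divergence requires $\cosh(\Norm{\vz}_2 t)$ itself to diverge with $m$, i.e. $T\gtrsim \frac{\gamma-\gamma'-1}{8}\log m$, and one must then also verify that the linearization ($e_i\approx -y_i$) remains valid up to that \emph{growing} time -- this is exactly where the hypothesis $\gamma'<\gamma-1$ (so $m\eps^2$ times the cubed neuron energy stays below a negative power of $m$) is consumed, via the stopping time $T_{\mathrm{eff}}$ and a bootstrap on the neuron energies. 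Your proposal never distinguishes the two sub-regimes and so never confronts this.

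A second, related issue: even in the $a$-lag sub-regime ($\gamma'<0$), where the growth factor $\nu/\eps=m^{-\gamma'}\to\infty$ does make a short time suffice, a \emph{fixed} $T$ need not lie inside the window on which the residual is controlled. For instance with $\gamma_1=\tfrac12$, $\gamma_2=\tfrac32$ (so $\gamma=2>1$, $\gamma'=-1$), after a fixed time the output scale $m\nu\eps\cdot a_k\cdot\Norm{\vw_k}_2\sim m\nu^2=\fO(1)$ is no longer negligible, so $e_i\approx-y_i$ cannot be maintained and the Gr\"onwall comparison you rely on breaks. The correct choice there is a time $T\sim m^{-\alpha}$ with $\alpha>0$ tuned against $\gamma,\gamma'$ (which still yields divergence because $m^{-\gamma'-\alpha}\to\infty$ for $\alpha<-\gamma'$). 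In short, the missing ingredient is the case split with the regime-dependent time scales $T\sim\Omega(\log m)$ versus $T\sim m^{-\alpha}$, together with the stopping-time/bootstrap argument showing the linear approximation survives exactly long enough in each case; the ``cancellation bookkeeping'' you flag as the crux is resolved in the paper by passing to the symmetrized variables $(\tfrac{\nu}{\eps}a_k,\vw_k)$, so the linear generator has norm $\Norm{\vz}_2$ and the $1/\kappa'$ factors never appear unpaired, but your write-up stops short of carrying this out.
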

%%%%%%%%%%%%%%%%%%%%%%%%%%%%%%%%%%%%%%%%%%%%%%%%%%%%%%%%%%%%

\begin{remark}\label{rmk...TimeT}
The condensed regime area is split into two parts, one is termed the  $\vw$-lag area~(orange area in \Cref{fig:Area}), where  $\gamma> 1$ and $0\leq\gamma'<\gamma-1$, the other is termed the  $a$-lag area~(yellow area in \Cref{fig:Area}), where  $\gamma> 1$ and $ \gamma'<0$. 

In the $\vw$-lag regime area, as illustrated in \eqref{eq...text..techinique...pairsreduce}, $\vtheta_{\vw}$ waits for a period of time of order one until $\vtheta_{a}$  attains a magnitude that is commensurate with that of $\vtheta_{\vw}$, and
the time $T$ in \Cref{thm..CondensedRegime} satisfies that 
%%%%%%%%%%%%%%%%%%%%%%%%%%%%%%%%%%%%%%%%%%%%%%%%%%%%%%%%%%%%
\begin{equation}\label{eq...rmk...logm}
 T \geq  \log\left(\frac{1}{4}\right)+{\frac{\gamma-\gamma'-1}{8}}\log(m),   
\end{equation}
%%%%%%%%%%%%%%%%%%%%%%%%%%%%%%%%%%%%%%%%%%%%%%%%%%%%%%%%%%%%
and  as $m\to\infty$, $T\to\infty$. 

In the $a$-lag regime area, as illustrated in \eqref{eq...text..techinique...pairsreduce}, $\vtheta_{a}$ waits for a period of time of order one until $\vtheta_{\vw}$  attains a magnitude that is commensurate with that of $\vtheta_{a}$, and it is exactly during this  interval  of time that the phenomenon of initial condensation  can be observed.  Hence for some $\alpha>0$, the time $T$ in \Cref{thm..CondensedRegime} can be chosen  as
%%%%%%%%%%%%%%%%%%%%%%%%%%%%%%%%%%%%%%%%%%%%%%%%%%%%%%%%%%%%
\begin{equation}\label{eq...rmk...-alpha}
m^{-\alpha}\leq T \leq  2m^{-\alpha},   
\end{equation}
%%%%%%%%%%%%%%%%%%%%%%%%%%%%%%%%%%%%%%%%%%%%%%%%%%%%%%%%%%%%
which is  obviously of order one, see \Cref{subsection...append...condense...aLag} for more details.
%%%%%%%%%%%%%%%%%%%%%%%%%%%%%%%%%%%%%%%%%%%%%%%%%%%%%%%%%%%%
%%%%%%%%%%%%%%%%%%%%%%%%%%%%%%%%%%%%%%%%%%%%%%%%%%%%%%%%%%%%
\end{remark}
%%%%%%%%%%%%%%%%%%%%%%%%
%%%%%%%%%%%%%%%%%%%%%%%%%%%%%%%%%%%%%%%%%%%%%%%%%%%%%%%%%%%%%%
%%%%%%%%%%%%%%%%%%%%%%%%%%%%%%%%%%%%%%%%%%%%%%%%%%%%%%%%%%%%
\begin{figure}[ht]
\centering
 \subfloat[$\gamma=1.4,~~R^2=0.983$]{
        \includegraphics[width=0.33\textwidth]{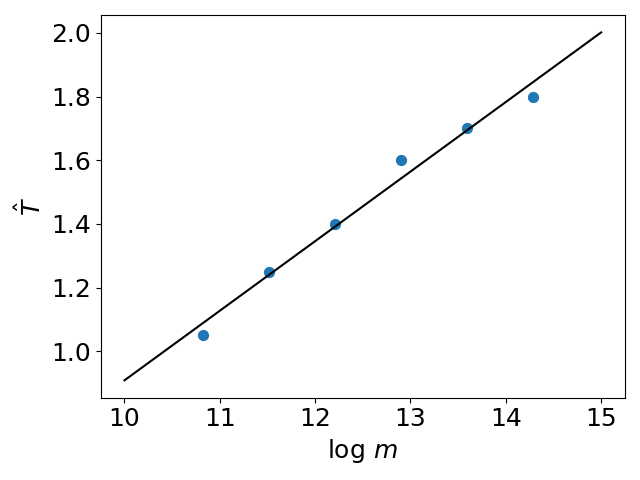}
    }
    \subfloat[$\gamma=1.6,~~R^2=0.991$]{
        \includegraphics[width=0.33\textwidth]{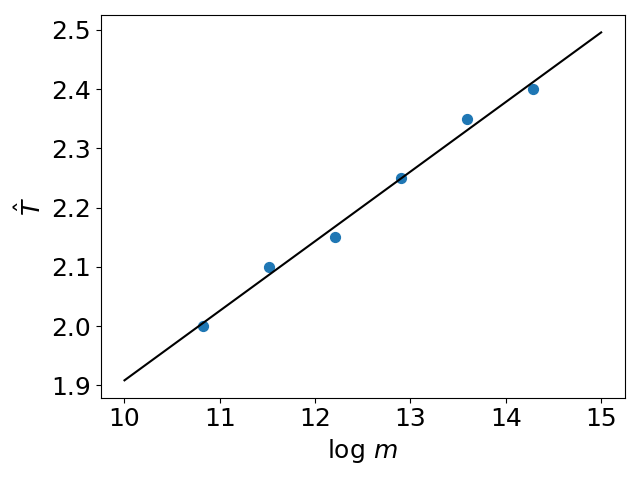}
    }
    \subfloat[$\gamma=1.8,~~R^2=0.995$]{
        \includegraphics[width=0.33\textwidth]{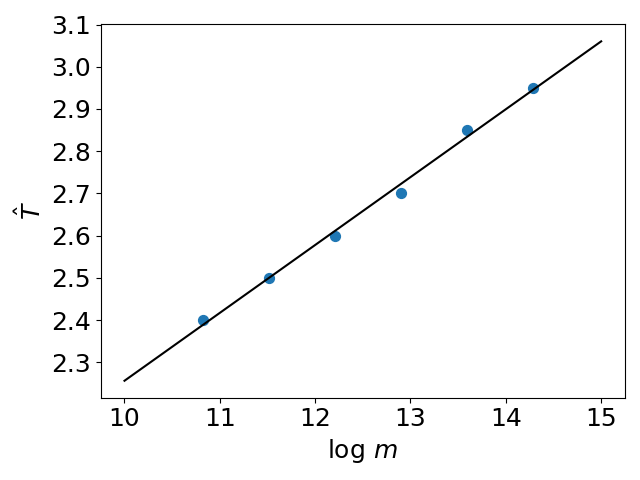}
    }
    \caption{$\hat{T}$~(ordinate) vs $\log m$~(abscissa) with different values of $\gamma$ but fixed $\gamma'=0$ for two-layer NNs with tanh activation  indicated by  blue dots. The black line is a linear fit, and $R^2$ is {\emph{the coefficient of determination}} that provides information about the goodness of fit of a linear regression. The closer $R^2$ is to $1$, the better the model fits the data.}
    \label{fig:logm}
\end{figure}
%%%%%%%%%%%%%%%%%%%%%%%%%%%%%%%%%%%%%%%%%%%%%%%%%%%%%%%%%%%%
%%%%%%%%%%%%%%%%%%%%%%%%%%%%%%%%%%%%%%%%%%%%%%%%%%%%%%%%%%%%%%
%%%%%%%%%%%%%%%%%%%%%%%%%%%%%%%%%%%%
\subsection{Experimental Demonstration}
%%%%%%%%%%%%%%%%%%%%%%%%%%%%%%%%%%%%%%%%%%%%%%%%%%%%%%%%%%%%
In order to distinguish between  the $\vw$-lag regime and $a$-lag regime, it is necessary to estimate the time $T$ in \Cref{rmk...TimeT}, which is also a reasonale way to empirically validate our theoretical analysis.  An empirical approximation of $T$  can be obtained by determining the time interval $\hat{T}$,  starting from the initial stage at $t=0$, up to the point at which the quantity 
%%%%%%%%%%%%%%%%%%%%%%%%%%%%%%%%%%%%%%%%%%%%%%%%%%%%%%%%%%%%
$
\frac{\Norm{\vtheta_{\vw, \vz}(t) }_2}{\Norm{\vtheta_{\vw}(t)}_2}
$
%%%%%%%%%%%%%%%%%%%%%%%%%%%%%%%%%%%%%%%%%%%%%%%%%%%%%%%%%%%%
reaches its climax for sufficiently large values of  $m$~($m=50000, 100000, 200000, 400000,800000,1600000$), as we are unable to run experiments at $m\to\infty$.
%%%%%%%%%%%%%%%%%%%%%%%%%%%%%%%%%%%%%%%%%%%%%%%%%%%%%%%%%%%%
\subsubsection{$\vw$-lag regime}\label{subsec}
%%%%%%%%%%%%%%%%%%%%%%%%%%%%%%%%%%%%%%%%%%%%%%%%%%%%%%%%%%%%
We validate the effectiveness of our estimates  by performing a simple linear regression to visualize the relation \eqref{eq...rmk...logm}, where $\hat{T}$ is set as the response variable and $\log m$ as the single independent variable. \Cref{fig:logm} shows that NNs with different values of $\gamma$ but fixed $\gamma'$  satisfy the relation \eqref{eq...rmk...logm}, thereby demonstrating the accuracy and reliability of our estimates.
%%%%%%%%%%%%%%%%%%%%%%%%%%%%%%%%%%%%%%%%%%%%%%%%%%%%%%%%%%%%
%%%%%%%%%%%%%%%%%%%%%%%%%%%%%%%%%%%%%%%%%%%%%%%%%%%%%%%%%%%%
\begin{figure}[ht]
\centering
 \subfloat[$\gamma=1.2,~~R^2=0.936$]{
        \includegraphics[width=0.34\textwidth]{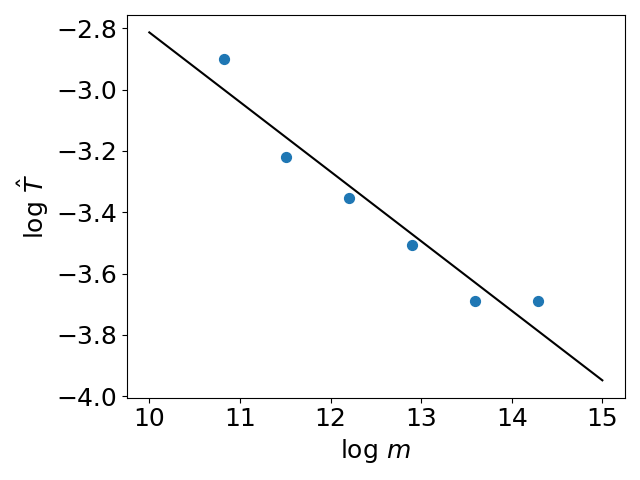}
    }
    \subfloat[$\gamma=1.4,~~R^2=0.977$]{
        \includegraphics[width=0.34\textwidth]{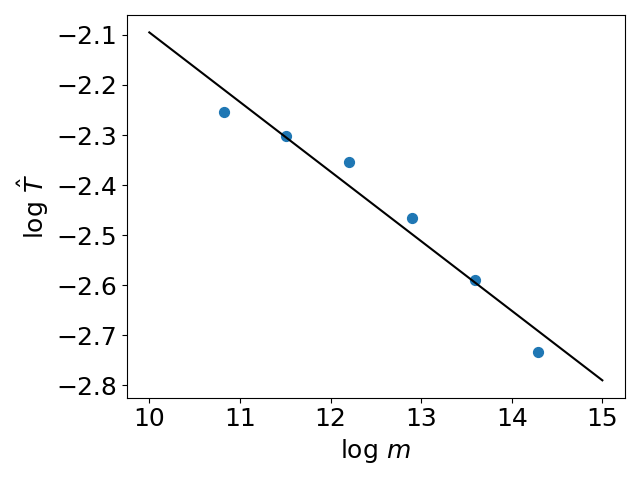}
    }
    \subfloat[$\gamma=1.6,~~R^2=0.993$]{
        \includegraphics[width=0.34\textwidth]{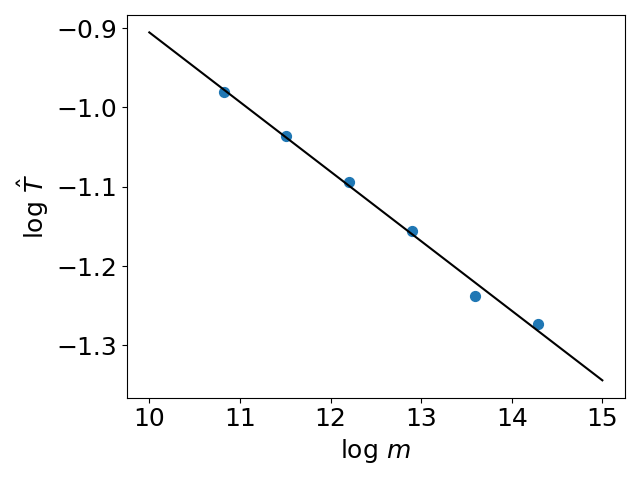}
        }
  
    \caption{$\log \hat{T}$~(ordinate) vs $\log m$~(abscissa) with different values of $\gamma$ but fixed $\gamma'=-0.4$ for two-layer NNs with tanh activation indicated by blue dots. The black line is a linear fit, and $R^2$ is the coefficient of determination.}
    \label{fig:m-alpha}
\end{figure}
%%%%%%%%%%%%%%%%%%%%%%%%%%%%%%%%%%%%%%%%%%%%%%%%%%%%%%%%%%%%
%%%%%%%%%%%%%%%%%%%%%%%%%%%%%%%%%%%%%%%%%%%%%%%%%%%%%%%%%%%%
\subsubsection{$a$-lag regime}
%%%%%%%%%%%%%%%%%%%%%%%%%%%%%%%%%%%%%%%%%%%%%%%%%%%%%%%%%%%%
We repeat the strategy in \Cref{subsec} except that in one hand we are hereby to visualize the relation \eqref{eq...rmk...-alpha} in \Cref{fig:m-alpha}, and in the other,  $\log \hat{T}$ is set as the response variable and  $\gamma'$ is no longer $0$. We can still see a good agreement between the experimental data and its linear fitting, thus, validating the relation \eqref{eq...rmk...-alpha}.
%%%%%%%%%%%%%%%%%%%%%%%%%%%%%%%%%%%%%%%%%%%%%%%%%%%%%%%%%%%%
%%A_Useful_Command_to_Start_a_new_page_in_float_environment%%
%%%%%%%%%%%%%%%%%%%%%%%%%%%%%%%%%%%%%%%%%%%%%%%%%%%%%%%%%%%%
% \mbox{~}
% \clearpage
% \newpage
%%%%%%%%%%%%%%%%%%%%%%%%%%%%%%%%%%%%%%%%%%%%%%%%%%%%%%%%%%%%
%%A_Useful_Command_to_Start_a_new_page_in_float_environment%%
%%%%%%%%%%%%%%%%%%%%%%%%%%%%%%%%%%%%%%%%%%%%%%%%%%%%%%%%%%%%
%%%%%%%%%%%%%%%%%%%%%%%%%%%%%%%%%%%%%%%%%%%%%%%%%%%%%%%%%%%
\section{Technique Overview}\label{section....Technique}
%%%%%%%%%%%%%%%%%%%%%%%%%%%%%%%%%%%%%%%%%%%%%%%%%%%%%%%%%%%%
In this part, we   describe some technical tools and present the sketch of proofs for the above two theorems. Before we proceed, a rigorous description of the updated notations and definitions is required.
%%%%%%%%%%%%%%%%%%%%%%%%%%%%%%%%%%%%%%%%%%%%%%%%%%%%%%%%%%%%

We start by           a two-layer normalized NN model 
%%%%%%%%%%%%%%%%%%%%%%%%%%%%%%%%%%%%%%%%%%%%%%%%%%%%%%%%%%%%
\begin{equation}
f_{\vtheta}(\vx) = \sum_{k=1}^{m}\nu\eps a_k\frac{\sigma(\eps\vw_k^{\T}\vx)}{\eps},
\end{equation}
%%%%%%%%%%%%%%%%%%%%%%%%%%%%%%%%%%%%%%%%%%%%%%%%%%%%%%%%%%%%
with the normalized parameters $\vtheta^0:=\mathrm{vec}(\vtheta_a^0,\vtheta_{\vw}^0)$  initialized by
%%%%%%%%%%%%%%%%%%%%%%%%%%%%%%%%%%%%%%%%%%%%%%%%%%%%%%%%%%% 
\begin{align*}
a^0_k  & :=a_k(0)\sim \fN(0,1), \\
%%%%%%%%%%%%%%%%%%%%    
\vw_k^0   & :=\vw_k(0)\sim \fN(\vzero,\mI_d).                             \end{align*}
%%%%%%%%%%%%%%%%%%%%%%%%%%%%%%%%%%%%%%%%%%%%%%%%%%%%%%%%%%%
For all $i\in[n]$,  we denote hereafter that  
%%%%%%%%%%%%%%%%%%%%%%%%%%%%%%%%%%%%%%%%%%%%%%%%%%%%%%%%%%%
\[
e_i :=e_i(\vtheta) :=  f_{\vtheta}(\vx_i) - y_i,   
\]
%%%%%%%%%%%%%%%%%%%%%%%%%%%%%%%%%%%%%%%%%%%%%%%%%%%%%%%%%%%
and  
%%%%%%%%%%%%%%%%%%%%%%%%%%%%%%%%%%%%%%%%%%%%%%%%%%%%%%%%%%%
\[
\ve:=\ve(\vtheta) := {[e_1(\vtheta), e_2(\vtheta), \ldots, e_n(\vtheta)]}^{\T}.
\]
%%%%%%%%%%%%%%%%%%%%%%%%%%%%%%%%%%%%%%%%%%%%%%%%%%%%%%%%%%%
Then the normalized flow reads: For all $k\in[m]$,
%%%%%%%%%%%%%%%%%%%%%%%%%%%%%%%%%%%%%%%%%%%%%%%%%%%%%%%%%%%
%%%%%%%%%%%%%%%%%%%%%%%%%%%%%%%%%%%%%%%%%%%%%%%%%%%%%%%%%%%%
\begin{equation}\label{eq...text...Prelim...ProbSetup...NormalizedDynamicsRepeat}
\begin{aligned}
\frac{\D {a}_k}{\D {t}}& = -\frac{\eps}{\nu} \frac{1}{n}\sum_{i=1}^ne_i  \frac{\sigma(\eps{\vw}_k^\T\vx_i)}{\eps}, \\
%%%%%%%%%%%%%%%%%%%%%%%%%%%%%%%%%%%%%%%%%%%%%%%%%%%%%%%%%%%%     
\frac{\D  {\vw}_k}{\D {t}}& =- \frac{\nu}{\eps}\frac{1}{n}\sum_{i=1}^ne_i    {a}_k\sigma^{(1)}(\eps {\vw}_k^\T\vx_i)\vx_i.
\end{aligned}     
\end{equation}
%%%%%%%%%%%%%%%%%%%%%%%%%%%%%%%%%%%%%%%%%%%%%%%%%%%%%%%%%%%%
\subsection{Linear Regime}
%%%%%%%%%%%%%%%%%%%%%%%%%%%%%%%%%%%%%%%%%%%%%%%%%%%%%%%%%%%%
% %%%%%%%%%%%%%%%%%%%%%%%%%%%%%%%%%%%%%%%%%%%%%%%%%%%%%%%%%%%%
We define the normalized kernels as follows
%%%%%%%%%%%%%%%%%%%%%%%%%%%%%%%%%%%%%%%%%%%%%%%%%%%%%%%%%%%%
%%%%%%%%%%%%%%%%%%%%%%%%%%%%%%%%%%%%%%%%%%%%%%%%%%%%%%%%%%%%
\begin{equation}
\begin{aligned}
 k^{[a]}(\vx,\vx')&:=\frac{1}{\eps^2}\Exp_{\vw}{\sigma(\eps\vw^\T\vx)\sigma(\eps\vw^\T\vx')},                        \\
%%%%%%%%%%%%%%%%%%%%%%%%%%%%%%%
 k^{[\vw]}(\vx,\vx')&:=\Exp_{(a,\vw)}a^2\sigma^{(1)}(\eps\vw^\T\vx)\sigma^{(1)}(\eps\vw^\T\vx')\left<\vx, \vx'\right>, \\
%%%%%%%%%%%%%%%%%%%%%%%%%%%%%%%
\end{aligned}
\end{equation}
%%%%%%%%%%%%%%%%%%%%%%%%%%%%%%%%%%%%%%%%%%%%%%%%%%%%%%%%%%%%
%%%%%%%%%%%%%%%%%%%%%%%%%%%%%%%%%%%%%%%%%%%%%%%%%%%%%%%%%%%%
thus, the components of the Gram matrices $\mK^{[a]}$ and $\mK^{[\vw]}$ of at infinite width  respectively reads: For any $i,j\in[n]$,
%%%%%%%%%%%%%%%%%%%%%%%%%%%%%%%%%%%%%%%%%%%%%%%%%%%%%%%%%%%%
%%%%%%%%%%%%%%%%%%%%%%%%%%%%%%%%%%%%%%%%%%%%%%%%%%%%%%%%%%%%
\begin{equation}
\begin{aligned}
\mK^{[a]}&:=\left[K_{ij}^{[a]}\right]_{n\times n},\\
%%%%%%%%%%%%%%%%%%%%%%%%%%%%%%%
K^{[a]}_{ij}&:=k^{[a]}(\vx_i,\vx_j),            \\
%%%%%%%%%%%%%%%%%%%%%%%%%%%%%%%
\mK^{[\vw]}&:=\left[K_{ij}^{[\vw]}\right]_{n\times n},\\
%%%%%%%%%%%%%%%%%%%%%%%%%%%%%%%
K^{[\vw]}_{ij}&:=k^{[\vw]}(\vx_i,\vx_j), %\quad .
\end{aligned}
\end{equation}
%%%%%%%%%%%%%%%%%%%%%%%%%%%%%%%%%%%%%%%%%%%%%%%%%%%%%%%%%%%%
we conclude that under \Cref{Assumption....ActivationFunctions...NTK} and \Cref{Assump...Unparallel},  $\mK^{[a]}$ and $\mK^{[\vw]}$ are strictly positive definite, and both of  their least eigenvalues are of order one~(\Cref{append...thm...LeastEigenGram}). 

We define the normalized Gram matrices $\mG^{[a]}(\vtheta)$, $\mG^{[\vw]}(\vtheta)$, and $\mG(\vtheta)$ for a finite width two-layer network as follows: For any $i,j\in[n]$,
\begin{equation}
\begin{aligned}
\mG^{[a]}(\vtheta)&:=\left[G_{ij}^{[a]}(\vtheta)\right]_{n\times n},\\
%%%%%%%%%%%%%%%%%%%%%%%%%%%%%%%
G^{[a]}_{ij}(\vtheta)&:=\frac{1}{m}\sum_{k=1}^m\left<\nabla_{a_k} f_{\vtheta}(\vx_i), \frac{\eps}{\nu}\nabla_{a_k} f_{\vtheta}(\vx_j)\right>\\
%%%%%%%%%%%%%%%%%%%%%%%%%%%%%%%
&=\frac{\nu^2}{ m} \frac{\eps}{\nu}\sum_{k=1}^m\sigma(\eps\vw_k^\T\vx_i)\sigma(\eps\vw_k^\T\vx_j)\\
%%%%%%%%%%%%%%%%%%%%%%%%%%%%%%%
&=\frac{\nu\eps^3}{ m}\sum_{k=1}^m\frac{1}{\eps^2}\sigma(\eps\vw_k^\T\vx_i)\sigma(\eps\vw_k^\T\vx_j),\\
%%%%%%%%%%%%%%%%%%%%%%%%%%%%%%%
\mG^{[\vw]}(\vtheta)&:=\left[G_{ij}^{[\vw]}(\vtheta)\right]_{n\times n},\\
%%%%%%%%%%%%%%%%%%%%%%%%%%%%%%%
G^{[\vw]}_{ij}(\vtheta)&:=\frac{1}{m}\sum_{k=1}^m\left<\nabla_{\vw_k} f_{\vtheta}(\vx_i), \frac{\nu}{\eps}\nabla_{\vw_k}  f_{\vtheta}(\vx_j)\right>\\
%%%%%%%%%%%%%%%%%%%%%%%%%%%%%%%
&=\frac{\nu^3\eps}{m}\sum_{k=1}^m a_k^2\sigma^{(1)}(\eps\vw_k^\T\vx_i)\sigma^{(1)}(\eps\vw_k^\T\vx_j)\left<\vx_i, \vx_j\right>, 
\end{aligned}
\end{equation}
%%%%%%%%%%%%%%%%%%%%%%%%%%%%%%%%%%%%%%%%%%%%%%%%%%%%%%%%%%%%
and 
%%%%%%%%%%%%%%%%%%%%%%%%%%%%%%%%%%%%%%%%%%%%%%%%%%%%%%%%%%%%
\begin{equation}
           \mG(\vtheta):=\mG^{[a]}(\vtheta)+\mG^{[\vw]}(\vtheta). 
\end{equation}
%%%%%%%%%%%%%%%%%%%%%%%%%%%%%%%%%%%%%%%%%%%%%%%%%%%%%%%%%%%%
\begin{remark}
We conclude  that 
%%%%%%%%%%%%%%%%%%%%%%%%%%%%%%%%%%%%%%%%%%%%%%%%%%%%%%%%%%%% 
\begin{equation}
\lambda_{\min}\left(\mG^{[a]}\left(\vtheta^0\right)  \right)\sim \Omega(\nu\eps^3),\quad
        \lambda_{\min}\left(\mG^{[\vw]}\left(\vtheta^0\right)  \right)\sim \Omega(\nu^3\eps),
\end{equation}
%%%%%%%%%%%%%%%%%%%%%%%%%%%%%%%%%%%%%%%%%%%%%%%%%%%%%%%%%%%%  
and it has been rigorously achieved by \Cref{append...prop...LeastEigenvalueatInitial} located in \Cref{append...subsection....GramMatrices}.
\end{remark}
%%%%%%%%%%%%%%%%%%%%%%%%%%%%%%%%%%%%%%%%%%%%%%%%%%%%%%%%%%%%
%%%%%%%%%%%%%%%%%%%%%%%%%%%%%%%%%%%%%%%%%%%%%%%%%%%%%%%%%%%%
%%%%%%%%%%%%%%%%%%%%%%%%%%%%%%%%%%%%%%%%%%%%%%%%%%%%%%%%%%%%
%%%%%%%%%%%%%%%%%%%%%%%%%%%%%%%%%%%%%%%%%%%%%%%%%%%%%%%%%%%%
\noindent Finally, we obtain that
%%%%%%%%%%%%%%%%%%%%%%%%%%%%%%%%%%%%%%%%%%%%%%%%%%%%%%%%%%%%
\begin{equation*}
\begin{aligned}
\frac{\D}{\D t}\RS(\vtheta)& =-\left(\sum_{k=1}^m\frac{\eps}{\nu}\left<\nabla_{a_k}\RS(\vtheta), \nabla_{a_k}\RS(\vtheta)\right>+\sum_{k=1}^m\frac{\nu}{\eps}\left<\nabla_{\vw_k}\RS(\vtheta), \nabla_{\vw_k}\RS(\vtheta)\right>\right) \\
%%%%%%%%%%%%%%%%%%%%%%
& =-\frac{m}{n^2}\ve^{\T}\left(\mG^{[a]}(\vtheta)+\mG^{[\vw]}(\vtheta)\right)\ve. % \\
%%%%%%%%%%%%%%%%%%%%%% 
\end{aligned}
\end{equation*}
%%%%%%%%%%%%%%%%%%%%%%%%%%%%%%%%%%%%%%%%%%%%%%%%%%%%%%%%%%%%

%%%%%%%%%%%%%%%%%%%%%%%%%%%%%%%%%%%%%%%%%%%%%%%%%%%%%%%%%%%%
In the case where $\gamma<1$~($\vtheta$-lazy regime),   the following holds for all $t>0$:
%%%%%%%%%%%%%%%%%%%%%%%%%%%%%%%%%%%%%%%%%%%%%%%%%%%%%%%%%%%%
\begin{align*}
 \lambda_{\min}\left(\mG^{[a]}(\vtheta(t))  \right)\geq \frac{1}{2}\nu\eps^3\lambda,\quad  \lambda_{\min}\left(\mG^{[\vw]}(\vtheta(t))  \right)\geq \frac{1}{2}\nu^3\eps\lambda,
\end{align*}
%%%%%%%%%%%%%%%%%%%%%%%%%%%%%
for some universal constant $\lambda>0$.
%%%%%%%%%%%%%%%%%%%%%%%%%%%%%%%
Hence,  we obtain that 
%%%%%%%%%%%%%%%%%%%%%%%%%%%%%%%%%%%%%%%%%%%%%%%%%%%%%%%%%%%%
\begin{align*}
\frac{\D}{\D t}\RS(\vtheta(t))& =-\frac{m}{n^2}\ve^{\T}\left(\mG^{[a]}(\vtheta(t))+\mG^{[\vw]}(\vtheta(t))\right)\ve \\
%%%%%%%%%%%%%%%%%%%%%%%%%%%
&\leq -\frac{2m}{n}\lambda_{\min}\left(\mG(\vtheta(t))\right)\RS(\vtheta(t))\\
%%%%%%%%%%%%%%%%%%%%%%%%%
& \leq -\frac{m}{n}\nu^2\eps^2\left(\frac{\eps}{\nu}\lambda+\frac{\nu}{\eps}\lambda\right)\RS(\vtheta(t)),
%%%%%%%%%%%%%%%%%%%%%%%%%%% 
\end{align*}
%%%%%%%%%%%%%%%%%%%%%%%%%%%%%%%%%%%%%%%%%%%%%%%%%%%%%%%%%%%%
then
%%%%%%%%%%%%%%%%%%%%%%%%%%%%%%%%%%%%%%%%%%%%%%%%%%%%%%%%%%%%
\begin{equation}\label{eq...text...Techinique...LossDecay}
%%%%%%%%%%%%%%%%%%%%%%%%%%%
\RS(\vtheta(t))  \leq \exp\left(-\frac{m}{n}\nu^2\eps^2\left(\frac{\eps}{\nu}\lambda+\frac{\nu}{\eps}\lambda\right)t\right)\RS(\vtheta(0)).
%%%%%%%%%%%%%%%%%%%%%%%%%%%
\end{equation}
%%%%%%%%%%%%%%%%%%%%%%%%%%%%%%%%%%%%%%%%%%%%%%%%%%%%%%%%%%%%
The following relation
%%%%%%%%%%%%%%%%%%%%%%%%%%%%%%%%%%%%%%%%%%%%%%%%%%%%%%%%%%%%
\begin{equation}\label{eq...text...Techinique...Theta}
\lim_{m\to\infty}\sup\limits_{t\in[0,+\infty)}\frac{\norm{\vtheta(t)-\vtheta(0)}_2}{\norm{\vtheta(0)}_2}=0,
\end{equation}    
%%%%%%%%%%%%%%%%%%%%%%%%%%%%%%%%%%%%%%%%%%%%%%%%%%%%%%%%%%%
is illustrated through an intuitive scaling analysis. Since 
%%%%%%%%%%%%%%%%%%%%%%%%%%%%%%%%%%%%%%%%%%%%%%%%%%%%%%%%%%%
\begin{equation*}
\frac{\D}{\D t}\RS(\vtheta(t))=-\frac{\eps}{\nu}\Norm{\nabla_{\vtheta_a}\RS(\vtheta(t))}_2^2 -\frac{\nu}{\eps}\Norm{\nabla_{\vtheta_{\vw}}\RS(\vtheta(t))}_2^2\sim -\frac{m}{n}\nu^2\eps^2\left(\frac{\eps}{\nu}\lambda+\frac{\nu}{\eps}\lambda\right)\RS(\vtheta(t)),
\end{equation*}
%%%%%%%%%%%%%%%%%%%%%%%%%%%%%%%%%%%%%%%%%%%%%%%%%%%%%%%%%%% 
then we have that  
%%%%%%%%%%%%%%%%%%%%%%%%%%%%%%%%%%%%%%%%%%%%%%%%%%%%%%%%%%%
\begin{equation*}
 \RS(\vtheta(t))  \sim  \exp\left(-{\frac{m}{n}\nu^2\eps^2\left(\frac{\eps}{\nu}\lambda+\frac{\nu}{\eps}\lambda\right)}t\right){\RS(\vtheta(0))},
\end{equation*}
%%%%%%%%%%%%%%%%%%%%%%%%%%%%%%%%%%%%%%%%%%%%%%%%%%%%%%%%%%% 
and 
%%%%%%%%%%%%%%%%%%%%%%%%%%%%%%%%%%%%%%%%%%%%%%%%%%%%%%%%%%%
\begin{align*}
\Norm{\nabla_{\vtheta_a}\RS(\vtheta(t))}_2  &
 \sim  \sqrt{\frac{m}{n}\nu^3\eps\left(\frac{\eps}{\nu}\lambda+\frac{\nu}{\eps}\lambda\right)}\sqrt{\RS(\vtheta(t))}\\
%%%%%%%%%%%%%%%%%%%%%%%%%%%%%%%% 
&\sim  \sqrt{\frac{m}{n}\nu^3\eps\left(\frac{\eps}{\nu}\lambda+\frac{\nu}{\eps}\lambda\right)}\exp\left(-{\frac{m}{2n}\nu^2\eps^2\left(\frac{\eps}{\nu}\lambda+\frac{\nu}{\eps}\lambda\right)}t\right)\sqrt{\RS(\vtheta(0))},\\
%%%%%%%%%%%%%%%%%%%%%%%%%%%%%%%% 
\Norm{\nabla_{\vtheta_{\vw}}\RS(\vtheta(t))}_2  &
 \sim  \sqrt{\frac{m}{n}\nu\eps^3\left(\frac{\eps}{\nu}\lambda+\frac{\nu}{\eps}\lambda\right)}\sqrt{\RS(\vtheta(t))}\\
%%%%%%%%%%%%%%%%%%%%%%%%%%%%%%%% 
&\sim  \sqrt{\frac{m}{n}\nu\eps^3\left(\frac{\eps}{\nu}\lambda+\frac{\nu}{\eps}\lambda\right)}\exp\left(-{\frac{m}{2n}\nu^2\eps^2\left(\frac{\eps}{\nu}\lambda+\frac{\nu}{\eps}\lambda\right)}t\right)\sqrt{\RS(\vtheta(0))},
\end{align*}
%%%%%%%%%%%%%%%%%%%%%%%%%%%%%%%%%%%%%%%%%%%%%%%%%%%%%%%%%%% 
both holds, hence 
%%%%%%%%%%%%%%%%%%%%%%%%%%%%%%%%%%%%%%%%%%%%%%%%%%%%%%%%%%%
\begin{align*}
{\norm{\vtheta(t)-\vtheta(0)}_2}&\leq {\norm{\vtheta_a(t)-\vtheta_a(0)}_2}+{\norm{\vtheta_{\vw}(t)-\vtheta_{\vw}(0)}_2}\\
&\leq    
\frac{\eps}{\nu}\int_{0}^t\Norm{\nabla_{\vtheta_a}\RS(\vtheta(s))}_2\D s +\frac{\nu}{\eps}\int_{0}^t\Norm{\nabla_{\vtheta_{\vw}}\RS(\vtheta(s))}_2\D s\\
%%%%%%%%%%%%%%%%%%%%%%%%%%%%
 &\leq   
\frac{\eps}{\nu}\int_{0}^{\infty}\Norm{\nabla_{\vtheta_a}\RS(\vtheta(s))}_2\D s +\frac{\nu}{\eps}\int_{0}^{\infty}\Norm{\nabla_{\vtheta_{\vw}}\RS(\vtheta(s))}_2\D s\\
%%%%%%%%%%%%%%%%%%%%%%%%%%%%
&\lesssim    \left(\sqrt{\frac{\eps}{\nu}}+\sqrt{\frac{\nu}{\eps}}\right)\sqrt{\frac{n}{ {m} \nu^2\eps^2\left(\frac{\eps}{\nu}\lambda+\frac{\nu}{\eps}\lambda\right)}}\sqrt{\RS(\vtheta(0))},
\end{align*}
%%%%%%%%%%%%%%%%%%%%%%%%%%%%%%%%%%%%%%%%%%%%%%%%%%%%%%%%%%%
and 
%%%%%%%%%%%%%%%%%%%%%%%%%%%%%%%%%%%%%%%%%%%%%%%%%%%%%%%%%%%
\[
\Norm{\vtheta(0)}_2\sim \sqrt{m},
\]
%%%%%%%%%%%%%%%%%%%%%%%%%%%%%%%%%%%%%%%%%%%%%%%%%%%%%%%%%%%
hence 
%%%%%%%%%%%%%%%%%%%%%%%%%%%%%%%%%%%%%%%%%%%%%%%%%%%%%%%%%%%
\begin{equation}
\begin{aligned}
\frac{{\norm{\vtheta(t)-\vtheta(0)}_2}}{\Norm{\vtheta(0)}_2} &\lesssim \left(\sqrt{\frac{\eps}{\nu}}+\sqrt{\frac{\nu}{\eps}}\right)  \sqrt{\frac{n}{ {m}^2 \nu^2\eps^2\left(\frac{\eps}{\nu}\lambda+\frac{\nu}{\eps}\lambda\right)}}\sqrt{\RS(\vtheta(0))}\\
%%%%%%%%%%%%%%%%%%%%%%%%%%%%%%%%%%%%%%%%%%%%%%%%%%%%%%%%%%%
&\lesssim  \sqrt{\frac{n}{  {m}^2 \nu^2\eps^2 }}\sqrt{\RS(\vtheta(0))}.
\end{aligned}
\end{equation}
%%%%%%%%%%%%%%%%%%%%%%%%%%%%%%%%%%%%%%%%%%%%%%%%%%%%%%%%%%%
The  rigorous statements  of    relations \eqref{eq...text...Techinique...LossDecay} and \eqref{eq...text...Techinique...Theta} are given in  \Cref{append...thm..ThetaLazyRegime}.

In the case where    $\gamma\geq  1$ and $\gamma'> \gamma-1$~($\vw$-lazy regime),  the following holds for all $t>0$:
% %%%%%%%%%%%%%%%%%%%%%%%%%%%%%%%%%%%%%%%%%%%%%%%%%%%%%%%%%%%
\[\lambda_{\min}\left(\mG^{[a]}(\vtheta(t))  \right)\geq \frac{1}{2}\nu\eps^3\lambda,\]
% %%%%%%%%%%%%%%%%%%%%%%%%%%%%%%%%%%%%%%%%%%%%%%%%%%%%%%%%%%%%
for some universal constant $\lambda>0$. Hence, we have
%%%%%%%%%%%%%%%%%%%%%%%%%%%%%%%%%%%%%%%%%%%%%%%%%%%%%%%%%%%%
\begin{align*}
\frac{\D}{\D t}\RS(\vtheta(t))& =-\frac{m}{n^2}\ve^{\T}\left(\mG^{[a]}(\vtheta(t))+\mG^{[\vw]}(\vtheta(t))\right)\ve \\
%%%%%%%%%%%%%%%%%%%%%%%%%%%
&\leq -\frac{2m}{n}\lambda_{\min}\left(\mG^{[a]}(\vtheta(t))\right)\RS(\vtheta(t))\\
%%%%%%%%%%%%%%%%%%%%%%%%%
& \leq -\frac{m}{n}\nu^2\eps^2\frac{\eps}{\nu}\lambda\RS(\vtheta(t))\\
&= -\frac{m}{n}\nu \eps^3 \lambda\RS(\vtheta(t)),
\end{align*}
%%%%%%%%%%%%%%%%%%%%%%%%%%%
thus the following holds
\[
\RS(\vtheta(t)) \leq \exp\left(-\frac{m\nu\eps^3\lambda t}{n}\right)\RS(\vtheta(0)),
\]
%%%%%%%%%%%%%%%%%%%%%%%%%%%
%%%%%%%%%%%%%%%%%%%%%%%%%%%%%%%%%%%%%%%%%%%%%%%%%%%%%%%%%%%%
and \eqref{eq...text...Techinique...Theta} does not hold anymore.
However, we still have  
% %%%%%%%%%%%%%%%%%%%%%%%%%%%%%%%%%%%%%%%%%%%%%%%%%%%%%%%%%%%
%%%%%%%%%%%%%%%%%%%%%%%%%%%%%%%%%%%%%%%%%%%%%%%%%%%%%%%%%%%%
\begin{equation}\label{eq...text...Techinique...ThetaVW}
\lim_{m\to\infty}\sup\limits_{t\in[0,+\infty)}\frac{\norm{\vtheta_{\vw}(t)-\vtheta_{\vw}(0)}_2}{\norm{\vtheta_{\vw}(0)}_2}=0,
\end{equation}    
%%%%%%%%%%%%%%%%%%%%%%%%%%%%%%%%%%%%%%%%%%%%%%%%%%%%%%%%%%%
and it can also be illustrated through an intuitive scaling analysis. Since %%%%%%%%%%%%%%%%%%%%%%%%%%%%%%%%%%%%%%%%%%%%%%%%%%%%%%%%%%%
\begin{equation*}
\frac{\D}{\D t}\RS(\vtheta(t))=-\frac{\eps}{\nu}\Norm{\nabla_{\vtheta_a}\RS(\vtheta(t))}_2^2 -\frac{\nu}{\eps}\Norm{\nabla_{\vtheta_{\vw}}\RS(\vtheta(t))}_2^2\sim -\frac{m}{n}\nu \eps^3 \lambda\RS(\vtheta(t)),
\end{equation*}
%%%%%%%%%%%%%%%%%%%%%%%%%%%%%%%%%%%%%%%%%%%%%%%%%%%%%%%%%%% 
then  
\begin{align*}
\Norm{\nabla_{\vtheta_{\vw}}\RS(\vtheta(t))}_2&\sim \sqrt{\frac{m}{n} \eps^4 \lambda} \sqrt{\RS(\vtheta(t))}\\ 
&\sim  \sqrt{\frac{m}{n} \eps^4 \lambda}\exp\left(-\frac{m\nu\eps^3\lambda t}{2n}\right)\sqrt{\RS(\vtheta(0))}, \end{align*}
hence
%%%%%%%%%%%%%%%%%%%%%%%%%%%%%%%%%%%%%%%%%%%%%%%%%%%%%%%%%%%
\begin{align*}
  {\norm{\vtheta_{\vw}(t)-\vtheta_{\vw}(0)}_2} &\leq     \frac{\nu}{\eps}\int_{0}^t\Norm{\nabla_{\vtheta_{\vw}}\RS(\vtheta(s))}_2\D s\\
%%%%%%%%%%%%%%%%%%%%%%%%%%%%
 &\leq  
 \frac{\nu}{\eps}\int_{0}^{\infty}\Norm{\nabla_{\vtheta_{\vw}}\RS(\vtheta(s))}_2\D s\\
%%%%%%%%%%%%%%%%%%%%%%%%%%%%
&\lesssim     \sqrt{\frac{n}{ {m} \eps^4 \lambda }}\sqrt{\RS(\vtheta(0))},
\end{align*}
%%%%%%%%%%%%%%%%%%%%%%%%%%%%%%%%%%%%%%%%%%%%%%%%%%%%%%%%%%%
and as
%%%%%%%%%%%%%%%%%%%%%%%%%%%%%%%%%%%%%%%%%%%%%%%%%%%%%%%%%%%
\[
\Norm{\vtheta_{\vw}(0)}_2\sim \sqrt{m},
\]
%%%%%%%%%%%%%%%%%%%%%%%%%%%%%%%%%%%%%%%%%%%%%%%%%%%%%%%%%%%
then 
%%%%%%%%%%%%%%%%%%%%%%%%%%%%%%%%%%%%%%%%%%%%%%%%%%%%%%%%%%%
\begin{equation}
\begin{aligned}
\frac{{\norm{\vtheta_{\vw}(t)-\vtheta_{\vw}(0)}_2}}{\Norm{\vtheta_{\vw}(0)}_2} &\lesssim  \sqrt{\frac{n}{ {m}^2 \eps^4 \lambda }}\sqrt{\RS(\vtheta(0))}\\
%%%%%%%%%%%%%%%%%%%%%%%%%%%%%%%%%%%%%%%%%%%%%%%%%%%%%%%%%%%
&\lesssim  \sqrt{\frac{n}{ {m}^2 \eps^4  }}\sqrt{\RS(\vtheta(0))}.
\end{aligned}
\end{equation}
%%%%%%%%%%%%%%%%%%%%%%%%%%%%%%%%%%%%%%%%%%%%%%%%%%%%%%%%%%%
The    rigorous statements  of relation  \eqref{eq...text...Techinique...ThetaVW} are given in  \Cref{append...thm..ThetaLazyRegime...WLazy}. To end this part, we provide a sketch of the proofs for \Cref{thm..linear}, see \Cref{fig:sketch1}.

% %%%%%%%%%%%%%%%%%%%%%%%%%%%%%%%%%%%%%%%%%%%%%%%%%%%%%%%%%%%%
\begin{figure}[ht]
    \includegraphics[width=\textwidth]{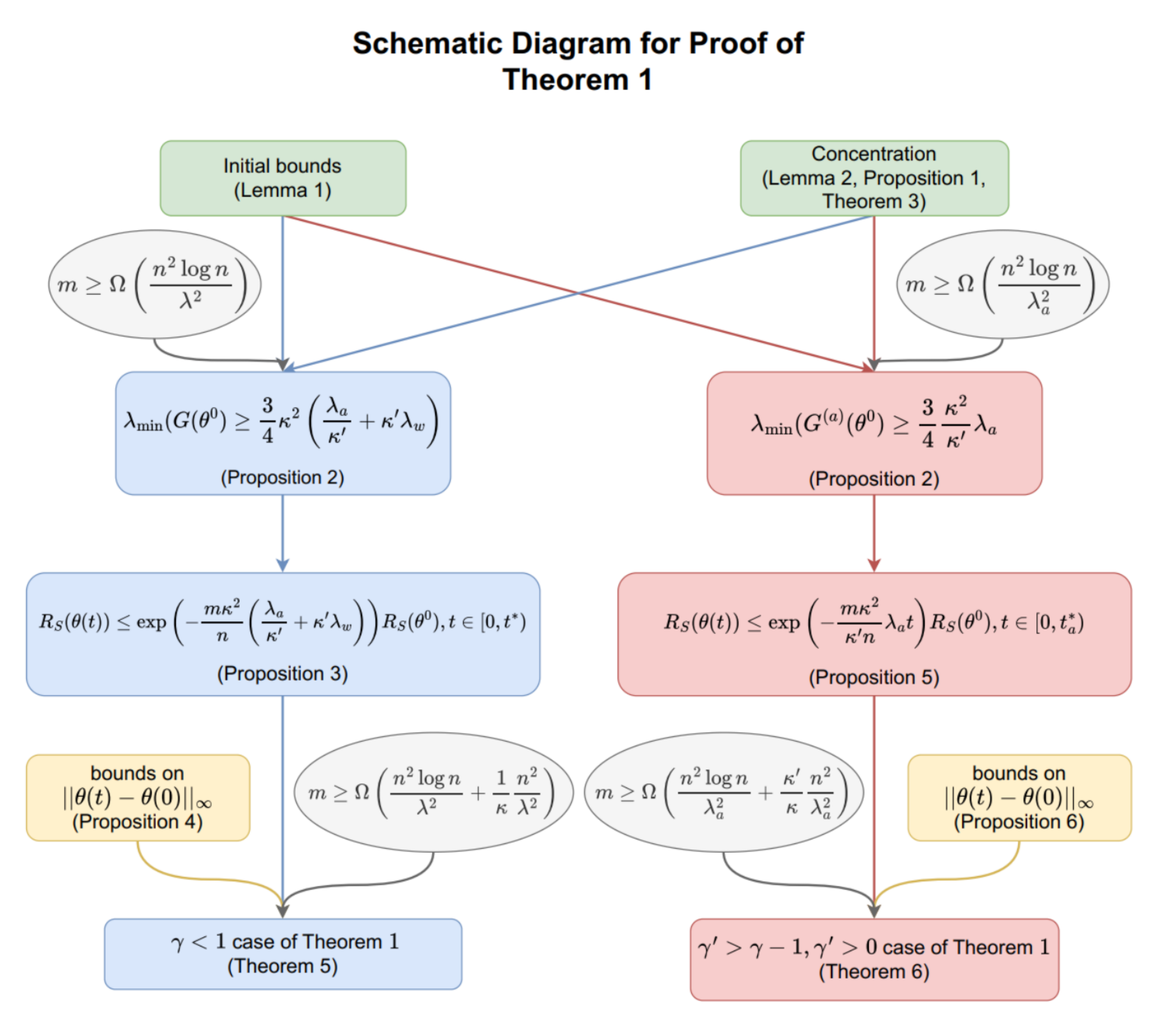}
    \caption{Sketch of proof for \Cref{thm..linear}.}
    \label{fig:sketch1}
\end{figure}
% %%%%%%%%%%%%%%%%%%%%%%%%%%%%%%%%%%%%%%%%%%%%%%%%%%%%%%%%%%%%
\subsection{Condensed Regime}\label{subsec...Technique...condense}
%%%%%%%%%%%%%%%%%%%%%%%%%%%%%%%%%%%%%%%%%%%%%%%%%%%%%%%%%%%%
%In the condensed regime, since $\eps\ll 1$, then 
We remark that the $\{a_k, \vw_k\}_{k=1}^m$ dynamics
%%%%%%%%%%%%%%%%%%%%%%%%%%%%%%%%%%%%%%%%%%%%%%%%%%%%%%%%%%%%
\begin{equation}\label{eq...text..techinique...pairs}
\begin{aligned}
\frac{\D {a}_k}{\D {t}}& = -\frac{\eps}{\nu} \frac{1}{n}\sum_{i=1}^ne_i  \frac{\sigma(\eps{\vw}_k^\T\vx_i)}{\eps}, \\
%%%%%%%%%%%%%%%%%%%%%%%%%%%%%%%%%%%%%%%%%%%%%%%%%%%%%%%%%%%%  
\frac{\D  {\vw}_k}{\D {t}}& =- \frac{\nu}{\eps}\frac{1}{n}\sum_{i=1}^ne_i    {a}_k\sigma^{(1)}(\eps {\vw}_k^\T\vx_i)\vx_i,     
\end{aligned}
\end{equation}
%%%%%%%%%%%%%%%%%%%%%%%%%%%%%%%%%%%%%%%%%%%%%%%%%%%%%%%%%%%%
is  coupled   in the sense that the solution of at least one of
the equations in the system depends on knowing one of the other solutions
in the system, and a coupled system is usually hard to solve.
%%%%%%%%%%%%%%%%%%%%%%%%%%%%%%%%%%%%%%%%%%%%%%%%%%%%%%%%%%%%

However, in the condense regime, as $\eps\ll 1$ and $\eps\nu\ll \frac{1}{m}$, the evolution of  $\{e_i\}_{i=1}^n$ is slow enough so that it remains close to $\{-y_i\}_{i=1}^n$ over a period of time $T>0$ at the initial stage, hence \eqref{eq...text..techinique...pairs} approximately reads
%%%%%%%%%%%%%%%%%%%%%%%%%%%%%%%%%%%%%%%%%%%%%%%%%%%%%%%%%%%%
%%%%%%%%%%%%%%%%%%%%%%%%%%%%%%%%%%%%%%%%%%%%%%%%%%%%%%%%%%%%
\begin{equation}\label{eq...text..techinique...pairsreduce}
\begin{aligned}
    \frac{\D a_k}{\D t}
     & \approx  \frac{\eps}{\nu} \frac{1}{n}\sum_{i=1}^n y_i\vw_k^{\T}\vx_i= \frac{\eps}{\nu}\vw_k^\T\vz,\\
%%%%%%%%%%%%%%%%%%%%%%%%%%%%%%%%%%%%%%%%%%%%%%%%%%%%%%%%%%%%  
    \frac{\D \vw_k}{\D t}
     & \approx \frac{\nu}{\eps}\frac{1}{n}\sum_{i=1}^n y_i a_k\sigma^{(1)}(0)\vx_i=\frac{\nu}{\eps}a_k\vz,
%%%%%%%%%%%%%%%%%%%%%%%%%%%%%%%%%%%%%%%%%%%%%%%%%%%%%%%%%%%% 
\end{aligned} 
\end{equation}
%%%%%%%%%%%%%%%%%%%%%%%%%%%%%%%%%%%%%%%%%%%%%%%%%%%%%%%%%%%%
and the coupled dynamics is reduced to linear dynamics.
%%%%%%%%%%%%%%%%%%%%%%%%%%%%%%%%%%%%%%%%%%%%%%%%%%%%%%%%%%%%

We are able to solve out the linear dynamics \eqref{eq...text..techinique...pairsreduce}~(\Cref{append...prop...CondensedRegime...LinearODESolution}), whose solutions read: For each $k\in[m]$, under the initial condition $\left[\nu a_k(0), \eps\vw_k^\T(0)\right]^\T=\left[\nu a_k^0, (\eps\vw_k^0)^\T\right]^\T$, we obtain that
%%%%%%%%%%%%%%%%%%%%%%%%%%%%%%%%%%%%%%%%%%%%%%%%%%%%%%%%%%%
\begin{equation}\label{eq...text...LinearSolution}
\begin{aligned}
%%%%%%%%%%%%%%%%%%%%%%%%%%%%%%%%%%%%%%%%%%%%%%%%%%%%%%%%%%%%
\nu a_k(t)&=\nu\left(\frac{1}{ {2}}\exp(\norm{\vz}_2t)+\frac{1}{ {2}}\exp(-\norm{\vz}_2t)\right) a_k^0\\
%%%%%%%%%%%%%%%%%%%%%%%%
&~~+ {\eps} \left(\frac{1}{ {2}}\exp(\norm{\vz}_2t)-\frac{1}{ {2}}\exp(-\norm{\vz}_2t)\right)\left<\vw_k^0, \hat{\vz}\right>,\\
%%%%%%%%%%%%%%%%%%%%%%%%%%%%%%%%%%%%%%%%%%%%%%%%%%%%%%%%%%%%
\eps\vw_k(t)&= {\nu}\left(\frac{1}{ {2}}\exp(\norm{\vz}_2t)-\frac{1}{ {2}}\exp(-\norm{\vz}_2t)\right)a_k^0\hat{\vz}\\
%%%%%%%%%%%%%%%%%%%%%%%% 
&~~+{\eps}\left(\frac{1}{ {2}}\exp(\norm{\vz}_2t)+\frac{1}{ {2}}\exp(-\norm{\vz}_2t)\right)\left<\vw_k^0, \hat{\vz}\right>\hat{\vz}\\
%%%%%%%%%%%%%%%%%%%%%%%%
&~~-{\eps}\left<\vw_k^0, \hat{\vz}\right>\hat{\vz}+{\eps}\vw_k^0.
%%%%%%%%%%%%%%%%%%%%%%%%%%%%%%%%%%%%%%%%%%%%%%%%%%%%%%%%%%%% 
\end{aligned}
\end{equation}
%%%%%%%%%%%%%%%%%%%%%%%%%%%%%%%%%%%%%%%%%%%%%%%%%%%%%%%%%%%%
We remark that $\{a_k, \vw_k\}_{k=1}^m$  are   the normalized parameters, then $\{\nu a_k, \eps\vw_k\}_{k=1}^m$  corresponds to the original parameters in \eqref{eq...text...OriginalDynamics}.
%%%%%%%%%%%%%%%%%%%%%%%%%%%%%%%%%%%%%%%%%%%%%%%%%%%%%%%%%%%%

In the case where $\gamma> 1$ and $0\leq\gamma'<\gamma-1$~($\vw$-lag regime),  as $\eps\gg \nu$, then the magnitude  of $\{\eps\vw_k\}_{k=1}^m$ is much larger than that of  $\{\nu a_k\}_{k=1}^m$ at $t=0$. 
Based on \eqref{eq...text...LinearSolution},  it can be observed that $\{\eps\vw_k\}_{k=1}^m$ remains dormant until $\{\nu a_k\}_{k=1}^m$   attain a magnitude that is commensurate with that of  $\{\eps\vw_k\}_{k=1}^m$, and only then do the magnitudes of $\{\eps\vw_k\}_{k=1}^m$ and $\{\nu a_k\}_{k=1}^m$  experience exponential growth   simultaneously.
In order for the initial condensation of $\vtheta_{\vw}$ to be observed, one has to wait for  some growth in the magnitude of $\{\eps\vw_k\}_{k=1}^m$, hence $T\sim\Omega(\log(m))$. More importantly, compared with the $\vw$-lazy regime, the condition $\gamma'<\gamma-1$  enforces  $\eps\ll\frac{1}{\sqrt{m}}$, thus providing  enough room for $\{\eps\vw_k\}_{k=1}^m$  to grow in the $\vz$-direction before  $\{e_i\}_{i=1}^n$ deviates away from $\{-y_i\}_{i=1}^m$.

In the case where $\gamma> 1$ and $\gamma'<0$~($a$-lag regime),  as $\eps\ll \nu$, then the initial magnitudes of $\{\eps\vw_k\}_{k=1}^m$ is much smaller than that of  $\{\nu a_k\}_{k=1}^m$. 
Based on \eqref{eq...text...LinearSolution},  
it takes only $T\sim\Omega(1)$ for $\{\eps\vw_k\}_{k=1}^m$  to attain a magnitude comparable to  $\{\nu a_k\}_{k=1}^m$,
and this rapid growth leads to the observation of initial condensation towards  the $\vz$-direction, where $\gamma>1$ impose $\{e_i\}_{i=1}^n$ to a small neighbourhood of $\{-y_i\}_{i=1}^m$ for a period of time which is at least of order one.  To end this
part, we provide a sketch of the proofs for \Cref{thm..CondensedRegime}, see \Cref{fig:sketch2}.

%%%%%%%%%%%%%%%%%%%%%%%%%%%%%%%%%%%%%%%%%%%%%%%%%%%%%%%%%%%%%
\begin{figure}[ht]
\centering
    \includegraphics[width=0.8\textwidth]{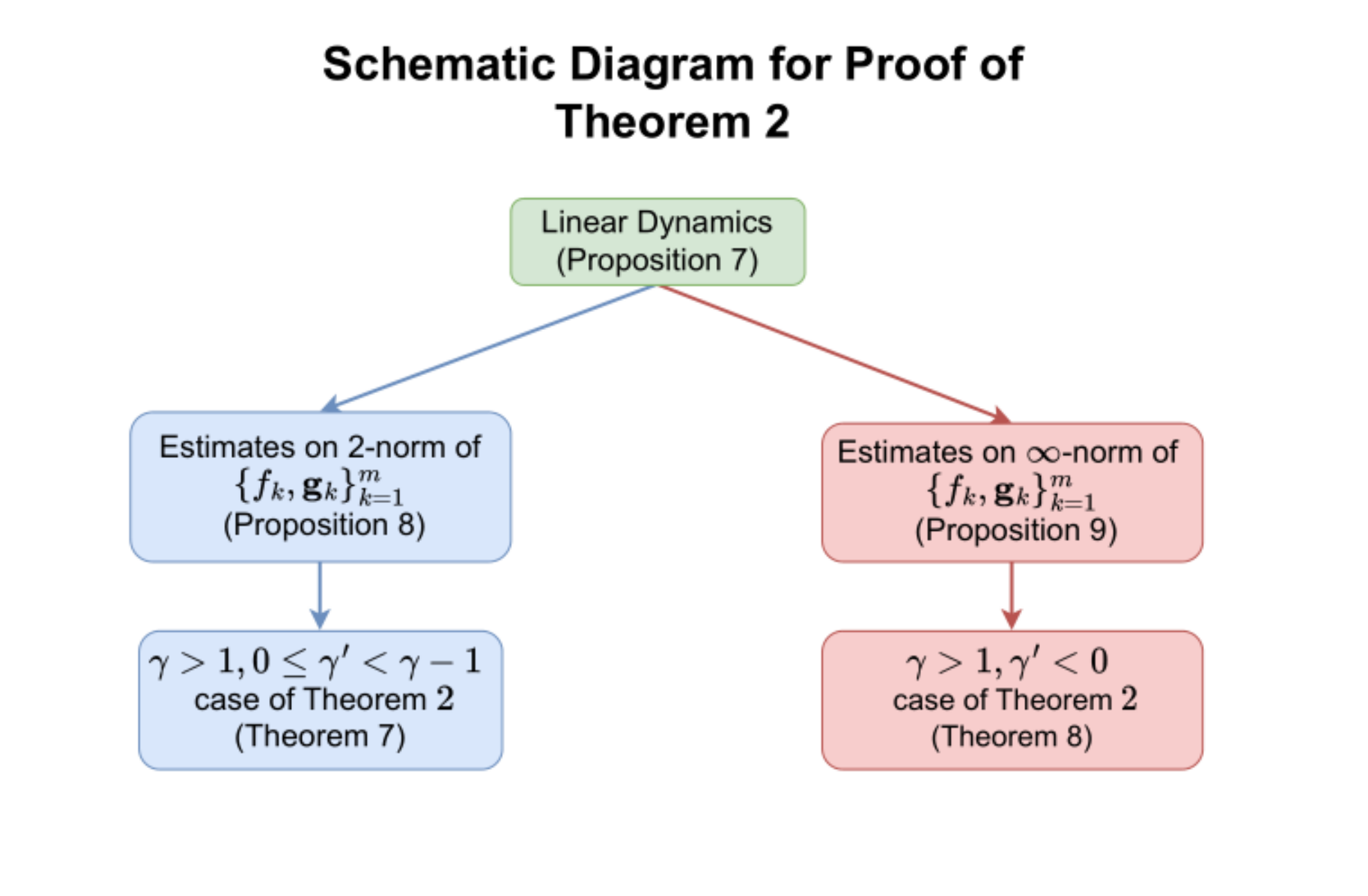}
    \caption{Sketch of proof for \Cref{thm..CondensedRegime}.}
    \label{fig:sketch2}
\end{figure}
% %%%%%%%%%%%%%%%%%%%%%%%%%%%%%%%%%%%%%%%%%%%%%%%%%%%%%%%%%%%%
%%%%%%%%%%%%%%%%%%%%%%%%%%%%%%%%%%%%%%%%%%%%%%%%%%%%%%%%%%%%
%%%%%%%%%%%%%%%%%%%%%%%%%%%%%%%%%%%%%%%%%%%%%%%%%%%%%%%%%%%%
%%%%%%%%%%%%%%%%%%%%%%%%%%%%%%%%%%%%%%%%%%%%%%%%%%%%%%%%%%%%
%%%%%%%%%%%%%%%%%%%%%%%%%%%%%%%%%%%%%%%%%%%%%%%%%%%%%%%%%%%%
%%%%%%%%%%%%%%%%%%%%%%%%%%%%%%%%%%%%%%%%%%%%%%%%%%%%%%%%%%%%
%%%%%%%%%%%%%%%%%%%%%%%%%%%%%%%%%%%%%%%%%%%%%%%%%%%%%%%%%%%%
%%%%%%%%%%%%%%%%%%%%%%%%%%%%%%%%%%%%%%%%%%%%%%%%%%%%%%%%%%%
\section{Conclusions}\label{section....Conclusion}
%%%%%%%%%%%%%%%%%%%%%%%%%%%%%%%%%%%%%%%%%%%%%%%%%%%%%%%%%%%%
In this paper, we present the phase diagram of initial condensation for two-layer NNs with a wide class of smooth activation functions. We demonstrate the distinct features exhibited by NNs in the linear regime area and condensed regime area,  and we provide a complete and detailed analysis for the transition   across the boundary~(critical regime) in the phase diagram. Moreover, in comparison with the work of Luo et al.~\cite{luo2021phase},
we identify the direction towards which the weight parameters condense, and we draw estimates on the time required for initial condensation to occur in contrast to the work of Zhou et al.~\cite{zhou2021towards}. The phase diagram at initial stage is crucial in that it is a valuable tool for understanding  the implicit regularization effect provided by weight initialization schemes, and it serves as a  cornerstone upon which future works can be done to provide thorough characterization of  the dynamical behavior of general NNs  at each of the identified regime.  

In future, we endeavor to  establish a framework 
for the analysis of  initial condensation   by a series of papers. In our upcoming publication, we   plan to extend this formalism to Convolutional Neural Network~(CNN)~\cite{o2015introduction},
and apply it to investigate  the phenomenon of  condensation for a  wide range of NN architectures, including fully-connected deep network~(DNN) and Residual Network (ResNet)~\cite{He2016Deep}.  
%%%%%%%%%%%%%%%%%%%%%%%%%%%%%%%%%%%%%%%%%%%%%%%%%%%%%%%%%%%%
%%%%%%%%%%%%%%%%%%%%%%%%%%%%%%%%%%%%%%%%%%%%%%%%%%%%%%%%%%%%
%%%%%%%%%%%%%%%%%%%%%%%%%%%%%%%%%%%%%%%%%%%%%%%%%%%%%%%%%%%%
\section*{Acknowledgments}
This work is sponsored by the National Key R\&D Program of China  Grant No. 2022YFA1008200 (Z. X., T. L.), the Shanghai Sailing Program, the Natural Science Foundation of Shanghai Grant No. 20ZR1429000  (Z. X.), the National Natural Science Foundation of China Grant No. 62002221 (Z. X.), the National Natural Science Foundation of China Grant No. 12101401 (T. L.), Shanghai Municipal Science and Technology Key Project No. 22JC1401500 (T. L.), Shanghai Municipal of Science and Technology Major Project No. 2021SHZDZX0102, and the HPC of School of Mathematical Sciences and the Student Innovation Center, and the Siyuan-1 cluster supported by the Center for High Performance Computing at Shanghai Jiao Tong University.
%%%%%%%%%%%%%%%%%%%%%%%%%%%%%%%%%%%%%%%%%%%%%%%%%%%%%%%%%%%%
%%%%%%%%%%%%%%%%%%%%%%%%%%%%%%%%%%%%%%%%%%%%%%%%%%%%%%%%%%%%
%%%%%%%%%%%%%%%%%%%%%%%%%%%%%%%%%%%%%%%%%%%%%%%%%%%%%%%%%%%
\bibliographystyle{siam}
\bibliography{Ref.bib}
%%%%%%%%%%%%%%%%%%%%%%%%%%%%%%%%%%%%%%%%%%%%%%%%%%%%%%%%%%%%
\newpage
\appendix
%%%%%%%%%%%%%%%%%%%%%%%%%%%%%%%%%%%%%%%%%%%%%%%%%%%%%%%%%%%%
%%%%%%%%%%%%%%%%%%%%%%%%%%%%%%%%%%%%%%%%%%%%%%%%%%%%%%%%%%%%
 \section{Several Estimates on the Initialization}
%%%%%%%%%%%%%%%%%%%%%%%%%%%%%%%%%%%%%%%%%%%%%%%%%%%%%%%%%%%%
%%%%%%%%%%%%%%%%%%%%%%%%%%%%%%%%%%%%%%%%%%%%%%%%%%%%%%%%%%%%
We   begin this part by an estimate on   standard Gaussian vectors.
%%%%%%%%%%%%%%%%%%%%%%%%%%%%%%%%%%%%%%%%%%%%%%%%%%%%%%%%%%%%
\begin{lemma}[Bounds on initial parameters]\label{append...lemma..Initialization}
    Given any $\delta\in(0,1)$, we have with probability at least $1-\delta$ over the choice of $\vtheta^0$,
    \begin{equation}\label{eqn:lem1}
        \max\limits_{k\in[m]}\left\{\abs{a_k^0},\;\norm{\vw_k^0}_{\infty}\right \}\leq \sqrt{2\log\frac{2m(d+1)}{\delta}},
    \end{equation}
\end{lemma}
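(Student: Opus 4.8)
The plan is to establish the bound via a standard Gaussian tail estimate combined with a union bound over all $m(d+1)$ scalar Gaussian coordinates appearing in $\vtheta^0$. First I would recall that each $a_k^0\sim\fN(0,1)$ contributes one standard Gaussian, and each $\vw_k^0\sim\fN(\vzero,\mI_d)$ contributes $d$ independent standard Gaussians (its coordinates); hence the quantity $\max_{k\in[m]}\{\abs{a_k^0},\norm{\vw_k^0}_\infty\}$ is the maximum of $N:=m(d+1)$ (not necessarily independent, but that is irrelevant for the union bound) standard normal random variables $Z_1,\dots,Z_N$.

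The key step is the sub-Gaussian tail bound: for a standard normal $Z$ and any $s>0$, $\Prob(\abs{Z}\geq s)\leq 2\exp(-s^2/2)$. Applying a union bound over the $N$ coordinates gives
\begin{equation*}
\Prob\left(\max_{j\in[N]}\abs{Z_j}\geq s\right)\leq 2N\exp(-s^2/2).
\end{equation*}
Setting the right-hand side equal to $\delta$ and solving for $s$ yields $s=\sqrt{2\log(2N/\delta)}=\sqrt{2\log\frac{2m(d+1)}{\delta}}$, which is exactly the claimed bound. Thus with probability at least $1-\delta$ the stated inequality holds.

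There is no genuine obstacle here; the only thing to be careful about is the bookkeeping of how many scalar Gaussians there are (namely $m$ from the $a_k^0$'s and $md$ from the $\vw_k^0$'s, for a total of $m(d+1)$), so that the logarithmic factor comes out with the precise constant $2m(d+1)/\delta$ inside. I would also note that independence across coordinates is not needed — the union bound tolerates arbitrary dependence — though in fact all the Gaussians here are mutually independent by construction, so either viewpoint works. One could optionally remark that a slightly sharper bound on $\norm{\vw_k^0}_2$ (rather than $\norm{\vw_k^0}_\infty$) is available via concentration of the chi-squared distribution, but the $L^\infty$ version stated here is all that is needed downstream and follows immediately from the elementary argument above.
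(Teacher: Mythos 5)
Your argument is correct and is essentially identical to the paper's proof: both apply the standard Gaussian tail bound $\Prob(\abs{Z}>\eta)\leq 2\exp(-\eta^2/2)$ together with a union bound over the $m$ scalars $a_k^0$ and the $md$ coordinates of the $\vw_k^0$, then solve $2m(d+1)\exp(-\eta^2/2)=\delta$. Your remark that independence is not needed for the union bound is a correct (if minor) observation not emphasized in the paper.
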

\begin{proof}
    If $\rX \sim \fN(0, 1)$, then   for any $\eta > 0$,
%%%%%%%%%%%%%%%%%%%%%%%%%%%%%%%%%%%%%%%%%%%%%%%%%%%%%%%%%%%%
    \[\Prob(\abs{\rX} > \eta) \leq 2\exp\left(-\frac{1}{2}\eta^2\right).\] 
%%%%%%%%%%%%%%%%%%%%%%%%%%%%%%%%%%%%%%%%%%%%%%%%%%%%%%%%%%%%
 Since  for $k\in[m]$,
%%%%%%%%%%%%%%%%%%%%%%%%%%%%%%%%%%%%%%%%%%%%%%%%%%%%%%%%%%%%
    \[	{a}_k^0\sim \fN(0,1), \quad {\vw}_k^0\sim \fN(\vzero,\mI_d),\]
%%%%%%%%%%%%%%%%%%%%%%%%%%%%%%%%%%%%%%%%%%%%%%%%%%%%%%%%%%%%
 where
%%%%%%%%%%%%%%%%%%%%%%%%%%%%%%%%%%%%%%%%%%%%%%%%%%%%%%%%%%%%
    \[	{\vw}_k^0:=\left[\left({w}_k^0\right)_{1}, \left({w}_k^0\right)_{2}, \cdots, \left({w}_k^0\right)_{d} \right]^\T,\]
%%%%%%%%%%%%%%%%%%%%%%%%%%%%%%%%%%%%%%%%%%%%%%%%%%%%%%%%%%%%
 then for any $j\in[d]$,
%%%%%%%%%%%%%%%%%%%%%%%%%%%%%%%%%%%%%%%%%%%%%%%%%%%%%%%%%%%%
    \[	\left({w}_k^0\right)_{j}\sim \fN(0,1),\]
%%%%%%%%%%%%%%%%%%%%%%%%%%%%%%%%%%%%%%%%%%%%%%%%%%%%%%%%%%%%
and they are all independent with each other. 
As we set
%%%%%%%%%%%%%%%%%%%%%%%%%%%%%%%%%%%%%%%%%%%%%%%%%%%%%%%%%%%%
    \begin{equation*}
        \eta = \sqrt{2\log\frac{2m(d+1)}{\delta}},
    \end{equation*}
%%%%%%%%%%%%%%%%%%%%%%%%%%%%%%%%%%%%%%%%%%%%%%%%%%%%%%%%%%%%
we obtain that
%%%%%%%%%%%%%%%%%%%%%%%%%%%%%%%%%%%%%%%%%%%%%%%%%%%%%%%%%%%%
\begin{align*}
&~~\Prob\left(\max\limits_{k\in[m]}\left\{\abs{a_k^0},\; \norm{\vw_k^0}_{\infty}\right \}>\eta\right)\\
%%%%%%%%%%%%%%%%%%%%%%%%%%%%%%%%%%%%%%%%%%%%%%%%%%%%%%%%%%%%
& = \Prob\left(\max\limits_{k\in[m],j\in[d]}\left\{\abs{a_k^0},\; \abs{{(w^0_k)}_{j}}\right \}>\eta\right)  \\
%%%%%%%%%%%%%%%%%%%%%%%%%%%%%%%%%%%%%%%%%%%%%%%%%%%%%%%%%%%%     
&=\Prob\left(\bigcup\limits_{k=1}^m\left[\left(\abs{a_k^0}>\eta\right)\bigcup\left(\bigcup\limits_{j=1}^d\left(\abs{{(w_k^0)}_{j}}>\eta\right)\right)\right]\right) \\
%%%%%%%%%%%%%%%%%%%%%%%%%%%%%%%%%%%%%%%%%%%%%%%%%%%%%%%%%%%% 
& \leq \sum_{k=1}^m \Prob\left(\abs{a_k^0}>\eta\right) + \sum_{k=1}^m\sum_{j=1}^d \Prob\left(\abs{{(w^0_k)}_{j}}>\eta\right)\\
%%%%%%%%%%%%%%%%%%%%%%%%%%%%%%%%%%%%%%%%%%%%%%%%%%%%%%%%%%%% 
& \leq 2m \exp\left( -\frac{1}{2}\eta^2\right) + 2md \exp\left( -\frac{1}{2}\eta^2 \right)\\
%%%%%%%%%%%%%%%%%%%%%%%%%%%%%%%%%%%%%%%%%%%%%%%%%%%%%%%%%%%% 
& = 2m(d+1)\exp\left({-\frac{1}{2}\eta^2}\right) = \delta.
%%%%%%%%%%%%%%%%%%%%%%%%%%%%%%%%%%%%%%%%%%%%%%%%%%%%%%%%%%%% 
\end{align*}
%%%%%%%%%%%%%%%%%%%%%%%%%%%%%%%%%%%%%%%%%%%%%%%%%%%%%%%%%%%%
%%%%%%%%%%%%%%%%%%%%%%%%%%%%%%%%%%%%%%%%%%%%%%%%%%%%%%%%%%%% 
\end{proof}
%%%%%%%%%%%%%%%%%%%%%%%%%%%%%%%%%%%%%%%%%%%%%%%%%%%%%%%%%%%%
%%%%%%%%%%%%%%%%%%%%%%%%%%%%%%%%%%%%%%%%%%%%%%%%%%%%%%%%%%%%
Next we would like to introduce the sub-exponential norm~\cite{Vershynin2010Introduction} of a random variable and  Bernstein's Inequality.
%%%%%%%%%%%%%%%%%%%%%%%%%%%%%%%%%%%%%%%%%%%%%%%%%%%%%%%%%%%%
%%%%%%%%%%%%%%%%%%%%%%%%%%%%%%%%%%%%%%%%%%%%%%%%%%%%%%%%%%%%
\begin{definition}[Sub-exponential norm]
The sub-exponential norm of a random variable $\rX$ is defined as
%%%%%%%%%%%%%%%%%%%%%%%%%%%%%%%%%%%%%%%%%%%%%%%%%%%%%%%%%%%%    
    \begin{equation}
        \norm{\rX}_{\psi_1} := \inf\left\{s>0 \mid \Exp_{\rX}\left[\exp\left(\frac{\abs{\rX}}{s}\right)\right]\leq 2\right\}.
    \end{equation}
%%%%%%%%%%%%%%%%%%%%%%%%%%%%%%%%%%%%%%%%%%%%%%%%%%%%%%%%%%%%    
    In particular, we denote $\rX:=\chi^2(d)$ as   a   chi-square distribution with  $d$ degrees of freedom~\cite{Laurent2000AdaptiveEstimationQuadratic}, and its 
   sub-exponential norm by \[C_{\psi,d}:=\norm{\rX}_{\psi_1}.\] 
\end{definition}
%%%%%%%%%%%%%%%%%%%%%%%%%%%%%%%%%%%%%%%%%%%%%%%%%%%%%%%%%%%%

\begin{remark}
As the  probability density function of  $\rX=\chi^2(d)$ reads
%%%%%%%%%%%%%%%%%%%%%%%%%%%%%%%%%%%%%%%%%%%%%%%%%%%%%%%%%%%%    
    \begin{equation*}
        f_{\rX}(z):=\frac{1}{2^{\frac{d}{2}}\Gamma(\frac{d}{2})}z^{\frac{d}{2}-1}\exp\left({-\frac{z}{2}}\right),\quad z\geq 0,
    \end{equation*}
%%%%%%%%%%%%%%%%%%%%%%%%%%%%%%%%%%%%%%%%%%%%%%%%%%%%%%%%%%%%  
    we note that
%%%%%%%%%%%%%%%%%%%%%%%%%%%%%%%%%%%%%%%%%%%%%%%%%%%%%%%%%%%%  
\begin{align*}
        \Exp_{\rX\sim\chi^2(1)}\exp\left(\frac{\abs{\rX}}{s}\right)
         & =\int_0^{+\infty}\frac{1}{2^{\frac{1}{2}}\Gamma(\frac{1}{2})}z^{-\frac{1}{2}}\exp\left(-\left({\frac{1}{2}-\frac{1}{s}}\right)z\right)\diff{z}=\frac{1}{\sqrt{1-\frac{2}{s}}},
\end{align*}
%%%%%%%%%%%%%%%%%%%%%%%%%%%%%%%%%%%%%%%%%%%%%%%%%%%%%%%%%%%%  
Then we obtain that \[\frac{8}{3}\leq C_{\psi,1}<3.\] Moreover, we notice that 
%%%%%%%%%%%%%%%%%%%%%%%%%%%%%%%%%%%%%%%%%%%%%%%%%%%%%%%%%%%%  
\begin{align*}
        \Exp_{\rX\sim\chi^2(d)}\exp\left(\frac{\abs{\rX}}{s}\right)
         & =\left(\Exp_{\rY\sim\chi^2(1)}\exp\left(\frac{\abs{\rY}}{s}\right)\right)^d,
\end{align*}
%%%%%%%%%%%%%%%%%%%%%%%%%%%%%%%%%%%%%%%%%%%%%%%%%%%%%%%%%%%%  
as we set 
%%%%%%%%%%%%%%%%%%%%%%%%%%%%%%%%%%%%%%%%%%%%%%%%%%%%%%%%%%%%  
\begin{align*}
       \frac{1}{\sqrt{1-\frac{2}{s}}}=2^{\frac{1}{d}},
\end{align*}
%%%%%%%%%%%%%%%%%%%%%%%%%%%%%%%%%%%%%%%%%%%%%%%%%%%%%%%%%%%% 
then \[s=\frac{2}{1-2^{-\frac{2}{d}}},\]
hence 
%%%%%%%%%%%%%%%%%%%%%%%%%%%%%%%%%%%%%%%%%%%%%%%%%%%%%%%%%%%% 
\[\frac{2}{1-2^{-\frac{2}{d}}}\leq C_{\psi,d}<3,\]
%%%%%%%%%%%%%%%%%%%%%%%%%%%%%%%%%%%%%%%%%%%%%%%%%%%%%%%%%%%% 
and 
%%%%%%%%%%%%%%%%%%%%%%%%%%%%%%%%%%%%%%%%%%%%%%%%%%%%%%%%%%%% 
\[ C_{\psi,d}\geq C_{\psi,1},\]
%%%%%%%%%%%%%%%%%%%%%%%%%%%%%%%%%%%%%%%%%%%%%%%%%%%%%%%%%%%% 
for $d\geq 1$.
\end{remark}
%%%%%%%%%%%%%%%%%%%%%%%%%%%%%%%%%%%%%%%%%%%%%%%%%%%%%%%%%%%%
%%%%%%%%%%%%%%%%%%%%%%%%%%%%%%%%%%%%%%%%%%%%%%%%%%%%%%%%%%%%
\begin{lemma}\label{append...lem..chi2SubExpNorm}
 Given $\Norm{\vx}_2\leq 1$ and  $\Norm{\vy}_2\leq 1$ 
 equipped with   $\sigma(\cdot):\sR\to\sR$, with  $\sigma\in\fC^1(\sR)$, satisfying
%%%%%%%%%%%%%%%%%%%%%%%%%%%%%%%%%%%%%%%%%%%%%%%%%%%%%%%%%%%%
 \[\sigma(0)=0,\]
%%%%%%%%%%%%%%%%%%%%%%%%%%%%%%%%%%%%%%%%%%%%%%%%%%%%%%%%%%%%
 and 
%%%%%%%%%%%%%%%%%%%%%%%%%%%%%%%%%%%%%%%%%%%%%%%%%%%%%%%%%%%%
	\begin{equation*}
		\Norm{\sigma^{(1)}(\cdot)}_{\infty}\leq 1.
	\end{equation*}
%%%%%%%%%%%%%%%%%%%%%%%%%%%%%%%%%%%%%%%%%%%%%%%%%%%%%%%%%%%%
Under the condition that 
%%%%%%%%%%%%%%%%%%%%%%%%%%%%%%%%%%%%%%%%%%%%%%%%%%%%%%%%%%%%
  \[	{a}\sim \fN(0,1), \quad {\vw}\sim \fN(0, \mI_d),\] 
%%%%%%%%%%%%%%%%%%%%%%%%%%%%%%%%%%%%%%%%%%%%%%%%%%%%%%%%%%%%
then for any $\eps>0$,
%%%%%%%%%%%%%%%%%%%%%%%%%%%
\begin{itemize}
\item if \[\rX_1:=\frac{1}{\eps^2}\sigma(\eps\vw^\T\vx)\sigma(\eps\vw^\T\vy),\] 
%%%%%%%%%%%%%%%%%%%%%%%%%%% 
then \[\norm{\rX_1}_{\psi_1}\leq  C_{\psi,d},\]
%%%%%%%%%%%%%%%%%%%%%%%%%%%  %%%%%%%%%%%%%%%%%%%%%%%%%%%  %%%%%%%%%%%%%%%%%%%%%%%%%%% %%%%%%%%%%%%%%%%%%%%%%%%%%%
\item and if \[\rX_2:=a^2\sigma^{(1)}(\eps\vw^\T\vx)\sigma^{(1)}(\eps\vw^\T\vy)\left<\vx, \vy\right>,\] 
%%%%%%%%%%%%%%%%%%%%%%%%%%% 
then \[\norm{\rX_2}_{\psi_1} \leq C_{\psi,1}.\]
\end{itemize}
%%%%%%%%%%%%%%%%%%%%%%%%%%%
\end{lemma}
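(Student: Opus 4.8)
The plan is to bound each $\norm{\rX_i}_{\psi_1}$ by dominating $\abs{\rX_i}$ pointwise by a chi-square variable and then invoking monotonicity of the defining infimum. For $\rX_1$, the key pointwise estimate is that since $\sigma(0)=0$ and $\Norm{\sigma^{(1)}}_\infty\le 1$, the mean value theorem gives $\abs{\sigma(\eps\vw^\T\vx)}\le \eps\abs{\vw^\T\vx}\le \eps\Norm{\vw}_2\Norm{\vx}_2\le\eps\Norm{\vw}_2$ using $\Norm{\vx}_2\le 1$, and likewise for $\vy$. Hence
\[
\abs{\rX_1}=\frac{1}{\eps^2}\abs{\sigma(\eps\vw^\T\vx)}\,\abs{\sigma(\eps\vw^\T\vy)}\le \Norm{\vw}_2^2,
\]
and $\Norm{\vw}_2^2\sim\chi^2(d)$ since $\vw\sim\fN(\vzero,\mI_d)$. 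Then for any $s>0$,
\[
\Exp\exp\!\left(\frac{\abs{\rX_1}}{s}\right)\le \Exp\exp\!\left(\frac{\Norm{\vw}_2^2}{s}\right),
\]
so any $s$ that makes the right-hand side $\le 2$ also works for $\rX_1$; taking the infimum gives $\norm{\rX_1}_{\psi_1}\le\norm{\chi^2(d)}_{\psi_1}=C_{\psi,d}$.

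For $\rX_2$, the analogous pointwise bound uses $\Norm{\sigma^{(1)}}_\infty\le 1$ directly and Cauchy--Schwarz on the inner product: $\abs{\rX_2}\le a^2\abs{\left<\vx,\vy\right>}\le a^2\Norm{\vx}_2\Norm{\vy}_2\le a^2$, and $a^2\sim\chi^2(1)$ since $a\sim\fN(0,1)$. The same monotonicity argument for the Laplace-transform-type functional then yields $\norm{\rX_2}_{\psi_1}\le\norm{\chi^2(1)}_{\psi_1}=C_{\psi,1}$.

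The only subtlety worth flagging is the reduction step: one must check that $\abs{\rX_i}\le \rY$ pointwise (with $\rY$ the chi-square variable) genuinely implies $\Exp\exp(\abs{\rX_i}/s)\le \Exp\exp(\rY/s)$ for every $s>0$, which is immediate from monotonicity of $t\mapsto e^{t/s}$, and then that this inequality between the two expectations implies the claimed inequality between the two infima defining the sub-exponential norms — also immediate, since the feasible set for $\rY$ is contained in the feasible set for $\rX_i$. So there is essentially no obstacle here; the main "work" is just recording the two mean-value / Cauchy--Schwarz bounds cleanly. I would write the $\rX_1$ case in full and remark that the $\rX_2$ case is identical but simpler.
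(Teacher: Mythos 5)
Your proposal is correct and follows essentially the same route as the paper: bound $\abs{\rX_1}\le\Norm{\vw}_2^2\sim\chi^2(d)$ via Lipschitzness of $\sigma$ and Cauchy--Schwarz, bound $\abs{\rX_2}\le a^2\sim\chi^2(1)$, then invoke monotonicity of the sub-exponential norm under pointwise domination. Your write-up is in fact slightly cleaner than the paper's, which abbreviates the $\rX_1$ domination as an equality where a Cauchy--Schwarz inequality is actually being used.
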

%%%%%%%%%%%%%%%%%%%%%%%%%%%%%%%%%%%%%%%%%%%%%%%%%%%%%%%%%%%%
%%%%%%%%%%%%%%%%%%%%%%%%%%%%%%%%%%%%%%%%%%%%%%%%%%%%%%%%%%%%
%%%%%%%%%%%%%%%%%%%%%%%%%%%%%%%%%%%%%%%%%%%%%%%%%%%%%%%%%%%%
%%%%%%%%%%%%%%%%%%%%%%%%%%%%%%%%%%%%%%%%%%%%%%%%%%%%%%%%%%%%
\begin{proof}
Let
%%%%%%%%%%%%%%%%%%%%%%%%%%%
    \[\rZ:=\norm{\vw}^2_2\sim\chi^2(d).\]  
%%%%%%%%%%%%%%%%%%%%%%%%%%%
Since  
%%%%%%%%%%%%%%%%%%%%%%%%%%%
    \[\Abs{\rX_1}\leq \Abs{\vw^\T\vx}\Abs{\vw^\T\vy}=\norm{\vw}^2_2,\] 
%%%%%%%%%%%%%%%%%%%%%%%%%%%
then by definition,
%%%%%%%%%%%%%%%%%%%%%%%%%%%%%%%%%%%%%%%%%%%%%%%%%%%%%%%%%%%%
%%%%%%%%%%%%%%%%%%%%%%%%%%%%%%%%%%%%%%%%%%%%%%%%%%%%%%%%%%%%
\begin{align*}
\norm{\rX_1}_{\psi_1}
& =\inf\left\{s>0\mid\Exp_{\rX_1}\exp\left(\frac{\Abs{\rX_1}}{s}\right)\leq 2\right\} \\
%%%%%%%%%%%%%%%%%%%%%%%%%%%%%%%%%%%%%%%%%%%%%%%%%%%%%%%%%%%%          
& =\inf\left\{s>0\mid\Exp_{\vw}\exp\left(\frac{\Abs{\sigma(\vw^\T\vx)\sigma(\vw^\T\vy)}}{s}\right)\leq 2\right\}  \\
%%%%%%%%%%%%%%%%%%%%%%%%%%%%%%%%%%%%%%%%%%%%%%%%%%%%%%%%%%%%          
&\leq \inf\left\{s>0\mid\Exp_{\vw}\exp\left(\frac{\norm{\vw}^2_2}{s}\right)\leq 2\right\},         
\end{align*}
%%%%%%%%%%%%%%%%%%%%%%%%%%%%%%%%%%%%%%%%%%%%%%%%%%%%%%%%%%%%
hence 
%%%%%%%%%%%%%%%%%%%%%%%%%%%%%%%%%%%%%%%%%%%%%%%%%%%%%%%%%%%%
\begin{equation}
  \norm{\rX_1}_{\psi_1}\leq C_{\psi,d}.
\end{equation}
%%%%%%%%%%%%%%%%%%%%%%%%%%%%%%%%%%%%%%%%%%%%%%%%%%%%%%%%%%%%

By similar reasoning, as 
%%%%%%%%%%%%%%%%%%%%%%%%%%%
    \[\Abs{\rX_2}\leq a^2,\] 
%%%%%%%%%%%%%%%%%%%%%%%%%%%
hence 
%%%%%%%%%%%%%%%%%%%%%%%%%%%%%%%%%%%%%%%%%%%%%%%%%%%%%%%%%%%%
\begin{equation}
  \norm{\rX_2}_{\psi_1}\leq C_{\psi,1}.
\end{equation}
%%%%%%%%%%%%%%%%%%%%%%%%%%%%%%%%%%%%%%%%%%%%%%%%%%%%%%%%%%%%      
%%%%%%%%%%%%%%%%%%%%%%%%%%%%%%%%%%%%%%%%%%%%%%%%%%%%%%%%%%%%
 
%%%%%%%%%%%%%%%%%%%%%%%%%%%%%%%%%%%%%%%%%%%%%%%%%%%%%%%%%%%%
%%%%%%%%%%%%%%%%%%%%%%%%%%%%%%%%%%%%%%%%%%%%%%%%%%%%%%%%%%%%
%%%%%%%%%%%%%%%%%%%%%%%%%%%%%%%%%%%%%%%%%%%%%%%%%%%%%%%%%%%%
%%%%%%%%%%%%%%%%%%%%%%%%%%%%%%%%%%%%%%%%%%%%%%%%%%%%%%%%%%%%
%%%%%%%%%%%%%%%%%%%%%%%%%%%%%%%%%%%%%%%%%%%%%%%%%%%%%%%%%%%%
%%%%%%%%%%%%%%%%%%%%%%%%%%%%%%%%%%%%%%%%%%%%%%%%%%%%%%%%%%%%
%%%%%%%%%%%%%%%%%%%%%%%%%%%%%%%%%%%%%%%%%%%%%%%%%%%%%%%%%%%%    
\end{proof}
%%%%%%%%%%%%%%%%%%%%%%%%%%%%%%
We state an important theorem without proof, details of which can be found in~\cite{Vershynin2010Introduction}.
%%%%%%%%%%%%%%%%%%%%%%%%%%%%%%
\begin{theorem}[Bernstein's inequality]\label{append...thm...BernsteinInequality}
Let $\{\rX_k\}_{k=1}^m$ be i.i.d.\ sub-exponential random variables satisfying \[\Exp\rX_1=\mu,\] then for any $\eta\geq 0$, we have
%%%%%%%%%%%%%%%%%%%%%%%%%%%%%%%%%%%%%%%%%%%%%%%%%%%%%%%%%%%%   
\begin{equation*}
\Prob\left(\Abs{\frac{1}{m}\sum_{k=1}^m\rX_k-\mu}\geq \eta\right)\leq 2\exp\left(-C_0 m \min\left(\frac{\eta^2}{\norm{\rX_1}^2_{\psi_1}},\frac{\eta}{\norm{\rX_1}_{\psi_1}}\right)\right),
\end{equation*}
%%%%%%%%%%%%%%%%%%%%%%%%%%%%%%%%%%%%%%%%%%%%%%%%%%%%%%%%%%%%       
for some absolute constant $C_0$.
\end{theorem}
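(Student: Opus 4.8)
The plan is to prove Bernstein's inequality by the classical exponential-moment (Chernoff) method. First I would reduce to the centered variables $\rY_k := \rX_k - \mu$, which are still i.i.d.\ and sub-exponential: since $\norm{\cdot}_{\psi_1}$ is an Orlicz norm and $\Abs{\mu}\leq \Exp\Abs{\rX_1}\leq \norm{\rX_1}_{\psi_1}$, centering changes the norm by at most an absolute constant factor, so $\norm{\rY_1}_{\psi_1}\leq C\norm{\rX_1}_{\psi_1}$. By applying the argument below to both $\rY_k$ and $-\rY_k$ and taking a union bound, it suffices to control the upper tail $\Prob\!\left(\frac{1}{m}\sum_{k=1}^m \rY_k\geq \eta\right)$, at the cost of a factor $2$ out front.

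The crucial analytic input is a moment generating function estimate for a centered sub-exponential variable: there are absolute constants $c_1,c_2>0$ such that for all $\lambda$ with $\Abs{\lambda}\leq c_1/\norm{\rY_1}_{\psi_1}$ one has
\[
\Exp\exp(\lambda\rY_1)\leq \exp\!\left(c_2\lambda^2\norm{\rY_1}^2_{\psi_1}\right).
\]
To obtain this I would first extract from the definition of $\norm{\cdot}_{\psi_1}$ the moment bound $\Exp\Abs{\rY_1}^p\leq p!\,\norm{\rY_1}^p_{\psi_1}$ (expand $\Exp\exp(\Abs{\rY_1}/\norm{\rY_1}_{\psi_1})\leq 2$ and compare term by term), then Taylor-expand $\exp(\lambda\rY_1)=1+\lambda\rY_1+\sum_{p\geq 2}\lambda^p\rY_1^p/p!$, use $\Exp\rY_1=0$ to kill the linear term, and sum the resulting geometric-type tail over $p\geq 2$, which converges precisely when $\Abs{\lambda}$ stays bounded away from $1/\norm{\rY_1}_{\psi_1}$.

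Given the per-variable bound, independence yields, for $\Abs{\lambda}\leq c_1 m/\norm{\rY_1}_{\psi_1}$,
\[
\Exp\exp\!\left(\tfrac{\lambda}{m}\sum_{k=1}^m\rY_k\right)=\prod_{k=1}^m\Exp\exp\!\left(\tfrac{\lambda}{m}\rY_k\right)\leq \exp\!\left(c_2\tfrac{\lambda^2}{m}\norm{\rY_1}^2_{\psi_1}\right),
\]
and Markov's inequality gives $\Prob\!\left(\frac{1}{m}\sum_k\rY_k\geq\eta\right)\leq \exp\!\left(-\lambda\eta+c_2\tfrac{\lambda^2}{m}\norm{\rY_1}^2_{\psi_1}\right)$ for every admissible $\lambda>0$. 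The last step is to optimize: the unconstrained minimizer $\lambda^\ast=m\eta/(2c_2\norm{\rY_1}^2_{\psi_1})$ is admissible exactly when $\eta\lesssim\norm{\rY_1}_{\psi_1}$, in which case it produces the sub-Gaussian exponent $-c\,m\eta^2/\norm{\rY_1}^2_{\psi_1}$; otherwise I take $\lambda$ at the endpoint $c_1 m/\norm{\rY_1}_{\psi_1}$, producing the sub-exponential exponent $-c\,m\eta/\norm{\rY_1}_{\psi_1}$. Combining the two cases gives the $\min$ in the statement, and absorbing the centering constant into $C_0$ finishes the proof.

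I expect the main obstacle to be the moment generating function estimate and, in particular, the bookkeeping needed to track the admissible range of $\lambda$ — the series only converges for $\Abs{\lambda}$ bounded away from $1/\norm{\rY_1}_{\psi_1}$, and this is exactly what forces the two-regime form of the bound. Everything after the MGF estimate (tensorization, Markov, optimization) is routine.
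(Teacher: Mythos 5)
The paper states this theorem without proof, deferring to the cited reference (Vershynin); your proposal reconstructs precisely the standard Chernoff-method argument found there, and it is correct: centering at the cost of an absolute constant in the $\psi_1$-norm, the MGF bound $\Exp\exp(\lambda\rY_1)\leq\exp(c_2\lambda^2\norm{\rY_1}^2_{\psi_1})$ on the restricted range $\abs{\lambda}\lesssim 1/\norm{\rY_1}_{\psi_1}$, tensorization, and the two-regime optimization over $\lambda$ that produces the $\min$. Since the paper offers no proof of its own, there is no divergence to report; your argument is exactly the one the reference supplies.
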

%%%%%%%%%%%%%%%%%%%%%%%%%%%%%%%%%%%%%%%%%%%%%%%%%%%%%%%%%%%%   
%%%%%%%%%%%%%%%%%%%%%%%%%%%%%%%%%%%%%%%%%%%%%%%%%%%%%%%%%%%%   
%%%%%%%%%%%%%%%%%%%%%%%%%%%%%%%%%%%%%%%%%%%%%%%%%%%%%%%%%%%%   
\begin{proposition}[Upper and lower bounds of initial parameters]\label{append...prop..UpperBoundandLowerBoundInitial}
Given any $\delta\in(0,1)$,   if
%%%%%%%%%%%%%%%%%%%%%%%%%%%%%%%%%%%%%%%%%%%%%%%%%%%%%%%%%%%% 
    \[m=\Omega\left(\log \frac{4}{\delta} \right),\]
%%%%%%%%%%%%%%%%%%%%%%%%%%%%%%%%%%%%%%%%%%%%%%%%%%%%%%%%%%%%  
then with probability at least $1-\delta$ over the choice of $\vtheta^0$,
%%%%%%%%%%%%%%%%%%%%%%%%%%%%%%%%%%%%%%%%%%%%%%%%%%%%%%%%%%%%   
\begin{align}
\sqrt{\frac{m}{2}}& \leq \norm{\vtheta_{a}^0}_2\leq \sqrt{\frac{3m}{2}},\label{append...eq..InitialANorm}\\
%%%%%%%%%%%%%%%%%%%%%%%%%%%
\sqrt{\frac{md}{2}} & \leq \norm{\vtheta_{\vw}^0}_2\leq \sqrt{\frac{3md}{2}},\label{append...eqgroup..InitialWNorm}     \end{align}
%%%%%%%%%%%%%%%%%%%%%%%%%%%%%%%%%%%%%%%%%%%%%%%%%%%%%%%%%%%% 
and 
%%%%%%%%%%%%%%%%%%%%%%%%%%%%%%%%%%%%%%%%%%%%%%%%%%%%%%%%%%%% 
\begin{equation} \label{append...eqgroup..InitialThetaNorm} 
\sqrt{\frac{m(d+1)}{2}}\leq \norm{\vtheta^0}_2\leq \sqrt{\frac{3m(d+1)}{2}}.                                          
\end{equation}
%%%%%%%%%%%%%%%%%%%%%%%%%%%%%%%%%%%%%%%%%%%%%%%%%%%%%%%%%%%%   
%%%%%%%%%%%%%%%%%%%%%%%%%%%%%%%%%%%%%%%%%%%%%%%%%%%%%%%%%%%%   
\end{proposition}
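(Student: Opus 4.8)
The plan is to reduce \eqref{append...eqgroup..InitialThetaNorm} to a single concentration estimate for the squared norm $\norm{\vtheta^0}_2^2 = \norm{\vtheta_a^0}_2^2 + \norm{\vtheta_{\vw}^0}_2^2 = \sum_{k=1}^m (a_k^0)^2 + \sum_{k=1}^m \norm{\vw_k^0}_2^2$, and likewise handle the two pieces in \eqref{append...eq..InitialANorm} and \eqref{append...eqgroup..InitialWNorm} separately. First I would observe that $\norm{\vtheta^0}_2^2$ is a sum of $m(d+1)$ i.i.d.\ $\chi^2(1)$ random variables (equivalently, $\norm{\vtheta_a^0}_2^2 \sim \chi^2(m)$ and $\norm{\vtheta_{\vw}^0}_2^2 \sim \chi^2(md)$), each with mean $1$ and sub-exponential norm $C_{\psi,1} < 3$ as recorded in the remark following the sub-exponential norm definition. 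Then I would apply Bernstein's inequality (\Cref{append...thm...BernsteinInequality}) with $\eta = \tfrac12$ to each of the three averages $\tfrac{1}{m}\norm{\vtheta_a^0}_2^2$, $\tfrac{1}{md}\norm{\vtheta_{\vw}^0}_2^2$, and $\tfrac{1}{m(d+1)}\norm{\vtheta^0}_2^2$: since $\eta = \tfrac12 \le 1 < \norm{\rX_1}_{\psi_1}$, the minimum in the exponent is $\eta^2/\norm{\rX_1}_{\psi_1}^2$, giving a bound of the form $2\exp(-C_0 m / (4 C_{\psi,1}^2))$ (and similarly with $md$, $m(d+1)$ in place of $m$). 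On the complement of these events we have $\tfrac12 \le \tfrac1m\norm{\vtheta_a^0}_2^2 \le \tfrac32$ and analogously for the other two, and taking square roots and multiplying through by $\sqrt m$, $\sqrt{md}$, $\sqrt{m(d+1)}$ yields exactly the claimed inequalities.

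Next I would do the bookkeeping on probabilities. Note that \eqref{append...eqgroup..InitialThetaNorm} actually \emph{follows from} \eqref{append...eq..InitialANorm} and \eqref{append...eqgroup..InitialWNorm} by adding the squared bounds, so in principle only two events need to be controlled; but it is cleanest to just union-bound over all three (or over the two) Bernstein failure events, each of probability at most $2\exp(-C_0 m/(4C_{\psi,1}^2)) \le 2\exp(-C_0 m / 36)$. A union bound over the relevant events gives total failure probability at most $4\exp(-C_0 m/36)$ (or $6\exp(\cdot)$ if one bounds all three separately). The hypothesis $m = \Omega(\log(4/\delta))$ is precisely what is needed to force $4\exp(-C_0 m/36) \le \delta$: choosing the implied constant in $\Omega(\cdot)$ to be $36/C_0$ makes $C_0 m / 36 \ge \log(4/\delta)$, hence $4\exp(-C_0 m/36) \le \delta$, as required.

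The main (and really only) technical point to get right is the sub-exponential norm input to Bernstein: one needs $\norm{\chi^2(1)}_{\psi_1} = C_{\psi,1}$ to be a finite universal constant so that the exponent $C_0 m \min(\eta^2/C_{\psi,1}^2, \eta/C_{\psi,1})$ is genuinely $\Omega(m)$ and does not degrade with $d$. This is exactly why the three statements are phrased in terms of the \emph{averages} $\tfrac1m\norm{\vtheta_a^0}_2^2$, $\tfrac1{md}\norm{\vtheta_{\vw}^0}_2^2$, $\tfrac1{m(d+1)}\norm{\vtheta^0}_2^2$ of i.i.d.\ $\chi^2(1)$ summands rather than, say, a single $\chi^2(md)$ variable normalized by its own degrees of freedom — the per-coordinate sub-exponential norm is dimension-free, and that is what the bound $\tfrac83 \le C_{\psi,1} < 3$ from the earlier remark supplies. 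Everything else is routine: $\eta = \tfrac12$ is small enough that the $\min$ resolves to the quadratic branch, and taking square roots of $[\tfrac12, \tfrac32]$ and scaling is elementary. I would also remark that a matching lower-bound-on-probability is not needed; the statement is a high-probability \emph{upper and lower bound on the norms}, which is what the two-sided Bernstein estimate delivers directly.
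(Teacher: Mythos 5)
Your proposal is correct and follows essentially the same route as the paper: decompose $\norm{\vtheta_a^0}_2^2$ and $\norm{\vtheta_{\vw}^0}_2^2$ into sums of i.i.d.\ $\chi^2(1)$ variables, apply Bernstein's inequality (\Cref{append...thm...BernsteinInequality}) on the quadratic branch with the dimension-free constant $C_{\psi,1}$, union-bound the two failure events, and obtain the bound on $\norm{\vtheta^0}_2$ by adding the squared estimates. The only cosmetic difference is that you fix the deviation $\eta=\tfrac12$ and let $m=\Omega(\log\tfrac{4}{\delta})$ push the failure probability below $\delta$, whereas the paper fixes the failure probability at $\tfrac{\delta}{2}$ per event and lets the same hypothesis on $m$ push $\eta$ below $\tfrac12$ — these are equivalent.
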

%%%%%%%%%%%%%%%%%%%%%%%%%%%%%%%%%%%%%%%%%%%%%%%%%%%%%%%%%%%% 
\begin{proof}
Since 
%%%%%%%%%%%%%%%%%%%%%%%%%%%%%%%%%%%%%%%%%%%%%%%%%%%%%%%%%%%% 
\[\left(a_1^0\right)^2, \left(a_2^0\right)^2, \cdots, \left(a_m^0\right)^2\sim\chi^2(1)\] 
%%%%%%%%%%%%%%%%%%%%%%%%%%%%%%%%%%%%%%%%%%%%%%%%%%%%%%%%%%%% 
are i.i.d.\ sub-exponential  
random variables   with 
%%%%%%%%%%%%%%%%%%%%%%%%%%%%%%%%%%%%%%%%%%%%%%%%%%%%%%%%%%%% 
\[\Exp \left(a_1^0\right)^2=1.\] 
%%%%%%%%%%%%%%%%%%%%%%%%%%%%%%%%%%%%%%%%%%%%%%%%%%%%%%%%%%%% 
By application of  \Cref{append...thm...BernsteinInequality}, we have
%%%%%%%%%%%%%%%%%%%%%%%%%%%%%%%%%%%%%%%%%%%%%%%%%%%%%%%%%%%%  
\begin{equation*}
        \Prob\left(\Abs{\frac{1}{m}\sum_{k=1}^{m}\left(a_k^0\right)^2-1}\geq \eta\right)\leq 2\exp\left(-C_0 m \min\left(\frac{\eta^2}{C^2_{\psi,1}},\frac{\eta}{C_{\psi,1}}\right)\right),
\end{equation*}
%%%%%%%%%%%%%%%%%%%%%%%%%%%%%%%%%%%%%%%%%%%%%%%%%%%%%%%%%%%%   
since $C_{\psi,1}\geq \frac{8}{3}>2$, then  for any $0\leq \eta\leq 2$, it is obvious that
%%%%%%%%%%%%%%%%%%%%%%%%%%%%%%%%%%%%%%%%%%%%%%%%%%%%%%%%%%%%   
\[
\min\left(\frac{\eta^2}{C^2_{\psi,1}},\frac{\eta}{C_{\psi,1}}\right)=\frac{\eta^2}{C^2_{\psi,1}}.
\]
%%%%%%%%%%%%%%%%%%%%%%%%%%%%%%%%%%%%%%%%%%%%%%%%%%%%%%%%%%%%   
We  set 
%%%%%%%%%%%%%%%%%%%%%%%%%%%%%%%%%%%%%%%%%%%%%%%%%%%%%%%%%%%%   
\[
2\exp\left(-C_0 m \frac{\eta^2}{C^2_{\psi,1}}\right)=\frac{\delta}{2},
\]
%%%%%%%%%%%%%%%%%%%%%%%%%%%%%%%%%%%%%%%%%%%%%%%%%%%%%%%%%%%%   
and consequently,
%%%%%%%%%%%%%%%%%%%%%%%%%%%%%%%%%%%%%%%%%%%%%%%%%%%%%%%%%%%%   
\[
\eta=\sqrt{\frac{C^2_{\psi,1}}{m C_0 }\log\frac{4}{\delta}},
\]
%%%%%%%%%%%%%%%%%%%%%%%%%%%%%%%%%%%%%%%%%%%%%%%%%%%%%%%%%%%%
then  with probability at least $1-\frac{\delta}{2}$ over the choice of $\vtheta^0$,
%%%%%%%%%%%%%%%%%%%%%%%%%%%%%%%%%%%%%%%%%%%%%%%%%%%%%%%%%%%%
\begin{equation}
   m\left(1-\sqrt{\frac{C^2_{\psi,1}}{m C_0 }\log\frac{4}{\delta}}\right) \leq \norm{\vtheta_{a}^0}^2_2\leq m\left(1+\sqrt{\frac{C^2_{\psi,1}}{m C_0 }\log\frac{4}{\delta}}\right),
\end{equation}
%%%%%%%%%%%%%%%%%%%%%%%%%%%%%%%%%%%%%%%%%%%%%%%%%%%%%%%%%%%%
by choosing 
%%%%%%%%%%%%%%%%%%%%%%%%%%%%%%%%%%%%%%%%%%%%%%%%%%%%%%%%%%%%
\[
m\geq{\frac{4C^2_{\psi,1}}{ C_0 }\log\frac{4}{\delta}},
\]
%%%%%%%%%%%%%%%%%%%%%%%%%%%%%%%%%%%%%%%%%%%%%%%%%%%%%%%%%%%%
we obtain that 
%%%%%%%%%%%%%%%%%%%%%%%%%%%%%%%%%%%%%%%%%%%%%%%%%%%%%%%%%%%%
\[ \sqrt{\frac{m}{2}}
           \leq \norm{\vtheta_{a}^0}_2\leq \sqrt{\frac{3m}{2}}.\]
%%%%%%%%%%%%%%%%%%%%%%%%%%%%%%%%%%%%%%%%%%%%%%%%%%%%%%%%%%%%

As for $\vtheta_{\vw}^0$, 
by application of \Cref{append...thm...BernsteinInequality},
%%%%%%%%%%%%%%%%%%%%%%%%%%%%%%%%%%%%%%%%%%%%%%%%%%%%%%%%%%%%  
    \begin{equation*}
        \Prob\left(\Abs{\frac{1}{md}\sum_{k=1}^{md}\left(w_k^0\right)^2-1}\geq \eta\right)\leq 2\exp\left(-C_0 md \min\left(\frac{\eta^2}{C^2_{\psi,1}},\frac{\eta}{C_{\psi,1}}\right)\right).
    \end{equation*}
%%%%%%%%%%%%%%%%%%%%%%%%%%%%%%%%%%%%%%%%%%%%%%%%%%%%%%%%%%%%   
We  set 
%%%%%%%%%%%%%%%%%%%%%%%%%%%%%%%%%%%%%%%%%%%%%%%%%%%%%%%%%%%%   
\[
2\exp\left(-C_0 md \frac{\eta^2}{C^2_{\psi,1}}\right)=\frac{\delta}{2},
\]
%%%%%%%%%%%%%%%%%%%%%%%%%%%%%%%%%%%%%%%%%%%%%%%%%%%%%%%%%%%%   
and consequently,
%%%%%%%%%%%%%%%%%%%%%%%%%%%%%%%%%%%%%%%%%%%%%%%%%%%%%%%%%%%%   
\[
\eta=\sqrt{\frac{C^2_{\psi,1}}{md C_0 }\log\frac{4}{\delta}},
\]
%%%%%%%%%%%%%%%%%%%%%%%%%%%%%%%%%%%%%%%%%%%%%%%%%%%%%%%%%%%%
then  with probability at least $1-\frac{\delta}{2}$ over the choice of $\vtheta^0$,
%%%%%%%%%%%%%%%%%%%%%%%%%%%%%%%%%%%%%%%%%%%%%%%%%%%%%%%%%%%%
\begin{equation}
   md\left(1-\sqrt{\frac{C^2_{\psi,1}}{md C_0 }\log\frac{4}{\delta}}\right) \leq \norm{\vtheta_{\vw}^0}^2_2\leq md\left(1+\sqrt{\frac{C^2_{\psi,1}}{md C_0 }\log\frac{4}{\delta}}\right),
\end{equation}
%%%%%%%%%%%%%%%%%%%%%%%%%%%%%%%%%%%%%%%%%%%%%%%%%%%%%%%%%%%%
by choosing 
%%%%%%%%%%%%%%%%%%%%%%%%%%%%%%%%%%%%%%%%%%%%%%%%%%%%%%%%%%%%
\[
m\geq{\frac{4C^2_{\psi,1}}{ dC_0 }\log\frac{4}{\delta}},
\]
%%%%%%%%%%%%%%%%%%%%%%%%%%%%%%%%%%%%%%%%%%%%%%%%%%%%%%%%%%%%
we obtain that 
%%%%%%%%%%%%%%%%%%%%%%%%%%%%%%%%%%%%%%%%%%%%%%%%%%%%%%%%%%%%
\[ \sqrt{\frac{md}{2}}
\leq \norm{\vtheta_{\vw}^0}_2\leq \sqrt{\frac{3md}{2}}.\]
%%%%%%%%%%%%%%%%%%%%%%%%%%%%%%%%%%%%%%%%%%%%%%%%%%%%%%%%%%%%

Finally, as 
\[
\norm{\vtheta^0}_2^2=\norm{\vtheta_{a}^0}^2_2+\norm{\vtheta_{\vw}^0}^2_2,
\]
then with probability at least $1-\delta$,
%%%%%%%%%%%%%%%%%%%%%%%%%%%%%%%%%%%%%%%%%%%%%%%%%%%%%%%%%%%%
\[
  \sqrt{\frac{m(d+1)}{2}}
          \leq \norm{\vtheta^0}_2\leq \sqrt{\frac{3m(d+1)}{2}}.
\]
%%%%%%%%%%%%%%%%%%%%%%%%%%%%%%%%%%%%%%%%%%%%%%%%%%%%%%%%%%%%
%%%%%%%%%%%%%%%%%%%%%%%%%%%%%%%%%%%%%%%%%%%%%%%%%%%%%%%%%%%%
\end{proof}
%%%%%%%%%%%%%%%%%%%%%%%%%%%%%%%%%%%%%%%%%%%%%%%%%%%%%%%%%%%%
%%%%%%%%%%%%%%%%%%%%%%%%%%%%%%%%%%%%%%%%%%%%%%%%%%%%%%%%%%%%
%%%%%%%%%%%%%%%%%%%%%%%%%%%%%%%%%%%%%%%%%%%%%%%%%%%%%%%%%%%%
%%%%%%%%%%%%%%%%%%%%%%%%%%%%%%%%%%%%%%%%%%%%%%%%%%%%%%%%%%%%
%%%%%%%%%%%%%%%%%%%%%%%%%%%%%%%%%%%%%%%%%%%%%%%%%%%%%%%%%%%%
%%%%%%%%%%%%%%%%%%%%%%%%%%%%%%%%%%%%%%%%%%%%%%%%%%%%%%%%%%%%
%%%%%%%%%%%%%%%%%%%%%%%%%%%%%%%%%%%%%%%%%%%%%%%%%%%%%%%%%%%%
%%%%%%%%%%%%%%%%%%%%%%%%%%%%%%%%%%%%%%%%%%%%%%%%%%%%%%%%%%%%
%%%%%%%%%%%%%%%%%%%%%%%%%%%%%%%%%%%%%%%%%%%%%%%%%%%%%%%%%%%%
%%%%%%%%%%%%%%%%%%%%%%%%%%%%%%%%%%%%%%%%%%%%%%%%%%%%%%%%%%%%
%%%%%%%%%%%%%%%%%%%%%%%%%%%%%%%%%%%%%%%%%%%%%%%%%%%%%%%%%%%%
%%%%%%%%%%%%%%%%%%%%%%%%%%%%%%%%%%%%%%%%%%%%%%%%%%%%%%%%%%%%
%%%%%%%%%%%%%%%%%%%%%%%%%%%%%%%%%%%%%%%%%%%%%%%%%%%%%%%%%%%%
%%%%%%%%%%%%%%%%%%%%%%%%%%%%%%%%%%%%%%%%%%%%%%%%%%%%%%%%%%%%
%%%%%%%%%%%%%%%%%%%%%%%%%%%%%%%%%%%%%%%%%%%%%%%%%%%%%%%%%%%%
%%%%%%%%%%%%%%%%%%%%%%%%%%%%%%%%%%%%%%%%%%%%%%%%%%%%%%%%%%%%
\section{Linear Regime}\label{appen...LinearRegime}
%%%%%%%%%%%%%%%%%%%%%%%%%%%%%%%%%%%%%%%%%%%%%%%%%%%%%%%%%%%%
%%%%%%%%%%%%%%%%%%%%%%%%%%%%%%%%%%%%%%%%%%%%%%%%%%%%%%%%%%%%
\subsection{Full Rankness of the Gram matrices}\label{append...subsection....GramMatrices}
%%%%%%%%%%%%%%%%%%%%%%%%%%%%%%%%%%%%%%%%%%%%%%%%%%%%%%%%%%%%
We shall state two lemmas concerning full rankness of the Gram matrices, which have been stated as Lemma $\mathrm{F.1.}$ and Lemma $\mathrm{F.2.}$ in Du et al.~\cite{Du2018Gradient}.
%%%%%%%%%%%%%%%%%%%%%%%%%%%%%%%%%%%%%%%%%%%%%%%%%%%%%%%%%%%%
\begin{lemma}\label{append...lemma...GramNoDerivative}
Assume $\sigma(\cdot)$ is analytic and not a polynomial function. Consider input data set as 
$\fZ=\{\vz_1,\vz_2,\dots,\vz_n\}$, and  non-parallel with each other, i.e., for any $j\neq k$,
%%%%%%%%%%%%%%%%%%%%%%%%%%%%%%%%%%%%%%%%%%%%%%%%%%%%%%%%%%%%
\[\vz_j\notin \mathrm{span}\left(\vz_k\right),\] 
%%%%%%%%%%%%%%%%%%%%%%%%%%%%%%%%%%%%%%%%%%%%%%%%%%%%%%%%%%%%
 we  define
%%%%%%%%%%%%%%%%%%%%%%%%%%%%%%%%%%%%%%%%%%%%%%%%%%%%%%%%%%%%
\begin{equation}% \label{eq...gram matrix witout derivative}
\begin{aligned}
&\mG(\fZ):=\left[\mG(\fZ)_{ij}\right],
\\
    \mG(\fZ)_{ij}&:=\Exp_{\vw\sim \fN\left(\vzero,  \mI \right)}\left[ \sigma(\vw^{\T}\vz_i)\sigma(\vw^{\T}\vz_j)\right],
    \end{aligned}
\end{equation}
%%%%%%%%%%%%%%%%%%%%%%%%%%%%%%%%%%%%%%%%%%%%%%%%%%%%%%%%%%%%
then $\lambda_{\min}\left(\mG(\fZ)\right)>0.$
\end{lemma}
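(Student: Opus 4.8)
The plan is to prove $\lambda_{\min}(\mG(\fZ))>0$ by contradiction, reducing the multivariate identity coming from a hypothetical kernel vector to a one-dimensional one and then invoking a generalized Vandermonde determinant. First I would record that $\mG(\fZ)$ is symmetric and positive semi-definite, since for every $\va=(a_1,\dots,a_n)^\T$ we have $\va^\T\mG(\fZ)\va=\Exp_{\vw\sim\fN(\vzero,\mI)}\bigl[(\sum_{i=1}^n a_i\sigma(\vw^\T\vz_i))^2\bigr]\ge 0$. Thus it suffices to show that no nonzero $\va$ lies in the kernel. Suppose, for contradiction, that such an $\va\neq\vzero$ exists. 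Then the function $h(\vw):=\sum_{i=1}^n a_i\sigma(\vw^\T\vz_i)$ satisfies $\Exp[h(\vw)^2]=0$; since $\sigma$ is analytic, $h$ is continuous, and since the Gaussian measure has full support on $\sR^d$, this forces $h\equiv 0$ on $\sR^d$.

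The second step is a reduction to a single line. The hypothesis $\vz_j\notin\mathrm{span}(\vz_k)$ for $j\neq k$ implies $\vz_i\neq\vzero$ for all $i$ and $\vz_i\neq\pm\vz_j$ for $i\neq j$, hence the vectors $\vz_i$, $\vz_i-\vz_j$ and $\vz_i+\vz_j$ are all nonzero. Consequently the set of $\vv\in\sR^d$ with $\langle\vv,\vz_i\rangle=0$ for some $i$, or with $\langle\vv,\vz_i-\vz_j\rangle=0$ or $\langle\vv,\vz_i+\vz_j\rangle=0$ for some $i\neq j$, is a finite union of hyperplanes and has empty interior; I would pick $\vv$ in its complement. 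Writing $b_i:=\langle\vv,\vz_i\rangle$, this choice guarantees that every $b_i\neq 0$ and that $b_1^2,\dots,b_n^2$ are pairwise distinct. Now $t\mapsto h(t\vv)=\sum_{i=1}^n a_i\sigma(tb_i)$ vanishes identically in $t\in\sR$, so differentiating $r$ times at $t=0$ yields $\sigma^{(r)}(0)\sum_{i=1}^n a_i b_i^r=0$ for every $r\ge 0$. Because $\sigma$ is analytic and not a polynomial, the set $R:=\{r\ge 0:\sigma^{(r)}(0)\neq 0\}$ is infinite; dividing by $\sigma^{(r)}(0)$ gives $\sum_{i=1}^n a_i b_i^r=0$ for all $r\in R$, and at least one of the sets $R\cap\{\text{even}\}$ and $R\cap\{\text{odd}\}$ is infinite.

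The final step closes the contradiction by a generalized Vandermonde argument. If $R$ contains infinitely many even $r=2s$, choose $n$ of the corresponding exponents $s_1<\dots<s_n$; the equations $\sum_{i=1}^n a_i (b_i^2)^{s_\ell}=0$ say $M\va=\vzero$ with $M_{\ell i}=(b_i^2)^{s_\ell}$. Since $b_1^2,\dots,b_n^2$ are distinct positive reals and $s_1,\dots,s_n$ are distinct nonnegative integers, $M$ is a generalized Vandermonde matrix, hence nonsingular (the power functions $x\mapsto x^{s_\ell}$ form a Chebyshev system on $(0,\infty)$), which forces $\va=\vzero$. If instead $R$ contains infinitely many odd $r=2s+1$, the relations read $\sum_{i=1}^n (a_ib_i)(b_i^2)^{s}=0$, and the identical argument gives $a_ib_i=0$ for all $i$, hence $\va=\vzero$ since each $b_i\neq 0$. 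In either case we contradict $\va\neq\vzero$, so $\mG(\fZ)$ is nonsingular, and being positive semi-definite it is positive definite, i.e.\ $\lambda_{\min}(\mG(\fZ))>0$.

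I expect the genuine content to sit entirely in the reduction step: one must choose the probing direction $\vv$ so that $b_1^2,\dots,b_n^2$ are truly distinct, and this is exactly where the non-parallelism of the $\vz_i$ is indispensable (it is what prevents $\langle\vv,\vz_i\pm\vz_j\rangle=0$ from being unavoidable), while analyticity-but-not-a-polynomial is what supplies the infinitely many exponents $r\in R$ needed to run the Vandermonde argument. The remaining ingredients---positivity of the quadratic form, ``continuous and zero almost everywhere implies identically zero'', and nonsingularity of generalized Vandermonde matrices---are standard.
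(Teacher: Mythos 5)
Your proof is correct. Be aware, though, that the paper itself never proves this statement: it is imported verbatim as Lemma F.1 of Du et al.~\cite{Du2018Gradient}, so there is no in-paper argument to compare against, and your write-up stands as a self-contained substitute. The steps all check out: for a putative kernel vector $(a_1,\dots,a_n)^\T$ the quadratic form equals $\Exp_{\vw}\bigl[\bigl(\sum_{i=1}^n a_i\sigma(\vw^\T\vz_i)\bigr)^2\bigr]$, and continuity of $h$ together with the full support of the Gaussian forces $h\equiv 0$; pairwise non-parallelism does imply $\vz_i\neq\vzero$ and $\vz_i\pm\vz_j\neq\vzero$, so the excluded set is a finite union of proper hyperplanes and a direction $\vv$ with all $b_i\neq 0$ and $b_1^2,\dots,b_n^2$ pairwise distinct exists; real analyticity on all of $\sR$ plus the identity theorem indeed gives infinitely many $r$ with $\sigma^{(r)}(0)\neq 0$ (otherwise $\sigma$ would equal its Taylor polynomial at $0$ everywhere); and the generalized Vandermonde matrix $\bigl[(b_i^2)^{s_\ell}\bigr]$ with distinct positive bases and distinct integer exponents is nonsingular --- Descartes' rule of signs gives an elementary justification if you prefer not to invoke Chebyshev systems. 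The cited proof in Du et al.\ likewise reduces strict positive definiteness to linear independence of the feature functions $\vw\mapsto\sigma(\vw^\T\vz_i)$ in $L^2$ of the Gaussian; your route differs only in how non-parallelism is exploited (restriction to a generic one-dimensional slice followed by the Vandermonde argument), which is arguably more elementary and equally general. Two minor remarks: the even/odd split is avoidable, since from $\sum_i a_ib_i^r=0$ for infinitely many $r$ one can divide by $\max\{|b_i|^r: a_i\neq 0\}$ and let $r\to\infty$ to kill the dominant coefficient directly; and, like the lemma statement itself, your step $\Exp[h^2]=0$ tacitly assumes the Gaussian integrals defining $\mG(\fZ)$ are finite, which holds for the activations the paper actually uses (bounded $\sigma^{(1)}$, as in \Cref{Assumption....ActivationFunctions...NTK}) but not for an arbitrary analytic $\sigma$.
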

%%%%%%%%%%%%%%%%%%%%%%%%%%%%%%%%%%%%%%%%%%%%%%%%%%%%%%%%%%%%
Similar to   \Cref{append...lemma...GramNoDerivative}, we have another  Lemma.
%%%%%%%%%%%%%%%%%%%%%%%%%%%%%%%%%%%%%%%%%%%%%%%%%%%%%%%%%%%%
\begin{lemma}\label{append...lemma...GramWithDerivative}
Assume $\sigma(\cdot)$ is analytic and not a polynomial function. Consider input data set as 
$\fZ=\{\vz_1,\vz_2,\dots,\vz_n\}$, and  non-parallel with each other,   we  define
%%%%%%%%%%%%%%%%%%%%%%%%%%%%%%%%%%%%%%%%%%%%%%%%%%%%%%%%%%%%
\begin{equation}%\label{eq...gram matrix with derivative}
\begin{aligned}
 \mG(\fZ)&:=\left[\mG(\fZ)_{ij}\right],
\\  \mG(\fZ)_{ij}&:=\Exp_{\vw\sim \fN\left(\vzero, \mI \right)}\left[ \sigma^{(1)}(\vw^{\T}\vz_i)\sigma^{(1)}(\vw^{\T}\vz_j)\left<\vz_i, \vz_j\right>\right],
 \end{aligned}
\end{equation}
%%%%%%%%%%%%%%%%%%%%%%%%%%%%%%%%%%%%%%%%%%%%%%%%%%%%%%%%%%%%
then $\lambda_{\min}\left(\mG(\fZ)\right)>0.$
\end{lemma}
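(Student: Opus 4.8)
The plan is to reduce Lemma~\ref{append...lemma...GramWithDerivative} to Lemma~\ref{append...lemma...GramNoDerivative} by an auxiliary-variable trick, rather than redo the harmonic-analysis argument from scratch. First I would observe that the extra factor $\left<\vz_i,\vz_j\right>$ is harmless: if $\mD$ denotes the diagonal matrix $\mathrm{diag}(\Norm{\vz_1}_2,\ldots,\Norm{\vz_n}_2)$ and $\hat{\vz}_i:=\vz_i/\Norm{\vz_i}_2$, then
\[
\mG(\fZ)_{ij}=\Norm{\vz_i}_2\Norm{\vz_j}_2\,\Exp_{\vw}\left[\sigma^{(1)}(\vw^\T\vz_i)\sigma^{(1)}(\vw^\T\vz_j)\left<\hat{\vz}_i,\hat{\vz}_j\right>\right],
\]
so it suffices to prove strict positive definiteness of the matrix $\mH$ with entries $\mH_{ij}:=\Exp_{\vw}[\sigma^{(1)}(\vw^\T\vz_i)\sigma^{(1)}(\vw^\T\vz_j)\left<\hat{\vz}_i,\hat{\vz}_j\right>]$, since $\mG(\fZ)=\mD\mH\mD$ and $\mD$ is invertible (the $\vz_i$ are nonzero because they are pairwise non-parallel, hence in particular none is the zero vector once $n\geq2$; the degenerate case $n=1$ is trivial). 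The key remaining step is to interpret $\mH$ as a Gram matrix of vectors in a Hilbert space: writing $\left<\hat{\vz}_i,\hat{\vz}_j\right>=\sum_{\ell=1}^d (\hat{\vz}_i)_\ell(\hat{\vz}_j)_\ell$ and pulling the sum outside the expectation, $\mH=\sum_{\ell=1}^d \mH^{(\ell)}$ where $\mH^{(\ell)}_{ij}=\Exp_{\vw}[(\hat{\vz}_i)_\ell\sigma^{(1)}(\vw^\T\vz_i)\cdot(\hat{\vz}_j)_\ell\sigma^{(1)}(\vw^\T\vz_j)]$ is positive semidefinite (it is $\Exp_\vw[\vu^{(\ell)}(\vw)\vu^{(\ell)}(\vw)^\T]$ for the random vector $\vu^{(\ell)}(\vw)$ with entries $(\hat{\vz}_i)_\ell\sigma^{(1)}(\vw^\T\vz_i)$). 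Hence $\mH\succeq0$ automatically, and the work is to show no nonzero $\vc\in\sR^n$ lies in the kernel.

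For the strict positivity, suppose $\vc^\T\mH\vc=0$. Then $\Exp_\vw\big[\big(\sum_{i}c_i(\hat{\vz}_i)_\ell\sigma^{(1)}(\vw^\T\vz_i)\big)^2\big]=0$ for each $\ell$; equivalently, summing over $\ell$, $\Exp_\vw\big[\big\|\sum_i c_i\sigma^{(1)}(\vw^\T\vz_i)\hat{\vz}_i\big\|_2^2\big]=0$, so the $\sR^d$-valued function $\vw\mapsto\sum_i c_i\sigma^{(1)}(\vw^\T\vz_i)\hat{\vz}_i$ vanishes for $\vw$ in a set of full Gaussian measure, hence (by continuity, since $\sigma\in\fC^1$) identically on $\sR^d$. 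Now I would run the same analyticity argument that underlies Lemma~\ref{append...lemma...GramNoDerivative}: since $\sigma$ is analytic and non-polynomial, $\sigma^{(1)}$ is analytic and non-polynomial, so its Taylor expansion $\sigma^{(1)}(t)=\sum_{r\geq0}b_r t^r$ has infinitely many nonzero coefficients $b_r$. Expanding, $\sum_i c_i\big(\sum_r b_r(\vw^\T\vz_i)^r\big)\hat{\vz}_i\equiv0$; matching homogeneous-degree-$r$ parts gives, for every $r$ with $b_r\neq0$, the identity $\sum_i c_i(\vw^\T\vz_i)^r\hat{\vz}_i=0$ for all $\vw$. Picking a coordinate $m$ of $\sR^d$ in which $\hat{\vz}_i$ has a nonzero entry for the relevant $i$, this is a polynomial identity in $\vw$; by the standard fact that the monomials $\vw\mapsto(\vw^\T\vz_i)^r$ are linearly independent whenever the $\vz_i$ are pairwise non-parallel and $r$ is large enough (which is precisely the content exploited in the proof of Lemma~\ref{append...lemma...GramNoDerivative}, via distinct leading monomials), one concludes $c_i(\hat{\vz}_i)_m=0$ for each $i$ — and choosing $m$ appropriately for each fixed $i$ forces $c_i=0$. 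Hence $\vc=0$, so $\mH\succ0$ and therefore $\lambda_{\min}(\mG(\fZ))=\lambda_{\min}(\mD\mH\mD)>0$.

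The main obstacle is the linear-independence step: justifying that $\{\vw\mapsto(\vw^\T\vz_i)^r\}_{i=1}^n$ (vector-weighted by the $\hat{\vz}_i$) cannot cancel. The cleanest route is to not reprove it but to invoke Lemma~\ref{append...lemma...GramNoDerivative} essentially verbatim after reducing to scalars: fix $\ell$ with $(\hat{\vz}_i)_\ell\neq0$ for at least one $i$, restrict attention to the index set $I_\ell:=\{i:(\hat{\vz}_i)_\ell\neq0\}$, note $\Exp_\vw\big[\big(\sum_{i\in I_\ell}(c_i(\hat{\vz}_i)_\ell)\sigma^{(1)}(\vw^\T\vz_i)\big)^2\big]=0$ says $\vc'{}^\T\tilde{\mG}\vc'=0$ where $\tilde{\mG}$ is the matrix of Lemma~\ref{append...lemma...GramNoDerivative} applied to $\sigma^{(1)}$ (analytic, non-polynomial) and the sub-family $\{\vz_i\}_{i\in I_\ell}$ (still pairwise non-parallel), and $\vc'_i=c_i(\hat{\vz}_i)_\ell$. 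By Lemma~\ref{append...lemma...GramNoDerivative}, $\tilde{\mG}\succ0$, so $\vc'=0$, i.e.\ $c_i=0$ for all $i\in I_\ell$. Ranging $\ell$ over $[d]$ and using that every $\hat{\vz}_i$ has some nonzero coordinate, we get $c_i=0$ for all $i$, finishing the proof. This makes the whole argument a short corollary of the previous lemma; the only care needed is the bookkeeping that $\sigma^{(1)}$ inherits analyticity and non-polynomiality from $\sigma$, and that deleting indices preserves the non-parallel hypothesis.
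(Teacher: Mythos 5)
Your proof is correct, but the comparison to ``the paper's proof'' is a bit moot: the paper does not prove this lemma at all --- it simply cites it (together with \Cref{append...lemma...GramNoDerivative}) as Lemmas F.1 and F.2 of Du et al., and the original Du et al.\ argument for F.2 is a direct analyticity/linear-independence argument parallel to F.1 rather than a reduction to it. Your route is genuinely different and, I think, slicker: decomposing $\left<\vz_i,\vz_j\right>=\sum_{\ell}(\vz_i)_\ell(\vz_j)_\ell$ to write the matrix as a sum over coordinates $\ell$ of scalar Gram matrices, then applying \Cref{append...lemma...GramNoDerivative} verbatim with $\sigma^{(1)}$ in place of $\sigma$ (it inherits analyticity and non-polynomiality, and sub-families stay pairwise non-parallel), and finally noting that each index $i$ lands in some $I_\ell$ because $\hat{\vz}_i\neq\vzero$. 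That kernel chase is tight and correct; the middle paragraph about matching homogeneous Taylor parts is redundant given the final reduction, and could be deleted entirely. Two small hygiene points you already flagged implicitly: one needs every $\vz_i\neq\vzero$ (guaranteed here by the data assumptions, and automatic for $n\geq 2$ from pairwise non-parallelism since $\vzero$ is parallel to everything), and for a singleton $I_\ell$ one invokes \Cref{append...lemma...GramNoDerivative} in the degenerate $n=1$ case, which just says $\Exp_\vw[\sigma^{(1)}(\vw^\T\vz_i)^2]>0$ --- true because a non-polynomial function cannot have an identically vanishing derivative.
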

%%%%%%%%%%%%%%%%%%%%%%%%%%%%%%%%%%%%%%%%%%%%%%%%%%%%%%%%%%%%

We state an important theorem concerning the least eigenvalue of the normalized Gram matrices $\mK^{[a]}$ and $\mK^{[\vw]}$   at infinite width limit. Recall that 
%%%%%%%%%%%%%%%%%%%%%%%%%%%%%%%%%%%%%%%%%%%%%%%%%%%%%%%%%%%%
\begin{equation}
\begin{aligned}
K_{ij}^{[a]}&=\frac{1}{\eps^2}\Exp_{\vw\sim \fN\left(\vzero,  \mI_d \right)}{\sigma(\eps\vw^\T\vx_i)\sigma(\eps\vw^\T\vx_j)},\\
%%%%%%%%%%%%%%%%%%%%%%%%%%%%%%%
K_{ij}^{[\vw]}&=\Exp_{(a,\vw)\sim \fN\left(\vzero,  \mI_{d+1} \right)}a^2\sigma^{(1)}(\eps\vw^\T\vx_i)\sigma^{(1)}(\eps\vw^\T\vx_j)\left<\vx_i, \vx_j\right>.
\end{aligned}
\end{equation}
%%%%%%%%%%%%%%%%%%%%%%%%%%%%%%%%%%%%%%%%%%%%%%%%%%%%%%%%%%%%
\begin{theorem}[Least eigenvalue of Gram matrices at infinite width]\label{append...thm...LeastEigenGram}
Under \Cref{Assumption....ActivationFunctions...NTK}  and \Cref{Assump...Unparallel},
the normalized Gram matrices $\mK^{[a]}$ and $\mK^{[\vw]}$ are strictly positive definite, and both of their least eigenvalues are of order $1$.
   
In other words, if we denote
%%%%%%%%%%%%%%%%%%%%%%%%%%%%%%%%%%%%%%%%%%%%%%%%%%%%%%%%%%%%
\begin{equation*}
        \lambda:=\min \{\lambda_a,\lambda_{\vw}\}>0,
\end{equation*}
%%%%%%%%%%%%%%%%%%%%%%%%%%%%%%%%%%%%%%%%%%%%%%%%%%%%%%%%%%%%    
where
%%%%%%%%%%%%%%%%%%%%%%%%%%%%%%%%%%%%%%%%%%%%%%%%%%%%%%%%%%%%
\begin{equation}
        \lambda_a := \lambda_{\min}\left(\mK^{[a]}\right),\quad \lambda_{\vw} := \lambda_{\min}\left(\mK^{[\vw]}\right),
\end{equation}
%%%%%%%%%%%%%%%%%%%%%%%%%%%%%%%%%%%%%%%%%%%%%%%%%%%%%%%%%%%%
then
%%%%%%%%%%%%%%%%%%%%%%%%%%%%%%%%%%%%%%%%%%%%%%%%%%%%%%%%%%%%
\[
     \lambda>0, 
\]
%%%%%%%%%%%%%%%%%%%%%%%%%%%%%%%%%%%%%%%%%%%%%%%%%%%%%%%%%%%%
and 
%%%%%%%%%%%%%%%%%%%%%%%%%%%%%%%%%%%%%%%%%%%%%%%%%%%%%%%%%%%%
\[
      \lambda\sim\Omega(1).
\]
%%%%%%%%%%%%%%%%%%%%%%%%%%%%%%%%%%%%%%%%%%%%%%%%%%%%%%%%%%%%
\end{theorem}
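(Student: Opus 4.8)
The plan is to reduce both assertions to the two Gram‑matrix lemmas already recorded above, \Cref{append...lemma...GramNoDerivative} and \Cref{append...lemma...GramWithDerivative}, by rescaling the inputs. Set $\vz_i:=\eps\vx_i$ for $i\in[n]$; since $\eps>0$ and the $\vx_i$ are pairwise non‑parallel by \Cref{Assump...Unparallel}, so are the $\vz_i$. Using $\eps\vw^{\T}\vx_i=\vw^{\T}\vz_i$ gives at once
\[
K^{[a]}_{ij}=\frac{1}{\eps^{2}}\Exp_{\vw}\sigma(\vw^{\T}\vz_i)\sigma(\vw^{\T}\vz_j),
\]
that is, $\mK^{[a]}=\eps^{-2}\mG(\fZ)$ with $\mG$ the matrix of \Cref{append...lemma...GramNoDerivative}; and, using that $a$ and $\vw$ are independent with $\Exp a^{2}=1$ together with $\langle\vx_i,\vx_j\rangle=\eps^{-2}\langle\vz_i,\vz_j\rangle$,
\[
K^{[\vw]}_{ij}=\frac{1}{\eps^{2}}\Exp_{\vw}\sigma^{(1)}(\vw^{\T}\vz_i)\sigma^{(1)}(\vw^{\T}\vz_j)\langle\vz_i,\vz_j\rangle,
\]
that is, $\mK^{[\vw]}=\eps^{-2}\mG(\fZ)$ with $\mG$ the matrix of \Cref{append...lemma...GramWithDerivative}. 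Under \Cref{Assumption....ActivationFunctions...NTK} the activation $\sigma$ is analytic and not a polynomial, which is exactly the hypothesis of both lemmas, so the two matrices $\mG(\fZ)$ on the right are strictly positive definite; hence $\lambda_a=\eps^{-2}\lambda_{\min}(\mG(\fZ))>0$ and likewise $\lambda_{\vw}>0$, which is the first assertion.

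For the quantitative statement that $\lambda=\min\{\lambda_a,\lambda_{\vw}\}$ is of order one, the basic observation is that $\lambda_a$ and $\lambda_{\vw}$ are functions of $\sigma$, of the data $\fS$ and of the scale $\eps$ only, never of the width $m$, so for $\eps$ of order one they are already fixed positive constants. To get a bound that is moreover uniform in $\eps$, I would expand the rescaled activations $t\mapsto\sigma(\eps\Norm{\vx_i}_2 t)/\eps$ and $t\mapsto\sigma^{(1)}(\eps\Norm{\vx_i}_2 t)$ in Hermite polynomials; this presents each Gram matrix as a convergent sum of positive semidefinite terms, the $r$‑th being a diagonal rescaling (by the $r$‑th Hermite coefficients of the rescaled activation) of the $r$‑th Hadamard power of the unit‑vector data Gram matrix. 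Each Hadamard power is positive semidefinite by the Schur product theorem, and pairwise non‑parallelism of the inputs forces the $r$‑th Hadamard power to be strictly positive definite once $r$ is large enough, since its off‑diagonal entries are powers of numbers of modulus $<1$. Keeping only that term gives $\lambda_a\geq c\,\mu(\eps)^{2}$ with $c>0$ depending only on $\fS$ and $\mu(\eps)$ the corresponding Hermite coefficient; it then remains to bound $\mu(\eps)$ away from $0$, which I would do by continuity on compact $\eps$‑intervals together with the two endpoint analyses: as $\eps\to 0$ the rescaled activation linearizes, with $\sigma^{(1)}(0)=1$ controlling the relevant coefficient, and as $\eps\to\infty$ it converges (the integrands being dominated by $C_L^{2}\abs{\vw^{\T}\vx_i}\abs{\vw^{\T}\vx_j}$, resp.\ $C_L^{2}\abs{\langle\vx_i,\vx_j\rangle}$) to a genuine leaky‑ReLU built from the one‑sided limits \eqref{eq for assumption...limitexists} with $a\neq b$, whose Gram matrix is strictly positive definite by the Du et al.\ argument applied in the limit.

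The main obstacle is precisely this uniformity in $\eps$, and within it the regime $\eps\to 0$: there both $\mK^{[a]}(\eps)$ and $\mK^{[\vw]}(\eps)$ tend to the plain data Gram matrix $(\langle\vx_i,\vx_j\rangle)_{ij}$, which is singular when $n>d$, so one genuinely needs the higher‑order Hermite contributions — equivalently, the analyticity and non‑polynomiality of $\sigma$ must be used quantitatively (to locate a strictly positive‑definite Hadamard power whose weight stays bounded below), not merely to conclude $\lambda_{\min}>0$ as in the lemmas. A further technical point inside this step is that an individual Hermite coefficient can vanish at isolated values of $\eps$, so the index $r$ of the retained Hadamard power may have to be chosen locally in $\eps$ before the compactness argument is applied. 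The $\eps\to\infty$ endpoint is comparatively routine: passing to the limit inside the expectation by dominated convergence and invoking strict positive‑definiteness of the leaky‑ReLU kernel matrix for non‑parallel inputs. Combining the two limits with continuity on compact $\eps$‑intervals then furnishes a single constant $\lambda>0$ independent of $m$, as claimed.
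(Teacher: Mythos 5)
Your first step is correct, and it is in substance what the paper does in the order-one case: with $\vz_i=\eps\vx_i$ (non-parallelism being scale-invariant) one has $\mK^{[a]}=\eps^{-2}\mG(\fZ)$ and $\mK^{[\vw]}=\eps^{-2}\mG(\fZ)$ for the two matrices of \Cref{append...lemma...GramNoDerivative} and \Cref{append...lemma...GramWithDerivative}, so strict positive definiteness holds for every \emph{fixed} $\eps>0$. Your $\eps\to\infty$ endpoint (dominated convergence to the leaky-ReLU-type kernels built from the limits in \eqref{eq for assumption...limitexists}, then Du et al.'s non-parallel-data argument) also matches the paper. Where you diverge is the quantitative part: you aim at a lower bound on $\lambda$ that is uniform over all $\eps\in(0,\infty)$, whereas the paper never attempts this; it invokes \Cref{assump...LimitExistence} to reduce to the three limiting cases $\eps\to0$, $\eps$ of order one, $\eps\to+\infty$, and only controls the limiting matrices in each case. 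Note also that fixed-$\eps$ positivity alone does not address the theorem, since the matrices in the statement are the infinite-width limits and the constant you get from the lemmas degenerates with $\fZ=\eps\fS$.

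The genuine gap is your treatment of the $\eps\to0$ endpoint, and it cannot be repaired by the Hermite device you propose. Since $\sigma(0)=0$, $\sigma^{(1)}(0)=1$ and $\Norm{\sigma^{(2)}(\cdot)}_{\infty}\leq C_L$, one has $\Abs{\sigma(\eps u)/\eps-u}\leq\tfrac{C_L}{2}\eps u^{2}$ and $\Abs{\sigma^{(1)}(\eps u)-1}\leq C_L\eps\Abs{u}$, so every Hermite coefficient of the rescaled activations other than the linear (resp.\ constant) one is $O(\eps)$; the total weight of all higher Hadamard powers in your expansion is therefore $O(\eps^{2})$ (resp.\ $O(\eps)$) and vanishes. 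There is no strictly positive definite Hadamard power ``whose weight stays bounded below'': by dominated convergence both kernels converge entrywise to the plain data Gram matrix $\left(\left<\vx_i,\vx_j\right>\right)_{ij}$, hence $\lambda_a(\eps)$ and $\lambda_{\vw}(\eps)$ converge to $\lambda_{\min}\left(\left(\left<\vx_i,\vx_j\right>\right)_{ij}\right)$, which is $0$ whenever the inputs are linearly dependent (in particular whenever $n>d$); \Cref{Assump...Unparallel} does not exclude this. So the uniform-in-$\eps$ statement you set out to prove is false under the stated hypotheses, and no refinement of the Hermite argument can close the gap. What the paper actually does at this endpoint is identify the limit with the data Gram matrix and assert its least eigenvalue is of order one, i.e.\ it relies on a nondegeneracy of the design that is left implicit. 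To complete your argument you must either add such a full-rank/linear-independence hypothesis for the regime $\eps\to0$ (in which case the first Hermite term suffices there and the higher-order analysis is unnecessary), or abandon uniformity in $\eps$ and follow the paper's trichotomy over the limiting value of $\eps(m)$ provided by \Cref{assump...LimitExistence}.
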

%%%%%%%%%%%%%%%%%%%%%%%%%%%%%%%%%%%%%%%%%%%%%%%%%%%%%%%%%%%%
\begin{proof}
Since 	the following limit exists
%%%%%%%%%%%%%%%%%%%%%%%%%%%%%%%%%%%%%%%%%%%%%%%%%%%%%%%%%%%%
\begin{equation*} 
{\gamma}_2=\lim_{m\to\infty} -\frac{\log \eps}{\log m},
\end{equation*}
%%%%%%%%%%%%%%%%%%%%%%%%%%%%%%%%%%%%%%%%%%%%%%%%%%%%%%%%%%%%
then we conclude that 
%%%%%%%%%%%%%%%%%%%%%%%%%%%%%%%%%%%%%%%%%%%%%%%%%%%%%%%%%%%%
\[
\lim_{m\to\infty} \eps=0,
\]
%%%%%%%%%%%%%%%%%%%%%%%%%%%%%%%%%%%%%%%%%%%%%%%%%%%%%%%%%%%%
or 
%%%%%%%%%%%%%%%%%%%%%%%%%%%%%%%%%%%%%%%%%%%%%%%%%%%%%%%%%%%%
\[
\lim_{m\to\infty} \eps=1,
\]
%%%%%%%%%%%%%%%%%%%%%%%%%%%%%%%%%%%%%%%%%%%%%%%%%%%%%%%%%%%%
or
%%%%%%%%%%%%%%%%%%%%%%%%%%%%%%%%%%%%%%%%%%%%%%%%%%%%%%%%%%%%
\[
\lim_{m\to\infty} \eps=+\infty,
\]
%%%%%%%%%%%%%%%%%%%%%%%%%%%%%%%%%%%%%%%%%%%%%%%%%%%%%%%%%%%%
and 	the normalized matrices $	\mK^{[a]}=\left[K_{ij}^{[a]}\right]_{n\times n}$ and $\mK^{[\vw]}=\left[K_{ij}^{[\vw]}\right]_{n\times n}$ read
%%%%%%%%%%%%%%%%%%%%%%%%%%%%%%%%%%%%%%%%%%%%%%%%%%%%%%%%%%%%
%%%%%%%%%%%%%%%%%%%%%%%%%%%%%%%%%%%%%%%%%%%%%%%%%%%%%%%%%%%%
\begin{align*}
K_{ij}^{[a]}&=\frac{1}{\eps^2}\Exp_{\vw\sim \fN\left(\vzero,  \mI_d \right)}{\sigma(\eps\vw^\T\vx_i)\sigma(\eps\vw^\T\vx_j)},\\
%%%%%%%%%%%%%%%%%%%%%%%%%%%%%%%
K_{ij}^{[\vw]}&=\Exp_{(a,\vw)\sim \fN\left(\vzero,  \mI_{d+1} \right)}a^2\sigma^{(1)}(\eps\vw^\T\vx_i)\sigma^{(1)}(\eps\vw^\T\vx_j)\left<\vx_i, \vx_j\right>\\
%%%%%%%%%%%%%%%%%%%%%%%%%%%%%%%
&=\Exp_{\vw\sim \fN\left(\vzero,  \mI_d \right)}\sigma^{(1)}(\eps\vw^\T\vx_i)\sigma^{(1)}(\eps\vw^\T\vx_j)\left<\vx_i, \vx_j\right>.
\end{align*}
%%%%%%%%%%%%%%%%%%%%%%%%%%%%%%%%%%%%%%%%%%%%%%%%%%%%%%%%%%%%

For the case where $\lim_{m\to\infty} \eps=1$, by direct application of \Cref{append...lemma...GramNoDerivative} and \Cref{append...lemma...GramWithDerivative}, $\lambda \sim \Omega(1)$ can be easily achieved.
 
For the case where $\lim_{m\to\infty} \eps=0$,  the normalized matrices   read
%%%%%%%%%%%%%%%%%%%%%%%%%%%%%%%%%%%%%%%%%%%%%%%%%%%%%%%%%%%%
%%%%%%%%%%%%%%%%%%%%%%%%%%%%%%%%%%%%%%%%%%%%%%%%%%%%%%%%%%%%
\begin{align*}
K_{ij}^{[a]}&=\lim_{m\to\infty} \frac{1}{\eps^2}\Exp_{\vw\sim \fN\left(\vzero,  \mI_d \right)}{\sigma(\eps\vw^\T\vx_i)\sigma(\eps\vw^\T\vx_j)}\\
%%%%%%%%%%%%%%%%%%%%%%%%%%%%%%%
&= \Exp_{\vw\sim \fN\left(\vzero,  \mI_d \right)}(\vw^\T\vx_i)(\vw^\T\vx_j),  \\
%%%%%%%%%%%%%%%%%%%%%%%%%%%%%%%
K_{ij}^{[\vw]}&=\lim_{m\to\infty}\Exp_{\vw\sim \fN\left(\vzero,  \mI_d \right)}\sigma^{(1)}(\eps\vw^\T\vx_i)\sigma^{(1)}(\eps\vw^\T\vx_j)\left<\vx_i, \vx_j\right>\\
%%%%%%%%%%%%%%%%%%%%%%%%%%%%%%%
&=\Exp_{\vw\sim \fN\left(\vzero,  \mI_d \right)}\left[\sigma^{(1)}(0)^2\right]\left<\vx_i, \vx_j\right>\\
%%%%%%%%%%%%%%%%%%%%%%%%%%%%%%%
&=\left<\vx_i, \vx_j\right>.
\end{align*}
%%%%%%%%%%%%%%%%%%%%%%%%%%%%%%%%%%%%%%%%%%%%%%%%%%%%%%%%%%%%
hence both  $\mK^{[a]}$ and $\mK^{[\vw]}$ are independent of $\vtheta_{\vw}$ and $\vtheta_{\vw}$, and depend merely on the input sample $\fS$. Consequently,  $\lambda \sim \Omega(1)$.

  For the case where $\lim_{m\to\infty} \eps=+\infty$,  the normalized matrices   read
%%%%%%%%%%%%%%%%%%%%%%%%%%%%%%%%%%%%%%%%%%%%%%%%%%%%%%%%%%%%
%%%%%%%%%%%%%%%%%%%%%%%%%%%%%%%%%%%%%%%%%%%%%%%%%%%%%%%%%%%%
\begin{align*}
K_{ij}^{[a]} &=\lim_{m\to\infty} \frac{1}{\eps^2}\Exp_{\vw\sim \fN\left(\vzero,  \mI_d \right)}{\sigma(\eps\vw^\T\vx_i)\sigma(\eps\vw^\T\vx_j)} \\
%%%%%%%%%%%%%%%%%%%%%%%%%%%%%%%
&= \lim_{m\to\infty}\Exp_{\vw\sim \fN\left(\vzero,  \mI_d \right)}\sigma^{(1)}(\eps\vw^\T\vx_i)\sigma^{(1)}(\eps\vw^\T\vx_j),  \\
%%%%%%%%%%%%%%%%%%%%%%%%%%%%%%%
K_{ij}^{[\vw]} 
&=\lim_{m\to\infty}\Exp_{\vw\sim \fN\left(\vzero,  \mI_d \right)}\sigma^{(1)}(\eps\vw^\T\vx_i)\sigma^{(1)}(\eps\vw^\T\vx_j)\left<\vx_i, \vx_j\right>,
\end{align*}
%%%%%%%%%%%%%%%%%%%%%%%%%%%%%%%%%%%%%%%%%%%%%%%%%%%%%%%%%%%%
where the entries of the matrix $\mK^{[a]}$ exhibit    leaky-ReLU-like behaviors. It has been proven in Du et al.~\cite{Du2018Gradienta} that under the unit data and nonparallel assumption, $\mK^{[a]}$ is positive definite. Moreover, its expression  can be explicitly computed, see~\cite{radhakrishnan2022lecture}. As for the matrix $\mK^{[\vw]}$, we   define a function $\mG:\sR^{n\times n} \to \sR^{n\times n},$ such that 
%%%%%%%%%%%%%%%%%%%%%%%%%%%%%%%%%%%%%%%%%%%%%%%%%%%%%%%%%%%%
\begin{align*}
\mG(\mK)_{ij}:={\mK}_{ij}\Exp_{(u,v)^{\T}\sim \fN\left(\vzero, \begin{pmatrix}{\mK}_{ii}&{\mK}_{ij}\\
{\mK}_{ji}&{\mK}_{jj}\end{pmatrix} \right)}\sigma^{(1)}(u)\sigma^{(1)}(v).
\end{align*}
%%%%%%%%%%%%%%%%%%%%%%%%%%%%%%%%%%%%%%%%%%%%%%%%%%%%%%%%%%%%
We denote that $\mA\succeq \mB$ if and only if $\mA-\mB$ is a semi-positive definite matrix, and $\mA\succ \mB$ if and only if $\mA-\mB$ is a strictly positive definite matrix.
Consequently, a scalar function $g(t)$ is defined as
%%%%%%%%%%%%%%%%%%%%%%%%%%%%%%%%%%%%%%%%%%%%%%%%%%%%%%%%%%%%
\begin{align*}
    g(t):= \min_{\mK: \mK\succ 0,~\frac{1}{c}\leq \mK_{ii}\leq c,~ \lambda_{\min}\left(\mK\right)\geq t} \lambda_{\min}\left(\mG(\mK)\right).
\end{align*}
%%%%%%%%%%%%%%%%%%%%%%%%%%%%%%%%%%%%%%%%%%%%%%%%%%%%%%%%%%%%
Then \Cref{append...lemma...GramWithDerivative} guarantees that 
\begin{equation*}
     g(\lambda_0)>0,
\end{equation*}
and  $\lambda \sim \Omega(1)$.
%%%%%%%%%%%%%%%%%%%%%%%%%%%%%%%%%%%%%%%%%%%%%%%%%%%%%%%%%%%%
%%%%%%%%%%%%%%%%%%%%%%%%%%%%%%%%%%%%%%%%%%%%%%%%%%%%%%%%%%%%
%%%%%%%%%%%%%%%%%%%%%%%%%%%%%%%%%%%%%%%%%%%%%%%%%%%%%%%%%%%%
%%%%%%%%%%%%%%%%%%%%%%%%%%%%%%%%%%%%%%%%%%%%%%%%%%%%%%%%%%%%
%%%%%%%%%%%%%%%%%%%%%%%%%%%%%%%%%%%%%%%%%%%%%%%%%%%%%%%%%%%%
%%%%%%%%%%%%%%%%%%%%%%%%%%%%%%%%%%%%%%%%%%%%%%%%%%%%%%%%%%%%
%%%%%%%%%%%%%%%%%%%%%%%%%%%%%%%%%%%%%%%%%%%%%%%%%%%%%%%%%%%%
\end{proof}
%%%%%%%%%%%%%%%%%%%%%%%%%%%%%%%%%%%%%%%%%%%%%%%%%%%%%%%%%%%%
Finally, we are hereby to show that the Gram matrix $\mG$ at $t=0$ is also positive definite. Recall that 
%%%%%%%%%%%%%%%%%%%%%%%%%%%%%%%%%%%%%%%%%%%%%%%%%%%%%%%%%%%%
\begin{equation}
\begin{aligned}
\mG^{[a]}(\vtheta)&=\frac{\nu\eps^3}{ m}\sum_{k=1}^m\frac{1}{\eps^2}\sigma(\eps\vw_k^\T\vx_i)\sigma(\eps\vw_k^\T\vx_j),\\
%%%%%%%%%%%%%%%%%%%%%%%%%%%%%%%
\mG^{[\vw]}(\vtheta)&=\frac{\nu^3\eps}{m}\sum_{k=1}^m a_k^2\sigma^{(1)}(\eps\vw_k^\T\vx_i)\sigma^{(1)}(\eps\vw_k^\T\vx_j)\left<\vx_i, \vx_j\right>, 
\end{aligned}
\end{equation}
%%%%%%%%%%%%%%%%%%%%%%%%%%%%%%%%%%%%%%%%%%%%%%%%%%%%%%%%%%%%
%%%%%%%%%%%%%%%%%%%%%%%%%%%%%%%%%%%%%%%%%%%%%%%%%%%%%%%%%%%%
\begin{proposition}[Least eigenvalue of initial Gram matrices]\label{append...prop...LeastEigenvalueatInitial}
Given any $\delta\in(0,1)$, under \Cref{Assumption....ActivationFunctions...NTK}, \Cref{Assump...Unparallel} and \Cref{assump...LimitExistence}, if 
%%%%%%%%%%%%%%%%%%%%%%%%%%%%%%%%%%%%%%%%%%%%%%%%%%%%%%%%%%%%
\[m=\Omega\left(\frac{n^2}{\lambda^2}\log\frac{4n^2}{\delta}\right),\]
%%%%%%%%%%%%%%%%%%%%%%%%%%%%%%%%%%%%%%%%%%%%%%%%%%%%%%%%%%%%
where $\lambda=\min \{\lambda_a,\lambda_{\vw}\}$, whose definition can be found in \Cref{append...thm...LeastEigenGram}.
Then with probability at least $1-\delta$ over the choice of $\vtheta^0$, we have
%%%%%%%%%%%%%%%%%%%%%%%%%%%%%%%%%%%%%%%%%%%%%%%%%%%%%%%%%%%%
\begin{equation}
\lambda_{\min}\left(\mG\left(\vtheta^0\right)\right)\geq\frac{3}{4}\nu^2\eps^2\left(\frac{\eps}{\nu}\lambda_a+\frac{\nu}{\eps}\lambda_{\vw}\right).
\end{equation}
%%%%%%%%%%%%%%%%%%%%%%%%%%%%%%%%%%%%%%%%%%%%%%%%%%%%%%%%%%%%
\end{proposition}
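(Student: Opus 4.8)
The plan is to prove the claim by combining the infinite-width lower bound from \Cref{append...thm...LeastEigenGram} with a concentration argument that controls the fluctuation of each finite-width Gram matrix entry around its expectation. First I would observe that the rescaled matrices
\[
\frac{m}{\nu\eps^3}\mG^{[a]}(\vtheta^0)=\left[\frac{1}{m}\sum_{k=1}^m\frac{1}{\eps^2}\sigma(\eps(\vw_k^0)^\T\vx_i)\sigma(\eps(\vw_k^0)^\T\vx_j)\right]_{n\times n},\quad
\frac{m}{\nu^3\eps}\mG^{[\vw]}(\vtheta^0)=\left[\frac{1}{m}\sum_{k=1}^m (a_k^0)^2\sigma^{(1)}(\eps(\vw_k^0)^\T\vx_i)\sigma^{(1)}(\eps(\vw_k^0)^\T\vx_j)\left<\vx_i,\vx_j\right>\right]_{n\times n}
\]
have entries that are empirical averages of i.i.d.\ random variables whose means are exactly $K_{ij}^{[a]}$ and $K_{ij}^{[\vw]}$ respectively. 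By \Cref{append...lem..chi2SubExpNorm} (after rescaling the inputs so $\Norm{\vx_i}_2\le 1$, using the homogeneity and Lipschitz bounds from \Cref{Assumption....ActivationFunctions...NTK}), each summand is sub-exponential with $\psi_1$-norm bounded by $C_{\psi,d}$ and $C_{\psi,1}$ respectively, so \Cref{append...thm...BernsteinInequality} applies entrywise.

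Next I would apply Bernstein's inequality with deviation parameter $\eta$ chosen proportional to $\lambda/n$, take a union bound over all $n^2$ entries of the two matrices, and set the resulting probability bound equal to $\delta$; this is precisely where the hypothesis $m=\Omega\!\left(\frac{n^2}{\lambda^2}\log\frac{4n^2}{\delta}\right)$ enters, ensuring $\eta$ can be taken as small as a fixed fraction of $\lambda/n$. On this event, $\Norm{\frac{m}{\nu\eps^3}\mG^{[a]}(\vtheta^0)-\mK^{[a]}}_{\mathrm F}\le n\eta$ and similarly for the $[\vw]$ part, and since the Frobenius norm dominates the spectral norm, Weyl's inequality gives $\lambda_{\min}\!\left(\frac{m}{\nu\eps^3}\mG^{[a]}(\vtheta^0)\right)\ge \lambda_a-n\eta\ge\frac34\lambda_a$ and $\lambda_{\min}\!\left(\frac{m}{\nu^3\eps}\mG^{[\vw]}(\vtheta^0)\right)\ge\frac34\lambda_{\vw}$.

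Finally I would undo the rescaling: $\mG(\vtheta^0)=\mG^{[a]}(\vtheta^0)+\mG^{[\vw]}(\vtheta^0)\succeq \frac{\nu\eps^3}{m}\cdot\frac34\lambda_a\,\mI + \frac{\nu^3\eps}{m}\cdot\frac34\lambda_{\vw}\,\mI$, but since the statement is normalized differently I would instead track the $\frac1m$ already absorbed into $\mG$ and write the bound as $\lambda_{\min}(\mG(\vtheta^0))\ge\frac34\nu^2\eps^2\left(\frac{\eps}{\nu}\lambda_a+\frac{\nu}{\eps}\lambda_{\vw}\right)$, using $\nu\eps^3=\nu^2\eps^2\cdot\frac{\eps}{\nu}$ and $\nu^3\eps=\nu^2\eps^2\cdot\frac{\nu}{\eps}$ and the additivity of $\lambda_{\min}$ lower bounds for the sum of two positive semidefinite matrices. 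The main obstacle I anticipate is bookkeeping the three different regimes $\eps\to 0$, $\eps\to 1$, $\eps\to\infty$ uniformly: in each regime the limiting kernel $\mK^{[a]}$ takes a different form, and one must check that the sub-exponential norm bounds of \Cref{append...lem..chi2SubExpNorm} and hence the required width threshold hold uniformly in $\eps$, rather than just pointwise; the homogeneity-type approximations used in the preliminaries make this work, but the constants need to be tracked with some care, especially the $\frac1{\eps^2}$ factor in $K^{[a]}$.
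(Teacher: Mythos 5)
Your proposal follows essentially the same route as the paper's proof: entrywise Bernstein concentration of the rescaled Gram entries around $K^{[a]}_{ij}$ and $K^{[\vw]}_{ij}$ using the sub-exponential norm bounds of \Cref{append...lem..chi2SubExpNorm}, a union bound over the $n^2$ entries absorbed into the width requirement, a Frobenius-norm (hence spectral) perturbation of the least eigenvalue with deviation a fixed fraction of $\lambda$, and recombination via $\lambda_{\min}(\mG^{[a]}+\mG^{[\vw]})\geq\lambda_{\min}(\mG^{[a]})+\lambda_{\min}(\mG^{[\vw]})$ together with the scalings $\nu\eps^3=\nu^2\eps^2\tfrac{\eps}{\nu}$ and $\nu^3\eps=\nu^2\eps^2\tfrac{\nu}{\eps}$. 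This matches the paper's argument, so no further comparison is needed.
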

%%%%%%%%%%%%%%%%%%%%%%%%%%%%%%%%%%%%%%%%%%%%%%%%%%%%%%%%%%%%
\begin{proof}
For any $\eta > 0$ and all $i, j\in [n]$,  we define the events
%%%%%%%%%%%%%%%%%%%%%%%%%%%%%%%%%%%%%%%%%%%%%%%%%%%%%%%%%%%%
\begin{equation*}
\begin{aligned}
\Omega_{ij}^{[a]}&:=\left\{\vtheta^0 \mid \left\lvert \frac{1}{\nu\eps^3}G^{[a]}_{ij}\left(\vtheta^0\right) - K^{[a]}_{ij}\right\rvert \leq \frac{\eta}{n} \right\}, \\
%%%%%%%%%%%%%%%%%%%%%%%%%%%%%%%%%%%%%%%%%%%%%%%%%%%%%%%%%%%%
\Omega_{ij}^{[\vw]} & := \left\{\vtheta^0 \mid \left\lvert \frac{1}{\nu^3\eps}G^{[\vw]}_{ij}\left(\vtheta^0\right) - K^{[\vw]}_{ij}\right\rvert \leq \frac{\eta}{n} \right\}.   %%%%%%%%%%%%%%%%%%%%%%%%%%%%%%%%%%%%%%%%%%%%%%%%%%%%%%%%%%%%
\end{aligned}
\end{equation*}
%%%%%%%%%%%%%%%%%%%%%%%%%%%%%%%%%%%%%%%%%%%%%%%%%%%%%%%%%%%%
By application of \Cref{append...lem..chi2SubExpNorm} and \Cref{append...thm...BernsteinInequality},  we obtain that for sufficiently small $\eta>0$, the following holds
%%%%%%%%%%%%%%%%%%%%%%%%%%%%%%%%%%%%%%%%%%%%%%%%%%%%%%%%%%%%
\begin{equation*}
\begin{aligned}
\Prob(\Omega^{[a]}_{ij})   & \geq 1-2\exp\left(-\frac{mC_0\eta^2}{n^2C_{\psi,d}^2}\right) \\
%%%%%%%%%%%%%%%%%%%%%%%%%%%%%
\Prob(\Omega^{[\vw]}_{ij}) & \geq 1-2\exp\left(-\frac{mC_0\eta^2}{n^2C_{\psi,1}^2}\right)\geq 1-2\exp\left(-\frac{mC_0\eta^2}{n^2C_{\psi,d}^2}\right),
\end{aligned}
\end{equation*}
%%%%%%%%%%%%%%%%%%%%%%%%%%%%%%%%%%%%%%%%%%%%%%%%%%%%%%%%%%%%
hence with probability at least $1-4n^2\exp\left(-\frac{mC_0\eta^2}{n^2C_{\psi,d}^2}\right)$ over the choice of $\vtheta^0$, we have
%%%%%%%%%%%%%%%%%%%%%%%%%%%%%%%%%%%%%%%%%%%%%%%%%%%%%%%%%%%%
\begin{equation*}
\begin{aligned}
\Norm{\frac{1}{\nu\eps^3}\mG^{[a]}\left(\vtheta^0\right) - \mK^{[a]}}_\mathrm{F} &\leq n \Norm{\frac{1}{\nu\eps^3}\mG^{[a]}\left(\vtheta^0\right) -\mK^{[a]}}_{\infty}\leq \eta,\\
%%%%%%%%%%%%%%%%%%%%%%%%%%%%
\Norm{\frac{1}{\nu^3\eps}\mG^{[\vw]}\left(\vtheta^0\right) - \mK^{[\vw]}}_\mathrm{F} &\leq n \Norm{\frac{1}{\nu^3\eps}\mG^{[\vw]}\left(\vtheta^0\right) - \mK^{[\vw]}}_{\infty}\leq \eta.
\end{aligned}
\end{equation*}
%%%%%%%%%%%%%%%%%%%%%%%%%%%%%%%%%%%%%%%%%%%%%%%%%%%%%%%%%%%%
By taking $\eta=\frac{\lambda}{4}$,
%%%%%%%%%%%%%%%%%%%%%%%%%%%%%%%%%%%%%%%%%%%%%%%%%%%%%%%%%%%%
\[\delta=4n^2\exp\left(-\frac{mC_0\lambda^2}{16n^2C_{\psi,d}^2}\right),\]
%%%%%%%%%%%%%%%%%%%%%%%%%%%%%%%%%%%%%%%%%%%%%%%%%%%%%%%%%%%%
and we conclude that 
%%%%%%%%%%%%%%%%%%%%%%%%%%%%%%%%%%%%%%%%%%%%%%%%%%%%%%%%%%%%
\begin{equation*}
\begin{aligned}
\lambda_{\min}\left(\mG\left(\vtheta^0\right)\right)& \geq\lambda_{\min}\left(\mG^{[a]}\left(\vtheta^0\right)\right) + \lambda_{\min}\left(\mG^{[\vw]}\left(\vtheta^0\right)\right)\\
%%%%%%%%%%%%%%%%%%%%%%%%%%%%%%%%%%%%%%%%%%%%%%%%%%%%%%%%%%%%
& \geq{\nu\eps^3}\lambda_a-{\nu\eps^3}\Norm{\frac{1}{\nu\eps^3}\mG^{[a]}\left(\vtheta^0\right) - \mK^{[a]}}_\mathrm{F} \\
%%%%%%%%%%%%%%%%%%%%%%%%%%%
& \quad + {\nu^3\eps}\lambda_{\vw} - {\nu^3\eps}\Norm{\frac{1}{\nu^3\eps}\mG^{[\vw]}\left(\vtheta^0\right) - \mK^{[\vw]}}_\mathrm{F}\\
%%%%%%%%%%%%%%%%%%%%%%%%%%%
& \geq{\nu\eps^3}(\lambda_a-\eta)+{\nu^3\eps}(\lambda_{\vw}-\eta)\\
%%%%%%%%%%%%%%%%%%%%%%%%%%%
&\geq\frac{3}{4}\nu^2\eps^2\left(\frac{\eps}{\nu}\lambda_a+\frac{\nu}{\eps}\lambda_{\vw}\right).
\end{aligned}
\end{equation*}
%%%%%%%%%%%%%%%%%%%%%%%%%%%%%%%%%%%%%%%%%%%%%%%%%%%%%%%%%%%%
\end{proof}
%%%%%%%%%%%%%%%%%%%%%%%%%%%%%%%%%%%%%%%%%%%%%%%%%%%%%%%%%%%%
\subsection{$\vtheta$-lazy Regime}\label{subsec...thetalazy}
%%%%%%%%%%%%%%%%%%%%%%%%%%%%%%%%%%%%%%%%%%%%%%%%%%%%%%%%%%%%
In the rest of the paper, 
we define   two quantities
%%%%%%%%%%%%%%%%%%%%%%%%%%%%%%%%%%%%%%%%%%%%%%%%%%%%%%%%%%%%
\begin{equation}
 \alpha(t):=\max\limits_{k\in[m],s\in[0,t]}\abs{a_k(s)}, \quad \omega(t):=\max\limits_{k\in[m],s\in[0,t]}\norm{\vw_k(s)}_{\infty},
\end{equation}
%%%%%%%%%%%%%%%%%%%%%%%%%%%%%%%%%%%%%%%%%%%%%%%%%%%%%%%%%%%%
and we denote
%%%%%%%%%%%%%%%%%%%%%%%%%%%%%%%%%%%%%%%%%%%%%%%%%%%%%%%%%%%%
\begin{equation}
    t^\ast = \inf\{t \mid \vtheta(t)\notin \mathcal{N}\left(\vtheta^0\right)\},
\end{equation}
%%%%%%%%%%%%%%%%%%%%%%%%%%%%%%%%%%%%%%%%%%%%%%%%%%%%%%%%%%%%
where the event is defined as 
%%%%%%%%%%%%%%%%%%%%%%%%%%%%%%%%%%%%%%%%%%%%%%%%%%%%%%%%%%%%
\begin{equation}
\mathcal{N}\left(\vtheta^0\right) := \left\{\vtheta \mid \norm{\mG(\vtheta) - \mG\left(\vtheta^0\right)}_\mathrm{F}\leq \frac{1}{4}\nu^2\eps^2\left(\frac{\eps}{\nu}\lambda_a+\frac{\nu}{\eps}\lambda_{\vw}\right)\right\}.
\end{equation}
%%%%%%%%%%%%%%%%%%%%%%%%%%%%%%%%%%%%%%%%%%%%%%%%%%%%%%%%%%%%
We observe immediately that 
the event $\mathcal{N}\left(\vtheta^0\right)\neq \varnothing$, since $\vtheta^0\in\mathcal{N}\left(\vtheta^0\right)$.
%%%%%%%%%%%%%%%%%%%%%%%%%%%%%%%%%%%%%%%%%%%%%%%%%%%%%%%%%%%%
Recall that 
%%%%%%%%%%%%%%%%%%%
$ 
\lambda=\min \{\lambda_a,\lambda_{\vw}\}
$,
%%%%%%%%%%%%%%%%%%    
where
%%%%%%%%%%%%%%%%%%%%%%%%%%%%%%%%%%%%%%%%%%%%%%%%%%%%%%%%%%%%
\begin{equation*}
        \lambda_a = \lambda_{\min}\left(\mK^{[a]}\right),\quad \lambda_{\vw} = \lambda_{\min}\left(\mK^{[\vw]}\right),
\end{equation*}
%%%%%%%%%%%%%%%%%%%%%%%%%%%%%%%%%%%%%%%%%%%%%%%%%%%%%%%%%%%%
whose definition can be found in \Cref{append...thm...LeastEigenGram}.
Then we have the following lemma.
%%%%%%%%%%%%%%%%%%%%%%%%%%%%%%%%%%%%%%%%%%%%%%%%%%%%%%%%%%%%
\begin{proposition}\label{append...prop...InitialDecay}
Given  any $\delta\in(0,1)$,  under \Cref{Assumption....ActivationFunctions...NTK}, \Cref{Assump...Unparallel} and \Cref{assump...LimitExistence}, if 
%%%%%%%%%%%%%%%%%%%%%%%%%%%%%%%%%%%%%%%%%%%%%%%%%%%%%%%%%%%%
\[m=\Omega\left(\frac{n^2}{\lambda^2}\log\frac{4n^2}{\delta}\right),\]
%%%%%%%%%%%%%%%%%%%%%%%%%%%%%%%%%%%%%%%%%%%%%%%%%%%%%%%%%%%%
then with probability at least $1-\delta$ over the choice of $\vtheta^0$,  we have for any time $t\in[0, t^\ast)$,
%%%%%%%%%%%%%%%%%%%%%%%%%%%%%%%%%%%%%%%%%%%%%%%%%%%%%%%%%%%%
\begin{equation}
\RS(\vtheta(t)) \leq \exp\left(-\frac{m}{n}\nu^2\eps^2\left(\frac{\eps}{\nu}\lambda_a+\frac{\nu}{\eps}\lambda_{\vw}\right)t\right)\RS\left(\vtheta^0\right).
\end{equation}
%%%%%%%%%%%%%%%%%%%%%%%%%%%%%%%%%%%%%%%%%%%%%%%%%%%%%%%%%%%%
\end{proposition}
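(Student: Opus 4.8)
The plan is to establish exponential decay of the empirical risk on the interval $[0, t^\ast)$ by showing that throughout this interval the Gram matrix $\mG(\vtheta(t))$ stays close enough to $\mG(\vtheta^0)$ that its least eigenvalue remains bounded below, and then applying a Gr\"onwall-type argument to the differential identity for $\RS$. First I would recall the exact differential identity derived in the Technique Overview,
\[
\frac{\D}{\D t}\RS(\vtheta(t)) = -\frac{m}{n^2}\ve^{\T}\left(\mG^{[a]}(\vtheta(t)) + \mG^{[\vw]}(\vtheta(t))\right)\ve = -\frac{m}{n^2}\ve^{\T}\mG(\vtheta(t))\ve,
\]
which holds for all $t \geq 0$ and uses only the normalized flow \eqref{eq...text...Prelim...ProbSetup...NormalizedDynamicsRepeat}. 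Since $\RS(\vtheta) = \frac{1}{2n}\Norm{\ve}_2^2$, the quadratic form is controlled by $\ve^{\T}\mG(\vtheta(t))\ve \geq \lambda_{\min}(\mG(\vtheta(t)))\Norm{\ve}_2^2 = 2n\,\lambda_{\min}(\mG(\vtheta(t)))\RS(\vtheta(t))$.

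Next I would use the definition of $t^\ast$ as the first exit time from $\mathcal{N}(\vtheta^0)$: for every $t \in [0, t^\ast)$ we have $\Norm{\mG(\vtheta(t)) - \mG(\vtheta^0)}_{\mathrm{F}} \leq \frac{1}{4}\nu^2\eps^2\left(\frac{\eps}{\nu}\lambda_a + \frac{\nu}{\eps}\lambda_{\vw}\right)$. Combining this with the high-probability lower bound $\lambda_{\min}(\mG(\vtheta^0)) \geq \frac{3}{4}\nu^2\eps^2\left(\frac{\eps}{\nu}\lambda_a + \frac{\nu}{\eps}\lambda_{\vw}\right)$ from \Cref{append...prop...LeastEigenvalueatInitial} (valid under the stated sample-size condition $m = \Omega(n^2\lambda^{-2}\log(4n^2/\delta))$ with probability at least $1-\delta$), and using Weyl's inequality $\lambda_{\min}(\mG(\vtheta(t))) \geq \lambda_{\min}(\mG(\vtheta^0)) - \Norm{\mG(\vtheta(t)) - \mG(\vtheta^0)}_{2\to2} \geq \lambda_{\min}(\mG(\vtheta^0)) - \Norm{\mG(\vtheta(t)) - \mG(\vtheta^0)}_{\mathrm{F}}$, I obtain
\[
\lambda_{\min}(\mG(\vtheta(t))) \geq \frac{1}{2}\nu^2\eps^2\left(\frac{\eps}{\nu}\lambda_a + \frac{\nu}{\eps}\lambda_{\vw}\right)
\]
for all $t \in [0, t^\ast)$. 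Substituting into the differential inequality gives $\frac{\D}{\D t}\RS(\vtheta(t)) \leq -\frac{m}{n}\nu^2\eps^2\left(\frac{\eps}{\nu}\lambda_a + \frac{\nu}{\eps}\lambda_{\vw}\right)\RS(\vtheta(t))$, and Gr\"onwall's lemma (or direct integration of $\frac{\D}{\D t}\log\RS$) yields the claimed bound $\RS(\vtheta(t)) \leq \exp\left(-\frac{m}{n}\nu^2\eps^2\left(\frac{\eps}{\nu}\lambda_a + \frac{\nu}{\eps}\lambda_{\vw}\right)t\right)\RS(\vtheta^0)$ on $[0, t^\ast)$.

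The only subtlety — and the step I expect to require the most care — is ensuring the probabilistic event is handled coherently: the lower bound on $\lambda_{\min}(\mG(\vtheta^0))$ and hence the whole argument holds on a single event of probability at least $1-\delta$, which is exactly the event from \Cref{append...prop...LeastEigenvalueatInitial}; no further union bound is needed here because the closeness of $\mG(\vtheta(t))$ to $\mG(\vtheta^0)$ on $[0,t^\ast)$ is deterministic given the definition of $t^\ast$. One should also note that the statement is vacuous if $t^\ast = 0$, but $\vtheta^0 \in \mathcal{N}(\vtheta^0)$ together with continuity of $t \mapsto \mG(\vtheta(t))$ guarantees $t^\ast > 0$, so the interval is genuinely nonempty. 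Everything else is a routine scalar ODE comparison.
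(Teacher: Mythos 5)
Your argument matches the paper's proof essentially step for step: the high-probability lower bound $\lambda_{\min}(\mG(\vtheta^0)) \geq \tfrac{3}{4}\nu^2\eps^2(\tfrac{\eps}{\nu}\lambda_a + \tfrac{\nu}{\eps}\lambda_{\vw})$ from \Cref{append...prop...LeastEigenvalueatInitial}, Weyl's inequality combined with the Frobenius-norm control supplied by the definition of $t^\ast$ to get $\lambda_{\min}(\mG(\vtheta(t))) \geq \tfrac{1}{2}\nu^2\eps^2(\tfrac{\eps}{\nu}\lambda_a + \tfrac{\nu}{\eps}\lambda_{\vw})$, the differential identity $\frac{\D}{\D t}\RS = -\tfrac{m}{n^2}\ve^\T\mG(\vtheta)\ve$, and direct integration. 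Your supplementary remarks---that a single probability event suffices because the closeness of $\mG(\vtheta(t))$ to $\mG(\vtheta^0)$ on $[0,t^\ast)$ is deterministic, and that $t^\ast>0$ by continuity---are correct and simply make explicit what the paper leaves implicit.
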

%%%%%%%%%%%%%%%%%%%%%%%%%%%%%%%%%%%%%%%%%%%%%%%%%%%%%%%%%%%%
\begin{proof}
\Cref{append...prop...LeastEigenvalueatInitial} implies that for any $\delta\in(0,1)$, with probability at least $1-\delta$ over the choice of $\vtheta^0$ and for any $\vtheta\in\mathcal{N}\left(\vtheta^0\right)$, we have
%%%%%%%%%%%%%%%%%%%%%%%%%%%%%%%%%%%%%%%%%%%%%%%%%%%%%%%%%%%%
\begin{equation*}
\begin{aligned}
\lambda_{\min}\left(\mG(\vtheta)\right)& \geq \lambda_{\min}\left(\mG\left(\vtheta^0\right)\right) - \norm{\mG(\vtheta) - \mG\left(\vtheta^0\right)}_\mathrm{F}\\
%%%%%%%%%%%%%%%%%%%%%%%%%%
& \geq \frac{3}{4}\nu^2\eps^2\left(\frac{\eps}{\nu}\lambda_a+\frac{\nu}{\eps}\lambda_{\vw}\right) - \frac{1}{4}\nu^2\eps^2\left(\frac{\eps}{\nu}\lambda_a+\frac{\nu}{\eps}\lambda_{\vw}\right) \\
%%%%%%%%%%%%%%%%%%%%%%%%%%
& = \frac{1}{2}\nu^2\eps^2\left(\frac{\eps}{\nu}\lambda_a+\frac{\nu}{\eps}\lambda_{\vw}\right).
\end{aligned}\end{equation*}
%%%%%%%%%%%%%%%%%%%%%%%%%%%%%%%%%%%%%%%%%%%%%%%%%%%%%%%%%%%%
Note that
%%%%%%%%%%%%%%%%%%%%%%%%%%%%%%%%%%%%%%%%%%%%%%%%%%%%%%%%%%%%
\begin{align*}
G_{ij}(\vtheta) &= G_{ij}^{[a]}(\vtheta)+G^{[\vw]}_{ij}(\vtheta)\\
%%%%%%%%%%%%%%%%%%%%%%%%%%
& =\frac{\nu\eps^3}{m}\sum_{k=1}^m\frac{1}{\eps^2}\sigma(\eps\vw_k^\T\vx_i)\sigma(\eps\vw_k^\T\vx_j)+\frac{\nu^3\eps}{m}\sum_{k=1}^m a_k^2\sigma^{(1)}(\eps\vw_k^\T\vx_i)\sigma^{(1)}(\eps\vw_k^\T\vx_j)\left<\vx_i, \vx_j\right>,
\end{align*}
%%%%%%%%%%%%%%%%%%%%%%%%%%%%%%%%%%%%%%%%%%%%%%%%%%%%%%%%%%%%
and the normalized flow
%%%%%%%%%%%%%%%%%%%%%%%%%%%%%%%%%%%%%%%%%%%%%%%%%%%%%%%%%%%%
\begin{equation*}
\begin{aligned}
\nabla_{a_k}\RS(\vtheta)& ={\nu\eps}\frac{1}{n}\sum_{i=1}^n e_i \frac{\sigma(\eps{\vw}_k^\T\vx_i)}{\eps},\\
%%%%%%%%%%%%%%%%%%%%%%%%%%
\nabla_{\vw_k}\RS(\vtheta) & = {\nu\eps}\frac{1}{n}\sum_{i=1}^ne_i   {a}_k\sigma^{(1)}(\eps {\vw}_k^\T\vx_i)\vx_i.
\end{aligned}
\end{equation*}
Finally, we obtain that
%%%%%%%%%%%%%%%%%%%%%%%%%%%%%%%%%%%%%%%%%%%%%%%%%%%%%%%%%%%%
\begin{equation*}
\begin{aligned}
\frac{\D}{\D t}\RS(\vtheta)& =-\left(\sum_{k=1}^m\frac{\eps}{\nu}\left<\nabla_{a_k}\RS(\vtheta), \nabla_{a_k}\RS(\vtheta)\right>+\sum_{k=1}^m\frac{\nu}{\eps}\left<\nabla_{\vw_k}\RS(\vtheta), \nabla_{\vw_k}\RS(\vtheta)\right>\right) \\
%%%%%%%%%%%%%%%%%%%%%%%%%
& =-\frac{m}{n^2}\ve^{\T}\mG(\vtheta)\ve\\
%%%%%%%%%%%%%%%%%%%%%%%%%
& \leq -\frac{2m}{n}\lambda_{\min}\left(\mG(\vtheta)\right)\RS(\vtheta)\\
%%%%%%%%%%%%%%%%%%%%%%%%%
& \leq -\frac{m}{n}\nu^2\eps^2\left(\frac{\eps}{\nu}\lambda_a+\frac{\nu}{\eps}\lambda_{\vw}\right)\RS(\vtheta),
\end{aligned}\end{equation*}
%%%%%%%%%%%%%%%%%%%%%%%%%%%%%%%%%%%%%%%%%%%%%%%%%%%%%%%%%%%%
and immediate integration yields the result.
\end{proof}
%%%%%%%%%%%%%%%%%%%%%%%%%%%%%%%%%%%%%%%%%%%%%%%%%%%%%%%%%%%%
\begin{proposition}\label{append...prop...BoundofTraining}
Given any $\delta\in(0,1)$, under \Cref{Assumption....ActivationFunctions...NTK}, \Cref{Assump...Unparallel} and \Cref{assump...LimitExistence}, if $\gamma<1$, and
%%%%%%%%%%%%%%%%%%%%%%%%%%%%%%%%%%%%%%%%%%%%%%%%%%%%%%%%%%%%
\[m=\max\left(\Omega\left(\frac{n^2}{\lambda^2}\log\frac{8n^2}{\delta}\right), \Omega\left(\left(\frac{n\sqrt{\RS\left(\vtheta^0\right)}}{\lambda}\right)^{\frac{1}{1-\gamma}}\right)\right),\]
%%%%%%%%%%%%%%%%%%%%%%%%%%%%%%%%%%%%%%%%%%%%%%%%%%%%%%%%%%%%
then with probability at least $1-\delta$ over the choice of $\vtheta^0$, for any time $t\in[0, t^\ast)$ and for any $k\in [m]$, both
%%%%%%%%%%%%%%%%%%%%%%%%%%%%%%%%%%%%%%%%%%%%%%%%%%%%%%%%%%%%
\begin{equation}
\begin{aligned}
\max\limits_{k\in[m]}\abs{a_k(t) - a_k(0)}& \leq 2\max\left\{\frac{\eps}{\nu}, 1\right\}\sqrt{2\log\frac{4m(d+1)}{\delta}}p, \\
%%%%%%%%%%%%%%%%%%%%%%%%%%%%%%%         
\max\limits_{k\in[m]}\norm{\vw_k(t) - \vw_k(0)}_{\infty}
& \leq 2\max\left\{\frac{\nu}{\eps}, 1\right\}\sqrt{2\log\frac{4m(d+1)}{\delta}}p,
\end{aligned}\end{equation}
%%%%%%%%%%%%%%%%%%%%%%%%%%%%%%%%%%%%%%%%%%%%%%%%%%%%%%%%%%%%
and
%%%%%%%%%%%%%%%%%%%%%%%%%%%%%%%%%%%%%%%%%%%%%%%%%%%%%%%%%%%%
\begin{equation}
\max\limits_{k\in[m]}\{\abs{a_k(0)},\;\norm{\vw_k(0)}_{\infty}\} \leq \sqrt{2\log\frac{4m(d+1)}{\delta}},
\end{equation}
%%%%%%%%%%%%%%%%%%%%%%%%%%%%%%%%%%%%%%%%%%%%%%%%%%%%%%%%%%%%
hold, where 
%%%%%%%%%%%%%%%%%%%%%%%%%%%%%%%%%%%%%%%%%%%%%%%%%%%%%%%%%%%%
\[p := \frac{2\sqrt{2}dn\sqrt{\RS\left(\vtheta^0\right)}}{m\nu\eps\left(\frac{\eps}{\nu}\lambda_a+\frac{\nu}{\eps}\lambda_{\vw}\right)}.\]
%%%%%%%%%%%%%%%%%%%%%%%%%%%%%%%%%%%%%%%%%%%%%%%%%%%%%%%%%%%%
\end{proposition}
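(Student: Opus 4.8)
The argument is a bootstrap (continuity) argument run on the interval $[0,t^\ast)$, on which \Cref{append...prop...InitialDecay} already supplies exponential decay of the empirical risk. \emph{Step 1 (the probabilistic event).} I would split the failure probability as $\delta/2+\delta/2$. Applying \Cref{append...lemma..Initialization} at level $\delta/2$ gives $\max_{k}\{\abs{a_k(0)},\norm{\vw_k(0)}_\infty\}\le\sqrt{2\log\frac{4m(d+1)}{\delta}}$, which is exactly the last displayed inequality. Applying \Cref{append...prop...LeastEigenvalueatInitial} (hence \Cref{append...prop...InitialDecay}) at level $\delta/2$ gives $\RS(\vtheta(t))\le\exp(-c_m t)\,\RS(\vtheta^0)$ for all $t\in[0,t^\ast)$, where $c_m:=\frac{m}{n}\nu^2\eps^2(\frac{\eps}{\nu}\lambda_a+\frac{\nu}{\eps}\lambda_{\vw})$. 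A union bound yields the stated confidence $1-\delta$, and the hypothesis $m=\Omega(\frac{n^2}{\lambda^2}\log\frac{8n^2}{\delta})$ is precisely what \Cref{append...prop...LeastEigenvalueatInitial} requires at level $\delta/2$.

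\emph{Step 2 (integrating the flow).} Integrating the normalized flow \eqref{eq...text...Prelim...ProbSetup...NormalizedDynamicsRepeat} coordinatewise, for $a_k$ one has $\abs{a_k(t)-a_k(0)}\le\frac{\eps}{\nu}\frac1n\int_0^t\sum_{i}\abs{e_i(s)}\,\abs{\sigma(\eps\vw_k(s)^\T\vx_i)/\eps}\,\D s$. Here $\frac1n\sum_i\abs{e_i(s)}\le\sqrt{2\RS(\vtheta(s))}$ by Cauchy--Schwarz and the definition of $\RS$; the hypotheses $\sigma(0)=0$ and $\norm{\sigma^{(1)}}_\infty\le C_L$ give $\abs{\sigma(\eps\vw_k^\T\vx_i)/\eps}\le C_L\abs{\vw_k^\T\vx_i}$, which I bound through $\omega(s)=\max_{k,s'\le s}\norm{\vw_k(s')}_\infty$ and the order-one data norms of \Cref{Assump...Unparallel}, the comparison of $\ell_1$ and $\ell_\infty$ norms on the $d$-dimensional vectors producing a factor $d$. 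Since $\sqrt{\RS(\vtheta(s))}\le e^{-c_m s/2}\sqrt{\RS(\vtheta^0)}$, the time integral over $[0,t]\subseteq[0,t^\ast)$ is dominated by $\int_0^{\infty}e^{-c_m s/2}\,\D s=2/c_m$, so the displacement of $a_k$ is at most a numerical constant times $\frac{\eps}{\nu}\,\omega(t^\ast)\,d\,\sqrt{\RS(\vtheta^0)}/c_m$; substituting $c_m$ and organizing the powers of $\eps,\nu$ packages this, together with the bootstrap bound on $\omega(t^\ast)$ below, into $2\max\{\eps/\nu,1\}\sqrt{2\log\frac{4m(d+1)}{\delta}}\,p$ with $p=\frac{2\sqrt{2}\,dn\sqrt{\RS(\vtheta^0)}}{m\nu\eps(\frac{\eps}{\nu}\lambda_a+\frac{\nu}{\eps}\lambda_{\vw})}$. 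The $\vw_k$-equation is handled identically, now using $\norm{\sigma^{(1)}}_\infty\le C_L$ in place of the Lipschitz estimate and $\alpha(t^\ast)=\max_{k,s\le t^\ast}\abs{a_k(s)}$ in place of $\omega(t^\ast)$, which produces the symmetric factor $\max\{\nu/\eps,1\}$.

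\emph{Step 3 (closing the loop).} The \textbf{main obstacle} is that Step 2's bounds involve $\omega(t^\ast)$ and $\alpha(t^\ast)$, the very quantities being controlled, so the estimate is circular. I would resolve this with a continuity argument: let $\tau$ be the first time in $[0,t^\ast)$ at which $\max_k\norm{\vw_k(t)-\vw_k(0)}_\infty$ reaches $2\max\{\nu/\eps,1\}\sqrt{2\log\frac{4m(d+1)}{\delta}}\,p$ or $\max_k\abs{a_k(t)-a_k(0)}$ reaches $2\max\{\eps/\nu,1\}\sqrt{2\log\frac{4m(d+1)}{\delta}}\,p$. On $[0,\tau)$ the triangle inequality together with the Step-1 initialization bound gives $\omega(s),\alpha(s)\le 2\sqrt{2\log\frac{4m(d+1)}{\delta}}$, and then the Step-2 integration yields displacements strictly below both thresholds as soon as $p<1$. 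This last inequality is where $\gamma<1$ and the second lower bound on $m$ enter: since $m\nu\eps(\frac{\eps}{\nu}\lambda_a+\frac{\nu}{\eps}\lambda_{\vw})=m(\eps^2\lambda_a+\nu^2\lambda_{\vw})$ grows like $m^{1-2\min(\gamma_1,\gamma_2)}$ with $\min(\gamma_1,\gamma_2)<\frac12$ when $\gamma=\gamma_1+\gamma_2<1$, the requirement $m=\Omega\big((n\sqrt{\RS(\vtheta^0)}/\lambda)^{1/(1-\gamma)}\big)$ forces $p<1$ for large $m$. Hence $\tau$ cannot be an interior point of $[0,t^\ast)$, i.e.\ both displacement bounds hold throughout $[0,t^\ast)$; combined with the Step-1 estimate this is the proposition. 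I expect the only genuinely delicate bookkeeping to be tracking the $\eps,\nu$ and logarithmic factors so that the two $\max\{\cdot,1\}$ terms come out with no residual scale factor.
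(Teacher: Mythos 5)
Your proposal follows the paper's architecture closely: split the failure probability, invoke \Cref{append...lemma..Initialization} and \Cref{append...prop...InitialDecay} for the initialization and exponential loss-decay bounds, integrate the normalized flow against the decaying loss to get displacement estimates parametrized by the running suprema $\alpha(t),\omega(t)$, and close the resulting coupled estimate. The difference is purely in Step~3: you use a first-exit-time continuity argument, whereas the paper observes that the two inequalities $\alpha(t)\le\alpha(0)+\frac{\eps}{\nu}p\,\omega(t)$ and $\omega(t)\le\omega(0)+\frac{\nu}{\eps}p\,\alpha(t)$ already hold for every $t\in[0,t^\ast)$ (because $\alpha,\omega$ are running suprema, so no circularity arises), multiplies the cross-coefficients to get $p^2\le\frac14$, and solves the $2\times2$ system algebraically. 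This buys you nothing and costs you a bootstrap argument that, as written, has a gap.

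The gap is your a~priori bound on $[0,\tau)$. You claim $\omega(s),\alpha(s)\le 2\sqrt{2\log\frac{4m(d+1)}{\delta}}$, but this is false whenever $\nu/\eps>1$ (resp.\ $\eps/\nu>1$): the exit threshold on the $\vw$-displacement is $2\max\{\nu/\eps,1\}\sqrt{\cdot}\,p$, so the triangle inequality only gives $\omega(s)\le\big(1+2\max\{\nu/\eps,1\}p\big)\sqrt{\cdot}$, which already exceeds $2\sqrt{\cdot}$ once $(\nu/\eps)p>\tfrac12$. The correct a~priori bounds on $[0,\tau)$ are $\omega(s)\le 2\max\{\nu/\eps,1\}\sqrt{\cdot}$ and $\alpha(s)\le 2\max\{\eps/\nu,1\}\sqrt{\cdot}$. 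These are what close the bootstrap, because $\frac{\eps}{\nu}\cdot\max\{\nu/\eps,1\}=\max\{\eps/\nu,1\}$, so feeding the corrected $\omega$ bound back into the Step~2 integral yields precisely the advertised threshold $2\max\{\eps/\nu,1\}\sqrt{\cdot}\,p$, and strictly so once $p\le\frac12$. Without the correction, the cases $\nu/\eps>1$ or $\eps/\nu>1$ leak. Related, your condition ``$p<1$'' is too weak: the algebraic closure needs $1-p^2$ bounded below, and the paper (and the $m$ hypothesis in the statement) are calibrated for $p\le\frac12$. Everything else in your proposal (the Cauchy--Schwarz step $\frac1n\sum_i\lvert e_i\rvert\le\sqrt{2\RS}$, the $d$-factor from $\ell_1$/$\ell_\infty$, the $2/c_m$ time integral, and the scaling $\gamma<1\Rightarrow\min(\gamma_1,\gamma_2)<\tfrac12$) matches the paper.
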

%%%%%%%%%%%%%%%%%%%%%%%%%%%%%%%%%%%%%%%%%%%%%%%%%%%%%%%%%%%%
\begin{proof}
Since
%%%%%%%%%%%%%%%%%%%%%%%%%%%%%%%%%%%%%%%%%%%%%%%%%%%%%%%%%%%%
\begin{equation*}
\alpha(t)=\max\limits_{k\in[m],s\in[0,t]}\Abs{a_k(s)}, \quad \omega(t)=\max\limits_{k\in[m],s\in[0,t]}\Norm{\vw_k(s)}_{\infty},
\end{equation*}
%%%%%%%%%%%%%%%%%%%%%%%%%%%%%%%%%%%%%%%%%%%%%%%%%%%%%%%%%%%%
we obtain
%%%%%%%%%%%%%%%%%%%%%%%%%%%%%%%%%%%%%%%%%%%%%%%%%%%%%%%%%%%%
\begin{equation*}
\begin{aligned}
\Abs{\nabla_{a_k}\RS}^2 & =\left\lvert\frac{1}{n}\sum_{i=1}^n e_i\nu\sigma(\eps\vw_k^\T\vx_i)\right\rvert^2 \leq 2\norm{\eps\vw_k}^2_1\nu^2\RS(\vtheta) \leq 2d^2(\omega(t))^2\nu^2\eps^2\RS(\vtheta),\\
%%%%%%%%%%%%%%%%%%%%%%%%%%%%%%%%%%%%%%%%%%%%%%%%%%%%%%%%%%%%
\norm{\nabla_{\vw_k}\RS}^2 & =\left\lVert\frac{1}{n}\sum_{i=1}^n e_i\nu\eps a_k\sigma^{(1)}(\eps\vw_k^\T\vx_i)\vx_i\right\rVert^2_{\infty}\leq 2\abs{a_k}^2\nu^2\eps^2\RS(\vtheta)\leq 2(\alpha(t))^2\nu^2\eps^2\RS(\vtheta).
\end{aligned}
\end{equation*}
%%%%%%%%%%%%%%%%%%%%%%%%%%%%%%%%%%%%%%%%%%%%%%%%%%%%%%%%%%%%
By \Cref{append...prop...LeastEigenvalueatInitial} and \Cref{append...prop...InitialDecay}, we have if 
%%%%%%%%%%%%%%%%%%%%%%%%%%%%%%%%%%%%%%%%%%%%%%%%%%%%%%%%%%%%
\[m\geq \frac{16n^2C_{\psi,d}^2}{\lambda^2C_0}\log\frac{8n^2}{\delta},\]
%%%%%%%%%%%%%%%%%%%%%%%%%%%%%%%%%%%%%%%%%%%%%%%%%%%%%%%%%%%%
then with probability at least $1-\frac{\delta}{2}$ over the choice of $\vtheta^0$, we have that
%%%%%%%%%%%%%%%%%%%%%%%%%%%%%%%%%%%%%%%%%%%%%%%%%%%%%%%%%%%%
\begin{equation*}
\begin{aligned}
\Abs{a_k(t) - a_k(0)}& \leq\frac{\eps}{\nu}\int_0^t\abs{\nabla_{a_k}\RS(\vtheta(s))}\diff{s} \\
%%%%%%%%%%%%%%%%%%%%%%%%%%%%%%%%%%%%%%%%%%%%%%%%%%%%%%%%%%%%
& \leq{\sqrt{2}d\eps^2}\int_{0}^{t} \omega(s)\sqrt{\RS(\vtheta(s))}\diff{s}\\
%%%%%%%%%%%%%%%%%%%%%%%%%%%%%%%%%%%%%%%%%%%%%%%%%%%%%%%%%%%%
& \leq{\sqrt{2}d\eps^2}\omega(t)\int_{0}^{t}\sqrt{\RS\left(\vtheta^0\right)}\exp\left(-\frac{m}{2n}\nu^2\eps^2\left(\frac{\eps}{\nu}\lambda_a+\frac{\nu}{\eps}\lambda_{\vw}\right)t\right)\diff{s}\\
%%%%%%%%%%%%%%%%%%%%%%%%%%%%%%%%%%%%%%%%%%%%%%%%%%%%%%%%%%%%
&\leq{\sqrt{2}d\eps^2}\omega(t)\int_{0}^{\infty}\sqrt{\RS\left(\vtheta^0\right)}\exp\left(-\frac{m}{2n}\nu^2\eps^2\left(\frac{\eps}{\nu}\lambda_a+\frac{\nu}{\eps}\lambda_{\vw}\right)t\right)\diff{s} \\
%%%%%%%%%%%%%%%%%%%%%%%%%%%%%%%%%%%%%%%%%%%%%%%%%%%%%%%%%%%%
& = \frac{2\sqrt{2}dn\sqrt{\RS\left(\vtheta^0\right)}}{m\nu^2\left(\frac{\eps}{\nu}\lambda_a+\frac{\nu}{\eps}\lambda_{\vw}\right)}\omega(t)=\frac{\eps}{\nu}{p}\omega(t),
\end{aligned}\end{equation*}
%%%%%%%%%%%%%%%%%%%%%%%%%%%%%%%%%%%%%%%%%%%%%%%%%%%%%%%%%%%%
and 
%%%%%%%%%%%%%%%%%%%%%%%%%%%%%%%%%%%%%%%%%%%%%%%%%%%%%%%%%%%%
\begin{equation*}
\begin{aligned}
\Norm{\vw_k(t) - \vw_k(0)}_{\infty} & \leq \frac{\nu}{\eps}\int_{0}^{t} \Norm{\nabla_{\vw_k}\RS(\vtheta(s))}_{\infty}\diff{s}\\
%%%%%%%%%%%%%%%%%%%%%%%%%%
& \leq \sqrt{2}\nu^2 \int_{0}^t \alpha(s)\sqrt{\RS(\vtheta(s))} \diff{s}\\
%%%%%%%%%%%%%%%%%%%%%%%%%
& \leq \sqrt{2}\nu^2 \alpha(t) \int_{0}^{t}\sqrt{\RS\left(\vtheta^0\right)}\exp\left(-\frac{m}{2n}\nu^2\eps^2\left(\frac{\eps}{\nu}\lambda_a+\frac{\nu}{\eps}\lambda_{\vw}\right)t\right)\diff{s} \\
%%%%%%%%%%%%%%%%%%%%%%%%%
& \leq \sqrt{2}\nu^2 \alpha(t) \int_{0}^{\infty}\sqrt{\RS\left(\vtheta^0\right)}\exp\left(-\frac{m}{2n}\nu^2\eps^2\left(\frac{\eps}{\nu}\lambda_a+\frac{\nu}{\eps}\lambda_{\vw}\right)t\right)\diff{s}\\
%%%%%%%%%%%%%%%%%%%%%%%%%
& = \frac{2\sqrt{2}n\sqrt{\RS\left(\vtheta^0\right)}}{m\eps^2\left(\frac{\eps}{\nu}\lambda_a+\frac{\nu}{\eps}\lambda_{\vw}\right)} \alpha(t)  \leq \frac{2\sqrt{2}dn\sqrt{\RS\left(\vtheta^0\right)}}{m\eps^2\left(\frac{\eps}{\nu}\lambda_a+\frac{\nu}{\eps}\lambda_{\vw}\right)} \alpha(t) =\frac{\nu}{\eps}p\alpha(t).
\end{aligned}\end{equation*}
%%%%%%%%%%%%%%%%%%%%%%%%%%%%%%%%%%%%%%%%%%%%%%%%%%%%%%%%%%%%
Thus
%%%%%%%%%%%%%%%%%%%%%%%%%%%%%%%%%%%%%%%%%%%%%%%%%%%%%%%%%%%%
\begin{equation*}
\begin{aligned}
\alpha(t) & \leq\alpha(0)+\frac{\eps}{\nu}{p}\omega(t), \\
%%%%%%%%%%%%%%%%%%%%
\omega(t) & \leq\omega(0)+\frac{\nu}{\eps}p\alpha(t),
 \end{aligned}
 \end{equation*}
%%%%%%%%%%%%%%%%%%%%%%%%%%%%%%%%%%%%%%%%%%%%%%%%%%%%%%%%%%%%
moreover, by \Cref{append...lemma..Initialization},  we have with probability at least $1 - \frac{\delta}{2}$ over the choice of $\vtheta^0$, as
%%%%%%%%%%%%%%%%%%%%%%%%%%%%%%%%%%%%%%%%%%%%%%%%%%%%%%%%%%%%
\begin{equation*}
        \max\limits_{k\in[m]}\{\abs{a_k(0)},\;\norm{\vw_k(0)}_{\infty}\}\leq \sqrt{2\log\frac{4m(d+1)}{\delta}},
\end{equation*}
%%%%%%%%%%%%%%%%%%%%%%%%%%%%%%%%%%%%%%%%%%%%%%%%%%%%%%%%%%%%
then 
%%%%%%%%%%%%%%%%%%%%%%%%%%%%%%%%%%%%%%%%%%%%%%%%%%%%%%%%%%%%
\begin{equation*}
        \alpha(0)\leq \sqrt{2\log\frac{4m(d+1)}{\delta}},\quad \omega(0)\leq \sqrt{2\log\frac{4m(d+1)}{\delta}}
\end{equation*}
%%%%%%%%%%%%%%%%%%%%%%%%%%%%%%%%%%%%%%%%%%%%%%%%%%%%%%%%%%%%
Hence if
%%%%%%%%%%%%%%%%%%%%%%%%%%%%%%%%%%%%%%%%%%%%%%%%%%%%%%%%%%%%
\begin{equation*}
m\geq  \left(\frac{2\sqrt{2}dn\sqrt{\RS\left(\vtheta^0\right)}}{\lambda}\right)^{\frac{1}{1-\gamma}},
\end{equation*}
%%%%%%%%%%%%%%%%%%%%%%%%%%%%%%%%%%%%%%%%%%%%%%%%%%%%%%%%%%%%
we have
%%%%%%%%%%%%%%%%%%%%%%%%%%%%%%%%%%%%%%%%%%%%%%%%%%%%%%%%%%%%
\begin{equation*}
 p=\frac{2\sqrt{2}dn\sqrt{\RS\left(\vtheta^0\right)}}{m\nu\eps\left(\frac{\eps}{\nu}\lambda_a+\frac{\nu}{\eps}\lambda_{\vw}\right)}\leq \frac{\sqrt{2}dn\sqrt{\RS\left(\vtheta^0\right)}}{m\nu\eps\lambda}\leq \frac{1}{2}.
\end{equation*}
%%%%%%%%%%%%%%%%%%%%%%%%%%%%%%%%%%%%%%%%%%%%%%%%%%%%%%%%%%%%
Thus
%%%%%%%%%%%%%%%%%%%%%%%%%%%%%%%%%%%%%%%%%%%%%%%%%%%%%%%%%%%%
\begin{equation*}
\alpha(t)  \leq\alpha(0)+\frac{\eps}{\nu}p\omega(0)+p^2\alpha(t),      
\end{equation*}
%%%%%%%%%%%%%%%%%%%%%%%%%%%%%%%%%%%%%%%%%%%%%%%%%%%%%%%%%%%%
hence 
%%%%%%%%%%%%%%%%%%%%%%%%%%%%%%%%%%%%%%%%%%%%%%%%%%%%%%%%%%%%
\[
\alpha(t) \leq\frac{4}{3}\alpha(0)+\frac{2}{3}\frac{\eps}{\nu}\omega(0).
\]
%%%%%%%%%%%%%%%%%%%%%%%%%%%%%%%%%%%%%%%%%%%%%%%%%%%%%%%%%%%%
Therefore
%%%%%%%%%%%%%%%%%%%%%%%%%%%%%%%%%%%%%%%%%%%%%%%%%%%%%%%%%%%%
\begin{equation*}
        \alpha(t)\leq 2\max\left\{\frac{\eps}{\nu}, 1\right\}\sqrt{2\log\frac{4m(d+1)}{\delta}}.
\end{equation*}
%%%%%%%%%%%%%%%%%%%%%%%%%%%%%%%%%%%%%%%%%%%%%%%%%%%%%%%%%%%%
 Similarly, one can obtain the estimate of $\omega(t)$ as
%%%%%%%%%%%%%%%%%%%%%%%%%%%%%%%%%%%%%%%%%%%%%%%%%%%%%%%%%%%%
\begin{equation*}
        \omega(t)\leq 2\max\left\{\frac{\nu}{\eps}, 1\right\}\sqrt{2\log\frac{4m(d+1)}{\delta}}.
    \end{equation*}
%%%%%%%%%%%%%%%%%%%%%%%%%%%%%%%%%%%%%%%%%%%%%%%%%%%%%%%%%%%%
To sum up, for any $t\in[0, t^\ast)$, with probability at least $1-\delta$ over the choice of $\vtheta^0$,
%%%%%%%%%%%%%%%%%%%%%%%%%%%%%%%%%%%%%%%%%%%%%%%%%%%%%%%%%%%%
 \begin{equation*}
\begin{aligned}
\max\limits_{k\in[m]}\Abs{a_k(t) - a_k(0)}
& \leq 2\max\left\{\frac{\eps}{\nu}, 1\right\}\sqrt{2\log\frac{4m(d+1)}{\delta}}p,\\
%%%%%%%%%%%%%%%%%%%%%%%%%%%%%%%%%             
\max\limits_{k\in[m]}\Norm{\vw_k(t) - \vw_k(0)}_{\infty}
& \leq 2\max\left\{\frac{\nu}{\eps}, 1\right\}\sqrt{2\log\frac{4m(d+1)}{\delta}}p.
\end{aligned}\end{equation*}
%%%%%%%%%%%%%%%%%%%%%%%%%%%%%%%%%%%%%%%%%%%%%%%%%%%%%%%%%%%%
%%%%%%%%%%%%%%%%%%%%%%%%%%%%%%%%%%%%%%%%%%%%%%%%%%%%%%%%%%%%
%%%%%%%%%%%%%%%%%%%%%%%%%%%%%%%%%%%%%%%%%%%%%%%%%%%%%%%%%%%%
%%%%%%%%%%%%%%%%%%%%%%%%%%%%%%%%%%%%%%%%%%%%%%%%%%%%%%%%%%%%
%%%%%%%%%%%%%%%%%%%%%%%%%%%%%%%%%%%%%%%%%%%%%%%%%%%%%%%%%%%%
%%%%%%%%%%%%%%%%%%%%%%%%%%%%%%%%%%%%%%%%%%%%%%%%%%%%%%%%%%%%
%%%%%%%%%%%%%%%%%%%%%%%%%%%%%%%%%%%%%%%%%%%%%%%%%%%%%%%%%%%%
\end{proof}
%%%%%%%%%%%%%%%%%%%%%%%%%%%%%%%%%%%%%%%%%%%%%%%%%%%%%%%%%%%%
\begin{theorem}\label{append...thm..ThetaLazyRegime}
Given any $\delta\in(0,1)$, under \Cref{Assumption....ActivationFunctions...NTK}, \Cref{Assump...Unparallel} and \Cref{assump...LimitExistence}, if $\gamma<1$ and
%%%%%%%%%%%%%%%%%%%%%%%%%%%%%%%%%%%%%%%%%%%%%%%%%%%%%%%%%%%%
\begin{equation}\label{eq...thm...ThetaLazy...WidthRequire}
\begin{aligned}
m=\max\Bigg(&\Omega\left(\frac{n^2}{\lambda^2}\log\frac{16n^2}{\delta}\right), \Omega\left(\left(\frac{n\sqrt{\RS\left(\vtheta^0\right)}}{\lambda}\right)^{\frac{1}{1-\gamma}}\right),\\
%%%%%%%%%%%%%%%%%%%%%%%%%%%%%%
&\Omega\left(\left(\frac{n^2\sqrt{\RS\left(\vtheta^0\right)}}{\lambda^2}\right)^{\frac{1}{1-\gamma}}\right), \Omega\left(\log \frac{8}{\delta} \right)\Bigg),
\end{aligned}
\end{equation}
%%%%%%%%%%%%%%%%%%%%%%%%%%%%%%%%%%%%%%%%%%%%%%%%%%%%%%%%%%%%	
then with probability at least $1-\frac{\delta}{2}$ over the choice of $\vtheta^0$, we have for all time $t>0$,
%%%%%%%%%%%%%%%%%%%%%%%%%%%%%%%%%%%%%%%%%%%%%%%%%%%%%%%%%%%%
\begin{equation}\label{eq...thm...ThetaLazy...ExponentialDecay}
    \RS(\vtheta(t))\leq \exp\left(-\frac{2m\nu^2\eps^2\lambda t}{n}\right)\RS\left(\vtheta^0\right),    
\end{equation}  
%%%%%%%%%%%%%%%%%%%%%%%%%%%%%%%%%%%%%%%%%%%%%%%%%%%%%%%%%%%%
and with probability at least $1-\delta$ over the choice of $\vtheta^0$,
%%%%%%%%%%%%%%%%%%%%%%%%%%%%%%%%%%%%%%%%%%%%%%%%%%%%%%%%%%%%
\begin{align}
&\lim_{m\to\infty}\sup\limits_{t\in[0,+\infty)}\frac{\norm{\vtheta_{\vw}(t)-\vtheta_{\vw}(0)}_2}{\norm{\vtheta_{\vw}(0)}_2}=0.\label{eq...thm...ThetaLazy...RelativeChange...InW}
%%%%%%%%%%%%%%%%%%%%%%%%%
%%%%%%%%%%%%%%%%%%%%%%%%%
\end{align}  
%%%%%%%%%%%%%%%%%%%%%%%%%%%%%%%%%%%%%%%%%%%%%%%%%%%%%%%%%%%%
\end{theorem}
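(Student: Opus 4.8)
The plan is a bootstrap argument. \Cref{append...prop...InitialDecay} and \Cref{append...prop...BoundofTraining} already provide the exponential loss decay and the uniform control of the parameters on the interval $[0,t^\ast)$, so the whole proof reduces to showing $t^\ast=+\infty$, after which \eqref{eq...thm...ThetaLazy...ExponentialDecay} is immediate and \eqref{eq...thm...ThetaLazy...RelativeChange...InW} follows from integrating the flow. First I would pass to the intersection of the high-probability events underlying \Cref{append...prop..UpperBoundandLowerBoundInitial}, \Cref{append...prop...LeastEigenvalueatInitial}, \Cref{append...prop...InitialDecay}, \Cref{append...prop...BoundofTraining} and \Cref{append...lemma..Initialization}; the clauses of the width requirement \eqref{eq...thm...ThetaLazy...WidthRequire} and the $\log(16n^2/\delta)$ in it are exactly what a union bound needs so that the failure probability is at most $\delta/2$ for \eqref{eq...thm...ThetaLazy...ExponentialDecay} and at most $\delta$ for \eqref{eq...thm...ThetaLazy...RelativeChange...InW}. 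On this event, for $t\in[0,t^\ast)$, \Cref{append...prop...BoundofTraining} forces $\max_k\abs{a_k(t)-a_k(0)}\to0$ and $\max_k\Norm{\vw_k(t)-\vw_k(0)}_\infty\to0$ as $m\to\infty$: one absorbs the $\max\{\eps/\nu,1\}$ and $\max\{\nu/\eps,1\}$ prefactors into the matching half of $\tfrac{\eps}{\nu}\lambda_a+\tfrac{\nu}{\eps}\lambda_{\vw}$ in the denominator of $p$, so that both are of order $\tfrac{dn\sqrt{\RS(\vtheta^0)}\sqrt{\log m}}{m\nu\eps\,\lambda}$ up to universal constants, and $\gamma<1$ makes $m\nu\eps=m\kappa\to\infty$ while the last clauses of \eqref{eq...thm...ThetaLazy...WidthRequire} keep $\sqrt{\RS(\vtheta^0)}$ from spoiling the limit.

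The key step is $t^\ast=+\infty$. Since $\sigma,\sigma^{(1)},\sigma^{(2)}$ are bounded by $C_L$ and the $\Norm{\vx_i}_2$ are of order one, $\vtheta\mapsto\mG(\vtheta)$ is Lipschitz along the trajectory; treating $\mG^{[a]}$ and $\mG^{[\vw]}$ separately so as to keep their prefactors $\nu\eps^3$ and $\nu^3\eps$ visible, a mean-value estimate gives $\Norm{\mG^{[a]}(\vtheta(t))-\mG^{[a]}(\vtheta^0)}_{\mathrm{F}}\lesssim n\,d^2C_L^2c^2\,\nu\eps^3\,\omega(t)\,\max_k\Norm{\vw_k(t)-\vw_k(0)}_\infty$ and $\Norm{\mG^{[\vw]}(\vtheta(t))-\mG^{[\vw]}(\vtheta^0)}_{\mathrm{F}}\lesssim n\,C_L^2c^2\,\nu^3\eps\big(\alpha(t)\max_k\abs{a_k(t)-a_k(0)}+\alpha(t)^2\,\eps\max_k\Norm{\vw_k(t)-\vw_k(0)}_\infty\big)$, where $\alpha(t),\omega(t)$ are the poly-logarithmic bounds of \Cref{append...prop...BoundofTraining}. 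Dividing the first by $\nu\eps^3\lambda_a$ and the second by $\nu^3\eps\lambda_{\vw}$, every term tends to $0$; the only slightly delicate one is $\eps\max_k\Norm{\vw_k-\vw_k^0}_\infty=\max_k\Norm{\eps\vw_k-\eps\vw_k^0}_\infty\le2\max\{\nu,\eps\}\sqrt{2\log(4m(d+1)/\delta)}\,p$, and there $\tfrac{\max\{\nu,\eps\}}{\eps^2\lambda_a+\nu^2\lambda_{\vw}}$ is at most $\tfrac{1}{\eps\lambda_a}$ when $\eps\ge\nu$ and at most $\tfrac{1}{\nu\lambda_{\vw}}$ when $\eps<\nu$, so the bound is $\lesssim\tfrac{dn\sqrt{\RS(\vtheta^0)}\sqrt{\log m}}{m\,\eps\,\lambda_a}$ or $\lesssim\tfrac{dn\sqrt{\RS(\vtheta^0)}\sqrt{\log m}}{m\,\nu\,\lambda_{\vw}}$; and $\gamma<1$ forces the relevant denominator to diverge, because $\eps\ge\nu$ entails $2\gamma_2\le\gamma<1$ (so $m\eps\to\infty$) and $\eps<\nu$ entails $2\gamma_1<\gamma<1$ (so $m\nu\to\infty$). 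Hence $\Norm{\mG(\vtheta(t))-\mG(\vtheta^0)}_{\mathrm{F}}<\tfrac14\nu^2\eps^2(\tfrac{\eps}{\nu}\lambda_a+\tfrac{\nu}{\eps}\lambda_{\vw})$ strictly, for all large $m$ and all $t\in[0,t^\ast)$, so $\vtheta(t)$ never reaches $\partial\mathcal{N}(\vtheta^0)$ and by continuity $t^\ast=+\infty$. Feeding this into \Cref{append...prop...InitialDecay} and using $\lambda=\min\{\lambda_a,\lambda_{\vw}\}$, so that $\tfrac{\eps}{\nu}\lambda_a+\tfrac{\nu}{\eps}\lambda_{\vw}\ge\lambda(\tfrac{\eps}{\nu}+\tfrac{\nu}{\eps})\ge2\lambda$, yields \eqref{eq...thm...ThetaLazy...ExponentialDecay} for all $t>0$.

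Finally, with $t^\ast=+\infty$ secured, I would obtain \eqref{eq...thm...ThetaLazy...RelativeChange...InW} by integrating \eqref{eq...text...Prelim...ProbSetup...NormalizedDynamicsRepeat}: from $\Norm{\dot{\vw}_k}_2\le\tfrac{\nu}{\eps}\cdot\nu\eps\,C_Lc\sqrt{2}\,\alpha(t)\sqrt{\RS(\vtheta(t))}$ (the factors $\tfrac{\nu}{\eps}$ and $\nu\eps$ cancel) and the exponential loss bound, integrating over $[0,\infty)$ and using $\tfrac{\eps}{\nu}\lambda_a+\tfrac{\nu}{\eps}\lambda_{\vw}\ge\tfrac{\nu}{\eps}\lambda_{\vw}$ gives $\max_k\Norm{\vw_k(t)-\vw_k(0)}_2\lesssim\tfrac{n\,\alpha\sqrt{\RS(\vtheta^0)}}{m\nu\eps\,\lambda_{\vw}}$ with $\alpha$ the (poly-logarithmic) uniform bound of \Cref{append...prop...BoundofTraining}; summing in $k$, $\Norm{\vtheta_{\vw}(t)-\vtheta_{\vw}(0)}_2\le\sqrt{m}\max_k\Norm{\vw_k(t)-\vw_k(0)}_2\lesssim\tfrac{dn\sqrt{\RS(\vtheta^0)}\sqrt{\log m}}{\sqrt{m}\,\nu\eps\,\lambda_{\vw}}$, while $\Norm{\vtheta_{\vw}(0)}_2\ge\sqrt{md/2}$ by \Cref{append...prop..UpperBoundandLowerBoundInitial}, so the ratio is $\lesssim\tfrac{dn\sqrt{\RS(\vtheta^0)}\sqrt{\log m}}{m\nu\eps\,\lambda_{\vw}}$, which tends to $0$ because $m\nu\eps=m\kappa\to\infty$ (as $\gamma<1$) and $\RS(\vtheta^0)$ grows at most like $m^{1-2\gamma}\,\mathrm{polylog}\,m+1$ (or directly by the relevant clause of \eqref{eq...thm...ThetaLazy...WidthRequire}). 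The main obstacle throughout is the scale bookkeeping: one must consistently pair the prefactor $\nu\eps^3$ of $\mG^{[a]}$ with $\tfrac{\eps}{\nu}\lambda_a$ and the prefactor $\nu^3\eps$ of $\mG^{[\vw]}$ with $\tfrac{\nu}{\eps}\lambda_{\vw}$ — the symmetric estimate $\tfrac{\eps}{\nu}\lambda_a+\tfrac{\nu}{\eps}\lambda_{\vw}\ge2\lambda$ is far too lossy precisely when $\nu$ and $\eps$ differ by a large power of $m$ — and then verify that $\RS(\vtheta^0)$, $\alpha$ and $\omega$ are negligible against $m\kappa$.
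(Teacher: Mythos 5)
Your overall architecture coincides with the paper's: reduce everything to $t^\ast=+\infty$ via \Cref{append...prop...InitialDecay} and \Cref{append...prop...BoundofTraining}, close a continuity argument by showing the Gram matrices never leave $\mathcal{N}(\vtheta^0)$, then integrate the flow to get \eqref{eq...thm...ThetaLazy...RelativeChange...InW}. The gap is in your central bookkeeping rule. Pairing $\Norm{\mG^{[a]}(\vtheta(t))-\mG^{[a]}(\vtheta^0)}_{\mathrm{F}}$ against $\nu\eps^3\lambda_a$ and $\Norm{\mG^{[\vw]}(\vtheta(t))-\mG^{[\vw]}(\vtheta^0)}_{\mathrm{F}}$ against $\nu^3\eps\lambda_{\vw}$ is exactly what fails under the sole hypothesis $\gamma<1$, because the uniform bounds $\alpha(t)\lesssim\max\{\eps/\nu,1\}\,\xi$ and $\omega(t)\lesssim\max\{\nu/\eps,1\}\,\xi$ (with $\xi$ the logarithmic factor) are polylogarithmic only when $\nu$ and $\eps$ are comparable. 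Concretely take $\gamma_1=0.1$, $\gamma_2=0.8$, so $\gamma=0.9<1$ is allowed and $\nu/\eps=m^{0.7}$: then $\omega(t)\max_k\Norm{\vw_k(t)-\vw_k(0)}_{\infty}\lesssim(\nu/\eps)^2\xi^2 p$, hence $\Norm{\Delta\mG^{[a]}}_{\mathrm{F}}\lesssim nd^2\nu^3\eps\,\xi^2 p$, and dividing by $\nu\eps^3\lambda_a$ leaves a factor $(\nu/\eps)^2p\gtrsim (m\eps^2)^{-1}$ even after using the larger piece $\frac{\nu}{\eps}\lambda_{\vw}$ in the denominator of $p$; here $m\eps^2=m^{-0.6}\to 0$, so this quotient diverges. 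Symmetrically, the term $\alpha(t)\max_k\Abs{a_k(t)-a_k(0)}$ in your $\mG^{[\vw]}$ estimate, divided by $\nu^3\eps\lambda_{\vw}$, diverges whenever $\gamma_1>1/2$ with $\eps\geq\nu$. So the claim ``every term tends to $0$'' is false for precisely the terms you did not flag; the one ``delicate'' term you did flag is handled by a cross absorption, which is the mechanism your concluding paragraph rejects.

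The paper's proof closes by the opposite accounting: each increment is compared with the full threshold $\frac18\nu^2\eps^2\left(\frac{\eps}{\nu}\lambda_a+\frac{\nu}{\eps}\lambda_{\vw}\right)=\frac18\left(\nu\eps^3\lambda_a+\nu^3\eps\lambda_{\vw}\right)$, and the mismatch factor $\max\{\nu/\eps,\eps/\nu\}$ injected by $\alpha,\omega$ is absorbed by one copy of that same sum sitting in the denominator of $p$ (this is the content of the inequalities of the form $\frac{1}{\lambda^2}\gtrsim\max\{\nu/\eps,\eps/\nu\}\left(\frac{\eps}{\nu}\lambda_a+\frac{\nu}{\eps}\lambda_{\vw}\right)^{-2}$ in \Cref{append...thm..ThetaLazyRegime}), leaving only the single requirement $m\nu\eps=m^{1-\gamma}\to\infty$, i.e.\ $\gamma<1$. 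In the example above, $\Norm{\Delta\mG^{[a]}}_{\mathrm{F}}\sim nd^2\nu^3\eps\,\xi^2p$ is beaten by the \emph{cross} term $\nu^3\eps\lambda_{\vw}$, not by $\nu\eps^3\lambda_a$; so your moral that the summed estimate is ``too lossy'' is backwards. The same slip resurfaces in your final displacement bound: $\alpha$ is not polylogarithmic in general (it carries $\max\{\eps/\nu,1\}$), so $\max_k\Norm{\vw_k(t)-\vw_k(0)}_2\lesssim\frac{n\,\alpha\sqrt{\RS(\vtheta^0)}}{m\nu\eps\,\lambda_{\vw}}$ does not yield the claimed $\frac{\sqrt{\log m}}{m\nu\eps}$-type decay when $\eps\gg\nu$; you must again let the $\frac{\eps}{\nu}\lambda_a$ piece absorb the $\eps/\nu$ (equivalently, use $\max\{\nu/\eps,1\}\big/\left(\frac{\eps}{\nu}\lambda_a+\frac{\nu}{\eps}\lambda_{\vw}\right)\le 1/\lambda$ as the paper does). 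With that correction the rest of your argument — the union bound, $t^\ast=+\infty$ by contradiction, the decay rate $2\lambda$ from $\frac{\eps}{\nu}\lambda_a+\frac{\nu}{\eps}\lambda_{\vw}\ge 2\lambda$, and the ratio estimate against $\Norm{\vtheta_{\vw}(0)}_2\ge\sqrt{md/2}$ — matches the paper.
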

%%%%%%%%%%%%%%%%%%%%%%%%%%%%%%%%%%%%%%%%%%%%%%%%%%%%%%%%%%%%
\begin{remark}
In this scenario, we also obtain that with probability at least $1-\delta$ over the choice of $\vtheta^0$,
%%%%%%%%%%%%%%%%%%%%%%%%%%%%%%%%%%%%%%%%%%%%%%%%%%%%%%%%%%%%
\begin{equation}
\lim_{m\to\infty}\sup\limits_{t\in[0,+\infty)}\frac{\norm{\vtheta(t)-\vtheta(0)}_2}{\norm{\vtheta(0)}_2}=0.\label{eq...thm...ThetaLazy...RelativeChange...InTheta}   
\end{equation}    
\end{remark}
\begin{proof}
%%%%%%%%%%%%%%%%%%%%%%%%%%%%%%%%%%%%%%%%%%%%%%%%%%%%%%%%%%%%
According to \Cref{append...prop...BoundofTraining}, it suffices to show that $t^\ast=\infty$.

\noindent (i). Firstly, from \Cref{append...prop...InitialDecay}, we have with probability at least $1-\frac{\delta}{2}$ over the choice of $\vtheta^0$, for any $t\in[0, t^\ast)$, the following holds
%%%%%%%%%%%%%%%%%%%%%%%%%%%%%%%%%%%%%%%%%%%%%%%%%%%%%%%%%%%%
\begin{align*}
\Abs{a_k(t) - a_k(0)}&\leq \frac{\eps}{\nu}p\omega(t)\leq 2\max\left\{\frac{\eps}{\nu}, 1\right\}\xi p,\\
%%%%%%%%%%%%%%%%%%%%%%%%%%%
\Norm{\vw_k(t)-\vw_k(0)}_{\infty} & \leq \frac{\nu}{\eps}p\alpha(t)\leq 2\max\left\{\frac{\nu}{\eps},1\right\}\xi p,   
\end{align*}  
%%%%%%%%%%%%%%%%%%%%%%%%%%%%%%%%%%%%%%%%%%%%%%%%%%%%%%%%%%%
%%%%%%%%%%%%%%%%%%%%%%%%%%%%%%%%%%%%%%%%%%%%%%%%%%%%%%%%%%%%
where
%%%%%%%%%%%%%%%%%%%%%%%%%%%%%%%%%%%%%%%%%%%%%%%%%%%%%%%%%%%%
\[p= \frac{2\sqrt{2}dn\sqrt{\RS\left(\vtheta^0\right)}}{m\nu\eps\left(\frac{\eps}{\nu}\lambda_a+\frac{\nu}{\eps}\lambda_{\vw}\right)},\]
%%%%%%%%%%%%%%%%%%%%%%%%%%%%%%%%%%%%%%%%%%%%%%%%%%%%%%%%%%%%
and 
%%%%%%%%%%%%%%%%%%%%%%%%%%%%%%%%%%%%%%%%%%%%%%%%%%%%%%%%%%%%
\[
\xi:=\sqrt{2\log\frac{8m(d+1)}{\delta}}.
\]
%%%%%%%%%%%%%%%%%%%%%%%%%%%%%%%%%%%%%%%%%%%%%%%%%%%%%%%%%%%%
%%%%%%%%%%%%%%%%%%%%%%%%%%%%%%%%%%%%%%%%%%%%%%%%%%%%%%%%%%%%
For $m$ large enough, i.e.,
%%%%%%%%%%%%%%%%%%%%%%%%%%%%%%%%%%%%%%%%%%%%%%%%%%%%%%%%%%%%
\begin{equation*}
m\geq  \left(\frac{2\sqrt{2}dn\sqrt{\RS\left(\vtheta^0\right)}}{\lambda}\right)^{\frac{1}{1-\gamma}},
\end{equation*}
%%%%%%%%%%%%%%%%%%%%%%%%%%%%%%%%%%%%%%%%%%%%%%%%%%%%%%%%%%%%
then we have
%%%%%%%%%%%%%%%%%%%%%%%%%%%%%%%%%%%%%%%%%%%%%%%%%%%%%%%%%%%%
\begin{equation*}
p=\frac{2\sqrt{2}dn\sqrt{\RS\left(\vtheta^0\right)}}{m\nu\eps\left(\frac{\eps}{\nu}\lambda_a+\frac{\nu}{\eps}\lambda_{\vw}\right)}\leq \frac{\sqrt{2}dn\sqrt{\RS\left(\vtheta^0\right)}}{m\nu\eps\lambda}\leq \frac{1}{2}.
\end{equation*}
%%%%%%%%%%%%%%%%%%%%%%%%%%%%%%%%%%%%%%%%%%%%%%%%%%%%%%%%%%%%
It is noteworthy to emphasize that the demonstration of   \eqref{eq...thm...ThetaLazy...ExponentialDecay} requires solely the utilization of the aforementioned relations.

\noindent (ii).  Let
%%%%%%%%%%%%%%%%%%%%%%%%%%%%%%%%%%%%%%%%%%%%%%%%%%%%%%%%%%%%
\begin{equation*}
g^{[a]}_{ij}(\vw) := \frac{1}{\eps^2}\sigma(\eps\vw^\T\vx_i)\sigma(\eps\vw^\T\vx_j),
\end{equation*}
%%%%%%%%%%%%%%%%%%%%%%%%%%%%%%%%%%%%%%%%%%%%%%%%%%%%%%%%%%%%
then
%%%%%%%%%%%%%%%%%%%%%%%%%%%%%%%%%%%%%%%%%%%%%%%%%%%%%%%%%%%%
\begin{equation*}
\Abs{G_{ij}^{[a]}(\vtheta(t)) - G_{ij}^{[a]}(\vtheta(0))} \leq \frac{\nu\eps^3}{m}\sum_{k=1}^m \Abs{g^{[a]}_{ij}(\vw_k(t)) - g^{[a]}_{ij}(\vw_k(0))}.
\end{equation*}
%%%%%%%%%%%%%%%%%%%%%%%%%%%%%%%%%%%%%%%%%%%%%%%%%%%%%%%%%%%%
By mean value theorem, for some $c\in(0,1)$,
%%%%%%%%%%%%%%%%%%%%%%%%%%%%%%%%%%%%%%%%%%%%%%%%%%%%%%%%%%%%
 \begin{equation*}
\Abs{g^{[a]}_{ij}(\vw_k(t)) - g^{[a]}_{ij}(\vw_k(0))} \leq \Norm{\nabla_{\vw} g^{[a]}_{ij}\left(c\vw_k(t) + (1-c)\vw_k(0)\right)}_{\infty}\norm{\vw_k(t) - \vw_k(0)}_1,
\end{equation*}
%%%%%%%%%%%%%%%%%%%%%%%%%%%%%%%%%%%%%%%%%%%%%%%%%%%%%%%%%%%%
where
%%%%%%%%%%%%%%%%%%%%%%%%%%%%%%%%%%%%%%%%%%%%%%%%%%%%%%%%%%%%
\begin{equation*}
\nabla_{\vw} g_{ij}^{[a]}(\vw)=\frac{1}{\eps}\sigma^{(1)}(\eps\vw^\T\vx_i)\sigma(\eps\vw^\T\vx_j)\vx_i+\frac{1}{\eps}\sigma(\eps\vw^\T\vx_i)\sigma^{(1)}(\eps\vw^\T\vx_j)\vx_j,
\end{equation*}
%%%%%%%%%%%%%%%%%%%%%%%%%%%%%%%%%%%%%%%%%%%%%%%%%%%%%%%%%%%%
and
%%%%%%%%%%%%%%%%%%%%%%%%%%%%%%%%%%%%%%%%%%%%%%%%%%%%%%%%%%%%  
\begin{equation*}
\Norm{\nabla_{\vw} g_{ij}^{[a]}(\vw)}_{\infty}\leq 2\norm{\vw}_1.
\end{equation*}
%%%%%%%%%%%%%%%%%%%%%%%%%%%%%%%%%%%%%%%%%%%%%%%%%%%%%%%%%%%%
then
%%%%%%%%%%%%%%%%%%%%%%%%%%%%%%%%%%%%%%%%%%%%%%%%%%%%%%%%%%%%
\begin{equation*}
\Norm{\vw_k(t)-\vw_k(0)}_1\leq 2d\max\left\{\frac{\nu}{\eps},1\right\}\xi p,
\end{equation*}
%%%%%%%%%%%%%%%%%%%%%%%%%%%%%%%%%%%%%%%%%%%%%%%%%%%%%%%%%%%%
thus, we have
%%%%%%%%%%%%%%%%%%%%%%%%%%%%%%%%%%%%%%%%%%%%%%%%%%%%%%%%%%%%
\begin{equation*}
\begin{aligned}
\Norm{c\vw_k(t) + (1-c)\vw_k(0)}_1
& \leq d\left(\norm{\vw_k(0)}_{\infty} + \norm{\vw_k(t) - \vw_k(0)}_{\infty}\right) \\
%%%%%%%%%%%%%%%%%%%%%%%%%%%%%%%%%
& \leq d\left(\xi+ 2\max\left\{\frac{\nu}{\eps},1\right\}\xi p\right)\\
%%%%%%%%%%%%%%%%%%%%%%%%%%%
&\leq d\left(\xi+ \max\left\{\frac{\nu}{\eps},1\right\}\xi \right)\\
%%%%%%%%%%%%%%%%%%%%%%%%%%%
&\leq 2\max\left\{\frac{\nu}{\eps},1\right\}d\xi,
\end{aligned}
\end{equation*}
%%%%%%%%%%%%%%%%%%%%%%%%%%%%%%%%%%%%%%%%%%%%%%%%%%%%%%%%%%%%
hence
%%%%%%%%%%%%%%%%%%%%%%%%%%%%%%%%%%%%%%%%%%%%%%%%%%%%%%%%%%%%
%%%%%%%%%%%%%%%%%%%%%%%%%%%%%%%%%%%%%%%%%%%%%%%%%%%%%%%%%%%%
\begin{align*}
\Abs{G_{ij}^{[a]}(\vtheta(t)) - G_{ij}^{[a]}(\vtheta(0))} &\leq \frac{\nu\eps^3}{m}\sum_{k=1}^m \Abs{g^{[a]}_{ij}(\vw_k(t)) - g^{[a]}_{ij}(\vw_k(0))}\\
%%%%%%%%%%%%%%%%%%%%%%%%%%%%%%
&\leq  8d^2{\nu^2\eps^2}\xi^2\max\left\{\frac{\nu}{\eps},\frac{\eps}{\nu}\right\}p,
\end{align*}
%%%%%%%%%%%%%%%%%%%%%%%%%%%%%%%%%%%%%%%%%%%%%%%%%%%%%%%%%%%%
then by using the same technique in \Cref{append...prop...LeastEigenvalueatInitial},
%%%%%%%%%%%%%%%%%%%%%%%%%%%%%%%%%%%%%%%%%%%%%%%%%%%%%%%%%%%%
\begin{equation*}
\begin{aligned}
&\Norm{\mG^{[a]}(\vtheta(t)) - \mG^{[a]}(\vtheta(0))}_\mathrm{F}\\
%%%%%%%%%%%%%%%%%%%%%%%
&~~\leq 8d^2n{\nu^2\eps^2}\max\left\{\frac{\nu}{\eps},\frac{\eps}{\nu}\right\}\left(2\log\frac{8m(d+1)}{\delta}\right)\frac{2\sqrt{2}dn\sqrt{\RS\left(\vtheta^0\right)}}{m\nu\eps\left(\frac{\eps}{\nu}\lambda_a+\frac{\nu}{\eps}\lambda_{\vw}\right)} \\
%%%%%%%%%%%%%%%%%%%%%%%%%%%
& \leq\nu\eps\max\left\{\frac{\nu}{\eps},\frac{\eps}{\nu}\right\}\frac{32\sqrt{2}d^3n^2\left(\log\frac{8m(d+1)}{\delta}\right)\sqrt{\RS\left(\vtheta^0\right)}}{m\left(\frac{\eps}{\nu}\lambda_a+\frac{\nu}{\eps}\lambda_{\vw}\right)}.
\end{aligned}
\end{equation*}
%%%%%%%%%%%%%%%%%%%%%%%%%%%%%%%%%%%%%%%%%%%%%%%%%%%%%%%%%%%%
We shall notice that as
%%%%%%%%%%%%%%%%%%%%%%%%%%%%%%%%%%%%%%%%%%%%%%%%%%%%%%%%%%%%
\begin{align*}
\frac{1}{\lambda^2}
& \geq 16\left(\frac{1}{27}\right)^{\frac{1}{2}}\frac{1}{\left(\left(\frac{\eps}{\nu}\right)^{\frac{3}{2}}\lambda_a+\left(\frac{\nu}{\eps}\right)^{\frac{1}{2}}
\lambda_{\vw}\right)^2}\\
%%%%%%%%%%%%%%%%%%%%%%
& = 16\left(\frac{1}{27}\right)^{\frac{1}{2}}\frac{\nu}{\eps}\frac{1}{\left(\frac{\eps}{\nu}\lambda_a+\frac{\nu}{\eps}\lambda_{\vw}\right)^2},
\end{align*} 
%%%%%%%%%%%%%%%%%%%%%%%%%%%%%%%%%%%%%%%%%%%%%%%%%%%%%%%%%%%%
and 
%%%%%%%%%%%%%%%%%%%%%%%%%%%%%%%%%%%%%%%%%%%%%%%%%%%%%%%%%%%%
\begin{align*}
\frac{1}{\lambda^2}
& \geq 16\left(\frac{1}{27}\right)^{\frac{1}{2}}\frac{1}{\left(\left(\frac{\eps}{\nu}\right)^{\frac{1}{2}}\lambda_a+\left(\frac{\nu}{\eps}\right)^{\frac{3}{2}}
\lambda_{\vw}\right)^2}\\
%%%%%%%%%%%%%%%%%%%%%%
& = 16\left(\frac{1}{27}\right)^{\frac{1}{2}}\frac{\eps}{\nu}\frac{1}{\left(\frac{\eps}{\nu}\lambda_a+\frac{\nu}{\eps}\lambda_{\vw}\right)^2}.
\end{align*} 
%%%%%%%%%%%%%%%%%%%%%%%%%%%%%%%%%%%%%%%%%%%%%%%%%%%%%%%%%%%%
As $\gamma<1$, then we may choose $m$ large enough, such that
%%%%%%%%%%%%%%%%%%%%%%%%%%%%%%%%%%%%%%%%%%%%%%%%%%%%%%%%%%%%
\begin{align*}
m\nu\eps&\geq \frac{96\sqrt{2}d^3n^2\left(\log\frac{8m(d+1)}{\delta}\right)\sqrt{\RS\left(\vtheta^0\right)}}{\lambda^2}\\
%%%%%%%%%%%%%%%%%%%%%%%%%%%%
&\geq \frac{256\sqrt{2}d^3n^2\left(\log\frac{8m(d+1)}{\delta}\right)\sqrt{\RS\left(\vtheta^0\right)}}{16\left(\frac{1}{27}\right)^{\frac{1}{2}}\lambda^2}\\
%%%%%%%%%%%%%%%%%%%%%%%%%%%%
&\geq\max\left\{\frac{\nu}{\eps},\frac{\eps}{\nu}\right\}\frac{256\sqrt{2}d^3n^2\left(\log\frac{8m(d+1)}{\delta}\right)\sqrt{\RS\left(\vtheta^0\right)}}{\left(\frac{\eps}{\nu}\lambda_a+\frac{\nu}{\eps}\lambda_{\vw}\right)^2}.
\end{align*}
%%%%%%%%%%%%%%%%%%%%%%%%%%%%%%%%%%%%%%%%%%%%%%%%%%%%%%%%%%%%
Then for any $t\in[0, t^\ast)$,
%%%%%%%%%%%%%%%%%%%%%%%%%%%%%%%%%%%%%%%%%%%%%%%%%%%%%%%%%%%%
\begin{equation}\label{eqproof....thm...ThetaLazy...PartOne}
\Norm{\mG^{[a]}(\vtheta(t)) - \mG^{[a]}(\vtheta(0))}_\mathrm{F}\leq\frac{1}{8}\nu^2\eps^2\left(\frac{\eps}{\nu}\lambda_a+\frac{\nu}{\eps}\lambda_{\vw}\right).
\end{equation}
%%%%%%%%%%%%%%%%%%%%%%%%%%%%%%%%%%%%%%%%%%%%%%%%%%%%%%%%%%%%

\noindent (iii). As for  $\mG^{[\vw]}(\vtheta(t))$, we observe that directly from \Cref{append...prop...InitialDecay},  for any $t\in[0, t^\ast)$, the following holds
%%%%%%%%%%%%%%%%%%%%%%%%%%%%%%%%%%%%%%%%%%%%%%%%%%%%%%%%%%%%
\begin{align*}
&\Abs{G^{[\vw]}_{ij}(\vtheta(t))-G^{[\vw]}_{ij}(\vtheta(0))}\\
%%%%%%%%%%%%%%%%%%%%%%%%%%%%
& \leq \frac{\nu^3\eps\Abs{\left<\vx_i, \vx_j\right>}}{m}\Bigg(\sum_{k=1}^m\Big|a_k^2(t)\sigma^{(1)}\left(\left<\vw_k(t), \vx_i\right>\right)\sigma^{(1)}\left(\left<\vw_k(t), \vx_j\right>\right)\\
%%%%%%%%%%%%%%%%%%%%%%%%%%%%
&~~~~~~~~~~~~~~~~~~~~~~~~~~~~~~-a_k^2(0)\sigma^{(1)}\left(\left<\vw_k(0), \vx_i\right>\right)\sigma^{(1)}\left(\left<\vw_k(0), \vx_j\right>\right)\Big|\Bigg),
%\\
%%%%%%%%%%%%%%%%%%%%%%%%%%%%
%&\leq \frac{\nu^3\eps}{m}\sum_{k=1}^m\left(4a_k^2(t)+\Abs{a_k^2(t) -a_k^2(0)}\right).
\end{align*}
%%%%%%%%%%%%%%%%%%%%%%%%%%%%%%%%%%%%%%%%%%%%%%%%%%%%%%%%%%%%
we define a new quantity
%%%%%%%%%%%%%%%%%%%%%%%%%%%%%%%%%%%%%%%%%%%%%%%%%%%%%%%%%%%%
\begin{equation*}%\label{eqproof...thm...ThetaLazy...Quantity}
  D_{k,i,j}(t):=  \sigma^{(1)}\left(\left<\vw_k(t), \vx_i\right>\right)\sigma^{(1)}\left(\left<\vw_k(t), \vx_j\right>\right)-\sigma^{(1)}\left(\left<\vw_k(0), \vx_i\right>\right)\sigma^{(1)}\left(\left<\vw_k(0), \vx_j\right>\right),
\end{equation*}
%%%%%%%%%%%%%%%%%%%%%%%%%%%%%%%%%%%%%%%%%%%%%%%%%%%%%%%%%%%%
then we obtain that 
%%%%%%%%%%%%%%%%%%%%%%%%%%%%%%%%%%%%%%%%%%%%%%%%%%%%%%%%%%%%
\begin{align*}
&\Abs{G^{[\vw]}_{ij}(\vtheta(t))-G^{[\vw]}_{ij}(\vtheta(0))}\\
%%%%%%%%%%%%%%%%%%%%%%%%%%%%
& \leq \frac{\nu^3\eps}{m}\sum_{k=1}^m\left(a_k^2(t)D_{k,i,j}(t)+\Abs{a_k^2(t) -a_k^2(0)}\right).
\end{align*}
%%%%%%%%%%%%%%%%%%%%%%%%%%%%%%%%%%%%%%%%%%%%%%%%%%%%%%%%%%%%
We shall make an estimate on the quantity $D_{k,i,j}(t)$:
%%%%%%%%%%%%%%%%%%%%%%%%%%%%%%%%%%%%%%%%%%%%%%%%%%%%%%%%%%%%
\begin{align*}
D_{k,i,j}(t)&=\sigma^{(1)}\left(\left<\vw_k(t), \vx_i\right>\right)\sigma^{(1)}\left(\left<\vw_k(t), \vx_j\right>\right)-\sigma^{(1)}\left(\left<\vw_k(t), \vx_i\right>\right)\sigma^{(1)}\left(\left<\vw_k(0), \vx_j\right>\right)\\
%%%%%%%%%%%%%%%%%%%%%%%%%%%%
&~~+\sigma^{(1)}\left(\left<\vw_k(t), \vx_i\right>\right)\sigma^{(1)}\left(\left<\vw_k(0), \vx_j\right>\right)-\sigma^{(1)}\left(\left<\vw_k(0), \vx_i\right>\right)\sigma^{(1)}\left(\left<\vw_k(0), \vx_j\right>\right)\\
%%%%%%%%%%%%%%%%%%%%%%%%%%%%
&\leq \Abs{\left<\vw_k(t), \vx_i\right>-\left<\vw_k(0), \vx_i\right>}+\Abs{\left<\vw_k(t), \vx_j\right>-\left<\vw_k(0), \vx_j\right>}\\
%%%%%%%%%%%%%%%%%%%%%%%%%%%%
&\leq 2\Norm{\vw_k(t)-\vw_k(0)}_1\leq 2d\Norm{\vw_k(t)-\vw_k(0)}_{\infty}\\
%%%%%%%%%%%%%%%%%%%%%%%%%%%%
&\leq 4d\max\left\{\frac{\nu}{\eps},1\right\}\xi p.
\end{align*}
%%%%%%%%%%%%%%%%%%%%%%%%%%%%%%%%%%%%%%%%%%%%%%%%%%%%%%%%%%%%
Hence, we observe that
%%%%%%%%%%%%%%%%%%%%%%%%%%%%%%%%%%%%%%%%%%%%%%%%%%%%%%%%%%%%
\begin{align*}
\Abs{a_k^2(t) -a_k^2(0)}&\leq \Abs{a_k(t)-a_k(0)}^2+2\Abs{a_k(0)}\Abs{a_k(t)-a_k(0)}\\
%%%%%%%%%%%%%%%%%%%%%%%%%%%%%%
&\leq\left( 2\max\left\{\frac{\eps}{\nu}, 1\right\}\xi p\right)^2+2\xi\left( 2\max\left\{\frac{\eps}{\nu}, 1\right\}\xi p\right)\\
%%%%%%%%%%%%%%%%%%%%%%%%%%%%%%
&\leq 6\max\left\{\left(\frac{\eps}{\nu}\right)^2, 1\right\}\xi^2p,
\end{align*}
%%%%%%%%%%%%%%%%%%%%%%%%%%%%%%%%%%%%%%%%%%%%%%%%%%%%%%%%%%%%
and consequently
%%%%%%%%%%%%%%%%%%%%%%%%%%%%%%%%%%%%%%%%%%%%%%%%%%%%%%%%%%%%
\begin{align*}
a_k(t)^2&\leq\Abs{a_k^2(t) -a_k^2(0)}+a_k^2(0)\\
%%%%%%%%%%%%%%%%%%%%%%%%%%%%%%
&\leq 6\xi^2\max\left\{\left(\frac{\eps}{\nu}\right)^2, 1\right\}p+\xi^2\\
%%%%%%%%%%%%%%%%%%%%%%%%%%%%%%
&\leq 3\xi^2\max\left\{\left(\frac{\eps}{\nu}\right)^2, 1\right\}+\xi^2\\
%%%%%%%%%%%%%%%%%%%%%%%%%%%%%%
&\leq 4\max\left\{\left(\frac{\eps}{\nu}\right)^2, 1\right\}\xi^2.
\end{align*}
%%%%%%%%%%%%%%%%%%%%%%%%%%%%%%%%%%%%%%%%%%%%%%%%%%%%%%%%%%%%
Thus, we have
%%%%%%%%%%%%%%%%%%%%%%%%%%%%%%%%%%%%%%%%%%%%%%%%%%%%%%%%%%%%
\begin{align*}
&\Abs{G^{[\vw]}_{ij}(\vtheta(t))-G^{[\vw]}_{ij}(\vtheta(0))}\\
%%%%%%%%%%%%%%%%%%%%%%%%%%%%
& \leq \frac{\nu^3\eps}{m}\sum_{k=1}^m\left(a_k^2(t)D_{k,i,j}(t)+\Abs{a_k^2(t) -a_k^2(0)}\right)\\
%%%%%%%%%%%%%%%%%%%%%%%%%%%%
& \leq \nu^3\eps\left(16d\max\left\{\left(\frac{\eps}{\nu}\right)^2, \frac{\nu}{\eps}\right\}\xi^3p+ 6\max\left\{\left(\frac{\eps}{\nu}\right)^2, 1\right\}\xi^2p\right)\\
%%%%%%%%%%%%%%%%%%%%%%%%%%%%
& \leq\nu^3\eps\left(16d\max\left\{\left(\frac{\eps}{\nu}\right)^2, \frac{\nu}{\eps}\right\}\xi^3p+ 6d\max\left\{\left(\frac{\eps}{\nu}\right)^2, 1\right\}\xi^2p\right)\\
%%%%%%%%%%%%%%%%%%%%%%%%%%%%
& \leq 24\nu^2\eps^2 d \left(2\log\frac{8m(d+1)}{\delta}\right)^{\frac{3}{2}}\max\left\{\left(\frac{\nu}{\eps}\right)^2, \frac{\eps}{\nu}\right\}p,
\end{align*}
%%%%%%%%%%%%%%%%%%%%%%%%%%%%%%%%%%%%%%%%%%%%%%%%%%%%%%%%%%%%
then by using the same technique in \Cref{append...prop...LeastEigenvalueatInitial},
%%%%%%%%%%%%%%%%%%%%%%%%%%%%%%%%%%%%%%%%%%%%%%%%%%%%%%%%%%%%
\begin{align*}
&\Norm{G^{[\vw]}(\vtheta(t))-G^{[\vw]}(\vtheta(0))}_\mathrm{F}\\
%%%%%%%%%%%%%%%%%%%%%%%%%%%%
&\leq n \Norm{G^{[\vw]}_{ij}(\vtheta(t))-G^{[\vw]}_{ij}(\vtheta(0))}_{\infty}\\       %%%%%%%%%%%%%%%%%%%%%%%%%%%%%%%%%%%%%%%%%%%%%%%%%%%%%%%%%%%%
& \leq 24\nu^2\eps^2 d \left(2\log\frac{8m(d+1)}{\delta}\right)^{\frac{3}{2}}\max\left\{\left(\frac{\nu}{\eps}\right)^2, \frac{\eps}{\nu}\right\}np\\
%%%%%%%%%%%%%%%%%%%%%%%%%%%%%%%%
&=\nu\eps\max\left\{\left(\frac{\nu}{\eps}\right)^2, \frac{\eps}{\nu}\right\}\frac{96d^2n^2\left(\log\frac{8m(d+1)}{\delta}\right)^{\frac{3}{2}}\sqrt{\RS\left(\vtheta^0\right)}}{m\left(\frac{\eps}{\nu}\lambda_a+\frac{\nu}{\eps}\lambda_{\vw}\right)},
\end{align*}
%%%%%%%%%%%%%%%%%%%%%%%%%%%%%%%%%%%%%%%%%%%%%%%%%%%%%%%%%%%%
We   notice that as
%%%%%%%%%%%%%%%%%%%%%%%%%%%%%%%%%%%%%%%%%%%%%%%%%%%%%%%%%%%%
\begin{align*}
\frac{1}{\lambda^2}
& \geq \frac{\left(\frac{\nu}{\eps}\right)^2}{\left(\frac{\nu}{\eps}
\lambda_{\vw}\right)^2} = \left(\frac{\nu}{\eps}\right)^2\frac{1}{\left(\frac{\eps}{\nu}\lambda_a+\frac{\nu}{\eps}\lambda_{\vw}\right)^2},
\end{align*} 
%%%%%%%%%%%%%%%%%%%%%%%%%%%%%%%%%%%%%%%%%%%%%%%%%%%%%%%%%%%%
and 
%%%%%%%%%%%%%%%%%%%%%%%%%%%%%%%%%%%%%%%%%%%%%%%%%%%%%%%%%%%%
\begin{align*}
\frac{1}{\lambda^2}
& \geq 16\left(\frac{1}{27}\right)^{\frac{1}{2}}\frac{1}{\left(\left(\frac{\eps}{\nu}\right)^{\frac{1}{2}}\lambda_a+\left(\frac{\nu}{\eps}\right)^{\frac{3}{2}}
\lambda_{\vw}\right)^2}\\
%%%%%%%%%%%%%%%%%%%%%%
& = 16\left(\frac{1}{27}\right)^{\frac{1}{2}}\frac{\eps}{\nu}\frac{1}{\left(\frac{\eps}{\nu}\lambda_a+\frac{\nu}{\eps}\lambda_{\vw}\right)^2}.
\end{align*} 
%%%%%%%%%%%%%%%%%%%%%%%%%%%%%%%%%%%%%%%%%%%%%%%%%%%%%%%%%%%%
As $\gamma<1$, then we may choose $m$ large enough, such that
%%%%%%%%%%%%%%%%%%%%%%%%%%%%%%%%%%%%%%%%%%%%%%%%%%%%%%%%%%%%
\begin{align*}
m\nu\eps&\geq \frac{384d^2n^2\left(\log\frac{8m(d+1)}{\delta}\right)^{\frac{3}{2}}\sqrt{\RS\left(\vtheta^0\right)}}{\lambda^2}\\
%%%%%%%%%%%%%%%%%%%%%%%%%%%%
&\geq \max\left\{\frac{384d^2n^2\left(\log\frac{8m(d+1)}{\delta}\right)^{\frac{3}{2}}\sqrt{\RS\left(\vtheta^0\right)}}{\lambda^2}, \frac{384d^2n^2\left(\log\frac{8m(d+1)}{\delta}\right)^{\frac{3}{2}}\sqrt{\RS\left(\vtheta^0\right)}}{16\left(\frac{1}{27}\right)^{\frac{1}{2}}\lambda^2}\right\}\\
%%%%%%%%%%%%%%%%%%%%%%%%%%%%
&\geq\max\left\{\left(\frac{\nu}{\eps}\right)^2, \frac{\eps}{\nu}\right\}\frac{d^2n^2\left(\log\frac{8m(d+1)}{\delta}\right)^{\frac{3}{2}}\sqrt{\RS\left(\vtheta^0\right)}}{\left(\frac{\eps}{\nu}\lambda_a+\frac{\nu}{\eps}\lambda_{\vw}\right)^2}.
\end{align*}
%%%%%%%%%%%%%%%%%%%%%%%%%%%%%%%%%%%%%%%%%%%%%%%%%%%%%%%%%%%%
Then for any $t\in[0, t^\ast)$,
%%%%%%%%%%%%%%%%%%%%%%%%%%%%%%%%%%%%%%%%%%%%%%%%%%%%%%%%%%%	%%%%%%%%%%%%%%%%%%%%%%%%%%%%%%%%%%%%%%%%%%%%%%%%%%%%%%%%%%%%
\begin{equation}\label{eqproof....thm...ThetaLazy...PartTwo}
\Norm{\mG^{[\vw]}(\vtheta(t)) - \mG^{[\vw]}(\vtheta(0))}_\mathrm{F}\leq\frac{1}{8}\nu^2\eps^2\left(\frac{\eps}{\nu}\lambda_a+\frac{\nu}{\eps}\lambda_{\vw}\right).
\end{equation}
%%%%%%%%%%%%%%%%%%%%%%%%%%%%%%%%%%%%%%%%%%%%%%%%%%%%%%%%%%%%
To sum up, for $t\in[0,t^\ast)$, the following holds
%%%%%%%%%%%%%%%%%%%%%%%%%%%%%%%%%%%%%%%%%%%%%%%%%%%%%%%%%%%%
\begin{equation*}
\RS(\vtheta(t)) \leq \exp\left(-\frac{m}{n}\nu^2\eps^2\left(\frac{\eps}{\nu}\lambda_a+\frac{\nu}{\eps}\lambda_{\vw}\right)t\right)\RS\left(\vtheta^0\right).
\end{equation*}
%%%%%%%%%%%%%%%%%%%%%%%%%%%%%%%%%%%%%%%%%%%%%%%%%%%%%%%%%%%%
Suppose we have $t^\ast<+\infty$, then one can take the limit $t\to t^\ast$ in~\eqref{eqproof....thm...ThetaLazy...PartOne} and~\eqref{eqproof....thm...ThetaLazy...PartTwo}, which  leads to a contradiction with the definition of $t^\ast$. Therefore $t^\ast=+\infty$.   
%%%%%%%%%%%%%%%%%%%%%%%%%%%%%%%%%%%%%%%%%%%%%%%%%%%%%%%%%%%%

Directly from \Cref{append...prop..UpperBoundandLowerBoundInitial}, we have with probability at least $1-\frac{\delta}{2}$ over the choice of $\vtheta^0$,
%%%%%%%%%%%%%%%%%%%%%%%%%%%%%%%%%%%%%%%%%%%%%%%%%%%%%%%%%%%%   
 \begin{equation*}
\Norm{\vtheta_{\vw}^0}_2\geq \sqrt{\frac{md}{2}},
\end{equation*}
%%%%%%%%%%%%%%%%%%%%%%%%%%%%%%%%%%%%%%%%%%%%%%%%%%%%%%%%%%%%    
thus we have
%%%%%%%%%%%%%%%%%%%%%%%%%%%%%%%%%%%%%%%%%%%%%%%%%%%%%%%%%%%% 
\begin{align*}
\sup\limits_{t\in[0,+\infty)}\frac{\norm{\vtheta_{\vw}(t)-\vtheta_{\vw}(0)}_2}{\norm{\vtheta_{\vw}(0)}_2}
& \leq \sqrt{\frac{2}{md}}\sup\limits_{t\in[0,+\infty)}\Norm{\vtheta_{\vw}(t)-\vtheta_{\vw}(0)}_2,
\end{align*}
%%%%%%%%%%%%%%%%%%%%%%%%%%%%%%%%%%%%%%%%%%%%%%%%%%%%%%%%%%%%
%%%%%%%%%%%%%%%%%%%%%%%%%%%%%%%%%%%%%%%%%%%%%%%%%%%%%%%%%%%%
via \Cref{append...prop...BoundofTraining},  with probability at least $1-\frac{\delta}{2}$ over the choice of $\vtheta^0$,
%%%%%%%%%%%%%%%%%%%%%%%%%%%%%%%%%%%%%%%%%%%%%%%%%%%%%%%%%%%%
\begin{equation*}
\begin{aligned}       
\max\limits_{k\in[m]}\Norm{\vw_k(t) - \vw_k(0)}_{\infty}
& \leq 2\max\left\{\frac{\nu}{\eps}, 1\right\}\xi  p,
\end{aligned}\end{equation*}
%%%%%%%%%%%%%%%%%%%%%%%%%%%%%%%%%%%%%%%%%%%%%%%%%%%%%%%%%%%%
then 
%%%%%%%%%%%%%%%%%%%%%%%%%%%%%%%%%%%%%%%%%%%%%%%%%%%%%%%%%%%%
\begin{align*}
\Norm{\vtheta_{\vw}(t)-\vtheta_{\vw}(0)}_2&\leq \left[\sum_{k=1}^m\left(\Norm{\vw_k(t) - \vw_k(0)}_{\infty}^2\right)\right]^{\frac{1}{2}}\\
%%%%%%%%%%%%%%%%%%%%%%%%%%%%%%
&\leq 2\max\left\{\frac{\nu}{\eps}, 1\right\}\sqrt{md}\xi p\\
%%%%%%%%%%%%%%%%%%%%%%%%%%%%%%
&\leq  2\max\left\{\frac{\nu}{\eps}, 1\right\}\sqrt{md}\sqrt{2\log\frac{8m(d+1)}{\delta}}\frac{2\sqrt{2}dn\sqrt{\RS\left(\vtheta^0\right)}}{m\nu\eps\left(\frac{\eps}{\nu}\lambda_a+\frac{\nu}{\eps}\lambda_{\vw}\right)}\\
%%%%%%%%%%%%%%%%%%%%%%%%%%%%%%
&\leq 8\max\left\{\frac{\nu}{\eps}, 1\right\}\sqrt{\log\frac{8m(d+1)}{\delta}}\frac{d^{\frac{3}{2}}n\sqrt{\RS\left(\vtheta^0\right)}}{\sqrt{m}\nu\eps\left(\frac{\eps}{\nu}\lambda_a+\frac{\nu}{\eps}\lambda_{\vw}\right)},
%%%%%%%%%%%%%%%%%%%%%%%%%%%%%%
\end{align*}
%%%%%%%%%%%%%%%%%%%%%%%%%%%%%%%%%%%%%%%%%%%%%%%%%%%%%%%%%%%%
hence
%%%%%%%%%%%%%%%%%%%%%%%%%%%%%%%%%%%%%%%%%%%%%%%%%%%%%%%%%%%%
\begin{align*}
\sup\limits_{t\in[0,+\infty)}\frac{\norm{\vtheta_{\vw}(t)-\vtheta_{\vw}(0)}_2}{\norm{\vtheta_{\vw}(0)}_2}
& \leq \sqrt{\frac{2}{md}}\sup\limits_{t\in[0,+\infty)}\Norm{\vtheta_{\vw}(t)-\vtheta_{\vw}(0)}_2\\
%%%%%%%%%%%%%%%%%%%%%%%%%%%%
&\leq 8\sqrt{2}\max\left\{\frac{\nu}{\eps}, 1\right\}\sqrt{\log\frac{8m(d+1)}{\delta}}\frac{dn\sqrt{\RS\left(\vtheta^0\right)}}{{m}\nu\eps\left(\frac{\eps}{\nu}\lambda_a+\frac{\nu}{\eps}\lambda_{\vw}\right)}\\
%%%%%%%%%%%%%%%%%%%%%%%%%%%%
&\leq 4\sqrt{2}\sqrt{\log\frac{8m(d+1)}{\delta}}\frac{dn\sqrt{\RS\left(\vtheta^0\right)}}{{m}\nu\eps\lambda},
%%%%%%%%%%%%%%%%%%%%%%%%%%%%
%%%%%%%%%%%%%%%%%%%%%%%%%%%%
\end{align*}
%%%%%%%%%%%%%%%%%%%%%%%%%%%%%%%%%%%%%%%%%%%%%%%%%%%%%%%%%%%%
as $\gamma<1$, then we obtain that
%%%%%%%%%%%%%%%%%%%%%%%%%%%%%%%%%%%%%%%%%%%%%%%%%%%%%%%%%%%%
%%%%%%%%%%%%%%%%%%%%%%%%%%%%%%%%%%%%%%%%%%%%%%%%%%%%%%%%%%%%
\begin{equation}
 \sup\limits_{t\in[0,+\infty)}\frac{\norm{\vtheta_{\vw}(t)-\vtheta_{\vw}(0)}_2}{\norm{\vtheta_{\vw}(0)}_2}\lesssim   \frac{\sqrt{\log\frac{8m(d+1)}{\delta}}}{m^{1-\gamma}},
\end{equation}
%%%%%%%%%%%%%%%%%%%%%%%%%%%%%%%%%%%%%%%%%%%%%%%%%%%%%%%%%%%%
hence
%%%%%%%%%%%%%%%%%%%%%%%%%%%%%%%%%%%%%%%%%%%%%%%%%%%%%%%%%%%%
\begin{equation}
\lim_{m\to\infty}\sup\limits_{t\in[0,+\infty)}\frac{\norm{\vtheta_{\vw}(t)-\vtheta_{\vw}(0)}_2}{\norm{\vtheta_{\vw}(0)}_2}=0.
\end{equation}
%%%%%%%%%%%%%%%%%%%%%%%%%%%%%%%%%%%%%%%%%%%%%%%%%%%%%%%%%%%%
%%%%%%%%%%%%%%%%%%%%%%%%%%%%%%%%%%%%%%%%%%%%%%%%%%%%%%%%%%%%
%%%%%%%%%%%%%%%%%%%%%%%%%%%%%%%%%%%%%%%%%%%%%%%%%%%%%%%%%%%%
%%%%%%%%%%%%%%%%%%%%%%%%%%%%%%%%%%%%%%%%%%%%%%%%%%%%%%%%%%%%
%%%%%%%%%%%%%%%%%%%%%%%%%%%%%%%%%%%%%%%%%%%%%%%%%%%%%%%%%%%%
Similarly, 
directly from \Cref{append...prop..UpperBoundandLowerBoundInitial}, we have with probability at least $1-\frac{\delta}{2}$ over the choice of $\vtheta^0$,
%%%%%%%%%%%%%%%%%%%%%%%%%%%%%%%%%%%%%%%%%%%%%%%%%%%%%%%%%%%%   
 \begin{equation*}
\Norm{\vtheta^0}_2\geq \sqrt{\frac{m(d+1)}{2}},
\end{equation*}
%%%%%%%%%%%%%%%%%%%%%%%%%%%%%%%%%%%%%%%%%%%%%%%%%%%%%%%%%%%%    
thus we have
%%%%%%%%%%%%%%%%%%%%%%%%%%%%%%%%%%%%%%%%%%%%%%%%%%%%%%%%%%%% 
\begin{align*}
\sup\limits_{t\in[0,+\infty)}\frac{\norm{\vtheta(t)-\vtheta(0)}_2}{\norm{\vtheta(0)}_2}
& \leq \sqrt{\frac{2}{m(d+1)}}\sup\limits_{t\in[0,+\infty)}\Norm{\vtheta(t)-\vtheta(0)}_2,
\end{align*}
%%%%%%%%%%%%%%%%%%%%%%%%%%%%%%%%%%%%%%%%%%%%%%%%%%%%%%%%%%%%
via \Cref{append...prop...BoundofTraining},  with probability at least $1-\frac{\delta}{2}$ over the choice of $\vtheta^0$,
%%%%%%%%%%%%%%%%%%%%%%%%%%%%%%%%%%%%%%%%%%%%%%%%%%%%%%%%%%%%
\begin{equation*}
\begin{aligned}
\max\limits_{k\in[m]}\Abs{a_k(t) - a_k(0)}& \leq 2\max\left\{\frac{\eps}{\nu}, 1\right\}\xi p, \\
%%%%%%%%%%%%%%%%%%%%%%%%%%%%%%%         
\max\limits_{k\in[m]}\Norm{\vw_k(t) - \vw_k(0)}_{\infty}
& \leq 2\max\left\{\frac{\nu}{\eps}, 1\right\}\xi  p,
\end{aligned}\end{equation*}
%%%%%%%%%%%%%%%%%%%%%%%%%%%%%%%%%%%%%%%%%%%%%%%%%%%%%%%%%%%%
then 
%%%%%%%%%%%%%%%%%%%%%%%%%%%%%%%%%%%%%%%%%%%%%%%%%%%%%%%%%%%%
\begin{align*}
\Norm{\vtheta(t)-\vtheta(0)}_2&\leq \left[\sum_{k=1}^m\left(\Abs{a_k(t) - a_k(0)}^2+\Norm{\vw_k(t) - \vw_k(0)}_{\infty}^2\right)\right]^{\frac{1}{2}}\\
%%%%%%%%%%%%%%%%%%%%%%%%%%%%%%
&\leq 2\max\left\{\frac{\eps}{\nu}, \frac{\nu}{\eps}\right\}\sqrt{m(d+1)}\xi p\\
%%%%%%%%%%%%%%%%%%%%%%%%%%%%%%
&\leq  2\max\left\{\frac{\eps}{\nu}, \frac{\nu}{\eps}\right\}\sqrt{m(d+1)}\sqrt{2\log\frac{8m(d+1)}{\delta}}\frac{2\sqrt{2}dn\sqrt{\RS\left(\vtheta^0\right)}}{m\nu\eps\left(\frac{\eps}{\nu}\lambda_a+\frac{\nu}{\eps}\lambda_{\vw}\right)}\\
%%%%%%%%%%%%%%%%%%%%%%%%%%%%%%
&\leq 8\max\left\{\frac{\eps}{\nu}, \frac{\nu}{\eps}\right\}\sqrt{\log\frac{8m(d+1)}{\delta}}\frac{\sqrt{d+1}dn\sqrt{\RS\left(\vtheta^0\right)}}{\sqrt{m}\nu\eps\left(\frac{\eps}{\nu}\lambda_a+\frac{\nu}{\eps}\lambda_{\vw}\right)},
%%%%%%%%%%%%%%%%%%%%%%%%%%%%%%
\end{align*}
%%%%%%%%%%%%%%%%%%%%%%%%%%%%%%%%%%%%%%%%%%%%%%%%%%%%%%%%%%%% 
hence
%%%%%%%%%%%%%%%%%%%%%%%%%%%%%%%%%%%%%%%%%%%%%%%%%%%%%%%%%%%%
\begin{align*}
\sup\limits_{t\in[0,+\infty)}\frac{\norm{\vtheta(t)-\vtheta(0)}_2}{\norm{\vtheta(0)}_2}
& \leq \sqrt{\frac{2}{m(d+1)}}\sup\limits_{t\in[0,+\infty)}\Norm{\vtheta(t)-\vtheta(0)}_2\\
%%%%%%%%%%%%%%%%%%%%%%%%%%%%
&\leq 8\sqrt{2}\max\left\{\frac{\eps}{\nu}, \frac{\nu}{\eps}\right\}\sqrt{\log\frac{8m(d+1)}{\delta}}\frac{dn\sqrt{\RS\left(\vtheta^0\right)}}{{m}\nu\eps\left(\frac{\eps}{\nu}\lambda_a+\frac{\nu}{\eps}\lambda_{\vw}\right)}\\
%%%%%%%%%%%%%%%%%%%%%%%%%%%%
&\leq 8\sqrt{2}\sqrt{\log\frac{8m(d+1)}{\delta}}\frac{dn\sqrt{\RS\left(\vtheta^0\right)}}{{m}\nu\eps\lambda},
%%%%%%%%%%%%%%%%%%%%%%%%%%%%
%%%%%%%%%%%%%%%%%%%%%%%%%%%%
\end{align*}
%%%%%%%%%%%%%%%%%%%%%%%%%%%%%%%%%%%%%%%%%%%%%%%%%%%%%%%%%%%%
as $\gamma<1$, then we obtain that
%%%%%%%%%%%%%%%%%%%%%%%%%%%%%%%%%%%%%%%%%%%%%%%%%%%%%%%%%%%%
\begin{equation}
 \sup\limits_{t\in[0,+\infty)}\frac{\norm{\vtheta(t)-\vtheta(0)}_2}{\norm{\vtheta(0)}_2}\lesssim   \frac{\sqrt{\log\frac{8m(d+1)}{\delta}}}{m^{1-\gamma}},
\end{equation}
%%%%%%%%%%%%%%%%%%%%%%%%%%%%%%%%%%%%%%%%%%%%%%%%%%%%%%%%%%%%
moreover,
%%%%%%%%%%%%%%%%%%%%%%%%%%%%%%%%%%%%%%%%%%%%%%%%%%%%%%%%%%%%
\begin{equation}
\lim_{m\to\infty}\sup\limits_{t\in[0,+\infty)}\frac{\norm{\vtheta(t)-\vtheta(0)}_2}{\norm{\vtheta(0)}_2}=0.
\end{equation}
%%%%%%%%%%%%%%%%%%%%%%%%%%%%%%%%%%%%%%%%%%%%%%%%%%%%%%%%%%%%
%%%%%%%%%%%%%%%%%%%%%%%%%%%%%%%%%%%%%%%%%%%%%%%%%%%%%%%%%%%%
%%%%%%%%%%%%%%%%%%%%%%%%%%%%%%%%%%%%%%%%%%%%%%%%%%%%%%%%%%%%
%%%%%%%%%%%%%%%%%%%%%%%%%%%%%%%%%%%%%%%%%%%%%%%%%%%%%%%%%%%%
%%%%%%%%%%%%%%%%%%%%%%%%%%%%%%%%%%%%%%%%%%%%%%%%%%%%%%%%%%%%

\end{proof}
%%%%%%%%%%%%%%%%%%%%%%%%%%%%%%%%%%%%%%%%%%%%%%%%%%%%%%%%%%%%
%%%%%%%%%%%%%%%%%%%%%%%%%%%%%%%%%%%%%%%%%%%%%%%%%%%%%%%%%%%%
%%%%%%%%%%%%%%%%%%%%%%%%%%%%%%%%%%%%%%%%%%%%%%%%%%%%%%%%%%%%
%%%%%%%%%%%%%%%%%%%%%%%%%%%%%%%%%%%%%%%%%%%%%%%%%%%%%%%%%%%%
\subsection{$\vw$-lazy Regime}\label{subsec...theta_w}
%%%%%%%%%%%%%%%%%%%%%%%%%%%%%%%%%%%%%%%%%%%%%%%%%%%%%%%%%%%%
We denote
%%%%%%%%%%%%%%%%%%%%%%%%%%%%%%%%%%%%%%%%%%%%%%%%%%%%%%%%%%%%
\begin{equation}
    t_a^\ast = \inf\{t \mid \vtheta(t)\notin \mathcal{N}\left(\vtheta^0\right)\},
\end{equation}
%%%%%%%%%%%%%%%%%%%%%%%%%%%%%%%%%%%%%%%%%%%%%%%%%%%%%%%%%%%%
where the event is defined as 
%%%%%%%%%%%%%%%%%%%%%%%%%%%%%%%%%%%%%%%%%%%%%%%%%%%%%%%%%%%%
\begin{equation}
    \mathcal{N}_a\left(\vtheta^0\right) := \left\{\vtheta \mid \norm{\mG^{[a]}(\vtheta) - \mG^{[a]}\left(\vtheta^0\right)}_\mathrm{F}\leq \frac{1}{4}\nu\eps^3\lambda_a\right\}.
\end{equation}
%%%%%%%%%%%%%%%%%%%%%%%%%%%%%%%%%%%%%%%%%%%%%%%%%%%%%%%%%%%%
We observe immediately that 
the event $\mathcal{N}_a\left(\vtheta^0\right)\neq \varnothing$, since $\vtheta^0\in\mathcal{N}_a\left(\vtheta^0\right)$.
%%%%%%%%%%%%%%%%%%%%%%%%%%%%%%%%%%%%%%%%%%%%%%%%%%%%%%%%%%%%
Recall that 
\begin{equation*}
        \lambda_a = \lambda_{\min}\left(\mK^{[a]}\right),
\end{equation*}
%%%%%%%%%%%%%%%%%%%%%%%%%%%%%%%%%%%%%%%%%%%%%%%%%%%%%%%%%%%%
whose definition can be found in \Cref{append...thm...LeastEigenGram}.
%%%%%%%%%%%%%%%%%%%%%%%%%%%%%%%%%%%%%%%%%%%%%%%%%%%%%%%%%%%%
\begin{proposition}\label{append...prop...InitialDecay...WLazy}
Given  any $\delta\in(0,1)$,  under \Cref{Assumption....ActivationFunctions...NTK}, \Cref{Assump...Unparallel} and \Cref{assump...LimitExistence}, if 
%%%%%%%%%%%%%%%%%%%%%%%%%%%%%%%%%%%%%%%%%%%%%%%%%%%%%%%%%%%%
\[m=\Omega\left(\frac{n^2}{\lambda_a^2}\log\frac{4n^2}{\delta}\right),\]
%%%%%%%%%%%%%%%%%%%%%%%%%%%%%%%%%%%%%%%%%%%%%%%%%%%%%%%%%%%%
then with probability at least $1-\delta$ over the choice of $\vtheta^0$,  we have for any time $t\in[0, t_a^\ast)$,
%%%%%%%%%%%%%%%%%%%%%%%%%%%%%%%%%%%%%%%%%%%%%%%%%%%%%%%%%%%%
\begin{equation}
\RS(\vtheta(t)) \leq \exp\left(-\frac{m}{n}\nu\eps^3\lambda_a t\right)\RS\left(\vtheta^0\right).
\end{equation}
%%%%%%%%%%%%%%%%%%%%%%%%%%%%%%%%%%%%%%%%%%%%%%%%%%%%%%%%%%%%
\end{proposition}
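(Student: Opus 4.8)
The plan is to reprise the proof of \Cref{append...prop...InitialDecay} almost verbatim, the only change being that we retain solely the $[a]$-block of the Gram matrix and discard the positive semidefinite $[\vw]$-block. First I would establish the $[a]$-only analogue of \Cref{append...prop...LeastEigenvalueatInitial}: under the stated width requirement $m=\Omega\!\left(\tfrac{n^2}{\lambda_a^2}\log\tfrac{4n^2}{\delta}\right)$, with probability at least $1-\delta$ over $\vtheta^0$ one has
\[
\lambda_{\min}\!\left(\mG^{[a]}\!\left(\vtheta^0\right)\right)\ \geq\ \tfrac{3}{4}\,\nu\eps^3\lambda_a .
\]
This is obtained exactly as in \Cref{append...prop...LeastEigenvalueatInitial}: the summands $\tfrac{1}{\eps^2}\sigma(\eps\vw_k^\T\vx_i)\sigma(\eps\vw_k^\T\vx_j)$ defining $\tfrac{1}{\nu\eps^3}G_{ij}^{[a]}(\vtheta^0)$ are i.i.d.\ with sub-exponential norm at most $C_{\psi,d}$ by \Cref{append...lem..chi2SubExpNorm}, so \Cref{append...thm...BernsteinInequality} applied entrywise with deviation $\eta=\lambda_a/4$, together with a union bound over the $n^2$ entries, gives $\bigl\|\tfrac{1}{\nu\eps^3}\mG^{[a]}(\vtheta^0)-\mK^{[a]}\bigr\|_{\mathrm F}\leq n\bigl\|\tfrac{1}{\nu\eps^3}\mG^{[a]}(\vtheta^0)-\mK^{[a]}\bigr\|_{\infty}\leq \lambda_a/4$ with the claimed probability; since $\lambda_{\min}(\mK^{[a]})=\lambda_a>0$ by \Cref{append...thm...LeastEigenGram}, Weyl's inequality yields the displayed bound.

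Next, since $\vtheta^0\in\mathcal N_a(\vtheta^0)$ the stopping time $t_a^\ast$ is strictly positive, and for every $\vtheta\in\mathcal N_a(\vtheta^0)$ one more application of Weyl's inequality gives
\[
\lambda_{\min}\!\left(\mG^{[a]}(\vtheta)\right)\ \geq\ \lambda_{\min}\!\left(\mG^{[a]}\!\left(\vtheta^0\right)\right)-\bigl\|\mG^{[a]}(\vtheta)-\mG^{[a]}\!\left(\vtheta^0\right)\bigr\|_{\mathrm F}\ \geq\ \tfrac{3}{4}\nu\eps^3\lambda_a-\tfrac{1}{4}\nu\eps^3\lambda_a\ =\ \tfrac{1}{2}\nu\eps^3\lambda_a .
\]

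Finally I would run the energy estimate. Starting from the identity $\tfrac{\D}{\D t}\RS(\vtheta)=-\tfrac{m}{n^2}\ve^\T\bigl(\mG^{[a]}(\vtheta)+\mG^{[\vw]}(\vtheta)\bigr)\ve$ recorded in the excerpt, and noting that $\mG^{[\vw]}(\vtheta)$ is positive semidefinite — it equals $\tfrac{\nu^3\eps}{m}\sum_k a_k^2$ times the Gram matrix of the vectors $\sigma^{(1)}(\eps\vw_k^\T\vx_i)\vx_i$, and correspondingly $\tfrac{m}{n^2}\ve^\T\mG^{[\vw]}(\vtheta)\ve=\tfrac{\nu}{\eps}\sum_k\|\nabla_{\vw_k}\RS(\vtheta)\|_2^2\geq 0$ — we drop it to obtain, for all $t\in[0,t_a^\ast)$,
\[
\frac{\D}{\D t}\RS(\vtheta(t))\ \leq\ -\frac{m}{n^2}\ve^\T\mG^{[a]}(\vtheta(t))\ve\ \leq\ -\frac{2m}{n}\,\lambda_{\min}\!\left(\mG^{[a]}(\vtheta(t))\right)\RS(\vtheta(t))\ \leq\ -\frac{m}{n}\nu\eps^3\lambda_a\,\RS(\vtheta(t)),
\]
where the middle inequality uses $\|\ve\|_2^2=2n\RS(\vtheta)$. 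Immediate integration then yields $\RS(\vtheta(t))\leq\exp\!\bigl(-\tfrac{m}{n}\nu\eps^3\lambda_a t\bigr)\RS(\vtheta^0)$ on $[0,t_a^\ast)$, which is the claim.

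The only substantive step is the concentration argument of the first paragraph, which is essentially a restriction of \Cref{append...prop...LeastEigenvalueatInitial} to the $[a]$-block; the remainder is linear-algebraic bookkeeping together with a one-line integration. The point to be careful about — and the feature distinguishing this proof from that of \Cref{append...prop...InitialDecay} — is that the loss decay is now produced by the $[a]$-block alone: we discard $\mG^{[\vw]}$ using only its positive semidefiniteness, which is precisely what lets us avoid any control of the $[\vw]$-block along the trajectory (such control is unavailable here because the factors $a_k^2$ entering $\mG^{[\vw]}$ are not frozen in the $\vw$-lazy regime).
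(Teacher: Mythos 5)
Your proposal is correct and follows essentially the same route as the paper: an $[a]$-block analogue of \Cref{append...prop...LeastEigenvalueatInitial} via Bernstein plus a union bound, the definition of $\mathcal{N}_a\left(\vtheta^0\right)$ to keep $\lambda_{\min}\left(\mG^{[a]}(\vtheta(t))\right)\geq\frac{1}{2}\nu\eps^3\lambda_a$ on $[0,t_a^\ast)$, discarding the positive semidefinite $\mG^{[\vw]}$ block, and integrating the resulting differential inequality. Your scaling $\nu\eps^3\lambda_a$ is in fact the correct one (consistent with the definition of $\mG^{[a]}$ and of $\mathcal{N}_a$), whereas the paper's displayed proof writes $\nu^3\eps\lambda_a$, which is a typo.
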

%%%%%%%%%%%%%%%%%%%%%%%%%%%%%%%%%%%%%%%%%%%%%%%%%%%%%%%%%%%%
\begin{proof}
%%%%%%%%%%%%%%%%%%%%%%%%%%%%%%%%%%%%%%%%%%%%%%%%%%%%%%%%%%%%
Similar to the proof in \Cref{append...prop...InitialDecay},
%%%%%%%%%%%%%%%%%%%%%%%%%%%%%%%%%%%%%%%%%%%%%%%%%%%%%%%%%%%%
 with probability at least $1-\delta$ over the choice of $\vtheta^0$ and for any $\vtheta\in\mathcal{N}\left(\vtheta^0\right)$, we have
%%%%%%%%%%%%%%%%%%%%%%%%%%%%%%%%%%%%%%%%%%%%%%%%%%%%%%%%%%%%
\begin{equation*}
\begin{aligned}
\lambda_{\min}\left(\mG(\vtheta)\right)& \geq \lambda_{\min}\left(\mG^{[a]}(\vtheta)\right)\\
%%%%%%%%%%%%%%%%%%%%%%%%%%
&\geq\lambda_{\min}\left(\mG^{[a]}\left(\vtheta^0\right)\right) - \norm{\mG^{[a]}(\vtheta) - \mG^{[a]}\left(\vtheta^0\right)}_\mathrm{F}\\
%%%%%%%%%%%%%%%%%%%%%%%%%%
& \geq \frac{1}{2}\nu^3\eps\lambda_a.
\end{aligned}\end{equation*}
%%%%%%%%%%%%%%%%%%%%%%%%%%%%%%%%%%%%%%%%%%%%%%%%%%%%%%%%%%%%
Finally, we obtain that
%%%%%%%%%%%%%%%%%%%%%%%%%%%%%%%%%%%%%%%%%%%%%%%%%%%%%%%%%%%%
\begin{equation*}
\begin{aligned}
\frac{\D}{\D t}\RS(\vtheta(t))&  =-\frac{m}{n^2}\ve^{\T}\mG(\vtheta(t))\ve\\
%%%%%%%%%%%%%%%%%%%%%%%%%
& \leq -\frac{2m}{n}\lambda_{\min}\left(\mG(\vtheta(t))\right)\RS(\vtheta(t))\\
%%%%%%%%%%%%%%%%%%%%%%%%%
& \leq -\frac{m}{n}\nu^3\eps \lambda_a \RS(\vtheta(t)),
\end{aligned}\end{equation*}
%%%%%%%%%%%%%%%%%%%%%%%%%%%%%%%%%%%%%%%%%%%%%%%%%%%%%%%%%%%%
    and immediate integration yields the result.
%%%%%%%%%%%%%%%%%%%%%%%%%%%%%%%%%%%%%%%%%%%%%%%%%%%%%%%%%%%%
%%%%%%%%%%%%%%%%%%%%%%%%%%%%%%%%%%%%%%%%%%%%%%%%%%%%%%%%%%%%
%%%%%%%%%%%%%%%%%%%%%%%%%%%%%%%%%%%%%%%%%%%%%%%%%%%%%%%%%%%%
%%%%%%%%%%%%%%%%%%%%%%%%%%%%%%%%%%%%%%%%%%%%%%%%%%%%%%%%%%%%
%%%%%%%%%%%%%%%%%%%%%%%%%%%%%%%%%%%%%%%%%%%%%%%%%%%%%%%%%%%%
%%%%%%%%%%%%%%%%%%%%%%%%%%%%%%%%%%%%%%%%%%%%%%%%%%%%%%%%%%%%    
\end{proof}
%%%%%%%%%%%%%%%%%%%%%%%%%%%%%%%%%%%%%%%%%%%%%%%%%%%%%%%%%%%%
%%%%%%%%%%%%%%%%%%%%%%%%%%%%%%%%%%%%%%%%%%%%%%%%%%%%%%%%%%%%
\begin{proposition}\label{append...prop...BoundofTraining...WLazy}
Given any $\delta\in(0,1)$, under \Cref{Assumption....ActivationFunctions...NTK}, \Cref{Assump...Unparallel} and \Cref{assump...LimitExistence}, if $\gamma> 1$, $\gamma'> \gamma-1$, and
%%%%%%%%%%%%%%%%%%%%%%%%%%%%%%%%%%%%%%%%%%%%%%%%%%%%%%%%%%%%
\[m=\max\left(\Omega\left(\frac{n^2}{\lambda_a^2}\log\frac{8n^2}{\delta}\right), \Omega\left(\frac{n\sqrt{\RS\left(\vtheta^0\right)}}{\lambda_a}\right)^{\frac{1}{1-\gamma+\gamma'}}\right),\]
%%%%%%%%%%%%%%%%%%%%%%%%%%%%%%%%%%%%%%%%%%%%%%%%%%%%%%%%%%%%
then with probability at least $1-\delta$ over the choice of $\vtheta^0$, for any time $t\in[0, t_a^\ast)$ and for any $k\in [m]$, both
%%%%%%%%%%%%%%%%%%%%%%%%%%%%%%%%%%%%%%%%%%%%%%%%%%%%%%%%%%%%
\begin{equation}
\begin{aligned}
\max\limits_{k\in[m]}\abs{a_k(t) - a_k(0)}& \leq 2\frac{\eps}{\nu}\sqrt{2\log\frac{4m(d+1)}{\delta}}p_a, \\
%%%%%%%%%%%%%%%%%%%%%%%%%%%%%%%         
\max\limits_{k\in[m]}\norm{\vw_k(t) - \vw_k(0)}_{\infty}
& \leq 2\sqrt{2\log\frac{4m(d+1)}{\delta}}p_a,
\end{aligned}\end{equation}
%%%%%%%%%%%%%%%%%%%%%%%%%%%%%%%%%%%%%%%%%%%%%%%%%%%%%%%%%%%%
and
%%%%%%%%%%%%%%%%%%%%%%%%%%%%%%%%%%%%%%%%%%%%%%%%%%%%%%%%%%%%
\begin{equation}
\max\limits_{k\in[m]}\{\abs{a_k(0)},\;\norm{\vw_k(0)}_{\infty}\} \leq \sqrt{2\log\frac{4m(d+1)}{\delta}},
\end{equation}
%%%%%%%%%%%%%%%%%%%%%%%%%%%%%%%%%%%%%%%%%%%%%%%%%%%%%%%%%%%%
hold, where 
%%%%%%%%%%%%%%%%%%%%%%%%%%%%%%%%%%%%%%%%%%%%%%%%%%%%%%%%%%%%
\[p_a := \frac{2\sqrt{2}dn\sqrt{\RS\left(\vtheta^0\right)}}{m\eps^2\lambda_a}.\]
%%%%%%%%%%%%%%%%%%%%%%%%%%%%%%%%%%%%%%%%%%%%%%%%%%%%%%%%%%%%
\end{proposition}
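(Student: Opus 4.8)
The plan is to reproduce the argument of \Cref{append...prop...BoundofTraining} step for step, the only structural change being that the exponential decay of the loss on $[0,t_a^\ast)$ is now the sharper one-term estimate $\RS(\vtheta(t))\le\exp\!\left(-\tfrac{m}{n}\nu\eps^3\lambda_a t\right)\RS(\vtheta^0)$ supplied by \Cref{append...prop...InitialDecay...WLazy}, in place of the two-term rate used in the $\vtheta$-lazy regime. Abbreviate $\xi:=\sqrt{2\log\frac{4m(d+1)}{\delta}}$. The third inequality is immediate from \Cref{append...lemma..Initialization}: with probability at least $1-\tfrac{\delta}{2}$ over $\vtheta^0$, $\max_{k\in[m]}\{\abs{a_k(0)},\norm{\vw_k(0)}_\infty\}\le\xi$, which in particular gives $\alpha(0)\le\xi$ and $\omega(0)\le\xi$ for the quantities $\alpha,\omega$ of \Cref{subsec...thetalazy}.

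For the first two inequalities, I would first bound the normalized gradients exactly as in \Cref{append...prop...BoundofTraining}: using $\sigma(0)=0$, $\norm{\sigma^{(1)}(\cdot)}_\infty\le C_L$, boundedness of $\norm{\vx_i}$, and $\tfrac{1}{n}\sum_ie_i^2=2\RS(\vtheta)$, one gets $\abs{\nabla_{a_k}\RS}\lesssim d\,\nu\eps\,\omega(t)\sqrt{\RS(\vtheta)}$ and $\norm{\nabla_{\vw_k}\RS}_\infty\lesssim\nu\eps\,\alpha(t)\sqrt{\RS(\vtheta)}$. Integrating the normalized flow on $[0,t_a^\ast)$ and inserting the decay estimate — which bounds $\int_0^{\infty}\sqrt{\RS(\vtheta(s))}\,\D s$ by $\tfrac{2n}{m\nu\eps^3\lambda_a}\sqrt{\RS(\vtheta^0)}$ — yields
\begin{equation*}
\abs{a_k(t)-a_k(0)}\le\frac{\eps}{\nu}p_a\,\omega(t),\qquad \norm{\vw_k(t)-\vw_k(0)}_\infty\le\frac{\nu}{\eps}p_a\,\alpha(t),
\end{equation*}
with $p_a=\dfrac{2\sqrt{2}dn\sqrt{\RS(\vtheta^0)}}{m\eps^2\lambda_a}$ as in the statement (using $d\ge1$ to absorb the stray constant in the $\vw$-bound).

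Taking suprema over $k$ and $s\le t$ gives the coupled system $\alpha(t)\le\alpha(0)+\tfrac{\eps}{\nu}p_a\,\omega(t)$, $\omega(t)\le\omega(0)+\tfrac{\nu}{\eps}p_a\,\alpha(t)$; substituting one into the other yields $(1-p_a^2)\alpha(t)\le\alpha(0)+\tfrac{\eps}{\nu}p_a\,\omega(0)$. The crux is to check $p_a\le\tfrac{1}{2}$: since $\eps^2=\kappa/\kappa'$, the definitions of $\gamma$ and $\gamma'$ give that $m\eps^2$ scales as $m^{1-\gamma+\gamma'}$, whose exponent is positive exactly because $\gamma'>\gamma-1$; the hypothesis $m=\Omega\!\left((n\sqrt{\RS(\vtheta^0)}/\lambda_a)^{1/(1-\gamma+\gamma')}\right)$ is then precisely what drives $p_a$ below $\tfrac{1}{2}$. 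Moreover $\gamma>1$ together with $\gamma'>\gamma-1$ forces $\gamma'>0$, i.e.\ $\nu/\eps\to0$, so for $m$ large we may use $\tfrac{\nu}{\eps}\le1\le\tfrac{\eps}{\nu}$. Consequently $\alpha(t)\le\tfrac{4}{3}\big(\alpha(0)+\tfrac{\eps}{\nu}p_a\,\omega(0)\big)\le2\tfrac{\eps}{\nu}\xi$, and then $\omega(t)\le\omega(0)+\tfrac{\nu}{\eps}p_a\,\alpha(t)\le2\xi$; feeding $\omega(t)\le2\xi$ and $\alpha(t)\le2\tfrac{\eps}{\nu}\xi$ back into the two displayed bounds produces exactly the claimed estimates. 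A union bound over the two $(1-\tfrac{\delta}{2})$-events — the decay estimate (via \Cref{append...prop...LeastEigenvalueatInitial} and \Cref{append...prop...InitialDecay...WLazy}) and the initialization bound — gives probability at least $1-\delta$.

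The only genuine subtlety is the self-referential coupling: each of $\abs{a_k(t)-a_k(0)}$ and $\norm{\vw_k(t)-\vw_k(0)}_\infty$ is controlled only through the other (via $\alpha$ and $\omega$), and the integration is valid only while $\vtheta(t)$ remains in $\mathcal{N}_a(\vtheta^0)$ so that the improved $\lambda_a$-decay applies. The contraction $p_a<1$ decouples the inequalities cleanly and does so without any a priori lower bound on $t_a^\ast$ — establishing $t_a^\ast=\infty$ (together with the complementary $\mG^{[a]}$-perturbation estimate) belongs to the theorem that uses this proposition, not here. Everything else — matching the constants and the exponent $1-\gamma+\gamma'$ of $m$ hidden in $p_a$, and keeping track of which regime-specific inequalities are in force — is routine bookkeeping.
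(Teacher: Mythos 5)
Your proposal reproduces the paper's argument: the same gradient bounds in terms of $\alpha(t),\omega(t)$, integration against the loss decay supplied by \Cref{append...prop...InitialDecay...WLazy}, the coupled $\alpha,\omega$ system decoupled by the contraction $p_a\le\tfrac12$ (driven by the width hypothesis via $m\eps^2\sim m^{1-\gamma+\gamma'}$), the observation that $\gamma'>\gamma-1>0$ forces $\nu/\eps\le 1$ so the $\max$ terms collapse to $\eps/\nu$ and $1$, and the union bound over the two $\delta/2$-events. This is essentially identical to the paper's proof (which, incidentally, writes the two-term decay rate in the integrand before collapsing to the same final bound you obtain directly from the one-term $\lambda_a$ rate; note that rate is not ``sharper'' than the two-term one, it is simply the one available in this regime).
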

%%%%%%%%%%%%%%%%%%%%%%%%%%%%%%%%%%%%%%%%%%%%%%%%%%%%%%%%%%%%
\begin{proof}
%%%%%%%%%%%%%%%%%%%%%%%%%%%%%%%%%%%%%%%%%%%%%%%%%%%%%%%%%%%%
Since
%%%%%%%%%%%%%%%%%%%%%%%%%%%%%%%%%%%%%%%%%%%%%%%%%%%%%%%%%%%%
\begin{equation*}
        \alpha(t)=\max\limits_{k\in[m],s\in[0,t]}\Abs{a_k(s)}, \quad \omega(t)=\max\limits_{k\in[m],s\in[0,t]}\Norm{\vw_k(s)}_{\infty},
\end{equation*}
%%%%%%%%%%%%%%%%%%%%%%%%%%%%%%%%%%%%%%%%%%%%%%%%%%%%%%%%%%%%
we obtain
%%%%%%%%%%%%%%%%%%%%%%%%%%%%%%%%%%%%%%%%%%%%%%%%%%%%%%%%%%%%
\begin{equation*}
\begin{aligned}
\Abs{\nabla_{a_k}\RS}^2 & =\left\lvert\frac{1}{n}\sum_{i=1}^n e_i\nu\sigma(\eps\vw_k^\T\vx_i)\right\rvert^2 \leq 2\norm{\eps\vw_k}^2_1\nu^2\RS(\vtheta) \leq 2d^2(\omega(t))^2\nu^2\eps^2\RS(\vtheta),\\
%%%%%%%%%%%%%%%%%%%%%%%%%%%%%%%%%%%%%%%%%%%%%%%%%%%%%%%%%%%%
\norm{\nabla_{\vw_k}\RS}^2 & =\left\lVert\frac{1}{n}\sum_{i=1}^n e_i\nu\eps a_k\sigma^{(1)}(\eps\vw_k^\T\vx_i)\vx_i\right\rVert^2_{\infty}\leq 2\abs{a_k}^2\nu^2\eps^2\RS(\vtheta)\leq 2(\alpha(t))^2\nu^2\eps^2\RS(\vtheta).
\end{aligned}
\end{equation*}
%%%%%%%%%%%%%%%%%%%%%%%%%%%%%%%%%%%%%%%%%%%%%%%%%%%%%%%%%%%%
By \Cref{append...prop...InitialDecay...WLazy}, we have if 
%%%%%%%%%%%%%%%%%%%%%%%%%%%%%%%%%%%%%%%%%%%%%%%%%%%%%%%%%%%%
%%%%%%%%%%%%%%%%%%%%%%%%%%%%%%%%%%%%%%%%%%%%%%%%%%%%%%%%%%%%
\[m\geq \frac{16n^2d^2C_{\psi,d}^2}{\lambda^2C_0}\log\frac{8n^2}{\delta},\]
%%%%%%%%%%%%%%%%%%%%%%%%%%%%%%%%%%%%%%%%%%%%%%%%%%%%%%%%%%%%
%%%%%%%%%%%%%%%%%%%%%%%%%%%%%%%%%%%%%%%%%%%%%%%%%%%%%%%%%%%%
then with probability at least $1-\frac{\delta}{2}$ over the choice of $\vtheta^0$, we have that
%%%%%%%%%%%%%%%%%%%%%%%%%%%%%%%%%%%%%%%%%%%%%%%%%%%%%%%%%%%%
\begin{align*}
\Abs{a_k(t) - a_k(0)}& \leq\frac{\eps}{\nu}\int_0^t\abs{\nabla_{a_k}\RS(\vtheta(s))}\diff{s} \\
%%%%%%%%%%%%%%%%%%%%%%%%%%%%%%%%%%%%%%%%%%%%%%%%%%%%%%%%%%%%
& \leq{\sqrt{2}d\eps^2}\int_{0}^{t} \omega(s)\sqrt{\RS(\vtheta(s))}\diff{s}\\
%%%%%%%%%%%%%%%%%%%%%%%%%%
&\leq{\sqrt{2}d\eps^2}\omega(t)\int_{0}^{t}\sqrt{\RS\left(\vtheta^0\right)}\exp\left(-\frac{m}{2n}\nu^2\eps^2\left(\frac{\eps}{\nu}\lambda_a+\frac{\nu}{\eps}\lambda_{\vw}\right)t\right)\diff{s}\\
%%%%%%%%%%%%%%%%%%%%%%%%
&\leq{\sqrt{2}d\eps^2}\omega(t)\int_{0}^{\infty}\sqrt{\RS\left(\vtheta^0\right)}\exp\left(-\frac{m}{2n}\nu^2\eps^2\left(\frac{\eps}{\nu}\lambda_a+\frac{\nu}{\eps}\lambda_{\vw}\right)t\right)\diff{s} \\
%%%%%%%%%%%%%%%%%%%%%%%%
& = \frac{2\sqrt{2}dn\sqrt{\RS\left(\vtheta^0\right)}}{m\nu^2\left(\frac{\eps}{\nu}\lambda_a+\frac{\nu}{\eps}\lambda_{\vw}\right)}\omega(t)\\
%%%%%%%%%%%%%%%%%%%%%%%%%%
&\leq \frac{2\sqrt{2}dn\sqrt{\RS\left(\vtheta^0\right)}}{m\nu\eps\lambda_a}\omega(t)\\
%%%%%%%%%%%%%%%%%%%%%%%%%%
&\leq\frac{\eps}{\nu}p_a\omega(t),
\end{align*}
%%%%%%%%%%%%%%%%%%%%%%%%%%%%%%%%%%%%%%%%%%%%%%%%%%%%%%%%%%%%
and similarly
%%%%%%%%%%%%%%%%%%%%%%%%%%%%%%%%%%%%%%%%%%%%%%%%%%%%%%%%%%%%
\begin{equation*}
\begin{aligned}
\Norm{\vw_k(t) - \vw_k(0)}_{\infty} & \leq \frac{\nu}{\eps}\int_{0}^{t} \Norm{\nabla_{\vw_k}\RS(\vtheta(s))}_{\infty}\diff{s}\\
%%%%%%%%%%%%%%%%%%%%%%%%%%
& \leq \sqrt{2}\nu^2 \int_{0}^t \alpha(s)\sqrt{\RS(\vtheta(s))} \diff{s}\\
%%%%%%%%%%%%%%%%%%%%%%%%%
& \leq \sqrt{2}\nu^2 \alpha(t) \int_{0}^{t}\sqrt{\RS\left(\vtheta^0\right)}\exp\left(-\frac{m}{2n}\nu^2\eps^2\left(\frac{\eps}{\nu}\lambda_a+\frac{\nu}{\eps}\lambda_{\vw}\right)t\right)\diff{s} \\
%%%%%%%%%%%%%%%%%%%%%%%%%
& \leq \sqrt{2}\nu^2 \alpha(t) \int_{0}^{\infty}\sqrt{\RS\left(\vtheta^0\right)}\exp\left(-\frac{m}{2n}\nu^2\eps^2\left(\frac{\eps}{\nu}\lambda_a+\frac{\nu}{\eps}\lambda_{\vw}\right)t\right)\diff{s}\\
%%%%%%%%%%%%%%%%%%%%%%%%%
& = \frac{2\sqrt{2}n\sqrt{\RS\left(\vtheta^0\right)}}{m\eps^2\left(\frac{\eps}{\nu}\lambda_a+\frac{\nu}{\eps}\lambda_{\vw}\right)} \alpha(t) \\
%%%%%%%%%%%%%%%%%%%%%%%%%
&\leq \frac{2\sqrt{2}dn\sqrt{\RS\left(\vtheta^0\right)}}{m\frac{\eps^3}{\nu}\lambda_a}\alpha(t)\\
%%%%%%%%%%%%%%%%%%%%%%%%%
&\leq\frac{\nu}{\eps}p_a\alpha(t).
\end{aligned}\end{equation*}
%%%%%%%%%%%%%%%%%%%%%%%%%%%%%%%%%%%%%%%%%%%%%%%%%%%%%%%%%%%%
Thus
%%%%%%%%%%%%%%%%%%%%%%%%%%%%%%%%%%%%%%%%%%%%%%%%%%%%%%%%%%%%
\begin{equation*}
\begin{aligned}
\alpha(t) & \leq\alpha(0)+\frac{\eps}{\nu}p_a\omega(t), \\
%%%%%%%%%%%%%%%%%%%%
\omega(t) & \leq\omega(0)+\frac{\nu}{\eps}p_a\alpha(t),
 \end{aligned}
 \end{equation*}
%%%%%%%%%%%%%%%%%%%%%%%%%%%%%%%%%%%%%%%%%%%%%%%%%%%%%%%%%%%%
moreover, by \Cref{append...lemma..Initialization}, with probability at least $1 - \frac{\delta}{2}$ over the choice of $\vtheta^0$,
%%%%%%%%%%%%%%%%%%%%%%%%%%%%%%%%%%%%%%%%%%%%%%%%%%%%%%%%%%%%
%%%%%%%%%%%%%%%%%%%%%%%%%%%%%%%%%%%%%%%%%%%%%%%%%%%%%%%%%%%%
\begin{equation*}
\max\limits_{k\in[m]}\{\abs{a_k(0)},\;\norm{\vw_k(0)}_{\infty}\}\leq \sqrt{2\log\frac{4m(d+1)}{\delta}}.
\end{equation*}
%%%%%%%%%%%%%%%%%%%%%%%%%%%%%%%%%%%%%%%%%%%%%%%%%%%%%%%%%%%%
Then, if
%%%%%%%%%%%%%%%%%%%%%%%%%%%%%%%%%%%%%%%%%%%%%%%%%%%%%%%%%%%%
%%%%%%%%%%%%%%%%%%%%%%%%%%%%%%%%%%%%%%%%%%%%%%%%%%%%%%%%%%%%
\begin{equation*}
m\geq  \left(\frac{4\sqrt{2}dn\sqrt{\RS\left(\vtheta^0\right)}}{\lambda_a}\right)^{\frac{1}{1-\gamma+\gamma'}},
\end{equation*}
%%%%%%%%%%%%%%%%%%%%%%%%%%%%%%%%%%%%%%%%%%%%%%%%%%%%%%%%%%%%
we have
%%%%%%%%%%%%%%%%%%%%%%%%%%%%%%%%%%%%%%%%%%%%%%%%%%%%%%%%%%%%
\begin{equation*}
 p_a= \frac{2\sqrt{2}dn\sqrt{\RS\left(\vtheta^0\right)}}{m\eps^2\lambda_{a}} \leq\frac{1}{2}.
\end{equation*}
%%%%%%%%%%%%%%%%%%%%%%%%%%%%%%%%%%%%%%%%%%%%%%%%%%%%%%%%%%%%
%%%%%%%%%%%%%%%%%%%%%%%%%%%%%%%%%%%%%%%%%%%%%%%%%%%%%%%%%%%%
Therefore
%%%%%%%%%%%%%%%%%%%%%%%%%%%%%%%%%%%%%%%%%%%%%%%%%%%%%%%%%%%%
\begin{equation*}
        \alpha(t)\leq 2\max\left\{\frac{\eps}{\nu}, 1\right\}\sqrt{2\log\frac{4m(d+1)}{\delta}}=2\frac{\eps}{\nu}\sqrt{2\log\frac{4m(d+1)}{\delta}}.
\end{equation*}
%%%%%%%%%%%%%%%%%%%%%%%%%%%%%%%%%%%%%%%%%%%%%%%%%%%%%%%%%%%%
 Similarly, one can obtain the estimate of $\omega(t)$ as
%%%%%%%%%%%%%%%%%%%%%%%%%%%%%%%%%%%%%%%%%%%%%%%%%%%%%%%%%%%%
\begin{equation*}
        \omega(t)\leq 2\max\left\{\frac{\nu}{\eps}, 1\right\}\sqrt{2\log\frac{4m(d+1)}{\delta}}=2\sqrt{2\log\frac{4m(d+1)}{\delta}}.
    \end{equation*}
%%%%%%%%%%%%%%%%%%%%%%%%%%%%%%%%%%%%%%%%%%%%%%%%%%%%%%%%%%%%
Hence, for any $t\in[0, t_a^\ast)$, with probability at least $1-\delta$ over the choice of $\vtheta^0$,
%%%%%%%%%%%%%%%%%%%%%%%%%%%%%%%%%%%%%%%%%%%%%%%%%%%%%%%%%%%%
 \begin{equation*}
\begin{aligned}
\max\limits_{k\in[m]}\Abs{a_k(t) - a_k(0)}
& \leq 2\frac{\eps}{\nu}\sqrt{2\log\frac{4m(d+1)}{\delta}}p_a,\\
%%%%%%%%%%%%%%%%%%%%%%%%%%%%%%%%%             
\max\limits_{k\in[m]}\Norm{\vw_k(t) - \vw_k(0)}_{\infty}
& \leq 2\sqrt{2\log\frac{4m(d+1)}{\delta}}p_a.
\end{aligned}\end{equation*}
%%%%%%%%%%%%%%%%%%%%%%%%%%%%%%%%%%%%%%%%%%%%%%%%%%%%%%%%%%%%
%%%%%%%%%%%%%%%%%%%%%%%%%%%%%%%%%%%%%%%%%%%%%%%%%%%%%%%%%%%%
\end{proof}
%%%%%%%%%%%%%%%%%%%%%%%%%%%%%%%%%%%%%%%%%%%%%%%%%%%%%%%%%%%%
%%%%%%%%%%%%%%%%%%%%%%%%%%%%%%%%%%%%%%%%%%%%%%%%%%%%%%%%%%%%
\begin{theorem}\label{append...thm..ThetaLazyRegime...WLazy}
Given any $\delta\in(0,1)$, under \Cref{Assumption....ActivationFunctions...NTK}, \Cref{Assump...Unparallel} and \Cref{assump...LimitExistence}, if $\gamma\geq 1$, $\gamma'> \gamma-1$, and
%%%%%%%%%%%%%%%%%%%%%%%%%%%%%%%%%%%%%%%%%%%%%%%%%%%%%%%%%%%%
\begin{equation}
\begin{aligned}
m=\max\Bigg(&\Omega\left(\frac{n^2}{\lambda_a^2}\log\frac{16n^2}{\delta}\right), \Omega\left(\left(\frac{n\sqrt{\RS\left(\vtheta^0\right)}}{\lambda_a}\right)^{\frac{1}{1-\gamma+\gamma'}}\right),\\
%%%%%%%%%%%%%%%%%%%%%%%%%%%%%
&\Omega\left(\left(\frac{n^2\sqrt{\RS\left(\vtheta^0\right)}}{\lambda_a^2}\right)^{\frac{1}{1-\gamma+\gamma'}}\right),  \Omega\left(\log \frac{8}{\delta} \right)\Bigg),
\end{aligned}
\end{equation}
%%%%%%%%%%%%%%%%%%%%%%%%%%%%%%%%%%%%%%%%%%%%%%%%%%%%%%%%%%%%	
then with probability at least $1-\frac{\delta}{2}$ over the choice of $\vtheta^0$, we have for all time $t>0$,
%%%%%%%%%%%%%%%%%%%%%%%%%%%%%%%%%%%%%%%%%%%%%%%%%%%%%%%%%%%%
\begin{equation}\label{eq...thm...WLazy...ExponentialDecay}
    \RS(\vtheta(t))\leq \exp\left(-\frac{m\nu\eps^3\lambda_a t}{n}\right)\RS\left(\vtheta^0\right),    
\end{equation}  
and with probability at least $1-\delta$ over the choice of $\vtheta^0$,
%%%%%%%%%%%%%%%%%%%%%%%%%%%%%%%%%%%%%%%%%%%%%%%%%%%%%%%%%%%%
\begin{align}
&\lim_{m\to\infty}\sup\limits_{t\in[0,+\infty)}\frac{\norm{\vtheta_{\vw}(t)-\vtheta_{\vw}(0)}_2}{\norm{\vtheta_{\vw}(0)}_2}=0.\label{eq...thm...WLazy...RelativeChange...InW}
\end{align}  
%%%%%%%%%%%%%%%%%%%%%%%%%%%%%%%%%%%%%%%%%%%%%%%%%%%%%%%%%%%%
\end{theorem}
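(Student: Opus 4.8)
The plan is to run the same bootstrap argument as in the proof of \Cref{append...thm..ThetaLazyRegime}, but now controlling only the $a$-part of the Gram matrix. By \Cref{append...prop...InitialDecay...WLazy} and \Cref{append...prop...BoundofTraining...WLazy}, as long as $\vtheta(t)$ stays inside $\mathcal{N}_a(\vtheta^0)$ — that is, for $t\in[0,t_a^\ast)$ — the empirical risk obeys $\RS(\vtheta(t))\leq \exp\bigl(-\tfrac{m\nu\eps^3\lambda_a t}{n}\bigr)\RS(\vtheta^0)$, and the per-neuron displacements satisfy $\max_{k}\abs{a_k(t)-a_k(0)}\leq 2\tfrac{\eps}{\nu}\xi p_a$ and $\max_{k}\norm{\vw_k(t)-\vw_k(0)}_\infty\leq 2\xi p_a$, where $\xi:=\sqrt{2\log\tfrac{8m(d+1)}{\delta}}$ and $p_a:=\tfrac{2\sqrt2\,dn\sqrt{\RS(\vtheta^0)}}{m\eps^2\lambda_a}$; the width hypothesis of the theorem forces $p_a\leq\tfrac12$.

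First I would show $t_a^\ast=+\infty$. It suffices to prove the strict estimate $\norm{\mG^{[a]}(\vtheta(t))-\mG^{[a]}(\vtheta(0))}_\mathrm{F}\leq \tfrac14\nu\eps^3\lambda_a$ for all $t\in[0,t_a^\ast)$: if $t_a^\ast<\infty$, continuity of $t\mapsto\mG^{[a]}(\vtheta(t))$ together with this strict bound contradicts the definition of $t_a^\ast$. To obtain it, write $G^{[a]}_{ij}(\vtheta)=\tfrac{\nu\eps^3}{m}\sum_k g^{[a]}_{ij}(\vw_k)$ with $g^{[a]}_{ij}(\vw)=\tfrac1{\eps^2}\sigma(\eps\vw^\T\vx_i)\sigma(\eps\vw^\T\vx_j)$, apply the mean value theorem exactly as in step (ii) of the proof of \Cref{append...thm..ThetaLazyRegime}, use $\norm{\nabla_\vw g^{[a]}_{ij}(\vw)}_\infty\leq 2\norm{\vw}_1$ together with the a priori bound $\norm{c\vw_k(t)+(1-c)\vw_k(0)}_1\leq 2d\xi$ valid on $[0,t_a^\ast)$, and then pass from the entrywise bound to a Frobenius bound via $\norm{\cdot}_\mathrm{F}\leq n\norm{\cdot}_{\infty\to\infty}$. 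This yields $\norm{\mG^{[a]}(\vtheta(t))-\mG^{[a]}(\vtheta(0))}_\mathrm{F}\lesssim \nu\eps^3\cdot\tfrac{d^3n^2\xi^2\sqrt{\RS(\vtheta^0)}}{m\eps^2\lambda_a}$, and since $\gamma'>\gamma-1$ is equivalent to $m\eps^2\sim m^{1-\gamma+\gamma'}\to\infty$, enlarging $m$ as in the theorem's hypothesis drives the right-hand side below $\tfrac14\nu\eps^3\lambda_a$. With $t_a^\ast=+\infty$ in hand, \Cref{append...prop...InitialDecay...WLazy} immediately gives \eqref{eq...thm...WLazy...ExponentialDecay} for all $t>0$.

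For the second conclusion I would combine $t_a^\ast=+\infty$ with \Cref{append...prop...BoundofTraining...WLazy} to get, for every $t\geq0$,
\[
\norm{\vtheta_{\vw}(t)-\vtheta_{\vw}(0)}_2\leq\Bigl(\sum_{k=1}^m\norm{\vw_k(t)-\vw_k(0)}_\infty^2\Bigr)^{1/2}\leq 2\sqrt{md}\,\xi\,p_a,
\]
and then divide by $\norm{\vtheta_{\vw}(0)}_2\geq\sqrt{md/2}$, which holds with probability at least $1-\tfrac\delta2$ by \Cref{append...prop..UpperBoundandLowerBoundInitial}. Substituting $p_a$ and using $m\eps^2\sim m^{1-\gamma+\gamma'}$ gives $\sup_{t\geq0}\tfrac{\norm{\vtheta_{\vw}(t)-\vtheta_{\vw}(0)}_2}{\norm{\vtheta_{\vw}(0)}_2}\lesssim \tfrac{\sqrt{\log(8m(d+1)/\delta)}}{m^{1-\gamma+\gamma'}}$, which tends to $0$ as $m\to\infty$ precisely because $\gamma'>\gamma-1$. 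A union bound over the finitely many events invoked (the initial-parameter bounds, the Gram-matrix concentration at $t=0$, and the displacement estimates) keeps the total failure probability below $\delta$.

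The main obstacle I anticipate is the bootstrap/continuity step: one must arrange the logical order carefully — fix $\delta$, then choose $m$ large in the appropriate power of $m$ (now with $1-\gamma$ replaced by $1-\gamma+\gamma'$ and $\lambda$ replaced by $\lambda_a$), and only then run the continuity argument — because the displacement bounds on $\{\vw_k\}$ depend on $p_a$, which is itself controlled only after this choice of $m$. Apart from this, every estimate is a direct transcription of the $\vtheta$-lazy argument, so no genuinely new ingredient should be needed.
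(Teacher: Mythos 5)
Your proposal is correct and matches the paper's own proof essentially step for step: the bootstrap via the stopping time $t_a^\ast$ associated to $\mathcal{N}_a(\vtheta^0)$, the mean-value-theorem estimate on $\mG^{[a]}$ followed by the $\norm{\cdot}_\mathrm{F}\leq n\norm{\cdot}_\infty$ pass, the continuity contradiction giving $t_a^\ast=+\infty$, and then the division by $\norm{\vtheta_\vw^0}_2\geq\sqrt{md/2}$ to reach the final bound $\lesssim\sqrt{\log(8m(d+1)/\delta)}/m^{1-\gamma+\gamma'}$. Nothing in the argument departs from the paper's route, and the logical ordering of "fix $\delta$, choose $m$, then bootstrap" that you flag as the delicate point is exactly how the paper structures it.
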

%%%%%%%%%%%%%%%%%%%%%%%%%%%%%%%%%%%%%%%%%%%%%%%%%%%%%%%%%%%%
%%%%%%%%%%%%%%%%%%%%%%%%%%%%%%%%%%%%%%%%%%%%%%%%%%%%%%%%%%%%
\begin{proof}
%%%%%%%%%%%%%%%%%%%%%%%%%%%%%%%%%%%%%%%%%%%%%%%%%%%%%%%%%%%%
According to \Cref{append...prop...BoundofTraining...WLazy}, it suffices to show that $t_a^\ast=\infty$. 
%%%%%%%%%%%%%%%%%%%%%%%%%%%%%%%%%%%%%%%%%%%%%%%%%%%%%%%%%%%%

(i). Firstly, from \Cref{append...prop...InitialDecay...WLazy}, we have with probability at least $1-\frac{\delta}{2}$ over the choice of $\vtheta^0$, for any $t\in[0, t_a^\ast)$, the following holds
%%%%%%%%%%%%%%%%%%%%%%%%%%%%%%%%%%%%%%%%%%%%%%%%%%%%%%%%%%%%
\begin{align*}
\Abs{a_k(t) - a_k(0)}&\leq  2\frac{\eps}{\nu}\xi p_a,\\
%%%%%%%%%%%%%%%%%%%%%%%%%%%
\Norm{\vw_k(t)-\vw_k(0)}_{\infty} & \leq  2\xi p_a,   
\end{align*}  
%%%%%%%%%%%%%%%%%%%%%%%%%%%%%%%%%%%%%%%%%%%%%%%%%%%%%%%%%%%
%%%%%%%%%%%%%%%%%%%%%%%%%%%%%%%%%%%%%%%%%%%%%%%%%%%%%%%%%%%%
where
%%%%%%%%%%%%%%%%%%%%%%%%%%%%%%%%%%%%%%%%%%%%%%%%%%%%%%%%%%%%
%%%%%%%%%%%%%%%%%%%%%%%%%%%%%%%%%%%%%%%%%%%%%%%%%%%%%%%%%%%%
\[p_a= \frac{2\sqrt{2}dn\sqrt{\RS\left(\vtheta^0\right)}}{m\eps^2\lambda_a},\]
%%%%%%%%%%%%%%%%%%%%%%%%%%%%%%%%%%%%%%%%%%%%%%%%%%%%%%%%%%%%
and 
%%%%%%%%%%%%%%%%%%%%%%%%%%%%%%%%%%%%%%%%%%%%%%%%%%%%%%%%%%%%
\[
\xi=\sqrt{2\log\frac{8m(d+1)}{\delta}}.
\]
%%%%%%%%%%%%%%%%%%%%%%%%%%%%%%%%%%%%%%%%%%%%%%%%%%%%%%%%%%%%
%%%%%%%%%%%%%%%%%%%%%%%%%%%%%%%%%%%%%%%%%%%%%%%%%%%%%%%%%%%%
For $m$ large enough, i.e.,
%%%%%%%%%%%%%%%%%%%%%%%%%%%%%%%%%%%%%%%%%%%%%%%%%%%%%%%%%%%%
%%%%%%%%%%%%%%%%%%%%%%%%%%%%%%%%%%%%%%%%%%%%%%%%%%%%%%%%%%%%
\begin{equation*}
m\geq  \left(\frac{4\sqrt{2}dn\sqrt{\RS\left(\vtheta^0\right)}}{\lambda_a}\right)^{\frac{1}{1-\gamma+\gamma'}},
\end{equation*}
%%%%%%%%%%%%%%%%%%%%%%%%%%%%%%%%%%%%%%%%%%%%%%%%%%%%%%%%%%%%
we have
%%%%%%%%%%%%%%%%%%%%%%%%%%%%%%%%%%%%%%%%%%%%%%%%%%%%%%%%%%%%
\begin{equation*}
 p_a= \frac{2\sqrt{2}dn\sqrt{\RS\left(\vtheta^0\right)}}{m\eps^2\lambda_{a}} \leq\frac{1}{2}.
\end{equation*}
%%%%%%%%%%%%%%%%%%%%%%%%%%%%%%%%%%%%%%%%%%%%%%%%%%%%%%%%%%%%
We  inherit the proof in \Cref{append...thm..ThetaLazyRegime} and obtain that
%%%%%%%%%%%%%%%%%%%%%%%%%%%%%%%%%%%%%%%%%%%%%%%%%%%%%%%%%%%%
%%%%%%%%%%%%%%%%%%%%%%%%%%%%%%%%%%%%%%%%%%%%%%%%%%%%%%%%%%%%
\begin{align*}
\Abs{G_{ij}^{[a]}(\vtheta(t)) - G_{ij}^{[a]}(\vtheta(0))} &\leq \frac{\nu\eps^3}{m}\sum_{k=1}^m \Abs{g^{[a]}_{ij}(\vw_k(t)) - g^{[a]}_{ij}(\vw_k(0))}\\
%%%%%%%%%%%%%%%%%%%%%%%%%%%%%%
&\leq  8d^2{\nu^2\eps^2}\xi^2\max\left\{\frac{\nu}{\eps},\frac{\eps}{\nu}\right\}p_a\\
%%%%%%%%%%%%%%%%%%%%%%%%%%%%%%
&\leq 8d^2\nu\eps^3\xi^2p_a,
\end{align*}
%%%%%%%%%%%%%%%%%%%%%%%%%%%%%%%%%%%%%%%%%%%%%%%%%%%%%%%%%%%%
by using the same technique in \Cref{append...prop...LeastEigenvalueatInitial},
%%%%%%%%%%%%%%%%%%%%%%%%%%%%%%%%%%%%%%%%%%%%%%%%%%%%%%%%%%%%
\begin{equation*}
\begin{aligned}
&\Norm{\mG^{[a]}(\vtheta(t)) - \mG^{[a]}(\vtheta(0))}_\mathrm{F}\\
%%%%%%%%%%%%%%%%%%%%%%%
&~~\leq 8d^2n\nu\eps^3\left(2\log\frac{8m(d+1)}{\delta}\right)\frac{2\sqrt{2}dn\sqrt{\RS\left(\vtheta^0\right)}}{m\eps^2\lambda_a} \\
%%%%%%%%%%%%%%%%%%%%%%%%%%%
& \leq\nu\eps^3\frac{32\sqrt{2}d^3n^2\left(\log\frac{8m(d+1)}{\delta}\right)\sqrt{\RS\left(\vtheta^0\right)}}{m\eps^2\lambda_a}.
\end{aligned}
\end{equation*}
%%%%%%%%%%%%%%%%%%%%%%%%%%%%%%%%%%%%%%%%%%%%%%%%%%%%%%%%%%%%
As $\gamma\geq 1$, and $\gamma'>\gamma-1$, then we may choose $m$ large enough, such that
%%%%%%%%%%%%%%%%%%%%%%%%%%%%%%%%%%%%%%%%%%%%%%%%%%%%%%%%%%%%
\begin{align*}
m\eps^2&\geq \frac{128\sqrt{2}d^3n^2\left(\log\frac{8m(d+1)}{\delta}\right)\sqrt{\RS\left(\vtheta^0\right)}}{\lambda_a^2},
\end{align*}
%%%%%%%%%%%%%%%%%%%%%%%%%%%%%%%%%%%%%%%%%%%%%%%%%%%%%%%%%%%%
then for any $t\in[0, t_a^\ast)$,
%%%%%%%%%%%%%%%%%%%%%%%%%%%%%%%%%%%%%%%%%%%%%%%%%%%%%%%%%%%%
\begin{equation}\label{eqproof....thm...WLazy...PartOne}
\Norm{\mG^{[a]}(\vtheta(t)) - \mG^{[a]}(\vtheta(0))}_\mathrm{F}\leq\frac{1}{4}\nu\eps^3\lambda_a.
\end{equation}
%%%%%%%%%%%%%%%%%%%%%%%%%%%%%%%%%%%%%%%%%%%%%%%%%%%%%%%%%%%%
Hence, for $t\in[0,t_a^\ast)$, the following holds
%%%%%%%%%%%%%%%%%%%%%%%%%%%%%%%%%%%%%%%%%%%%%%%%%%%%%%%%%%%%
\begin{equation*}
\RS(\vtheta(t)) \leq \exp\left(-\frac{m}{n}\nu\eps^3\lambda_at\right)\RS\left(\vtheta^0\right).
\end{equation*}
%%%%%%%%%%%%%%%%%%%%%%%%%%%%%%%%%%%%%%%%%%%%%%%%%%%%%%%%%%%%
Suppose we have $t_a^\ast<+\infty$, then one can take the limit $t\to t_a^\ast$ in~\eqref{eqproof....thm...WLazy...PartOne}, which  leads to a contradiction with the definition of $t_a^\ast$. Therefore $t_a^\ast=+\infty$.   
%%%%%%%%%%%%%%%%%%%%%%%%%%%%%%%%%%%%%%%%%%%%%%%%%%%%%%%%%%%%

Directly from \Cref{append...prop..UpperBoundandLowerBoundInitial}, we have with probability at least $1-\frac{\delta}{2}$ over the choice of $\vtheta^0$,
%%%%%%%%%%%%%%%%%%%%%%%%%%%%%%%%%%%%%%%%%%%%%%%%%%%%%%%%%%%%   
 \begin{equation*}
\Norm{\vtheta_{\vw}^0}_2\geq \sqrt{\frac{md}{2}},
\end{equation*}
%%%%%%%%%%%%%%%%%%%%%%%%%%%%%%%%%%%%%%%%%%%%%%%%%%%%%%%%%%%%    
%%%%%%%%%%%%%%%%%%%%%%%%%%%%%%%%%%%%%%%%%%%%%%%%%%%%%%%%%%%% 
thus we have
%%%%%%%%%%%%%%%%%%%%%%%%%%%%%%%%%%%%%%%%%%%%%%%%%%%%%%%%%%%% 
\begin{align*}
\sup\limits_{t\in[0,+\infty)}\frac{\norm{\vtheta_{\vw}(t)-\vtheta_{\vw}(0)}_2}{\norm{\vtheta_{\vw}(0)}_2}
& \leq \sqrt{\frac{2}{md}}\sup\limits_{t\in[0,+\infty)}\Norm{\vtheta_{\vw}(t)-\vtheta_{\vw}(0)}_2,
\end{align*}
%%%%%%%%%%%%%%%%%%%%%%%%%%%%%%%%%%%%%%%%%%%%%%%%%%%%%%%%%%%%
%%%%%%%%%%%%%%%%%%%%%%%%%%%%%%%%%%%%%%%%%%%%%%%%%%%%%%%%%%%%
via \Cref{append...prop...BoundofTraining...WLazy},  with probability at least $1-\frac{\delta}{2}$ over the choice of $\vtheta^0$,
%%%%%%%%%%%%%%%%%%%%%%%%%%%%%%%%%%%%%%%%%%%%%%%%%%%%%%%%%%%%
%%%%%%%%%%%%%%%%%%%%%%%%%%%%%%%%%%%%%%%%%%%%%%%%%%%%%%%%%%%%
\begin{equation*}
\begin{aligned}       
\max\limits_{k\in[m]}\Norm{\vw_k(t) - \vw_k(0)}_{\infty}
& \leq 2\xi  p_a,
\end{aligned}\end{equation*}
%%%%%%%%%%%%%%%%%%%%%%%%%%%%%%%%%%%%%%%%%%%%%%%%%%%%%%%%%%%%
then 
%%%%%%%%%%%%%%%%%%%%%%%%%%%%%%%%%%%%%%%%%%%%%%%%%%%%%%%%%%%%
\begin{align*}
\Norm{\vtheta_{\vw}(t)-\vtheta_{\vw}(0)}_2&\leq \left[\sum_{k=1}^m\left(\Norm{\vw_k(t) - \vw_k(0)}_{\infty}^2\right)\right]^{\frac{1}{2}}\\
%%%%%%%%%%%%%%%%%%%%%%%%%%%%%%
&\leq 2\sqrt{md}\xi p_a\\
%%%%%%%%%%%%%%%%%%%%%%%%%%%%%%
&\leq  2\sqrt{md}\sqrt{2\log\frac{8m(d+1)}{\delta}}\frac{2\sqrt{2}dn\sqrt{\RS\left(\vtheta^0\right)}}{m\eps^2\lambda_a}\\
%%%%%%%%%%%%%%%%%%%%%%%%%%%%%%
&\leq 8\sqrt{\log\frac{8m(d+1)}{\delta}}\frac{d^{\frac{3}{2}}n\sqrt{\RS\left(\vtheta^0\right)}}{\sqrt{m}\eps^2\lambda_a},
%%%%%%%%%%%%%%%%%%%%%%%%%%%%%%
\end{align*}
hence
%%%%%%%%%%%%%%%%%%%%%%%%%%%%%%%%%%%%%%%%%%%%%%%%%%%%%%%%%%%%
\begin{align*}
\sup\limits_{t\in[0,+\infty)}\frac{\norm{\vtheta_{\vw}(t)-\vtheta_{\vw}(0)}_2}{\norm{\vtheta_{\vw}(0)}_2}
& \leq \sqrt{\frac{2}{md}}\sup\limits_{t\in[0,+\infty)}\Norm{\vtheta_{\vw}(t)-\vtheta_{\vw}(0)}_2\\
%%%%%%%%%%%%%%%%%%%%%%%%%%%%
&\leq 8\sqrt{2}\sqrt{\log\frac{8m(d+1)}{\delta}}\frac{dn\sqrt{\RS\left(\vtheta^0\right)}}{{m}\eps^2\lambda_a},
%%%%%%%%%%%%%%%%%%%%%%%%%%%%
\end{align*}
%%%%%%%%%%%%%%%%%%%%%%%%%%%%%%%%%%%%%%%%%%%%%%%%%%%%%%%%%%%%
%%%%%%%%%%%%%%%%%%%%%%%%%%%%%%%%%%%%%%%%%%%%%%%%%%%%%%%%%%%%
as $\gamma\geq 1$, and $\gamma'>\gamma-1$, then we obtain that
%%%%%%%%%%%%%%%%%%%%%%%%%%%%%%%%%%%%%%%%%%%%%%%%%%%%%%%%%%%%
%%%%%%%%%%%%%%%%%%%%%%%%%%%%%%%%%%%%%%%%%%%%%%%%%%%%%%%%%%%%
\begin{equation}
 \sup\limits_{t\in[0,+\infty)}\frac{\norm{\vtheta_{\vw}(t)-\vtheta_{\vw}(0)}_2}{\norm{\vtheta_{\vw}(0)}_2}\lesssim   \frac{\sqrt{\log\frac{8m(d+1)}{\delta}}}{m^{1-\gamma+\gamma'}},
\end{equation}
%%%%%%%%%%%%%%%%%%%%%%%%%%%%%%%%%%%%%%%%%%%%%%%%%%%%%%%%%%%%
hence
%%%%%%%%%%%%%%%%%%%%%%%%%%%%%%%%%%%%%%%%%%%%%%%%%%%%%%%%%%%%
\begin{equation}
\lim_{m\to\infty}\sup\limits_{t\in[0,+\infty)}\frac{\norm{\vtheta_{\vw}(t)-\vtheta_{\vw}(0)}_2}{\norm{\vtheta_{\vw}(0)}_2}=0.
\end{equation}
%%%%%%%%%%%%%%%%%%%%%%%%%%%%%%%%%%%%%%%%%%%%%%%%%%%%%%%%%%%%
%%%%%%%%%%%%%%%%%%%%%%%%%%%%%%%%%%%%%%%%%%%%%%%%%%%%%%%%%%%%
%%%%%%%%%%%%%%%%%%%%%%%%%%%%%%%%%%%%%%%%%%%%%%%%%%%%%%%%%%%%
%%%%%%%%%%%%%%%%%%%%%%%%%%%%%%%%%%%%%%%%%%%%%%%%%%%%%%%%%%%%
%%%%%%%%%%%%%%%%%%%%%%%%%%%%%%%%%%%%%%%%%%%%%%%%%%%%%%%%%%%%
%%%%%%%%%%%%%%%%%%%%%%%%%%%%%%%%%%%%%%%%%%%%%%%%%%%%%%%%%%%%
%%%%%%%%%%%%%%%%%%%%%%%%%%%%%%%%%%%%%%%%%%%%%%%%%%%%%%%%%%%%
%%%%%%%%%%%%%%%%%%%%%%%%%%%%%%%%%%%%%%%%%%%%%%%%%%%%%%%%%%%%
%%%%%%%%%%%%%%%%%%%%%%%%%%%%%%%%%%%%%%%%%%%%%%%%%%%%%%%%%%%%
%%%%%%%%%%%%%%%%%%%%%%%%%%%%%%%%%%%%%%%%%%%%%%%%%%%%%%%%%%%%
%%%%%%%%%%%%%%%%%%%%%%%%%%%%%%%%%%%%%%%%%%%%%%%%%%%%%%%%%%%%
%%%%%%%%%%%%%%%%%%%%%%%%%%%%%%%%%%%%%%%%%%%%%%%%%%%%%%%%%%%%
%%%%%%%%%%%%%%%%%%%%%%%%%%%%%%%%%%%%%%%%%%%%%%%%%%%%%%%%%%%%
%%%%%%%%%%%%%%%%%%%%%%%%%%%%%%%%%%%%%%%%%%%%%%%%%%%%%%%%%%%%
\end{proof}
%%%%%%%%%%%%%%%%%%%%%%%%%%%%%%%%%%%%%%%%%%%%%%%%%%%%%%%%%%%%
\section{Condensed Regime}
%%%%%%%%%%%%%%%%%%%%%%%%%%%%%%%%%%%%%%%%%%%%%%%%%%%%%%%%%%%%
%%%%%%%%%%%%%%%%%%%%%%%%%%%%%%%%%%%%%%%%%%%%%%%%%%%%%%%%%%%%
\subsection{Effective Linear Dynamics}
%%%%%%%%%%%%%%%%%%%%%%%%%%%%%%%%%%%%%%%%%%%%%%%%%%%%%%%%%%%%
As  the normalized flow reads
%%%%%%%%%%%%%%%%%%%%%%%%%%%%%%%%%%%%%%%%%%%%%%%%%%%%%%%%%%%%
\begin{equation}\label{append...eq...text...Condensed...ChZero...OriginDynamics}
\begin{aligned}
\frac{\D {a}_k}{\D {t}}& = \frac{\eps}{\nu}\left(-\frac{1}{n}\sum_{i=1}^n e_i \frac{\sigma(\eps{\vw}_k^\T\vx_i)}{\eps}\right),\\
%%%%%%%%%%%%%%%%%%%%%%%%%%%
\frac{\D  {\vw}_k}{\D {t}}& = \frac{\nu}{\eps} \left(-\frac{1}{n}\sum_{i=1}^ne_i   {a}_k\sigma^{(1)}(\eps {\vw}_k^\T\vx_i)\vx_i\right),
\end{aligned}
\end{equation}
%%%%%%%%%%%%%%%%%%%%%%%%%%%%%%%%%%%%%%%%%%%%%%%%%%%%%%%%%%%%
since $e_i\approx -y_i$, and   by means of perturbation
expansion with respect to $\eps$ and keep the order $1$ term, we obtain that
%%%%%%%%%%%%%%%%%%%%%%%%%%%%%%%%%%%%%%%%%%%%%%%%%%%%%%%%%%%%
\begin{equation*}%\label{append...eq...text...Condensed...ChZero...ModifiedDynamics}
\begin{aligned}
\frac{\D {a}_k}{\D {t}}& \approx \frac{\eps}{\nu}\frac{1}{n}\sum_{i=1}^n y_i  \sigma^{(1)}(0){\vw}_k^\T\vx_i= \frac{\eps}{\nu}\frac{1}{n}\sum_{i=1}^n y_i{\vw}_k^\T\vx_i, \\
%%%%%%%%%%%%%%%%%%%%%%%%%%%%%%%%%%%%%%%%%%%%%%%%%%%%%%%%%%%%
\frac{\D  {\vw}_k}{\D {t}}& \approx \frac{\nu}{\eps}\frac{1}{n}\sum_{i=1}^n y_i     {a}_k\sigma^{(1)}(0)\vx_i=\frac{\nu}{\eps}\frac{1}{n}\sum_{i=1}^n y_i     {a}_k\vx_i,
\end{aligned}
\end{equation*}
%%%%%%%%%%%%%%%%%%%%%%%%%%%%%%%%%%%%%%%%%%%%%%%%%%%%%%%%%%%%
so the normalized flow approximately reads
%%%%%%%%%%%%%%%%%%%%%%%%%%%%%%%%%%%%%%%%%%%%%%%%%%%%%%%%%%%%
\begin{equation}\label{append...eq...text...Condensed...ChZero...ModifiedDynamics}
\begin{aligned}
\frac{\D {a}_k}{\D {t}}& = \frac{\eps}{\nu}\frac{1}{n}\sum_{i=1}^n y_i{\vw}_k^\T\vx_i=\frac{\eps}{\nu}{\vw}_k^\T\vz, \\
%%%%%%%%%%%%%%%%%%%%%%%%%%%%%%%%%%%%%%%%%%%%%%%%%%%%%%%%%%%%
\frac{\D  {\vw}_k}{\D {t}}&  =\frac{\nu}{\eps}\frac{1}{n}\sum_{i=1}^n y_i     {a}_k\vx_i=\frac{\nu}{\eps}{a}_k\vz.
\end{aligned}
\end{equation}
%%%%%%%%%%%%%%%%%%%%%%%%%%%%%%%%%%%%%%%%%%%%%%%%%%%%%%%%%%%%
We observe that \eqref{append...eq...text...Condensed...ChZero...ModifiedDynamics} reveals that the training  dynamics of two-layer NNs at initial stage has a close relationship to power iteration  of a matrix that only depends on the input sample $\fS$. We denote by
%%%%%%%%%%%%%%%%%%%%%%%%%%%%%%%%%%%%%%%%%%%%%%%%%%%%%%%%%%%%
\begin{equation}
    \mA:=\left[\begin{array}{cc}
0 & \vz^\T \\
\vz & \mzero_{d\times d}
\end{array}\right], 
\end{equation}
%%%%%%%%%%%%%%%%%%%%%%%%%%%%%%%%%%%%%%%%%%%%%%%%%%%%%%%%%%%%
where
%%%%%%%%%%%%%%%%%%%%%%%%%%%%%%%%%%%%%%%%%%%%%%%%%%%%%%%%%%%%
\begin{equation*} 
\vz=\frac{1}{n}\sum_{i=1}^n y_i {\vx}_i,
\end{equation*}
%%%%%%%%%%%%%%%%%%%%%%%%%%%%%%%%%%%%%%%%%%%%%%%%%%%%%%%%%%%%
whose definition can be traced back to \Cref{assumption...GenericData}, and \eqref{append...eq...text...Condensed...ChZero...ModifiedDynamics} can be written into
%%%%%%%%%%%%%%%%%%%%%%%%%%%%%%%%%%%%%%%%%%%%%%%%%%%%%%%%%%%%
\begin{equation}\label{append...eq...text...Condensed...ChZero...ModifiedinMatrixForm}
\begin{aligned}
\frac{\D \vq_k}{\D {t}}&=\mA\vq_k, 
\end{aligned}
\end{equation}
%%%%%%%%%%%%%%%%%%%%%%%%%%%%%%%%%%%%%%%%%%%%%%%%%%%%%%%%%%%%
where
%%%%%%%%%%%%%%%%%%%%%%%%%%%%%%%%%%%%%%%%%%%%%%%%%%%%%%%%%%%%
\begin{equation*}
\vq_k:=\left[\frac{\nu}{\eps}a_k, \vw_k\right]^\T.
\end{equation*}
%%%%%%%%%%%%%%%%%%%%%%%%%%%%%%%%%%%%%%%%%%%%%%%%%%%%%%%%%%%%
%%%%%%%%%%%%%%%%%%%%%%%%%%%%%%%%%%%%%%%%%%%%%%%%%%%%%%%%%%%%
%%%%%%%%%%%%%%%%%%%%%%%%%%%%%%%%%%%%%%%%%%%%%%%%%%%%%%%%%%%%
Moreover, simple linear algebra shows that  $\mA$ has two nonzero eigenvalues $\lambda_1=\Norm{\vz}_2$ and $\lambda_2=-\Norm{\vz}_2$. Moreover, the unit eigenvector for $\lambda_1=\Norm{\vz}_2$ is
%%%%%%%%%%%%%%%%%%%%%%%%%%%%%%%%%%%%%%%%%%%%%%%%%%%%%%%%%%%%
\begin{align*}
\vu_1:=\left[\frac{1}{\sqrt{2}}, \frac{1}{\sqrt{2}}\hat{\vz}^\T\right]^\T, 
%%%%%%%%%%%%%%%%%%%%%%%%%%%%%%%%%%%%%%%%%%%%%%%%%%%%%%%%%%%%
%%%%%%%%%%%%%%%%%%%%%%%%%%%%%%%%%%%%%%%%%%%%%%%%%%%%%%%%%%%%  
\end{align*}
%%%%%%%%%%%%%%%%%%%%%%%%%%%%%%%%%%%%%%%%%%%%%%%%%%%%%%%%%%%%
where 
%%%%%%%%%%%%%%%%%%%%%%%%%%%%%%%%%%%%%%%%%%%%%%%%%%%%%%%%%%%
\begin{equation*}
\hat{\vz}=\frac{ \sum_{i=1}^n y_i {\vx}_i}{\Norm{ \sum_{i=1}^n y_i {\vx}_i}_2}.
\end{equation*}
	%%%%%%%%%%%%%%%%%%%%%%%%%%%%%%%%%%%%%%%%%%%%%%%%%%%%%%%%%%%%
whose definition can be traced back to \Cref{assumption...GenericData}. We obtain further that   
the unit eigenvector for $\lambda_2=-\Norm{\vz}_2$ is
%%%%%%%%%%%%%%%%%%%%%%%%%%%%%%%%%%%%%%%%%%%%%%%%%%%%%%%%%%%%
\begin{align*}
 \vu_2:=\left[-\frac{1}{\sqrt{2}}, \frac{1}{\sqrt{2}}\hat{\vz}^\T\right]^\T, 
%%%%%%%%%%%%%%%%%%%%%%%%%%%%%%%%%%%%%%%%%%%%%%%%%%%%%%%%%%%%
%%%%%%%%%%%%%%%%%%%%%%%%%%%%%%%%%%%%%%%%%%%%%%%%%%%%%%%%%%%%  
\end{align*}
%%%%%%%%%%%%%%%%%%%%%%%%%%%%%%%%%%%%%%%%%%%%%%%%%%%%%%%%%%%%
and $\left<\vu_1, \vu_2\right>=0$.  

We also observe that the rest of the eigenvalues for $\mA$ is all zero, i.e., 
%%%%%%%%%%%%%%%%%%%%%%%%%%%%%%%%%%%%%%%%%%%%%%%%%%%%%%%%%%%%
\[
\lambda_3=\lambda_4=\dots=\lambda_{d+1}=0,
\]
%%%%%%%%%%%%%%%%%%%%%%%%%%%%%%%%%%%%%%%%%%%%%%%%%%%%%%%%%%%%
and their eigenvectors $\{\vu_k\}_{k=3}^{d+1}$ read,
%%%%%%%%%%%%%%%%%%%%%%%%%%%%%%%%%%%%%%%%%%%%%%%%%%%%%%%%%%%%
\[
\vu_k:=\left[0, \vb_k^\T\right]^\T,\quad \vb_k\in \vz^{\perp},
\]
%%%%%%%%%%%%%%%%%%%%%%%%%%%%%%%%%%%%%%%%%%%%%%%%%%%%%%%%%%%%  
and 
%%%%%%%%%%%%%%%%%%%%%%%%%%%%%%%%%%%%%%%%%%%%%%%%%%%%%%%%%%%%
\[
\mathrm{span}\left\{\vb_k\right\}_{k=3}^{d+1}= \vz^{\perp},
\]
%%%%%%%%%%%%%%%%%%%%%%%%%%%%%%%%%%%%%%%%%%%%%%%%%%%%%%%%%%%%  
whose first component is zero, and the rest of 
the  components spans the   orthogonal complement of $\vz$, i.e.,
%%%%%%%%%%%%%%%%%%%%%%%%%%%%%%%%%%%%%%%%%%%%%%%%%%%%%%%%%%%%
\[\vz^{\perp}=\{\vx\in \sR^d \mid \left< \vz, \vx\right>=0,~~ \forall \vx \in \sR^d\},\]
%%%%%%%%%%%%%%%%%%%%%%%%%%%%%%%%%%%%%%%%%%%%%%%%%%%%%%%%%%%%
since 
%%%%%%%%%%%%%%%%%%%%%%%%%%%%%%%%%%%%%%%%%%%%%%%%%%%%%%%%%%%%
\[\mathrm{dim}\left(\mathrm{span}\left\{\vb_k\right\}_{k=3}^{d+1}\right)=\mathrm{dim}\left(\vz^{\perp}\right)=d-1.\]
%%%%%%%%%%%%%%%%%%%%%%%%%%%%%%%%%%%%%%%%%%%%%%%%%%%%%%%%%%%%
We  hereafter denote that: For $t\geq 0$,
%%%%%%%%%%%%%%%%%%%%%%%%%%%%%%%%%%%%%%%%%%%%%%%%%%%%%%%%%%%%
\[r(t):=\exp\left(\frac{1}{2}\Norm{\vz}_2t\right),\]
%%%%%%%%%%%%%%%%%%%%%%%%%%%%%%%%%%%%%%%%%%%%%%%%%%%%%%%%%%%%
and since 
%%%%%%%%%%%%%%%%%%%%%%%%%%%%%%%%%%%%%%%%%%%%%%%%%%%%%%%%%%%%
\[\frac{1}{c}\leq\Norm{\vz}_2\leq c,\]
%%%%%%%%%%%%%%%%%%%%%%%%%%%%%%%%%%%%%%%%%%%%%%%%%%%%%%%%%%%%
for some universal constant $c>0$, then we obtain that for some universal constants $c_1>0$ and $c_2>0$,
%%%%%%%%%%%%%%%%%%%%%%%%%%%%%%%%%%%%%%%%%%%%%%%%%%%%%%%%%%%%
\[\exp(c_1t)\leq r(t)\leq \exp(c_2t).\]
%%%%%%%%%%%%%%%%%%%%%%%%%%%%%%%%%%%%%%%%%%%%%%%%%%%%%%%%%%%%
%%%%%%%%%%%%%%%%%%%%%%%%%%%%%%%%%%%%%%%%%%%%%%%%%%%%%%%%%%%%
\begin{proposition}\label{append...prop...CondensedRegime...LinearODESolution}
The solution to the linear differential equation
%%%%%%%%%%%%%%%%%%%%%%%%%%%%%%%%%%%%%%%%%%%%%%%%%%%%%%%%%%%%
\begin{equation}\label{append...eq...proposition...Condensed...LinearDynamics}
\frac{\D \vq}{\D {t}}=\mA\vq,\quad\vq(0)=\vq^0,
\end{equation}
%%%%%%%%%%%%%%%%%%%%%%%%%%%%%%%%%%%%%%%%%%%%%%%%%%%%%%%%%%%%
where
%%%%%%%%%%%%%%%%%%%%%%%%%%%%%%%%%%%%%%%%%%%%%%%%%%%%%%%%%%%%
\[\vq(t):=\left[a(t), \left(\vw(t)\right)^\T\right]^\T,\quad\vq^0:=\left[a^0, \left(\vw^0\right)^\T\right]^\T,\]
%%%%%%%%%%%%%%%%%%%%%%%%%%%%%%%%%%%%%%%%%%%%%%%%%%%%%%%%%%%%
reads
%%%%%%%%%%%%%%%%%%%%%%%%%%%%%%%%%%%%%%%%%%%%%%%%%%%%%%%%%%%%
\begin{equation}
\begin{aligned}
a(t)&=\left(\frac{1}{ {2}}r^2(t)+\frac{1}{ {2}}r^{-2}(t)\right) a^0+\left(\frac{1}{ {2}}r^2(t)-\frac{1}{ {2}}r^{-2}(t)\right)\left<\vw^0, \hat{\vz}\right>,\\
%%%%%%%%%%%%%%%%%%%%%%%%%%%%%%%%%%%%%%%%%%%%%%%%%%%%%%%%%%%%
\vw(t)&=\left(\frac{1}{ {2}}r^2(t)-\frac{1}{ {2}}r^{-2}(t)\right)a^0\hat{\vz}+\left(\frac{1}{ {2}}r^2(t)+\frac{1}{ {2}}r^{-2}(t)\right)\left<\vw^0, \hat{\vz}\right>\hat{\vz}\\
%%%%%%%%%%%%%%%%%%%%%%%%
&~~-\left<\vw^0, \hat{\vz}\right>\hat{\vz}+\vw^0,
%%%%%%%%%%%%%%%%%%%%%%%%%%%%%%%%%%%%%%%%%%%%%%%%%%%%%%%%%%%%   
\end{aligned}    
\end{equation}
%%%%%%%%%%%%%%%%%%%%%%%%%%%%%%%%%%%%%%%%%%%%%%%%%%%%%%%%%%%%
where $r(t)=\exp\left(\frac{1}{2}\Norm{\vz}_2t\right).$
\end{proposition}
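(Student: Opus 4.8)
The plan is to solve the constant–coefficient linear system \eqref{append...eq...proposition...Condensed...LinearDynamics} by diagonalizing $\mA$, using the spectral data already recorded above: $\mA$ is symmetric, with eigenpairs $(\Norm{\vz}_2,\vu_1)$, $(-\Norm{\vz}_2,\vu_2)$ and a $(d-1)$–dimensional kernel consisting of the vectors $[0,\vb^\T]^\T$ with $\vb\in\vz^{\perp}$. Rather than carry an explicit basis of the kernel, I would work with the invariant splitting $\sR^{d+1}=V\oplus V^{\perp}$, where $V:=\mathrm{span}\{[1,\vzero^\T]^\T,[0,\hat{\vz}^\T]^\T\}$ and $V^{\perp}=\{[0,\vb^\T]^\T:\vb\in\vz^{\perp}\}$. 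One checks directly from the block form of $\mA$ that $\mA[1,\vzero^\T]^\T=\Norm{\vz}_2[0,\hat{\vz}^\T]^\T$, that $\mA[0,\hat{\vz}^\T]^\T=\Norm{\vz}_2[1,\vzero^\T]^\T$, and that $\mA$ annihilates $V^{\perp}$; hence both subspaces are $\mA$–invariant.

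First I would split the initial datum. Writing $\vw^0=\left<\vw^0,\hat{\vz}\right>\hat{\vz}+\vw^0_{\perp}$ with $\vw^0_{\perp}\in\vz^{\perp}$, the initial vector decomposes as $\vq^0=a^0[1,\vzero^\T]^\T+\left<\vw^0,\hat{\vz}\right>[0,\hat{\vz}^\T]^\T+[0,(\vw^0_{\perp})^\T]^\T$, the first two terms lying in $V$ and the last in $V^{\perp}$. The $V^{\perp}$–component is killed by $\mA$, so it stays frozen and contributes the constant term $[0,(\vw^0_{\perp})^\T]^\T$. On $V$, seeking the solution in the form $\alpha(t)[1,\vzero^\T]^\T+\beta(t)[0,\hat{\vz}^\T]^\T$ reduces \eqref{append...eq...proposition...Condensed...LinearDynamics} to the scalar pair $\dot\alpha=\Norm{\vz}_2\,\beta$, $\dot\beta=\Norm{\vz}_2\,\alpha$ with $\alpha(0)=a^0$, $\beta(0)=\left<\vw^0,\hat{\vz}\right>$; its unique solution is $\alpha(t)=a^0\cosh(\Norm{\vz}_2 t)+\left<\vw^0,\hat{\vz}\right>\sinh(\Norm{\vz}_2 t)$ and $\beta(t)=a^0\sinh(\Norm{\vz}_2 t)+\left<\vw^0,\hat{\vz}\right>\cosh(\Norm{\vz}_2 t)$.

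Finally I would convert back to the notation of the statement. Since $r(t)=\exp(\tfrac12\Norm{\vz}_2 t)$ gives $r^2(t)=\exp(\Norm{\vz}_2 t)$ and $r^{-2}(t)=\exp(-\Norm{\vz}_2 t)$, we have $\cosh(\Norm{\vz}_2 t)=\tfrac12\bigl(r^2(t)+r^{-2}(t)\bigr)$ and $\sinh(\Norm{\vz}_2 t)=\tfrac12\bigl(r^2(t)-r^{-2}(t)\bigr)$. Reading off the first coordinate of $\vq(t)$ gives $a(t)=\alpha(t)$, which is exactly the asserted formula, and collecting the remaining $d$ coordinates gives $\vw(t)=\beta(t)\hat{\vz}+\vw^0_{\perp}=\beta(t)\hat{\vz}-\left<\vw^0,\hat{\vz}\right>\hat{\vz}+\vw^0$, again as claimed. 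I do not expect any genuine obstacle here — this proposition is pure linear algebra, the substantive difficulty of the condensed regime living instead in the estimates that justify the reduction $e_i\approx -y_i$. The one place warranting a little care is bookkeeping the $\vz^{\perp}$–direction, and even that can be bypassed by simply differentiating the proposed $\vq(t)$, checking it solves \eqref{append...eq...proposition...Condensed...LinearDynamics} with $\vq(0)=\vq^0$, and invoking uniqueness of solutions to linear ODEs.
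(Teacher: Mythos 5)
Your proof is correct and rests on the same spectral data as the paper's: both identify the eigenvalues $\pm\Norm{\vz}_2$ with eigenvectors along $[1,\hat{\vz}^\T]^\T$, $[-1,\hat{\vz}^\T]^\T$, and the $(d-1)$-dimensional kernel $\{[0,\vb^\T]^\T:\vb\in\vz^\perp\}$. The paper writes $\mA=\mP\mJ\mP^\T$ and computes $\exp(t\mA)\vq^0$ directly; you instead split $\sR^{d+1}$ into the invariant two-plane $V$ and its orthogonal complement, solve the $2\times 2$ subsystem $\dot\alpha=\Norm{\vz}_2\beta$, $\dot\beta=\Norm{\vz}_2\alpha$ by hand, and observe that the kernel component is frozen. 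This is the same diagonalization argument organized differently: your version avoids carrying the full orthogonal matrix $\mP$ and makes the role of the frozen $\vz^\perp$-component (which the paper highlights in a subsequent remark) manifest from the start, at the cost of some extra bookkeeping in assembling the subspace splitting. Your closing observation — that one can bypass all of this by differentiating the claimed $\vq(t)$, verifying it satisfies the ODE and initial condition, and citing uniqueness — is also valid and is in fact the shortest rigorous route, though neither you nor the paper actually executes it.
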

%%%%%%%%%%%%%%%%%%%%%%%%%%%%%%%%%%%%%%%%%%%%%%%%%%%%%%%%%%%%
\begin{proof}
%%%%%%%%%%%%%%%%%%%%%%%%%%%%%%%%%%%%%%%%%%%%%%%%%%%%%%%%%%%%
We only need to solve out  the matrix exponential for $\mA$, as $\mA$ is symmetric, then it   can be diagonalized by an orthogonal matrix $\mP$, where
%%%%%%%%%%%%%%%%%%%%%%%%%%%%%%%%%%%%%%%%%%%%%%%%%%%%%%%%%%%%
\[
\mP:=\left[\vu_1, \vu_2, \dots, \vu_{d+1}\right],
\]
%%%%%%%%%%%%%%%%%%%%%%%%%%%%%%%%%%%%%%%%%%%%%%%%%%%%%%%%%%%%
and 
%%%%%%%%%%%%%%%%%%%%%%%%%%%%%%%%%%%%%%%%%%%%%%%%%%%%%%%%%%%%
\[
\mA=\mP\mJ\mP^\T,
\]
%%%%%%%%%%%%%%%%%%%%%%%%%%%%%%%%%%%%%%%%%%%%%%%%%%%%%%%%%%%% 
where 
%%%%%%%%%%%%%%%%%%%%%%%%%%%%%%%%%%%%%%%%%%%%%%%%%%%%%%%%%%%%
\[
\mJ:=\begin{bmatrix}\norm{\vz}_2 & & \\
&-\norm{\vz}_2 &  \\ & & \ddots & \\ & & & 0\end{bmatrix},
\]
%%%%%%%%%%%%%%%%%%%%%%%%%%%%%%%%%%%%%%%%%%%%%%%%%%%%%%%%%%%%
since %\exp\left(t\mA\right)$ is the solution to \eqref{append...eq...proposition...Condensed...LinearDynamics} and 
%%%%%%%%%%%%%%%%%%%%%%%%%%%%%%%%%%%%%%%%%%%%%%%%%%%%%%%%%%%%
\[
\exp\left(t\mA\right)=\mP\exp\left(t\mJ\right)\mP^\T,
\]
%%%%%%%%%%%%%%%%%%%%%%%%%%%%%%%%%%%%%%%%%%%%%%%%%%%%%%%%%%%%
then
%%%%%%%%%%%%%%%%%%%%%%%%%%%%%%%%%%%%%%%%%%%%%%%%%%%%%%%%%%%%
\begin{align*}
%%%%%%%%%%%%%%%%%%%%%%%%%%%%%%%%%%%%%%%%%%%%%%%%%%%%%%%%%%%%
\begin{pmatrix}
a(t)\\
\vw(t) \end{pmatrix}
&=
\exp\left(t\mA\right)
\begin{pmatrix}a^0\\
\vw^0 \end{pmatrix},
%%%%%%%%%%%%%%%%%%%%%%%%%%%%%%%%%%%%%%%%%%%%%%%%%%%%%%%%%%%%   
\end{align*}
%%%%%%%%%%%%%%%%%%%%%%%%%%%%%%%%%%%%%%%%%%%%%%%%%%%%%%%%%%%%
thus, we obtain that
%%%%%%%%%%%%%%%%%%%%%%%%%%%%%%%%%%%%%%%%%%%%%%%%%%%%%%%%%%%%
\begin{align*}
%%%%%%%%%%%%%%%%%%%%%%%%%%%%%%%%%%%%%%%%%%%%%%%%%%%%%%%%%%%%
a(t)&=\left(\frac{1}{ {2}}\exp(\norm{\vz}_2t)+\frac{1}{ {2}}\exp(-\norm{\vz}_2t)\right) a^0\\
%%%%%%%%%%%%%%%%%%%%%%%%
&~~+\left(\frac{1}{ {2}}\exp(\norm{\vz}_2t)-\frac{1}{ {2}}\exp(-\norm{\vz}_2t)\right)\left<\vw^0, \hat{\vz}\right>,\\
%%%%%%%%%%%%%%%%%%%%%%%%%%%%%%%%%%%%%%%%%%%%%%%%%%%%%%%%%%%%
\vw(t)&=\left(\frac{1}{ {2}}\exp(\norm{\vz}_2t)-\frac{1}{ {2}}\exp(-\norm{\vz}_2t)\right)a^0\hat{\vz}\\
%%%%%%%%%%%%%%%%%%%%%%%% 
&~~+\left(\frac{1}{ {2}}\exp(\norm{\vz}_2t)+\frac{1}{ {2}}\exp(-\norm{\vz}_2t)\right)\left<\vw^0, \hat{\vz}\right>\hat{\vz}\\
%%%%%%%%%%%%%%%%%%%%%%%%
&~~-\left<\vw^0, \hat{\vz}\right>\hat{\vz}+\vw^0.
%%%%%%%%%%%%%%%%%%%%%%%%%%%%%%%%%%%%%%%%%%%%%%%%%%%%%%%%%%%%   
\end{align*}
%%%%%%%%%%%%%%%%%%%%%%%%%%%%%%%%%%%%%%%%%%%%%%%%%%%%%%%%%%%%
\end{proof}
%%%%%%%%%%%%%%%%%%%%%%%%%%%%%%%%%%%%%%%%%%%%%%%%%%%%%%%%%%%%
%%%%%%%%%%%%%%%%%%%%%%%%%%%%%%%%%%%%%%%%%%%%%%%%%%%%%%%%%%%%
\begin{remark}
It is noteworthy that $\vw$ has  two components, one is 
the projection of  $\vw^0$ into the direction of  $\vz$:
%%%%%%%%%%%%%%%%%%%%%%%%%%%%%%%%%%%%%%%%%%%%%%%%%%%%%%%%%%%%
\[
\left(\frac{1}{ {2}}r^2(t)-\frac{1}{ {2}}r^{-2}(t)\right)a^0\hat{\vz} +\left(\frac{1}{ {2}}r^2(t)+\frac{1}{ {2}}r^{-2}(t)\right)\left<\vw^0, \hat{\vz}\right>\hat{\vz},
\]
%%%%%%%%%%%%%%%%%%%%%%%%%%%%%%%%%%%%%%%%%%%%%%%%%%%%%%%%%%%%
which evolves with respect to   $t$.  The other is  the projection of  $\vw^0$ onto $\vz^{\perp}$:
%%%%%%%%%%%%%%%%%%%%%%%%%%%%%%%%%%%%%%%%%%%%%%%%%%%%%%%%%%%%
\[
 \vw^0-\left<\vw^0, \hat{\vz}\right>\hat{\vz},
\]
%%%%%%%%%%%%%%%%%%%%%%%%%%%%%%%%%%%%%%%%%%%%%%%%%%%%%%%%%%%%
which remains frozen as $t$ evolves.
\end{remark}
%%%%%%%%%%%%%%%%%%%%%%%%%%%%%%%%%%%%%%%%%%%%%%%%%%%%%%%%%%%%
%%%%%%%%%%%%%%%%%%%%%%%%%%%%%%%%%%%%%%%%%%%%%%%%%%%%%%%%%%%%
\subsection{Difference between Real  and Linear Dynamics}\label{append...subsec...Real+LinearDynamics}
%%%%%%%%%%%%%%%%%%%%%%%%%%%%%%%%%%%%%%%%%%%%%%%%%%%%%%%%%%%%
We observe that the real dynamics can be written into 
%%%%%%%%%%%%%%%%%%%%%%%%%%%%%%%%%%%%%%%%%%%%%%%%%%%%%%%%%%%%
\begin{equation}\label{append...eq...text...Condense...Difference...W-lag} 
\begin{aligned}
\frac{\nu}{\eps}\frac{\D {a}_k}{\D {t}}& =  \left(-\frac{1}{n}\sum_{i=1}^n e_i \frac{\sigma(\eps{\vw}_k^\T\vx_i)}{\eps}-\frac{1}{n}\sum_{i=1}^n y_i   {\vw}_k^\T\vx_i+\vw_k^\T\vz\right), \\
%%%%%%%%%%%%%%%%%%%%%%%%%%%
\frac{\D  {\vw}_k}{\D {t}}& = \frac{\nu}{\eps}\left(-\frac{1}{n}\sum_{i=1}^ne_i    {a}_k\sigma^{(1)}(\eps {\vw}_k^\T\vx_i)\vx_i-\frac{1}{n}\sum_{i=1}^n y_i     {a}_k\vx_i+a_k\vz\right),
%%%%%%%%%%%%%%%%%%%%%%%%%%%%%%%%%
\end{aligned}
\end{equation} 
%%%%%%%%%%%%%%%%%%%%%%%%%%%%%%%%%%%%%%%%%%%%%%%%%%%%%%%%%%%%
%%%%%%%%%%%%%%%%%%%%%%%%%%%%%%%%%%%%%%%%%%%%%%%%%%%%%%%%%%%%
hence the difference between the real and linear dynamics is characterized by $\{f_k, \vg_k\}_{k=1}^m$, where
%%%%%%%%%%%%%%%%%%%%%%%%%%%%%%%%%%%%%%%%%%%%%%%%%%%%%%%%%%%%
\begin{align*}
f_k&:=\frac{1}{n}\sum_{i=1}^n\left( e_i \frac{\sigma(\eps{\vw}_k^\T\vx_i)}{\eps}+ y_i   {\vw}_k^\T\vx_i\right),\\
%%%%%%%%%%%%%%%%%%%%%%%%%%%%%%%%%%%%%%%%%%%%%%%%%%%%%%%%%%%% 
\vg_k&:=\frac{1}{n}\sum_{i=1}^n\left(e_i    {a}_k\sigma^{(1)}(\eps {\vw}_k^\T\vx_i)\vx_i+y_ia_k\vx_i\right).
\end{align*}
%%%%%%%%%%%%%%%%%%%%%%%%%%%%%%%%%%%%%%%%%%%%%%%%%%%%%%%%%%%%
In other words, the real dynamics can be written into
%%%%%%%%%%%%%%%%%%%%%%%%%%%%%%%%%%%%%%%%%%%%%%%%%%%%%%%%%%%%
\begin{equation}\label{append...eq...text...Condense...Difference...RealDynamicsRewritten}
\begin{aligned}
 \frac{\D \begin{pmatrix}\frac{\nu}{\eps}a_k\\
\vw_k \end{pmatrix}}{\D t}
=\mA\begin{pmatrix}\frac{\nu}{\eps}a_k\\
\vw_k \end{pmatrix}
+
\begin{pmatrix}f_k\\
\frac{\nu}{\eps}\vg_k \end{pmatrix},\quad \begin{pmatrix}\frac{\nu}{\eps}a_k(0)\\
\vw_k(0) \end{pmatrix}=\begin{pmatrix}\frac{\nu}{\eps}a_k^0\\
\vw_k^0 \end{pmatrix},
\end{aligned}    
\end{equation}
%%%%%%%%%%%%%%%%%%%%%%%%%%%%%%%%%%%%%%%%%%%%%%%%%%%%%%%%%%%%
and its solution reads
%%%%%%%%%%%%%%%%%%%%%%%%%%%%%%%%%%%%%%%%%%%%%%%%%%%%%%%%%%%%
\begin{equation}
\label{append...eq...text...Condense...Difference...RealDynamicsSolution}
\begin{aligned}
\begin{pmatrix}\frac{\nu}{\eps}a_k\\
\vw_k \end{pmatrix}
=\exp\left(t\mA\right)\begin{pmatrix}\frac{\nu}{\eps}a_k^0\\
\vw_k^0 \end{pmatrix}
+
\int_{0}^t\exp\left((t-s)\mA\right)\begin{pmatrix}f_k(s)\\
\frac{\nu}{\eps}\vg_k(s) \end{pmatrix}\D s.
\end{aligned}        
\end{equation}
%%%%%%%%%%%%%%%%%%%%%%%%%%%%%%%%%%%%%%%%%%%%%%%%%%%%%%%%%%%%
\subsection{$\vw$-lag regime}
% For the $\vw$-lag form
% %%%%%%%%%%%%%%%%%%%%%%%%%%%%%%%%%%%%%%%%%%%%%%%%%%%%%%%%%%%%
% \begin{equation*} 
% \begin{aligned}
% \frac{\nu}{\eps}\frac{\D {a}_k}{\D {t}}& =  \left(-\frac{1}{n}\sum_{i=1}^n e_i \frac{\sigma(\eps{\vw}_k^\T\vx_i)}{\eps}-\frac{1}{n}\sum_{i=1}^n y_i   {\vw}_k^\T\vx_i+\vw_k^\T\vz\right), \\
% %%%%%%%%%%%%%%%%%%%%%%%%%%%
% \frac{\D  {\vw}_k}{\D {t}}& = \frac{\nu}{\eps}\left(-\frac{1}{n}\sum_{i=1}^ne_i    {a}_k\sigma^{(1)}(\eps {\vw}_k^\T\vx_i)\vx_i-\frac{1}{n}\sum_{i=1}^n y_i     {a}_k\vx_i+a_k\vz\right).
% %%%%%%%%%%%%%%%%%%%%%%%%%%%%%%%%%
% \end{aligned}
% \end{equation*} 
% %%%%%%%%%%%%%%%%%%%%%%%%%%%%%%%%%%%%%%%%%%%%%%%%%%%%%%%%%%%%
% %%%%%%%%%%%%%%%%%%%%%%%%%%%%%%%%%%%%%%%%%%%%%%%%%%%%%%%%%%%%
\begin{definition}[Neuron $2$-energy, $\vw$-lag regime]
In real dynamics, we define the  $2$-energy at time $t$ for each single neuron, i.e., for each $k\in[m]$, 
%%%%%%%%%%%%%%%%%%%%%%%%%%%%%%%%%%%%%%%%%%%%%%%%%%%%%%%%%%%%
\begin{equation}
q_k(t):=\left(\frac{\nu^2}{\eps^2}\Abs{a_k(t)}^2 + \Norm{\vw_k(t)}_2^2\right)^{\frac{1}{2}}.
\end{equation}
%%%%%%%%%%%%%%%%%%%%%%%%%%%%%%%%%%%%%%%%%%%%%%%%%%%%%%%%%%%%
\end{definition}
%%%%%%%%%%%%%%%%%%%%%%%%%%%%%%%%%%%%%%%%%%%%%%%%%%%%%%%%%%%%
\noindent  We denote 
%%%%%%%%%%%%%%%%%%%%%%%%%%%%%%%%%%%%%%%%%%%%%%%%%%%%%%%%%%%%
\begin{equation}
q_{\max}(t):=\max_{k\in[m]} q_k(t).
\end{equation}
%%%%%%%%%%%%%%%%%%%%%%%%%%%%%%%%%%%%%%%%%%%%%%%%%%%%%%%%%%%%
%%%%%%%%%%%%%%%%%%%%%%%%%%%%%%%%%%%%%%%%%%%%%%%%%%%%%%%%%%%%
For simplicity, we hereafter drop the $(t)$s for all $q_k(t)$ and $q_{\max}(t)$.  
Then the estimates on  $\{f_k, \vg_k\}_{k=1}^m$ read
%%%%%%%%%%%%%%%%%%%%%%%%%%%%%%%%%%%%%%%%%%%%%%%%%%%%%%%%%%%%
\begin{proposition}\label{append...prop...CellProblemEstimates...Wlag}
For any time $t>0$,
%%%%%%%%%%%%%%%%%%%%%%%%%%%%%%%%%%%%%%%%%%%%%%%%%%%%%%%%%%%%
\begin{equation}
\begin{aligned}
%%%%%%%%%%%%%%%%%%%%%%%%%%%%%%%%%%%%%%%%%%%%%%%%%%%%%%%%%%%%
\Abs{f_k}&\leq m\eps^2 q_{\max}^2\Norm{{\vw}_k}_2+\eps \Norm{{\vw}_k}_2^2,\\
%%%%%%%%%%%%%%%%%%%%%%%%%%%
\Norm{\vg_k}_2&\leq m\eps^2 q_{\max}^2\Abs{a_k}+\eps \Norm{{\vw}_k}_2\Abs{a_k}.
\end{aligned}    
\end{equation}
%%%%%%%%%%%%%%%%%%%%%%%%%%%%%%%%%%%%%%%%%%%%%%%%%%%%%%%%%%%%
\end{proposition}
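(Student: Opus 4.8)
The plan is to exploit the identity $e_i = f_{\vtheta}(\vx_i)-y_i$, i.e.\ $e_i+y_i = f_{\vtheta}(\vx_i)$, and to regroup the sums defining $f_k$ and $\vg_k$ so that the $y_i$-terms cancel against part of the $e_i$-terms. What remains are two genuinely small contributions: one proportional to the network output $f_{\vtheta}(\vx_i)$, which is tiny at the initial stage because every weight is small, and one coming from the first- or second-order Taylor remainder of $\sigma$ at $0$ (here \Cref{Assumption....ActivationFunctions}, with $\sigma(0)=0$, $\sigma^{(1)}(0)=1$, $\Norm{\sigma^{(2)}(\cdot)}_{\infty}\le C_L$, is what matters).

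Concretely, for each $i\in[n]$ I would write
\[
e_i\frac{\sigma(\eps\vw_k^{\T}\vx_i)}{\eps} + y_i\vw_k^{\T}\vx_i
= f_{\vtheta}(\vx_i)\,\frac{\sigma(\eps\vw_k^{\T}\vx_i)}{\eps}
+ y_i\Bigl(\vw_k^{\T}\vx_i - \frac{\sigma(\eps\vw_k^{\T}\vx_i)}{\eps}\Bigr),
\]
\[
e_i a_k\sigma^{(1)}(\eps\vw_k^{\T}\vx_i)\vx_i + y_i a_k\vx_i
= f_{\vtheta}(\vx_i)\,a_k\sigma^{(1)}(\eps\vw_k^{\T}\vx_i)\vx_i
+ y_i a_k\bigl(1-\sigma^{(1)}(\eps\vw_k^{\T}\vx_i)\bigr)\vx_i .
\]
Then I would invoke three elementary bounds that follow from \Cref{Assumption....ActivationFunctions} (and the normalization $\Norm{\vx_i}_2\le 1$, $\Abs{y_i}\le 1$, $C_L=1$, which hold up to order-one constants under \Cref{Assumption....ActivationFunctions} and \Cref{assumption...GenericData}): $\Abs{\sigma(x)}\le\Abs{x}$, $\Abs{\sigma(x)-x}\le\tfrac12 x^2$, and $\Abs{\sigma^{(1)}(x)-1}\le\Abs{x}$. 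These give $\frac{\Abs{\sigma(\eps\vw_k^{\T}\vx_i)}}{\eps}\le\Norm{\vw_k}_2$, $\Abs{\vw_k^{\T}\vx_i-\frac{\sigma(\eps\vw_k^{\T}\vx_i)}{\eps}}\le\frac{\eps}{2}\Norm{\vw_k}_2^2$, and $\Abs{1-\sigma^{(1)}(\eps\vw_k^{\T}\vx_i)}\le\eps\Norm{\vw_k}_2$.

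The key quantitative input is the bound on $\Abs{f_{\vtheta}(\vx_i)}$ in terms of $q_{\max}$. Since $f_{\vtheta}(\vx_i)=\sum_{k'=1}^m\nu a_{k'}\sigma(\eps\vw_{k'}^{\T}\vx_i)$ and $\Abs{\sigma(\eps\vw_{k'}^{\T}\vx_i)}\le\eps\Norm{\vw_{k'}}_2$, while the definition of the neuron $2$-energy gives $\tfrac{\nu}{\eps}\Abs{a_{k'}}\le q_{k'}\le q_{\max}$ and $\Norm{\vw_{k'}}_2\le q_{k'}\le q_{\max}$, one obtains $\nu\Abs{a_{k'}}\Norm{\vw_{k'}}_2\le\eps q_{\max}^2$ and hence $\Abs{f_{\vtheta}(\vx_i)}\le m\eps^2 q_{\max}^2$. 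Substituting all of this into the two decompositions above, taking norms term by term, and averaging over $i\in[n]$ yields
\[
\Abs{f_k}\le m\eps^2 q_{\max}^2\Norm{\vw_k}_2+\tfrac{\eps}{2}\Norm{\vw_k}_2^2\le m\eps^2 q_{\max}^2\Norm{\vw_k}_2+\eps\Norm{\vw_k}_2^2,
\]
\[
\Norm{\vg_k}_2\le m\eps^2 q_{\max}^2\Abs{a_k}+\eps\Abs{a_k}\Norm{\vw_k}_2,
\]
which is the claim.

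I do not expect a serious obstacle here; the proposition is essentially a bookkeeping lemma. The one point requiring care is the matching of the scaling factors $\nu/\eps$ inside $q_k$ with the factors $\nu a_{k'}$ and $\eps\vw_{k'}$ that appear naturally in $f_{\vtheta}$, so that $\Abs{f_{\vtheta}(\vx_i)}$ is correctly bounded by $m\eps^2 q_{\max}^2$ rather than some other power of $\eps$; the other is that several order-one constants ($\max_i\Norm{\vx_i}_2$, $\max_i\Abs{y_i}$, $C_L$) are being absorbed into the stated bounds, and if one prefers to keep them explicit the inequalities hold with an extra $\fO(1)$ prefactor coming from the standing assumptions.
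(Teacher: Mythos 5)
Your proposal is correct and follows essentially the same route as the paper: the paper also rewrites $e_i+y_i=f_{\vtheta}(\vx_i)=\sum_{k'}\nu a_{k'}\sigma(\eps\vw_{k'}^{\T}\vx_i)$, bounds the leftover terms via the Taylor estimates $\abs{\sigma(x)}\lesssim\abs{x}$, $\abs{\sigma(x)-x}\lesssim x^2$, $\abs{\sigma^{(1)}(x)-1}\lesssim\abs{x}$, and controls $\nu\eps\abs{a_{k'}}\norm{\vw_{k'}}_2$ by $\eps^2 q_{k'}^2$ (the paper via AM--GM, you via the product of the two components of $q_{k'}$ — an immaterial difference), with the same order-one constants ($C_L$, $\norm{\vx_i}_2$, $\abs{y_i}$) absorbed exactly as in the paper's own computation.
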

%%%%%%%%%%%%%%%%%%%%%%%%%%%%%%%%%%%%%%%%%%%%%%%%%%%%%%%%%%%%
\begin{proof}
We obtain that 
%%%%%%%%%%%%%%%%%%%%%%%%%%%%%%%%%%%%%%%%%%%%%%%%%%%%%%%%%%%%
\begin{align*}
%%%%%%%%%%%%%%%%%%%%%%%%%%%%%%%%%%%%%%%%%%%%%%%%%%%%%%%%%%%%
\Abs{f_k}&=\Abs{\frac{1}{n}\sum_{i=1}^n \left((e_i+y_i)\frac{\sigma(\eps{\vw}_k^\T\vx_i)}{\eps}+y_i   {\vw}_k^\T\vx_i-y_i\frac{\sigma(\eps{\vw}_k^\T\vx_i)}{\eps}\right)}\\
%%%%%%%%%%%%%%%%%%%%%%%
&=\Abs{\frac{1}{n}\sum_{i=1}^n \left(\left(\sum_{k=1}^m \nu a_k \sigma(\eps{\vw}_k^\T\vx_i)\right)\frac{\sigma(\eps{\vw}_k^\T\vx_i)}{\eps}+y_i   {\vw}_k^\T\vx_i-y_i\frac{\sigma(\eps{\vw}_k^\T\vx_i)}{\eps}\right)}\\
%%%%%%%%%%%%%%%%%%%%%%%%
&\leq \frac{1}{n}\sum_{i=1}^n \left(\left(\sum_{k=1}^m \nu\eps \Abs{a_k} \Norm{{\vw}_k}_2 \right)\Norm{{\vw}_k}_2+  \eps ({\vw}_k^\T\vx_i)^2 \right)\\
%%%%%%%%%%%%%%%%%%%%%%%
&\leq\frac{1}{2n}\sum_{i=1}^n \left(\nu\eps\left(\sum_{k=1}^m \left(\frac{\nu}{\eps} \Abs{a_k}^2+\frac{\eps}{\nu} \Norm{{\vw}_k}_2^2 \right)\right)\Norm{{\vw}_k}_2+  \eps ({\vw}_k^\T\vx_i)^2 \right)\\
%%%%%%%%%%%%%%%%%%%%%%%
&\leq \frac{1}{2n}\sum_{i=1}^n \left(\left(\sum_{k=1}^m \eps^2 q_k^2 \right)\Norm{{\vw}_k}_2+  \eps ({\vw}_k^\T\vx_i)^2 \right)\\
%%%%%%%%%%%%%%%%%%%%%%%
&\leq \frac{1}{2n}\sum_{i=1}^n \left(\left(\sum_{k=1}^m \eps^2 q_k^2 \right)\Norm{{\vw}_k}_2+  \eps \Norm{{\vw}_k}_2^2 \right)\\
%%%%%%%%%%%%%%%%%%%%%%
&\leq m\eps^2 q_{\max}^2\Norm{{\vw}_k}_2+\eps \Norm{{\vw}_k}_2^2,
%%%%%%%%%%%%%%%%%%%%%%%%%%%%%%%%%%%%%%%%%%%%%%%%%%%%%%%%%%%%
%%%%%%%%%%%%%%%%%%%%%%%%%%%%%%%%%%%%%%%%%%%%%%%%%%%%%%%%%%%%
%%%%%%%%%%%%%%%%%%%%%%%%%%%%%%%%%%%%%%%%%%%%%%%%%%%%%%%%%%%%
%%%%%%%%%%%%%%%%%%%%%%%%%%%%%%%%%%%%%%%%%%%%%%%%%%%%%%%%%%%%   
\end{align*}
%%%%%%%%%%%%%%%%%%%%%%%%%%%%%%%%%%%%%%%%%%%%%%%%%%%%%%%%%%%%
and 
%%%%%%%%%%%%%%%%%%%%%%%%%%%%%%%%%%%%%%%%%%%%%%%%%%%%%%%%%%%%
\begin{align*}
%%%%%%%%%%%%%%%%%%%%%%%%%%%%%%%%%%%%%%%%%%%%%%%%%%%%%%%%%%%%
\Norm{\vg_k}_2&=\Norm{\frac{1}{n}\sum_{i=1}^n \left((e_i+y_i){a}_k\sigma^{(1)}(\eps {\vw}_k^\T\vx_i)\vx_i+y_i   {a}_k\vx_i-y_i{a}_k\sigma^{(1)}(\eps {\vw}_k^\T\vx_i)\vx_i\right)}_2\\
%%%%%%%%%%%%%%%%%%%%
&=\Norm{\frac{1}{n}\sum_{i=1}^n \left(\left(\sum_{k=1}^m \nu a_k \sigma(\eps{\vw}_k^\T\vx_i)\right){a}_k\sigma^{(1)}(\eps {\vw}_k^\T\vx_i)\vx_i+y_i   {a}_k\vx_i-y_i{a}_k\sigma^{(1)}(\eps {\vw}_k^\T\vx_i)\vx_i\right)}_2\\
%%%%%%%%%%%%%%%%%%%%
&\leq \frac{1}{n}\sum_{i=1}^n \left(\left(\sum_{k=1}^m \nu\eps \Abs{a_k} \Norm{{\vw}_k}_2 \right)\Abs{a_k}+  \eps \Norm{{\vw}_k}_2\Abs{a_k} \right)\\
%%%%%%%%%%%%%%%%%%%%
&\leq \frac{1}{2n}\sum_{i=1}^n \left( \left(\sum_{k=1}^m \eps^2 q_k^2 \right)\Abs{a_k}+  \eps\Norm{{\vw}_k}_2\Abs{a_k}\right)\\
%%%%%%%%%%%%%%%%%%%%%%%
 &\leq m\eps^2 q_{\max}^2\Abs{a_k}+\eps \Norm{{\vw}_k}_2\Abs{a_k}.
%%%%%%%%%%%%%%%%%%%%%%%%%%%%%%%%%%%%%%%%%%%%%%%%%%%%%%%%%%%%
%%%%%%%%%%%%%%%%%%%%%%%%%%%%%%%%%%%%%%%%%%%%%%%%%%%%%%%%%%%%
%%%%%%%%%%%%%%%%%%%%%%%%%%%%%%%%%%%%%%%%%%%%%%%%%%%%%%%%%%%%
%%%%%%%%%%%%%%%%%%%%%%%%%%%%%%%%%%%%%%%%%%%%%%%%%%%%%%%%%%%% 
\end{align*}
%%%%%%%%%%%%%%%%%%%%%%%%%%%%%%%%%%%%%%%%%%%%%%%%%%%%%%%%%%%%
%%%%%%%%%%%%%%%%%%%%%%%%%%%%%%%%%%%%%%%%%%%%%%%%%%%%%%%%%%%%
%%%%%%%%%%%%%%%%%%%%%%%%%%%%%%%%%%%%%%%%%%%%%%%%%%%%%%%%%%%% 
\end{proof}
%%%%%%%%%%%%%%%%%%%%%%%%%%%%%%%%%%%%%%%%%%%%%%%%%%%%%%%%%%%%
We denote a useful quantity
%%%%%%%%%%%%%%%%%%%%%%%%%%%%%%%%%%%%%%%%%%%%%%%%%%%%%%%%%%%%
\begin{equation} 
\phi(t):= \sup_{0\leq s\leq t} {q}_{\max}(s).
\end{equation}
%%%%%%%%%%%%%%%%%%%%%%%%%%%%%%%%%%%%%%%%%%%%%%%%%%%%%%%%%%%%
Then directly from \Cref{append...lemma..Initialization}, we have with probability at least $1-\delta$ over the choice of $\vtheta^0$,
%%%%%%%%%%%%%%%%%%%%%%%%%%%%%%%%%%%%%%%%%%%%%%%%%%%%%%%%%%%%
\begin{equation}
\max\limits_{k\in[m]}\left\{\abs{a_k^0},\;\norm{\vw_k^0}_{\infty}\right \}\leq \sqrt{2\log\frac{2m(d+1)}{\delta}},
\end{equation}
%%%%%%%%%%%%%%%%%%%%%%%%%%%%%%%%%%%%%%%%%%%%%%%%%%%%%%%%%%%%
hence
%%%%%%%%%%%%%%%%%%%%%%%%%%%%%%%%%%%%%%%%%%%%%%%%%%%%%%%%%%%%
\begin{equation} \label{apend...eq...text...PHI}
 \phi(0)\leq\sqrt{2(d+1)\log\frac{2m(d+1)}{\delta}}.
\end{equation}
%%%%%%%%%%%%%%%%%%%%%%%%%%%%%%%%%%%%%%%%%%%%%%%%%%%%%%%%%%%%
We define
%%%%%%%%%%%%%%%%%%%%%%%%%%%%%%%%%%%%%%%%%%%%%%%%%%%%%%%%%%%%
\begin{equation}
T_{\mathrm{eff}}: = \inf\left\{t>0 \mid m\eps^2\phi^3(t)>m^{-\tau},\quad \tau=\frac{\gamma-\gamma'-1}{4}\right\},
\end{equation}
%%%%%%%%%%%%%%%%%%%%%%%%%%%%%%%%%%%%%%%%%%%%%%%%%%%%%%%%%%%%
then for $m$ large enough, as $\gamma-\gamma'>1$, based on \eqref{apend...eq...text...PHI},
%%%%%%%%%%%%%%%%%%%%%%%%%%%%%%%%%%%%%%%%%%%%%%%%%%%%%%%%%%%%
\begin{equation}
m\eps^2\phi^3(0)\leq  m\eps^2\left(2(d+1)\log\frac{2m(d+1)}{\delta}\right)^{\frac{3}{2}} \leq m^{-\frac{\gamma-\gamma'-1}{2}},
\end{equation}
%%%%%%%%%%%%%%%%%%%%%%%%%%%%%%%%%%%%%%%%%%%%%%%%%%%%%%%%%%%%
hence $T_{\mathrm{eff}}\geq 0$.

We observe further that  by taking the $2$-norm on both sides of \eqref{append...eq...text...Condense...Difference...RealDynamicsSolution}, the following holds 
%%%%%%%%%%%%%%%%%%%%%%%%%%%%%%%%%%%%%%%%%%%%%%%%%%%%%%%%%%%%
\begin{align*}
q_k(t)&\leq\exp\left(t\Norm{\mA}_{2\to 2}\right)q_k(0)+\int_{0}^t\exp\left((t-s)\Norm{\mA}_{2\to 2}\right)\left(m\eps^2q_{\max}^2(s)+\eps q_{\max}(s)\right)q_k(s)\D s,     
\end{align*}
%%%%%%%%%%%%%%%%%%%%%%%%%%%%%%%%%%%%%%%%%%%%%%%%%%%%%%%%%%%%
%%%%%%%%%%%%%%%%%%%%%%%%%%%%%%%%%%%%%%%%%%%%%%%%%%%%%%%%%%%%
and by taking supreme over the  index $k$ and  time $0\leq t\leq T_{\mathrm{eff}}$   on both sides,  and  for  large enough $m$, the following holds  
%%%%%%%%%%%%%%%%%%%%%%%%%%%%%%%%%%%%%%%%%%%%%%%%%%%%%%%%%%%%
\begin{equation}
\begin{aligned}
\phi(t)&\leq \phi(0)\exp(t)+ 2m^{-\min\left\{\frac{1}{2},\tau\right\}}\int_0^t\exp(t-s)\D s \\ 
%%%%%%%%%%%%%%%%%%%%%%%%%%%%%%%%%%%%%%%%%%%%%%%%%%%%%%%%%%%%
&\leq \phi(0)\exp(t)+2m^{-\min\left\{\frac{1}{2},\tau\right\}}\left(\exp(t)-1\right)\\
%%%%%%%%%%%%%%%%%%%%%%%%%%%%%%%%%%%%%%%%%%%%%%%%%%%%%%%%%%%%
&\leq \phi(0)\exp(t)+2m^{-\min\left\{\frac{1}{2},\tau\right\}}\exp(t),
\end{aligned}
\end{equation}
%%%%%%%%%%%%%%%%%%%%%%%%%%%%%%%%%%%%%%%%%%%%%%%%%%%%%%%%%%%%
then based on \eqref{apend...eq...text...PHI}, we have with probability $1-\delta$ over the choice of $\vtheta^0$, for sufficiently large $m$,
%%%%%%%%%%%%%%%%%%%%%%%%%%%%%%%%%%%%%%%%%%%%%%%%%%%%%%%%%%%%
\begin{equation}
\begin{aligned}
\phi(t)  
&\leq \phi(0)\exp(t)+2m^{-\min\left\{\frac{1}{2},\tau\right\}}\exp(t)\\
%%%%%%%%%%%%%%%%%%%%%%%%%%%%%%%%%%%%%%%%%%%%%%%%%%%%%%%%%%%%
&\leq 2\phi(0)\exp(t)\leq 2\sqrt{2(d+1)\log\frac{2m(d+1)}{\delta}}\exp(t),  
%%%%%%%%%%%%%%%%%%%%%%%%%%%%%%%%%%%%%%%%%%%%%%%%%%%%%%%%%%%%
\end{aligned}  
%%%%%%%%%%%%%%%%%%%%%%%%%%%%%%%%%%%%%%%%%%%%%%%%%%%%%%%%%%%%
\end{equation}
%%%%%%%%%%%%%%%%%%%%%%%%%%%%%%%%%%%%%%%%%%%%%%%%%%%%%%%%%%%%
we set $t_0$ as the time satisfying
%%%%%%%%%%%%%%%%%%%%%%%%%%%%%%%%%%%%%%%%%%%%%%%%%%%%%%%%%%%%
\begin{equation}
2\sqrt{2(d+1)\log\frac{2m(d+1)}{\delta}} \exp(t_0)  =\frac{1}{2} m^{\frac{\gamma-\gamma'-1}{4}},  
\end{equation}
%%%%%%%%%%%%%%%%%%%%%%%%%%%%%%%%%%%%%%%%%%%%%%%%%%%%%%%%%%%%
then we obtain that, for any $\eta_0>\frac{\gamma-\gamma'-1}{100}>0$,
%%%%%%%%%%%%%%%%%%%%%%%%%%%%%%%%%%%%%%%%%%%%%%%%%%%%%%%%%%%%
\begin{equation}
T_{\mathrm{eff}}\geq t_0> \log\left(\frac{1}{4}\right)+\left({\frac{\gamma-\gamma'-1}{4}}-\eta_0\right)\log(m).
\end{equation}
% %%%%%%%%%%%%%%%%%%%%%%%%%%%%%%%%%%%%%%%%%%%%%%%%%%%%%%%%%%%%
% hence 
% %%%%%%%%%%%%%%%%%%%%%%%%%%%%%%%%%%%%%%%%%%%%%%%%%%%%%%%%%%%%
% \begin{equation}
% T_{\mathrm{eff}}\geq  t_0>\log\left(\frac{1}{4}\right)+ \frac{6}{25}\left(\gamma-\gamma'-1\right) \log(m).
% \end{equation}
% %%%%%%%%%%%%%%%%%%%%%%%%%%%%%%%%%%%%%%%%%%%%%%%%%%%%%%%%%%%%
%%%%%%%%%%%%%%%%%%%%%%%%%%%%%%%%%%%%%%%%%%%%%%%%%%%%%%%%%%%%
\begin{theorem}[Condensed regime, $\vw$-lag regime]\label{append...thm..CondensedRegime...W-Einent...PartOne}
Given any $\delta\in(0,1)$, under \Cref{Assumption....ActivationFunctions}, \Cref{assumption...GenericData} and \Cref{assump...LimitExistence}, if $\gamma> 1$ and $0\leq\gamma'<\gamma-1$, 
then	with probability at least $1-\delta$ over the choice of $\vtheta^0$,
we have  
%%%%%%%%%%%%%%%%%%%%%%%%%%%%%%%%%%%%%%%%%%%%%%%%%%%%%%%%%%%% 
\begin{equation}\label{append...eq...thm...Condense...WLag...PartOne}
\lim_{m\to+\infty} \sup\limits_{t\in[0,T_{\mathrm{eff}}]}\mathrm{RD}(\vtheta_{\vw}(t))=+\infty,
\end{equation}
%%%%%%%%%%%%%%%%%%%%%%%%%%%%%%%%%%%%%%%%%%%%%%%%%%%%%%%%%%%%
and
%%%%%%%%%%%%%%%%%%%%%%%%%%%%%%%%%%%%%%%%%%%%%%%%%%%%%%%%%%%% 
\begin{equation}\label{append...eq...thm...Condense...WLag...PartTwo}
\lim_{m\to+\infty} \sup\limits_{t\in[0,T_{\mathrm{eff}}]}\frac{\Norm{\vtheta_{\vw, \vz}(t) }_2}{\Norm{\vtheta_{\vw}(t)}_2} =1.
\end{equation}
%%%%%%%%%%%%%%%%%%%%%%%%%%%%%%%%%%%%%%%%%%%%%%%%%%%%%%%%%%%%
\end{theorem}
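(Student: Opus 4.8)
The plan is to compare each neuron's true trajectory $\vq_k(t)=\bigl[\tfrac{\nu}{\eps}a_k(t),\,\vw_k(t)^{\T}\bigr]^{\T}$ with the explicit linear trajectory $\vq_k^{\mathrm{lin}}(t):=\exp(t\mA)\bigl[\tfrac{\nu}{\eps}a_k^0,(\vw_k^0)^{\T}\bigr]^{\T}$ supplied by \Cref{append...prop...CondensedRegime...LinearODESolution}, and then to read off both assertions by evaluating at a single time slice that grows with $m$. Fix $\eta_0\in\bigl(\tfrac{\gamma-\gamma'-1}{100},\tfrac{\gamma-\gamma'-1}{4}\bigr)$ and set $t_m:=\log(\tfrac{1}{4})+(\tau-\eta_0)\log m$ with $\tau=\tfrac{\gamma-\gamma'-1}{4}$; the estimate on $T_{\mathrm{eff}}$ established above gives $0\le t_m\le T_{\mathrm{eff}}$ for all large $m$, while $r^2(t_m)=\exp(\Norm{\vz}_2 t_m)\ge 4^{-\Norm{\vz}_2}m^{\Norm{\vz}_2(\tau-\eta_0)}$ is a positive power of $m$, hence $r^2(t_m)\to+\infty$. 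Since the two suprema dominate the value at $t=t_m$, and the ratio in \eqref{append...eq...thm...Condense...WLag...PartTwo} never exceeds $1$ (because $\Norm{\vtheta_{\vw,\vz}(t)}_2\le\Norm{\vtheta_{\vw}(t)}_2$), it suffices to prove $\mathrm{RD}(\vtheta_{\vw}(t_m))\to+\infty$ and $\Norm{\vtheta_{\vw,\vz}(t_m)}_2/\Norm{\vtheta_{\vw}(t_m)}_2\to1$.

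First I would convert the pointwise bounds of \Cref{append...prop...CellProblemEstimates...Wlag} into a uniform discrepancy estimate. Since $q_{\max}\le\phi$ and $\Norm{\vw_k}_2\le q_k\le\phi$, those bounds give $\Norm{(f_k,\tfrac{\nu}{\eps}\vg_k)}_2\lesssim m\eps^2\phi^3+\eps\phi^2$, so the variation-of-constants formula \eqref{append...eq...text...Condense...Difference...RealDynamicsSolution} together with $\Norm{\mA}_{2\to2}=\Norm{\vz}_2$ yields, for every $k$ and every $t\in[0,T_{\mathrm{eff}}]$,
\[
\Norm{\vw_k(t)-\vw_k^{\mathrm{lin}}(t)}_2\le\Norm{\vq_k(t)-\vq_k^{\mathrm{lin}}(t)}_2\lesssim\int_0^t e^{(t-s)\Norm{\vz}_2}\bigl(m\eps^2\phi^3(s)+\eps\phi^2(s)\bigr)\diff{s}.
\]
By the definition of $T_{\mathrm{eff}}$ one has $m\eps^2\phi^3(s)\le m^{-\tau}$ on $[0,T_{\mathrm{eff}}]$, and since $2\tau=\gamma_2-\tfrac{1}{2}$ the same inequality forces $\eps\phi^2(s)\le m^{-1/2}$ there as well; hence, with $\beta:=\min\{\tau,\tfrac{1}{2}\}>0$ and a constant $C$ depending only on the $c$ of \Cref{assumption...GenericData}, $\Norm{\vw_k(t)-\vw_k^{\mathrm{lin}}(t)}_2\le C\,m^{-\beta}\,r^2(t)$ on $[0,T_{\mathrm{eff}}]$.

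Next I would exploit the structure of the linear solution. Writing $\vw_k^0=\langle\vw_k^0,\hat{\vz}\rangle\hat{\vz}+\vw_{k,\perp}^0$ with $\vw_{k,\perp}^0\perp\hat{\vz}$, \Cref{append...prop...CondensedRegime...LinearODESolution} gives $\vw_k^{\mathrm{lin}}(t)=\beta_k(t)\hat{\vz}+\vw_{k,\perp}^0$ with $\beta_k(t)=\tfrac{1}{2}r^2(t)\,\xi_k+\tfrac{1}{2}r^{-2}(t)\bigl(\langle\vw_k^0,\hat{\vz}\rangle-\tfrac{\nu}{\eps}a_k^0\bigr)$ and $\xi_k:=\tfrac{\nu}{\eps}a_k^0+\langle\vw_k^0,\hat{\vz}\rangle$; thus the component of $\vw_k^{\mathrm{lin}}$ transverse to $\hat{\vz}$ is frozen at $\vw_{k,\perp}^0$ and its $\hat{\vz}$-component grows like $r^2(t)$. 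The $\xi_k$ are independent centered Gaussians with variance $\ge1$, so Gaussian anti-concentration gives $\Prob(\abs{\xi_k}\ge c_1)\ge\tfrac{1}{4}$ for a universal $c_1>0$, and a Chernoff bound on these i.i.d.\ indicators shows that, with probability at least $1-\tfrac{\delta}{4}$, the set $\mathcal K:=\{k:\abs{\xi_k}\ge c_1\}$ satisfies $\abs{\mathcal K}\ge m/8$. Put $R_0:=\sqrt{2d\log\tfrac{2m(d+1)}{\delta}}$, which by \Cref{append...lemma..Initialization} bounds $\max_k\Norm{\vw_k^0}_2$. Because $\gamma'\ge0$ forces $\nu/\eps$ to grow slower than any positive power of $m$, the subtracted terms below are negligible against $r^2(t_m)$, so for $k\in\mathcal K$ and large $m$, $\abs{\beta_k(t_m)}\ge\tfrac{1}{2}c_1 r^2(t_m)-(1+\tfrac{\nu}{\eps})R_0\ge\tfrac{1}{4}c_1 r^2(t_m)$, and hence $\Norm{\vw_k(t_m)}_2\ge\abs{\beta_k(t_m)}-C m^{-\beta}r^2(t_m)\ge\tfrac{1}{8}c_1 r^2(t_m)$. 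Summing over $\mathcal K$ gives $\Norm{\vtheta_{\vw}(t_m)}_2^2\ge c_3\,m\,r^4(t_m)$ for some $c_3>0$.

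Finally I would conclude. For \eqref{append...eq...thm...Condense...WLag...PartOne}: \Cref{append...prop..UpperBoundandLowerBoundInitial} gives $\Norm{\vtheta_{\vw}(0)}_2\le\sqrt{3md/2}$, so $\mathrm{RD}(\vtheta_{\vw}(t_m))\ge\bigl(\Norm{\vtheta_{\vw}(t_m)}_2-\Norm{\vtheta_{\vw}(0)}_2\bigr)/\Norm{\vtheta_{\vw}(0)}_2\gtrsim r^2(t_m)/\sqrt d\to+\infty$. For \eqref{append...eq...thm...Condense...WLag...PartTwo}: $\Norm{\vtheta_{\vw}(t)}_2^2-\Norm{\vtheta_{\vw,\vz}(t)}_2^2=\sum_k\Norm{\vw_k(t)-\langle\vw_k(t),\hat{\vz}\rangle\hat{\vz}}_2^2$ is the total energy transverse to $\hat{\vz}$, and since the linear flow keeps each transverse component equal to $\vw_{k,\perp}^0$, the discrepancy bound yields $\Norm{\vw_k(t_m)-\langle\vw_k(t_m),\hat{\vz}\rangle\hat{\vz}}_2\le R_0+C m^{-\beta}r^2(t_m)$, whence $1-\Norm{\vtheta_{\vw,\vz}(t_m)}_2^2/\Norm{\vtheta_{\vw}(t_m)}_2^2\lesssim R_0^2/r^4(t_m)+m^{-2\beta}\to0$ because $r^4(t_m)$ is a positive power of $m$ while $R_0^2=\fO(\log m)$; combined with $\Norm{\vtheta_{\vw,\vz}(t)}_2\le\Norm{\vtheta_{\vw}(t)}_2$ this gives the claim. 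A union bound over the initialization estimates, the count $\abs{\mathcal K}\ge m/8$, and the $T_{\mathrm{eff}}$/$\phi$ control (each holding with probability at least $1-\delta/4$) produces the stated $1-\delta$. The main obstacle is the uniform-in-$k$ discrepancy estimate holding on all of $[0,T_{\mathrm{eff}}]$: the perturbations $f_k,\vg_k$ are themselves driven by the growing energies $q_k$, so one has to run the self-consistent bound $\phi(t)\le2\phi(0)e^t$ through the very definition of $T_{\mathrm{eff}}$ to keep $m\eps^2\phi^3$ and $\eps\phi^2$ below negative powers of $m$, which is precisely what makes the error strictly smaller order than the signal $r^2(t)$; the secondary subtlety is that the frozen transverse component $\vw_{k,\perp}^0$ has size $\fO(\sqrt{d\log m})$ and is overwhelmed only because $t_m$ is of order $\log m$, so the argument genuinely needs $T_{\mathrm{eff}}=\Omega(\log m)$.
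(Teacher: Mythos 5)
Your proposal is correct, and its skeleton coincides with the paper's: write the true trajectory via the variation-of-constants formula \eqref{append...eq...text...Condense...Difference...RealDynamicsSolution}, control the residual through \Cref{append...prop...CellProblemEstimates...Wlag} and the self-consistent bound $\phi(t)\le 2\phi(0)e^{t}$ that underlies the lower bound $T_{\mathrm{eff}}\gtrsim\frac{\gamma-\gamma'-1}{8}\log m$, and then evaluate everything at a single time of order $\log m$ where the aligned signal $\sim r^{2}(t)$ dominates both the $\fO(\sqrt{md\log m})$ initialization scale and the $\fO(\sqrt{m}\,m^{-\beta}r^{2}(t))$ residual. Where you genuinely diverge is in the probabilistic device for the lower bound on the aligned energy: the paper works with the aggregate quantity $\Norm{\Bar{\vtheta}_{\vw,\vz}(t)-\Bar{\vtheta}_{\vw,\vz}(0)}_2^2$, expands the square, and invokes Bernstein concentration for $\frac1m\sum_k(a_k^0)^2$, $\frac1m\sum_k\left<\vw_k^0,\hat{\vz}\right>^2$ and the cross term, arriving at the explicit bound $\frac{3}{32}m\left(r(t)-r^{-1}(t)\right)^4$; you instead argue neuron-by-neuron, using anti-concentration of $\xi_k=\frac{\nu}{\eps}a_k^0+\left<\vw_k^0,\hat{\vz}\right>$ and a Chernoff count to secure $\Omega(m)$ neurons with $\abs{\xi_k}\ge c_1$, then sum, and you lower-bound $\mathrm{RD}$ via the reverse triangle inequality on $\Norm{\vtheta_{\vw}(t_m)}_2$ rather than bounding the aligned part of the increment directly. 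Your route is more elementary (no cross-term algebra, no quadratic-form manipulation) and additionally yields a per-neuron alignment statement, while the paper's aggregate computation gives cleaner explicit constants valid for the whole time window rather than a single slice; both deliver the same $m\,r^4(t_m)$ growth against the frozen-plus-residual transverse energy, so the squeeze to $1$ in \eqref{append...eq...thm...Condense...WLag...PartTwo} and the divergence in \eqref{append...eq...thm...Condense...WLag...PartOne} follow identically. One small point to make explicit in a final write-up: your deduction $\eps\phi^2\le m^{-1/2}$ from $m\eps^2\phi^3\le m^{-\tau}$ treats $\eps$ as exactly $m^{-\gamma_2}$, which is only true up to sub-polynomial corrections under \Cref{assump...LimitExistence}; the paper makes the same move, so this is a shared (and harmless, for $m$ large) imprecision rather than a gap.
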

%%%%%%%%%%%%%%%%%%%%%%%%%%%%%%%%%%%%%%%%%%%%%%%%%%%%%%%%%%%%
\begin{proof}
Since we have 
%%%%%%%%%%%%%%%%%%%%%%%%%%%%%%%%%%%%%%%%%%%%%%%%%%%%%%%%%%%%
\begin{equation*}
\begin{aligned}
%%%%%%%%%%%%%%%%%%%%%%%%%%%%%%%%%%%%%%%%%%%%%%%%%%%%%%%%%%%%
\frac{\D \begin{pmatrix}\frac{\nu}{\eps}a_k\\
\vw_k \end{pmatrix}}{\D t}
=\mA\begin{pmatrix}\frac{\nu}{\eps}a_k\\
\vw_k \end{pmatrix}
+
\begin{pmatrix}f_k\\
\frac{\nu}{\eps}\vg_k \end{pmatrix},\quad \begin{pmatrix}\frac{\nu}{\eps}a_k(0)\\
\vw_k(0) \end{pmatrix}=\begin{pmatrix}\frac{\nu}{\eps}a_k^0\\
\vw_k^0 \end{pmatrix},
%%%%%%%%%%%%%%%%%%%%%%%%%%%%%%%%%%%%%%%%%%%%%%%%%%%%%%%%%%%%
%%%%%%%%%%%%%%%%%%%%%%%%%%%%%%%%%%%%%%%%%%%%%%%%%%%%%%%%%%%%
%%%%%%%%%%%%%%%%%%%%%%%%%%%%%%%%%%%%%%%%%%%%%%%%%%%%%%%%%%%%   
\end{aligned}    
\end{equation*}
%%%%%%%%%%%%%%%%%%%%%%%%%%%%%%%%%%%%%%%%%%%%%%%%%%%%%%%%%%%%%%%%%%%%%%%%%%%%%%%%%%%%%%%%%%%%%%%%%%%%%%%%%%%%%%%%%
and its solution reads
%%%%%%%%%%%%%%%%%%%%%%%%%%%%%%%%%%%%%%%%%%%%%%%%%%%%%%%%%%%%%%%%%%%%%%%%%%%%%%%%%%%%%%%%%%%%%%%%%%%%%%%%%%%%%%%%%
\begin{equation*}%\label{textproof,,,1}
\begin{aligned}
%%%%%%%%%%%%%%%%%%%%%%%%%%%%%%%%%%%%%%%%%%%%%%%%%%%%%%%%%%%%
\begin{pmatrix}\frac{\nu}{\eps}a_k\\
			\vw_k \end{pmatrix}
&=\exp\left(t\mA\right)\begin{pmatrix}\frac{\nu}{\eps}a_k^0\\
			\vw_k^0 \end{pmatrix}
+\int_{0}^t\exp\left((t-s)\mA\right)
\begin{pmatrix}f_k\\
			\frac{\nu}{\eps}\vg_k \end{pmatrix}\D s.
\end{aligned}
\end{equation*}
%%%%%%%%%%%%%%%%%%%%%%%%%%%%%%%%%%%%%%%%%%%%%%%%%%%%%%%%%%%%
As we notice that for any $k\in[m]$, $\begin{pmatrix}\frac{\nu}{\eps}a_k\\
			\vw_k \end{pmatrix}$ 
can be written into two parts,  the first one is the linear part, the second one is the residual part. For simplicity of proof, we need to introduce some further notations.

As we already identify the parameters $\vtheta_a=\mathrm{vec}(\{a_k\}_{k=1}^{m})$  and $\vtheta_{\vw}=\mathrm{vec}(\{\vw_k\}_{k=1}^{m})$, with some slight misuse of notations, we denote $\vtheta_a:=\mathrm{vec}(\left\{\frac{\nu}{\eps}a_k\right\}_{k=1}^{m})$.
From the observations above, 
%%%%%%%%%%%%%%%%%%%%%%%%%%%%%%%%%%%%%%%%%%%%%%%%%%%%%%%%%%%%
\[
\vtheta:=\mathrm{vec}\left(\vtheta_a, \vtheta_{\vw}\right)=\mathrm{vec}\left(\left\{\frac{\nu}{\eps}a_k\right\}_{k=1}^{m}, \{\vw_k\}_{k=1}^{m}\right),
\]
%%%%%%%%%%%%%%%%%%%%%%%%%%%%%%%%%%%%%%%%%%%%%%%%%%%%%%%%%%%%
and
$\vtheta_a$ and $\vtheta_{\vw}$ can be written into
%%%%%%%%%%%%%%%%%%%%%%%%%%%%%%%%%%%%%%%%%%%%%%%%%%%%%%%%%%%%
\begin{align*}
 \vtheta_a(t)&=    \Bar{\vtheta}_a(t)+\widetilde{\vtheta}_a(t),\\
%%%%%%%%%%%%%%%%%%%%%%%%%%
  \vtheta_{\vw}(t)&=    \Bar{\vtheta}_{\vw}(t)+\widetilde{\vtheta}_{\vw}(t),
\end{align*}
%%%%%%%%%%%%%%%%%%%%%%%%%%%%%%%%%%%%%%%%%%%%%%%%%%%%%%%%%%%%
where the $k$-th component of $\Bar{\vtheta}_a$ and  $\Bar{\vtheta}_{\vw}$ respectively reads
%%%%%%%%%%%%%%%%%%%%%%%%%%%%%%%%%%%%%%%%%%%%%%%%%%%%%%%%%%%%
\begin{align*}
\left(\Bar{\vtheta}_a\right)_k&:=   \frac{\nu}{\eps} \left(\frac{1}{ {2}}r^2(t)+\frac{1}{ {2}}r^{-2}(t)\right) a_k^0+\left(\frac{1}{ {2}}r^2(t)-\frac{1}{ {2}}r^{-2}(t)\right)\left<\vw_k^0, \hat{\vz}\right>,\\
%%%%%%%%%%%%%%%%%%%%%%%%%%
\left(\Bar{\vtheta}_{\vw}\right)_{k}&:=\frac{\nu}{\eps}\left(\frac{1}{ {2}}r^2(t)-\frac{1}{ {2}}r^{-2}(t)\right)a_k^0\hat{\vz} +\left(\frac{1}{ {2}}r^2(t)+\frac{1}{ {2}}r^{-2}(t)\right)\left<\vw_k^0, \hat{\vz}\right>\hat{\vz}\\
%%%%%%%%%%%%%%%%%%%%%%%%%%
&~~-\left<\vw_k^0, \hat{\vz}\right>\hat{\vz}+\vw_k^0,
\end{align*}
%%%%%%%%%%%%%%%%%%%%%%%%%%%%%%%%%%%%%%%%%%%%%%%%%%%%%%%%%%%%
 and the $k$-th component of $\widetilde{\vtheta}_a$ and  $\widetilde{\vtheta}_{\vw}$ altogether reads
%%%%%%%%%%%%%%%%%%%%%%%%%%%%%%%%%%%%%%%%%%%%%%%%%%%%%%%%%%%%
%%%%%%%%%%%%%%%%%%%%%%%%%%%%%%%%%%%%%%%%%%%%%%%%%%%%%%%%%%%%
\begin{align*}
%%%%%%%%%%%%%%%%%%%%%%%%%%%%%%%%%%%%%%%%%%%%%%%%%%%%%%%%%%%%
\begin{pmatrix}\left(\widetilde{\vtheta}_a(t)\right)_k\\
			\left(\widetilde{\vtheta}_{\vw}(t)\right)_{k}\end{pmatrix}
&:=  
\int_{0}^{t}\exp\left((t-s)\mA\right)\begin{pmatrix}f_k\\
\frac{\nu}{\eps}\vg_k \end{pmatrix}\D s.
%%%%%%%%%%%%%%%%%%%%%%%%%%%%%%%%%%%%%%%%%%%%%%%%%%%%%%%%%%%%
\end{align*}
%%%%%%%%%%%%%%%%%%%%%%%%%%%%%%%%%%%%%%%%%%%%%%%%%%%%%%%%%%%%
Moreover, we observe that  ${\vtheta}_{\vw}$ can be decomposed into two parts,  one is 
the projection of  $\vw^0$ into the direction of  $\vz$, i.e., ${\vtheta}_{\vw, \vz}$ , and the other is  the projection of  $\vw^0$ onto $\vz^{\perp},$ i.e., ${\vtheta}_{\vw, \vz^{\perp}}$. As  $\Bar{\vtheta}_{\vw}$  and $\widetilde{\vtheta}_{\vw}$ inherits the same structure as ${\vtheta}_{\vw}$, we may apply the same decomposition to $\Bar{\vtheta}_{\vw}$  and $\widetilde{\vtheta}_{\vw}$. Hence, we obtain that 
%%%%%%%%%%%%%%%%%%%%%%%%%%%%%%%%%%%%%%%%%%%%%%%%%%%%%%%%%%%%
\begin{align*}
{\vtheta}_{\vw, \vz}(t)&= \Bar{\vtheta}_{\vw, \vz}(t)+\widetilde{\vtheta}_{\vw, \vz}(t),\\   
%%%%%%%%%%%%%%%%%%%%%%%%%%%
{\vtheta}_{\vw, \vz^{\perp}}(t)&= \Bar{\vtheta}_{\vw, \vz^{\perp}}(t)+\widetilde{\vtheta}_{\vw, \vz^{\perp}}(t),
%%%%%%%%%%%%%%%%%%%%%%%%%%%
\end{align*}
%%%%%%%%%%%%%%%%%%%%%%%%%%%%%%%%%%%%%%%%%%%%%%%%%%%%%%%%%%%%
and these relations concerning $2$-norm hold simultaneously for any $t\geq 0$,
%%%%%%%%%%%%%%%%%%%%%%%%%%%%%%%%%%%%%%%%%%%%%%%%%%%%%%%%%%%%
\begin{align*}
\Norm{{\vtheta}_{\vw}(t)}_2^2&=\Norm{{\vtheta}_{\vw, \vz}(t)}_2^2+\Norm{{\vtheta}_{\vw, \vz^{\perp}}(t)}_2^2,   \\
%%%%%%%%%%%%%%%%%%%%%%%%%%%%%%
\Norm{\Bar{\vtheta}_{\vw}(t)}_2^2&=\Norm{\Bar{\vtheta}_{\vw, \vz}(t)}_2^2+\Norm{\Bar{\vtheta}_{\vw, \vz^{\perp}}(t)}_2^2, \\ 
%%%%%%%%%%%%%%%%%%%%%%%%%%%%%%
\Norm{\widetilde{\vtheta}_{\vw}(t)}_2^2&=\Norm{\widetilde{\vtheta}_{\vw, \vz}(t)}_2^2+\Norm{\widetilde{\vtheta}_{\vw, \vz^{\perp}}(t)}_2^2,  
%%%%%%%%%%%%%%%%%%%%%%%%%%%%%%
\end{align*}
%%%%%%%%%%%%%%%%%%%%%%%%%%%%%%%%%%%%%%%%%%%%%%%%%%%%%%%%%%%%
and the $k$-th component of $\Bar{\vtheta}_{\vw, \vz}$ and  $\Bar{\vtheta}_{\vw, \vz^{\perp}}$ altogether reads
%%%%%%%%%%%%%%%%%%%%%%%%%%%%%%%%%%%%%%%%%%%%%%%%%%%%%%%%%%%%
%%%%%%%%%%%%%%%%%%%%%%%%%%%%%%%%%%%%%%%%%%%%%%%%%%%%%%%%%%%%
\begin{align*}
\left(\Bar{\vtheta}_{\vw, \vz}\right)_k &:=   \frac{\nu}{\eps}\left(\frac{1}{ {2}}r^2(t)-\frac{1}{ {2}}r^{-2}(t)\right)a_k^0\hat{\vz} +\left(\frac{1}{ {2}}r^2(t)+\frac{1}{ {2}}r^{-2}(t)\right)\left<\vw_k^0, \hat{\vz}\right>\hat{\vz},\\
%%%%%%%%%%%%%%%%%%%%%%%%%%%%%
\left(\Bar{\vtheta}_{\vw, \vz^{\perp}}\right)_k &:=\vw_k^0 -\left<\vw_k^0, \hat{\vz}\right>\hat{\vz},
\end{align*}
%%%%%%%%%%%%%%%%%%%%%%%%%%%%%%%%%%%%%%%%%%%%%%%%%%%%%%%%%%%%
and finally, based on the  relations concerning $2$-norm above, for any $t\geq 0$,
%%%%%%%%%%%%%%%%%%%%%%%%%%%%%%%%%%%%%%%%%%%%%%%%%%%%%%%%%%%%
\begin{align*}
\Norm{\widetilde{\vtheta}_{\vw, \vz}(t)}_2 &\leq\Norm{\widetilde{\vtheta}_{\vw}(t)}_2,\\
%%%%%%%%%%%%%%%%%%%%%%%%%%%%%% 
\Norm{\widetilde{\vtheta}_{\vw, \vz^{\perp}}(t)}_2 &\leq\Norm{\widetilde{\vtheta}_{\vw}(t)}_2.
\end{align*}
%%%%%%%%%%%%%%%%%%%%%%%%%%%%%%%%%%%%%%%%%%%%%%%%%%%%%%%%%%%%

We are hereby to prove \eqref{append...eq...thm...Condense...WLag...PartOne}.  Firstly, we observe that 
%%%%%%%%%%%%%%%%%%%%%%%%%%%%%%%%%%%%%%%%%%%%%%%%%%%%%%%%%%%%
\[
{\vtheta}_{\vw}(0)=\Bar{\vtheta}_{\vw}(0),
\]
%%%%%%%%%%%%%%%%%%%%%%%%%%%%%%%%%%%%%%%%%%%%%%%%%%%%%%%%%%%%
hence
%%%%%%%%%%%%%%%%%%%%%%%%%%%%%%%%%%%%%%%%%%%%%%%%%%%%%%%%%%%%
\begin{align*}
&\Norm{\vtheta_{\vw}(t)-\vtheta_{\vw}(0)}_2^2\\
%%%%%%%%%%%%%%%%%%%%%%%%%%
=&\Norm{\vtheta_{\vw}(t)-\Bar{\vtheta}_{\vw}(0)}^2_2\\     
%%%%%%%%%%%%%%%%%%%%%%%%%%
=&\Norm{\vtheta_{\vw, \vz}(t)-\Bar{\vtheta}_{\vw, \vz}(0)+\vtheta_{\vw, \vz^{\perp}}(t)-\Bar{\vtheta}_{\vw, \vz^{\perp}}(0)}_2\\
%%%%%%%%%%%%%%%%%%%%%%%%%%
=&\Norm{\Bar{\vtheta}_{\vw, \vz}(t)-\Bar{\vtheta}_{\vw, \vz}(0)+\widetilde{\vtheta}_{\vw, \vz}(t)}^2_2+\Norm{\Bar{\vtheta}_{\vw, \vz^{\perp}}(t)-\Bar{\vtheta}_{\vw, \vz^{\perp}}(0)+\widetilde{\vtheta}_{\vw, \vz^{\perp}}(t)}^2_2,
%%%%%%%%%%%%%%%%%%%%%%%%%%
\end{align*} 
%%%%%%%%%%%%%%%%%%%%%%%%%%%%%%%%%%%%%%%%%%%%%%%%%%%%%%%%%%%%
 by choosing $\eta_0=\frac{\gamma-\gamma'-1}{8}$, then  for time $0\leq t\leq \bar{t}_0:=\left({\frac{\gamma-\gamma'-1}{8}}\right)\log(m)-\log(2)$,
%%%%%%%%%%%%%%%%%%%%%%%%%%%%%%%%%%%%%%%%%%%%%%%%%%%%%%%%%%%%
\begin{align*}
 &\Norm{\int_{0}^{t}\exp\left((t-s)\mA\right)
 \begin{pmatrix}f_k\\
 \frac{\nu}{\eps}\vg_k \end{pmatrix}
 \D s}_2\\
%%%%%%%%%%%%%%%%%%%%%%%%%
&\leq \left(m\eps^2\phi^3(t)+  \eps\phi^2(t)\right) \int_{0}^{t}\exp((t-s)\Norm{\vz}_2)  \D s\\
%%%%%%%%%%%%%%%%%%%%%%%%
&\leq 2m^{-\min\left\{\tau, \frac{1}{2}\right\}}\int_{0}^{t}\exp((t-s)) \D s\\
%%%%%%%%%%%%%%%%%%%%%%%%% 
&\leq 2m^{-\frac{\gamma-\gamma'-1}{4}}  \exp(t)\leq 2m^{-\frac{\gamma-\gamma'-1}{4}}  \exp(\bar{t}_0)= m^{-\frac{\gamma-\gamma'-1}{8}}.
%%%%%%%%%%%%%%%%%%%%%%%%
\end{align*}
%%%%%%%%%%%%%%%%%%%%%%%%%%%%%%%%%%%%%%%%%%%%%%%%%%%%%%%%%%%%
We  conclude that for $t\leq \bar{t}_0$, the following holds
%%%%%%%%%%%%%%%%%%%%%%%%%%%%%%%%%%%%%%%%%%%%%%%%%%%%%%%%%%%%
\begin{align*}
\Norm{\widetilde{\vtheta}_{\vw}(t)}_2&\leq \sqrt{m}\Norm{\int_{0}^{t}\exp\left((t-s)\mA\right)
 \begin{pmatrix}f_k\\
 \frac{\nu}{\eps}\vg_k \end{pmatrix}
 \D s}_2\leq\sqrt{m} m^{-\frac{\gamma-\gamma'-1}{8}},
%%%%%%%%%%%%%%%%%%%%%%%%%%%%%
\end{align*}
%%%%%%%%%%%%%%%%%%%%%%%%%%%%%%%%%%%%%%%%%%%%%%%%%%%%%%%%%%%%
since the $k$-th component of   $\Bar{\vtheta}_{\vw, \vz^{\perp}}$  reads
%%%%%%%%%%%%%%%%%%%%%%%%%%%%%%%%%%%%%%%%%%%%%%%%%%%%%%%%%%%%
\begin{align*}
\Bar{\vtheta}_{\vw, \vz^{\perp}}(t)&=   \vw_k^0 -\left<\vw_k^0, \hat{\vz}\right>\hat{\vz}, 
\end{align*}
%%%%%%%%%%%%%%%%%%%%%%%%%%%%%%%%%%%%%%%%%%%%%%%%%%%%%%%%%%%%
we observe that  since the RHS is independent of time $t$, hence we have  \[\Bar{\vtheta}_{\vw, \vz^{\perp}}(t)=\Bar{\vtheta}_{\vw, \vz^{\perp}}(0),\]
so we obtain that 
%%%%%%%%%%%%%%%%%%%%%%%%%%%%%%%%%%%%%%%%%%%%%%%%%%%%%%%%%%%%
\begin{align*}
&\Norm{\vtheta_{\vw}(t)-\vtheta_{\vw}(0)}^2_2\\
%%%%%%%%%%%%%%%%%%%%%%%%%%
=&\Norm{\Bar{\vtheta}_{\vw, \vz}(t)-\Bar{\vtheta}_{\vw, \vz}(0)+\widetilde{\vtheta}_{\vw, \vz}(t)}^2_2+\Norm{\Bar{\vtheta}_{\vw, \vz^{\perp}}(t)-\Bar{\vtheta}_{\vw, \vz^{\perp}}(0)+\widetilde{\vtheta}_{\vw, \vz^{\perp}}(t)}^2_2\\
%%%%%%%%%%%%%%%%%%%%%%%%%%
=&\Norm{\Bar{\vtheta}_{\vw, \vz}(t)-\Bar{\vtheta}_{\vw, \vz}(0)+\widetilde{\vtheta}_{\vw, \vz}(t)}^2_2+\Norm{\widetilde{\vtheta}_{\vw, \vz^{\perp}}(t)}^2_2.
%%%%%%%%%%%%%%%%%%%%%%%%%%
%=&\Norm{\Bar{\vtheta}_{\vw, \vz}(t)-\Bar{\vtheta}_{\vw, \vz}(0)}^2_2+\Norm{\widetilde{\vtheta}_{\vw}(t)}^2_2.
%%%%%%%%%%%%%%%%%%%%%%%%%%
\end{align*} 
%%%%%%%%%%%%%%%%%%%%%%%%%%%%%%%%%%%%%%%%%%%%%%%%%%%%%%%%%%%%
%%%%%%%%%%%%%%%%%%%%%%%%%%%%%%%%%%%%%%%%%%%%%%%%%%%%%%%%%%%%
thus the ratio reads
%%%%%%%%%%%%%%%%%%%%%%%%%%%%%%%%%%%%%%%%%%%%%%%%%%%%%%%%%%%%
\begin{align*}
\left(\frac{\Norm{{\vtheta}_{\vw}(t)-{\vtheta}_{\vw}(0)}_2}{\Norm{{\vtheta}_{\vw}(0)}_2}\right)^2&=\frac{\Norm{\Bar{\vtheta}_{\vw, \vz}(t)-\Bar{\vtheta}_{\vw, \vz}(0)+\widetilde{\vtheta}_{\vw, \vz}(t)}^2_2+\Norm{\widetilde{\vtheta}_{\vw, \vz^{\perp}}(t)}^2_2}{\Norm{{\vtheta}_{\vw}(0)}_2^2}\\
%%%%%%%%%%%%%%%%%%%%%%%%%%%%
&=\frac{\Norm{\Bar{\vtheta}_{\vw, \vz}(t)-\Bar{\vtheta}_{\vw, \vz}(0)+\widetilde{\vtheta}_{\vw, \vz}(t)}_2^2}{\Norm{{\vtheta}_{\vw}(0)}_2^2}+\frac{\Norm{\widetilde{\vtheta}_{\vw, \vz^{\perp}}(t)}^2_2}{\Norm{{\vtheta}_{\vw}(0)}_2^2}.   
\end{align*}
%%%%%%%%%%%%%%%%%%%%%%%%%%%%%%%%%%%%%%%%%%%%%%%%%%%%%%%%%%%%
As
%%%%%%%%%%%%%%%%%%%%%%%%%%%%%%%%%%%%%%%%%%%%%%%%%%%%%%%%%%%%
\[
\Norm{{{\vtheta}}_{\vw}(0)}_2^2=\Norm{{{\vtheta}}_{\vw}^0}_2^2=\sum_{k=1}^m \left(\vw_k^0\right)^2,
\]
%%%%%%%%%%%%%%%%%%%%%%%%%%%%%%%%%%%%%%%%%%%%%%%%%%%%%%%%%%%%
and we observe that since $\vw^0_k\sim \fN(\vzero, \mI_d)$, then $\left<\vw_k^0, \hat{\vz}\right>\sim \fN(0, 1)$. Moreover,    $\left\{a_k^0, \left<\vw_k^0, \hat{\vz}\right>\right\}_{k=1}^m\sim \fN(0, 1)$ are i.i.d.\ Gaussian variables. Hence, by application of \Cref{append...thm...BernsteinInequality}, with probability $1-\frac{\delta}{4}$ over the choice of $\vtheta^0$, for $m$ large enough,
%%%%%%%%%%%%%%%%%%%%%%%%%%%%%%%%%%%%%%%%%%%%%%%%%%%%%%%%%%%%  
\[
\frac{d}{2}\leq\frac{1}{m}\Norm{{\vtheta}_{\vw}^0}_2^2\leq \frac{3d}{2},
\]
%%%%%%%%%%%%%%%%%%%%%%%%%%%%%%%%%%%%%%%%%%%%%%%%%%%%%%%%%%%%
and with probability $1-\frac{\delta}{4}$ over the choice of $\vtheta^0$, for $m$ large enough,
%%%%%%%%%%%%%%%%%%%%%%%%%%%%%%%%%%%%%%%%%%%%%%%%%%%%%%%%%%%%  
\[
\frac{1}{2}\leq\frac{1}{m}\sum_{k=1}^m\left(a_k^0\right)^2\leq \frac{3 }{2},
\]
%%%%%%%%%%%%%%%%%%%%%%%%%%%%%%%%%%%%%%%%%%%%%%%%%%%%%%%%%%%%
%%%%%%%%%%%%%%%%%%%%%%%%%%%%%%%%%%%%%%%%%%%%%%%%%%%%%%%%%%%%
and with probability $1-\frac{\delta}{4}$ over the choice of $\vtheta^0$, for $m$ large enough,
%%%%%%%%%%%%%%%%%%%%%%%%%%%%%%%%%%%%%%%%%%%%%%%%%%%%%%%%%%%%  
\[
\frac{1}{2}\leq\frac{1}{m}\sum_{k=1}^m\left(\left<\vw_k^0, \hat{\vz}\right>\right)^2\leq \frac{3 }{2},
\]
%%%%%%%%%%%%%%%%%%%%%%%%%%%%%%%%%%%%%%%%%%%%%%%%%%%%%%%%%%%%
and with probability $1-\frac{\delta}{4}$ over the choice of $\vtheta^0$, for $m$ large enough,
%%%%%%%%%%%%%%%%%%%%%%%%%%%%%%%%%%%%%%%%%%%%%%%%%%%%%%%%%%%%  
\[
-\frac{1}{4}\leq\frac{1}{m}a_k^0\left<\vw_k^0, \hat{\vz}\right>\leq\frac{1}{4}.
\]
%%%%%%%%%%%%%%%%%%%%%%%%%%%%%%%%%%%%%%%%%%%%%%%%%%%%%%%%%%%%
To sum up, the ratio
%%%%%%%%%%%%%%%%%%%%%%%%%%%%%%%%%%%%%%%%%%%%%%%%%%%%%%%%%%%%
\begin{align*}
\left(\frac{\Norm{{\vtheta}_{\vw}(t)-{\vtheta}_{\vw}(0)}_2}{\Norm{{\vtheta}_{\vw}(0)}_2}\right)^2 &=\frac{\Norm{\Bar{\vtheta}_{\vw, \vz}(t)-\Bar{\vtheta}_{\vw, \vz}(0)+\widetilde{\vtheta}_{\vw, \vz}(t)}_2^2}{\Norm{{\vtheta}_{\vw}(0)}_2^2}+\frac{\Norm{\widetilde{\vtheta}_{\vw, \vz^{\perp}}(t)}^2_2}{\Norm{{\vtheta}_{\vw}(0)}_2^2},
\end{align*}
%%%%%%%%%%%%%%%%%%%%%%%%%%%%%%%%%%%%%%%%%%%%%%%%%%%%%%%%%%%%
for the first part of the RHS:
%%%%%%%%%%%%%%%%%%%%%%%%%%%%%%%%%%%%%%%%%%%%%%%%%%%%%%%%%%%%
\begin{align*}
&\frac{\Norm{\Bar{\vtheta}_{\vw, \vz}(t)-\Bar{\vtheta}_{\vw, \vz}(0)+\widetilde{\vtheta}_{\vw, \vz}(t)}_2}{\Norm{{\vtheta}_{\vw}(0)}_2}\\
%%%%%%%%%%%%%%%%%%%%%%%%%%%
&\geq   \frac{\Norm{\Bar{\vtheta}_{\vw, \vz}(t)-\Bar{\vtheta}_{\vw, \vz}(0)}_2-\Norm{\widetilde{\vtheta}_{\vw, \vz}(t)}_2}{\Norm{{\vtheta}_{\vw}(0)}_2}\\ 
%%%%%%%%%%%%%%%%%%%%%%%%%%%
&=\frac{\Norm{\Bar{\vtheta}_{\vw, \vz}(t)-\Bar{\vtheta}_{\vw, \vz}(0)}_2}{{\Norm{{\vtheta}_{\vw}(0)}_2}}-\frac{\Norm{\widetilde{\vtheta}_{\vw, \vz}(t)}_2}{\Norm{{\vtheta}_{\vw}(0)}_2},
%%%%%%%%%%%%%%%%%%%%%%%%%%%
\end{align*}
%%%%%%%%%%%%%%%%%%%%%%%%%%%%%%%%%%%%%%%%%%%%%%%%%%%%%%%%%%%%
and 
%%%%%%%%%%%%%%%%%%%%%%%%%%%%%%%%%%%%%%%%%%%%%%%%%%%%%%%%%%%%
\begin{equation*}
%%%%%%%%%%%%%%%%%%%%%%%%%%%%%%%%%%%%%%%%%%%%%%%%%%%%%%%%
\begin{aligned}
 &~~\frac{1}{m}\Norm{\Bar{\vtheta}_{\vw, \vz}(t)-\Bar{\vtheta}_{\vw, \vz}(0)}_2^2  \\&=\sum_{k=1}^m\left[\frac{1}{2}\left(r^2(t)-{r^{-2}(t)}\right)\frac{\nu}{\eps}a_k^0+\frac{1}{2}\left(r^2(t)+{r^{-2}(t)}-2\right)\left<\vw_k^0, \hat{\vz}\right>\right]^2\\
%%%%%%%%%%%%%%%%%%%%%%%%%
&=\frac{1}{4}\left(r(t)-{r^{-1}(t)}\right)^2\sum_{k=1}^m\left[\left(r(t)+{r^{-1}(t)}\right)\frac{\nu}{\eps}a_k^0+\left(r(t)-{r^{-1}(t)}\right)\left<\vw_k^0, \hat{\vz}\right>\right]^2\\
%%%%%%%%%%%%%%%%%%%%%%%%%
%%%%%%%%%%%%%%%%%%%%%%%%%%%
&=\frac{1}{4}\left(r(t)-{r^{-1}(t)}\right)^2\sum_{k=1}^m\Bigg\{\left[\left(r(t)+ {r^{-1}(t)}\right)^2\left(\frac{\nu}{\eps}a_k^0\right)^2+\left(r(t)- {r^{-1}(t)}\right)^2\left(\left<\vw_k^0, \hat{\vz}\right>\right)^2\right]\\
%%%%%%%%%%%%%%%%%%%%%%%%% 
&~~~~~~~~~~~~~~~~~~~~~~~~~~~~~~~~+2\left(r^2(t)- {r^{-2}(t)}\right)\frac{\nu}{\eps}a_k^0\left<\vw_k^0, \hat{\vz}\right>\Bigg\}\\
%%%%%%%%%%%%%%%%%%%%%%% 
&\geq  \frac{1}{4}\left(r(t)- {r^{-1}(t)}\right)^2  \Bigg[\left(r(t)+ {r^{-1}(t)}\right)^2\left(\frac{\nu}{\eps}\right)^2\frac{1}{2}+\left(r(t)- {r^{-1}(t)}\right)^2 \frac{1}{2}\\
%%%%%%%%%%%%%%%%%%%%%%%%%%  
&~~~~~~~~~~~~~~~~~~~~~~~~~~~~~~~~~-\left(r^2(t)- {r^{-2}(t)}\right)\frac{\nu}{\eps}\frac{1}{2} \Bigg]\\
%%%%%%%%%%%%%%%%%%%%%%% 
&\geq\frac{3}{32}\left(r(t)-{r^{-1}(t)}\right)^2 \left(r(t)- {r^{-1}(t)}\right)^2=\frac{3}{32}\left(r(t)-{r^{-1}(t)}\right)^4.
%%%%%%%%%%%%%%%%%%%%%%%%% 
%%%%%%%%%%%%%%%%%%%%%%%%% 
%%%%%%%%%%%%%%%%%%%%%%%%% 
%%%%%%%%%%%%%%%%%%%%%%%%% 
\end{aligned}
%%%%%%%%%%%%%%%%%%%%%%%%%%%%%%%%%%%%%%%%%%%%%%%%%%%%%%%%%%%%
\end{equation*}
%%%%%%%%%%%%%%%%%%%%%%%%%%%%%%%%%%%%%%%%%%%%%%%%%%%%%%%%%%%%
%We choose $\bar{t}_0=\left({\frac{\gamma-\gamma'-1}{8}}\right)\log(m)-\log(2)$, 
Then  with probability at least $1-\delta$ over the choice of $\vtheta^0$ and   large enough $m$,  for any $0\leq t \leq \bar{t}_0=\left({\frac{\gamma-\gamma'-1}{8}}\right)\log(m)-\log(2)$, the following holds:
%%%%%%%%%%%%%%%%%%%%%%%%%%%%%%%%%%%%%%%%%%%%%%%%%%%%%%%%%%%%
%%%%%%%%%%%%%%%%%%%%%%%%%%%%%%%%%%%%%%%%%%%%%%%%%%%%%%%%%%%%
\begin{align*}
&\frac{\Norm{\Bar{\vtheta}_{\vw, \vz}(t)-\Bar{\vtheta}_{\vw, \vz}(0)+\widetilde{\vtheta}_{\vw, \vz}(t)}_2}{\Norm{{\vtheta}_{\vw}(0)}_2}\\
%%%%%%%%%%%%%%%%%%%%%%%%%%%
&\geq  \frac{\Norm{\Bar{\vtheta}_{\vw, \vz}(t)-\Bar{\vtheta}_{\vw, \vz}(0)}_2}{{\Norm{{\vtheta}_{\vw}(0)}_2}}-\frac{\Norm{\widetilde{\vtheta}_{\vw}(t)}_2}{\Norm{{\vtheta}_{\vw}(0)}_2}\\
%%%%%%%%%%%%%%%%%%%%%%%%%%%
&\geq \sqrt{\frac{2}{3md}}\sqrt{\frac{3m}{32}} \left(r(t)-{r^{-1}(t)}\right)^2 -\sqrt{\frac{2}{d}}m^{-\frac{\gamma-\gamma'-1}{8}}.
%%%%%%%%%%%%%%%%%%%%%%%%%%%%%%%%%%%%%%%%%%%%%%%%%%%%%%%%%%%%  
\end{align*}
Specifically, if we choose $t=\Bar{t}_0$,
%%%%%%%%%%%%%%%%%%%%%%%%%%%%%%%%%%%%%%%%%%%%%%%%%%%%%%%%%%%%
%%%%%%%%%%%%%%%%%%%%%%%%%%%%%%%%%%%%%%%%%%%%%%%%%%%%%%%%%%%%
\begin{align*}
&\frac{\Norm{\Bar{\vtheta}_{\vw, \vz}(t_0)-\Bar{\vtheta}_{\vw, \vz}(0)+\widetilde{\vtheta}_{\vw, \vz}(t_0)}_2}{\Norm{{\vtheta}_{\vw}(0)}_2}\\
%%%%%%%%%%%%%%%%%%%%%%%%%%%
&\geq  \sqrt{\frac{1 }{16d}} \left(r^2(t_0)+r^{-2}(t_0)-2\right) -\sqrt{\frac{2}{d}}m^{-\frac{\gamma-\gamma'-1}{8}}\\%\exp(t_0)=m^{\frac{\gamma-\gamma'-1}{8}}.
%%%%%%%%%%%%%%%%%%%%%%%%%%%
&\gtrsim m^{\frac{\gamma-\gamma'-1}{8}}-m^{-\frac{\gamma-\gamma'-1}{8}}.
%%%%%%%%%%%%%%%%%%%%%%%%%%
\end{align*}
%%%%%%%%%%%%%%%%%%%%%%%%%%%%%%%%%%%%%%%%%%%%%%%%%%%%%%%%%%%%
By taking limit, we obtain that 
%%%%%%%%%%%%%%%%%%%%%%%%%%%%%%%%%%%%%%%%%%%%%%%%%%%%%%%%%%%%
\[
\lim_{m\to\infty}\frac{\Norm{\Bar{\vtheta}_{\vw, \vz}(t_0)-\Bar{\vtheta}_{\vw, \vz}(0)+\widetilde{\vtheta}_{\vw, \vz}(t_0)}_2}{\Norm{{\vtheta}_{\vw}(0)}_2}=+\infty.
\]
%%%%%%%%%%%%%%%%%%%%%%%%%%%%%%%%%%%%%%%%%%%%%%%%%%%%%%%%%%%%
For the second part of the RHS:
%%%%%%%%%%%%%%%%%%%%%%%%%%%%%%%%%%%%%%%%%%%%%%%%%%%%%%%%%%%%
%%%%%%%%%%%%%%%%%%%%%%%%%%%%%%%%%%%%%%%%%%%%%%%%%%%%%%%%%%%%
\begin{align*}
 \frac{\Norm{\widetilde{\vtheta}_{\vw, \vz^{\perp}}(t)}_2}{\Norm{{\vtheta}_{\vw}(0)}_2}\leq \frac{\Norm{\widetilde{\vtheta}_{\vw}(t)}_2}{\Norm{{\vtheta}_{\vw}(0)}_2},
\end{align*}
%%%%%%%%%%%%%%%%%%%%%%%%%%%%%%%%%%%%%%%%%%%%%%%%%%%%%%%%%%%%
then  with probability at least $1-\delta$ over the choice of $\vtheta^0$ and   large enough $m$,  for any $0\leq t \leq \bar{t}_0=\left({\frac{\gamma-\gamma'-1}{8}}\right)\log(m)-\log(2)$, the following holds:
%%%%%%%%%%%%%%%%%%%%%%%%%%%%%%%%%%%%%%%%%%%%%%%%%%%%%%%%%%%%
%%%%%%%%%%%%%%%%%%%%%%%%%%%%%%%%%%%%%%%%%%%%%%%%%%%%%%%%%%%%
\begin{align*}
 \frac{\Norm{\widetilde{\vtheta}_{\vw, \vz^{\perp}}(t)}_2}{\Norm{{\vtheta}_{\vw}(0)}_2}\leq \frac{\Norm{\widetilde{\vtheta}_{\vw}(t)}_2}{\Norm{{\vtheta}_{\vw}(0)}_2}\leq \sqrt{\frac{2}{d}}m^{-\frac{\gamma-\gamma'-1}{8}}.
\end{align*}
%%%%%%%%%%%%%%%%%%%%%%%%%%%%%%%%%%%%%%%%%%%%%%%%%%%%%%%%%%%%
By taking limit, we obtain that 
%%%%%%%%%%%%%%%%%%%%%%%%%%%%%%%%%%%%%%%%%%%%%%%%%%%%%%%%%%%%
\[
\lim_{m\to\infty}\frac{\Norm{\widetilde{\vtheta}_{\vw, \vz^{\perp}}(t)}_2}{\Norm{{\vtheta}_{\vw}(0)}_2}=0.
\]
%%%%%%%%%%%%%%%%%%%%%%%%%%%%%%%%%%%%%%%%%%%%%%%%%%%%%%%%%%%%
To sum up, since $t_0\leq T_{\mathrm{eff}}$, we have that 
%%%%%%%%%%%%%%%%%%%%%%%%%%%%%%%%%%%%%%%%%%%%%%%%%%%%%%%%%%%%
\begin{equation}
\lim_{m\to+\infty} \sup\limits_{t\in[0, T_{\mathrm{eff}}]}\mathrm{RD}(\vtheta_{\vw}(t))=+\infty+0=+\infty,
\end{equation}
%%%%%%%%%%%%%%%%%%%%%%%%%%%%%%%%%%%%%%%%%%%%%%%%%%%%%%%%%%%%
which finishes the proof of \eqref{append...eq...thm...Condense...WLag...PartOne}.
%%%%%%%%%%%%%%%%%%%%%%%%%%%%%%%%%%%%%%%%%%%%%%%%%%%%%%%%%%%%

In order to prove \eqref{append...eq...thm...Condense...WLag...PartTwo}, firstly we have
%%%%%%%%%%%%%%%%%%%%%%%%%%%%%%%%%%%%%%%%%%%%%%%%%%%%%%%%%%%%
\[
\frac{\Norm{\vtheta_{\vw, \vz}(t) }_2}{\Norm{\vtheta_{\vw}(t)}_2}\leq 1,
\]
%%%%%%%%%%%%%%%%%%%%%%%%%%%%%%%%%%%%%%%%%%%%%%%%%%%%%%%%%%%%
moreover, we observe that 
%%%%%%%%%%%%%%%%%%%%%%%%%%%%%%%%%%%%%%%%%%%%%%%%%%%%%%%%%%%%
\begin{align*}
\left(\frac{\Norm{\vtheta_{\vw, \vz}(t) }_2}{\Norm{\vtheta_{\vw}(t)}_2}\right)^2&=\frac{\Norm{\vtheta_{\vw, \vz}(t)}^2_2}{\Norm{\vtheta_{\vw}(t)}^2_2}=\frac{\Norm{\vtheta_{\vw, \vz}(t)}^2_2}{\Norm{\vtheta_{\vw, \vz}(t)}^2_2+\Norm{\vtheta_{\vw, \vz^{\perp}}(t)}^2_2}\\
%%%%%%%%%%%%%%%%%%%%%%%%
&=\frac{\Norm{\Bar{\vtheta}_{\vw, \vz}(t)+\widetilde{\vtheta}_{\vw, \vz}(t)}^2_2}{{\Norm{\Bar{\vtheta}_{\vw, \vz}(t)+\widetilde{\vtheta}_{\vw, \vz}(t)}^2_2}+\Norm{\Bar{\vtheta}_{\vw, \vz^{\perp}}(t)+\widetilde{\vtheta}_{\vw, \vz^{\perp}}(t)}^2_2}.
%%%%%%%%%%%%%%%%%%%%%%%%
\end{align*}
%%%%%%%%%%%%%%%%%%%%%%%%%%%%%%%%%%%%%%%%%%%%%%%%%%%%%%%%%%%%
Then  with probability at least $1-\delta$ over the choice of $\vtheta^0$ and   large enough $m$,  for any $0\leq t \leq \bar{t}_0=\left({\frac{\gamma-\gamma'-1}{8}}\right)\log(m)-\log(2)$, the following holds:
%%%%%%%%%%%%%%%%%%%%%%%%%%%%%%%%%%%%%%%%%%%%%%%%%%%%%%%%%%%%
\begin{align*}
\Norm{\Bar{\vtheta}_{\vw, \vz}(t)+\widetilde{\vtheta}_{\vw, \vz}(t)}_2&\geq \Norm{\Bar{\vtheta}_{\vw, \vz}(t)-\Bar{\vtheta}_{\vw, \vz}(0)+ \widetilde{\vtheta}_{\vw, \vz}(t)}_2-   \Norm{\Bar{\vtheta}_{\vw, \vz}(0)}_2\\ 
%%%%%%%%%%%%%%%%%%%%%%%%%%%%%
&\geq \sqrt{\frac{3m}{32}} \left(r(t)-{r^{-1}(t)}\right)^2-\Norm{\widetilde{\vtheta}_{\vw}(t)}_2-\Norm{\Bar{\vtheta}_{\vw}(0)}_2\\
%%%%%%%%%%%%%%%%%%%%%%%%%%%%%
&\geq  \sqrt{\frac{3m}{32}} \left(r(t)-{r^{-1}(t)}\right)^2-\sqrt{m} m^{-\frac{\gamma-\gamma'-1}{8}}-\sqrt{\frac{3md}{2}},\\
%%%%%%%%%%%%%%%%%%%%%%%%%%%%%
\Norm{\Bar{\vtheta}_{\vw, \vz^{\perp}}(t)+\widetilde{\vtheta}_{\vw, \vz^{\perp}}(t)}_2&\leq \Norm{\Bar{\vtheta}_{\vw, \vz^{\perp}}(t)-\Bar{\vtheta}_{\vw, \vz^{\perp}}(0)+\widetilde{\vtheta}_{\vw, \vz^{\perp}}(t)}_2+ \Norm{\Bar{\vtheta}_{\vw, \vz^{\perp}}(0)}_2\\
%%%%%%%%%%%%%%%%%%%%%%%%%%%%%\
&\leq  \Norm{\widetilde{\vtheta}_{\vw}(t)}_2+\Norm{\Bar{\vtheta}_{\vw}(0)}_2\\
%%%%%%%%%%%%%%%%%%%%%%%%%%%%%
&\leq  \sqrt{m} m^{-\frac{\gamma-\gamma'-1}{8}}+\sqrt{\frac{3md}{2}}.
%%%%%%%%%%%%%%%%%%%%%%%%%%%%%
%%%%%%%%%%%%%%%%%%%%%%%%%%%%%
%%%%%%%%%%%%%%%%%%%%%%%%%%%%%
%%%%%%%%%%%%%%%%%%%%%%%%%%%%%
%%%%%%%%%%%%%%%%%%%%%%%%%%%%%
%%%%%%%%%%%%%%%%%%%%%%%%%%%%%
%%%%%%%%%%%%%%%%%%%%%%%%%%%%%
\end{align*}
%%%%%%%%%%%%%%%%%%%%%%%%%%%%%%%%%%%%%%%%%%%%%%%%%%%%%%%%%%%%
By taking $t=\bar{t}_0$, we observe that  $\Norm{\Bar{\vtheta}_{\vw, \vz}(t)+\widetilde{\vtheta}_{\vw, \vz}(t)}_2$ is of order at least $\sqrt{m}m^{\frac{\gamma-\gamma'-1}{8}}$, while $\Norm{\Bar{\vtheta}_{\vw, \vz^{\perp}}(t)+\widetilde{\vtheta}_{\vw, \vz^{\perp}}(t)}_2$  is of order at most $\sqrt{m}$, which finishes the proof of \eqref{append...eq...thm...Condense...WLag...PartTwo}.
%%%%%%%%%%%%%%%%%%%%%%%%%%%%%%%%%%%%%%%%%%%%%%%%%%%%%%%%%%%%
%%%%%%%%%%%%%%%%%%%%%%%%%%%%%%%%%%%%%%%%%%%%%%%%%%%%%%%%%%%%
%%%%%%%%%%%%%%%%%%%%%%%%%%%%%%%%%%%%%%%%%%%%%%%%%%%%%%%%%%%%
\end{proof}
%%%%%%%%%%%%%%%%%%%%%%%%%%%%%%%%%%%%%%%%%%%%%%%%%%%%%%%%%%%%
%%%%%%%%%%%%%%%%%%%%%%%%%%%%%%%%%%%%%%%%%%%%%%%%%%%%%%%%%%%%
\subsection{$a$-lag regime}\label{subsection...append...condense...aLag}
%%%%%%%%%%%%%%%%%%%%%%%%%%%%%%%%%%%%%%%%%%%%%%%%%%%%%%%%%%%%
Recall the real dynamics, 
%%%%%%%%%%%%%%%%%%%%%%%%%%%%%%%%%%%%%%%%%%%%%%%%%%%%%%%%%%%%
\begin{equation*}
\begin{aligned}
 \frac{\D \begin{pmatrix}\frac{\nu}{\eps}a_k\\
\vw_k \end{pmatrix}}{\D t}
=\mA\begin{pmatrix}\frac{\nu}{\eps}a_k\\
\vw_k \end{pmatrix}
+
\begin{pmatrix}f_k\\
\frac{\nu}{\eps}\vg_k \end{pmatrix},\quad \begin{pmatrix}\frac{\nu}{\eps}a_k(0)\\
\vw_k(0) \end{pmatrix}=\begin{pmatrix}\frac{\nu}{\eps}a_k^0\\
\vw_k^0 \end{pmatrix},
\end{aligned}    
\end{equation*}
%%%%%%%%%%%%%%%%%%%%%%%%%%%%%%%%%%%%%%%%%%%%%%%%%%%%%%%%%%%%
and its solution   \eqref{append...eq...text...Condense...Difference...RealDynamicsRewritten} reads
%%%%%%%%%%%%%%%%%%%%%%%%%%%%%%%%%%%%%%%%%%%%%%%%%%%%%%%%%%%%
\begin{equation*}
 \begin{aligned}
\begin{pmatrix}\frac{\nu}{\eps}a_k\\
\vw_k \end{pmatrix}
=\exp\left(t\mA\right)\begin{pmatrix}\frac{\nu}{\eps}a_k^0\\
\vw_k^0 \end{pmatrix}
+
\int_{0}^t\exp\left((t-s)\mA\right)\begin{pmatrix}f_k(s)\\
\frac{\nu}{\eps}\vg_k(s) \end{pmatrix}\D s.
\end{aligned}        
\end{equation*}
%%%%%%%%%%%%%%%%%%%%%%%%%%%%%%%%%%%%%%%%%%%%%%%%%%%%%%%%%%%%
Then, for any $\vu\in\vz^\perp$, and $\Norm{\vu}_2=1$, we obtain that 
%%%%%%%%%%%%%%%%%%%%%%%%%%%%%%%%%%%%%%%%%%%%%%%%%%%%%%%%%%%%
\begin{align*}
 \frac{\D \left<\vw_k, \vu\right>}{\D t}&=\frac{\nu}{\eps}\left<\vz, \vu\right>+ \frac{\nu}{\eps}\left<\vg_k, \vu\right>\\ 
 %%%%%%%%%%%%%%%%%%%%%%%%%%%%%%
 &=\frac{\nu}{\eps}\left<\vg_k, \vu\right>,
\end{align*}
%%%%%%%%%%%%%%%%%%%%%%%%%%%%%%%%%%%%%%%%%%%%%%%%%%%%%%%%%%%%  
and as 
%%%%%%%%%%%%%%%%%%%%%%%%%%%%%%%%%%%%%%%%%%%%%%%%%%%%%%%%%%%%
\[
\vtheta_{\vw, \vz^{\perp}}=\Bar{\vtheta}_{\vw, \vz^{\perp}}+\widetilde{\vtheta}_{\vw, \vz^{\perp}}
\]
%%%%%%%%%%%%%%%%%%%%%%%%%%%%%%%%%%%%%%%%%%%%%%%%%%%%%%%%%%%%
hence the $k$-th component of  $\vtheta_{\vw, \vz^{\perp}}$, $\Bar{\vtheta}_{\vw, \vz^{\perp}}$ and $\widetilde{\vtheta}_{\vw, \vz^{\perp}}$ altogether reads
%%%%%%%%%%%%%%%%%%%%%%%%%%%%%%%%%%%%%%%%%%%%%%%%%%%%%%%%%%%%
\begin{align*}
\left(\Bar{\vtheta}_{\vw, \vz^{\perp}}\right)_k &=\vw_k^0 -\left<\vw_k^0, \hat{\vz}\right>\hat{\vz},\\    
%%%%%%%%%%%%%%%%%%%%%%%%%%
\left(\widetilde{\vtheta}_{\vw, \vz^{\perp}}\right)_k&=\frac{\nu}{\eps}\int_0^t\left[\vg_k(s)-\left<\vg_k(s), \hat{\vz}\right>\hat{\vz}\right]\D s,\\
%%%%%%%%%%%%%%%%%%%%%%%%%%
\left({\vtheta}_{\vw, \vz^{\perp}}\right)_k &=\vw_k^0 -\left<\vw_k^0, \hat{\vz}\right>\hat{\vz}+\frac{\nu}{\eps}\int_0^t\left[\vg_k(s)-\left<\vg_k(s), \hat{\vz}\right>\hat{\vz}\right]\D s.
%%%%%%%%%%%%%%%%%%%%%%%%%%
%%%%%%%%%%%%%%%%%%%%%%%%%%
\end{align*}
%%%%%%%%%%%%%%%%%%%%%%%%%%%%%%%%%%%%%%%%%%%%%%%%%%%%%%%%%%%%
Moreover, the real dynamics can also be written as
%%%%%%%%%%%%%%%%%%%%%%%%%%%%%%%%%%%%%%%%%%%%%%%%%%%%%%%%%%%%  
%%%%%%%%%%%%%%%%%%%%%%%%%%%%%%%%%%%%%%%%%%%%%%%%%%%%%%%%%%%%
\begin{equation*}
\begin{aligned}
 \frac{\D \begin{pmatrix}\frac{\nu}{\eps}a_k\\
\left<\vw_k, \hat{\vz}\right> \end{pmatrix}}{\D t}
=\mB\begin{pmatrix}\frac{\nu}{\eps}a_k\\
\left<\vw_k,\hat{\vz}\right>  \end{pmatrix}
+
\begin{pmatrix}f_k\\
\frac{\nu}{\eps}\left<\vg_k, \hat{\vz}\right> \end{pmatrix},\quad \begin{pmatrix}\frac{\nu}{\eps}a_k(0)\\
\left<\vw_k, \hat{\vz}\right>(0) \end{pmatrix}=\begin{pmatrix}\frac{\nu}{\eps}a_k^0\\
\left<\vw_k^0,\hat{\vz}\right> \end{pmatrix},
\end{aligned}    
\end{equation*}
%%%%%%%%%%%%%%%%%%%%%%%%%%%%%%%%%%%%%%%%%%%%%%%%%%%%%%%%%%%%
where
%%%%%%%%%%%%%%%%%%%%%%%%%%%%%%%%%%%%%%%%%%%%%%%%%%%%%%%%%%%%
\[
\mB:=\begin{pmatrix}0 & \Norm{\vz}_2\\
\Norm{\vz}_2&0 \end{pmatrix}=\Norm{\vz}_2\begin{pmatrix}0 & 1\\
1&0 \end{pmatrix},
\]
%%%%%%%%%%%%%%%%%%%%%%%%%%%%%%%%%%%%%%%%%%%%%%%%%%%%%%%%%%%%
since  $\mB$ is a full rank matrix, the solution to the above dyanmics can be explicitly written out as 
%%%%%%%%%%%%%%%%%%%%%%%%%%%%%%%%%%%%%%%%%%%%%%%%%%%%%%%%%%%%
\begin{align*}
\begin{pmatrix}\frac{\nu}{\eps}a_k\\
\left<\vw_k, \hat{\vz}\right> \end{pmatrix}&=\exp\left(t\mB\right)\begin{pmatrix}\frac{\nu}{\eps}a_k^0\\
\left<\vw_k^0, \hat{\vz}\right> \end{pmatrix}
+
\int_{0}^t\exp\left((t-s)\mB\right)\begin{pmatrix}f_k(s)\\
\frac{\nu}{\eps}\left<\vg_k(s), \hat{\vz}\right> \end{pmatrix}\D s\\
%%%%%%%%%%%%%%%%%%%%%%%%
&=\begin{pmatrix}\frac{1}{ {2}}r^2(t)+\frac{1}{ {2}}r^{-2}(t)&\frac{1}{ {2}}r^2(t)-\frac{1}{ {2}}r^{-2}(t) \\
\frac{1}{ {2}}r^2(t)-\frac{1}{ {2}}r^{-2}(t) &\frac{1}{ {2}}r^2(t)+\frac{1}{ {2}}r^{-2}(t)\end{pmatrix}\begin{pmatrix}\frac{\nu}{\eps}a_k^0\\
\left<\vw_k^0, \hat{\vz}\right> \end{pmatrix}\\
%%%%%%%%%%%%%%%%%%%%%%%%
&~~+\int_{0}^t\begin{pmatrix}\frac{1}{ {2}}r^2(t-s)+\frac{1}{ {2}}r^{-2}(t-s)&\frac{1}{ {2}}r^2(t-s)-\frac{1}{ {2}}r^{-2}(t-s) \\
\frac{1}{ {2}}r^2(t-s)-\frac{1}{ {2}}r^{-2}(t-s) &\frac{1}{ {2}}r^2(t-s)+\frac{1}{ {2}}r^{-2}(t-s)\end{pmatrix}\begin{pmatrix}f_k(s)\\
\frac{\nu}{\eps}\left<\vg_k(s), \hat{\vz}\right> \end{pmatrix}\D s,
%%%%%%%%%%%%%%%%%%%%%%%%
%%%%%%%%%%%%%%%%%%%%%%%%
%%%%%%%%%%%%%%%%%%%%%%%%
\end{align*}
%%%%%%%%%%%%%%%%%%%%%%%%%%%%%%%%%%%%%%%%%%%%%%%%%%%%%%%%%%%%
hence we obtain that 
%%%%%%%%%%%%%%%%%%%%%%%%%%%%%%%%%%%%%%%%%%%%%%%%%%%%%%%%%%%%
\begin{align*}
\left(\vtheta_a\right)_k&=\va_k= \left(\frac{1}{ {2}}r^2(t)+\frac{1}{ {2}}r^{-2}(t)\right) a_k^0+\frac{\eps}{\nu}\left(\frac{1}{ {2}}r^2(t)-\frac{1}{ {2}}r^{-2}(t)\right)\left<\vw_k^0, \hat{\vz}\right>\\    
%%%%%%%%%%%%%%%%%%%%%%%%
&~~+\frac{\eps}{\nu}\int_{0}^t\left(\frac{1}{ {2}}r^2(t-s)+\frac{1}{ {2}}r^{-2}(t-s)\right)f_k(s)\D s\\
%%%%%%%%%%%%%%%%%%%%%%%%
&~~+\int_{0}^t\left(\frac{1}{ {2}}r^2(t-s)-\frac{1}{ {2}}r^{-2}(t-s)\right)\left<\vg_k(s), \hat{\vz}\right>\D s,\\
%%%%%%%%%%%%%%%%%%%%%%%%
\left(\vtheta_{\vw}\right)_k&=\vw_k=\frac{\nu}{\eps}\left(\frac{1}{ {2}}r^2(t)-\frac{1}{ {2}}r^{-2}(t)\right)a_k^0\hat{\vz} +\left(\frac{1}{ {2}}r^2(t)+\frac{1}{ {2}}r^{-2}(t)\right)\left<\vw_k^0, \hat{\vz}\right>\hat{\vz}\\
%%%%%%%%%%%%%%%%%%%%%%%%
&~~+\int_{0}^t\left(\frac{1}{ {2}}r^2(t-s)-\frac{1}{ {2}}r^{-2}(t-s)\right)f_k(s)\D s\hat{\vz}\\
%%%%%%%%%%%%%%%%%%%%%%%%
&~~+\frac{\nu}{\eps}\int_{0}^t\left(\frac{1}{ {2}}r^2(t-s)+\frac{1}{ {2}}r^{-2}(t-s)\right)\left<\vg_k(s), \hat{\vz}\right>\D s\hat{\vz}\\
%%%%%%%%%%%%%%%%%%%%%%%%
%%%%%%%%%%%%%%%%%%%%%%%%%%
&~~-\left<\vw_k^0, \hat{\vz}\right>\hat{\vz}+\vw_k^0+\frac{\nu}{\eps}\int_0^t\left[\vg_k(s)-\left<\vg_k(s), \hat{\vz}\right>\hat{\vz}\right]\D s.
%%%%%%%%%%%%%%%%%%%%%%%%
%%%%%%%%%%%%%%%%%%%%%%%%
%%%%%%%%%%%%%%%%%%%%%%%%
\end{align*}
%%%%%%%%%%%%%%%%%%%%%%%%%%%%%%%%%%%%%%%%%%%%%%%%%%%%%%%%%%%%  
\begin{definition}[Neuron $\infty$-energy, $a$-lag regime]
In real dynamics, we define the  $\infty$-energy at time $t$ for each single neuron, i.e., for each $k\in[m]$, 
%%%%%%%%%%%%%%%%%%%%%%%%%%%%%%%%%%%%%%%%%%%%%%%%%%%%%%%%%%%%
\begin{equation}
p_k(t):=\max_{k\in[m]}\left\{\Abs{a_k(t)},  \Norm{\vw_k(t)}_{\infty}\right\}.
\end{equation}
%%%%%%%%%%%%%%%%%%%%%%%%%%%%%%%%%%%%%%%%%%%%%%%%%%%%%%%%%%%%
\end{definition}
%%%%%%%%%%%%%%%%%%%%%%%%%%%%%%%%%%%%%%%%%%%%%%%%%%%%%%%%%%%%
We denote  
%%%%%%%%%%%%%%%%%%%%%%%%%%%%%%%%%%%%%%%%%%%%%%%%%%%%%%%%%%%%
\begin{equation}
p_{\max}(t):=\max_{k\in[m]} p_k(t).
\end{equation}
%%%%%%%%%%%%%%%%%%%%%%%%%%%%%%%%%%%%%%%%%%%%%%%%%%%%%%%%%%%%
%%%%%%%%%%%%%%%%%%%%%%%%%%%%%%%%%%%%%%%%%%%%%%%%%%%%%%%%%%%%
For simplicity, we hereafter drop the $(t)$s for all $p_k(t)$ and $p_{\max}(t)$.  
Then the estimates on  $\{f_k, \vg_k\}_{k=1}^m$ read
%%%%%%%%%%%%%%%%%%%%%%%%%%%%%%%%%%%%%%%%%%%%%%%%%%%%%%%%%%%%
%%%%%%%%%%%%%%%%%%%%%%%%%%%%%%%%%%%%%%%%%%%%%%%%%%%%%%%%%%%%
\begin{proposition}\label{append...prop...CellProblemEstimates...A-lag}
For  any time $t>0$,
%%%%%%%%%%%%%%%%%%%%%%%%%%%%%%%%%%%%%%%%%%%%%%%%%%%%%%%%%%%%
\begin{equation}
\begin{aligned}
\Abs{f_k}&\leq dm\nu\eps p_{\max}^2\Norm{{\vw}_k}_{\infty}+\eps d \Norm{{\vw}_k}_{\infty}^2,\\
%%%%%%%%%%%%%%%%%%%%%%%%%%%%%%
\Norm{\vg_k}_{\infty}&\leq \sqrt{d}m\nu\eps p_{\max}^2\Abs{a_k}+\eps \sqrt{d}\Norm{{\vw}_k}_{\infty}\Abs{a_k}.
\end{aligned}    
\end{equation}
%%%%%%%%%%%%%%%%%%%%%%%%%%%%%%%%%%%%%%%%%%%%%%%%%%%%%%%%%%%%
\end{proposition}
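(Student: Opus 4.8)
The plan is to mirror the argument used for \Cref{append...prop...CellProblemEstimates...Wlag}, replacing Euclidean norms and the $2$-energy $q_k$ by sup-norms and the $\infty$-energy $p_k$, and keeping track of the dimensional factors that arise when passing between $\|\cdot\|_1$, $\|\cdot\|_2$ and $\|\cdot\|_{\infty}$ on $\sR^d$. First I would use the residual identity: since $f_{\vtheta}(\vx_i)=\sum_{k'=1}^m\nu a_{k'}\sigma(\eps\vw_{k'}^\T\vx_i)$, we have $e_i+y_i=\sum_{k'=1}^m\nu a_{k'}\sigma(\eps\vw_{k'}^\T\vx_i)$. Substituting this into the definitions of $f_k$ and $\vg_k$ from \Cref{append...subsec...Real+LinearDynamics} and adding and subtracting $y_i\tfrac{\sigma(\eps\vw_k^\T\vx_i)}{\eps}$ (respectively $y_i a_k\sigma^{(1)}(\eps\vw_k^\T\vx_i)\vx_i$) splits each of $f_k$ and $\vg_k$ into an \emph{interaction term} controlled by $e_i+y_i$ and a \emph{linearization-remainder term} controlled by the gap between $\sigma$ (resp.\ $\sigma^{(1)}$) and its first-order Taylor polynomial at $0$, exactly as in the $\vw$-lag case.

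For the interaction term I would use $|\sigma(\eps u)|\le\eps|u|$, which holds by $\sigma(0)=0$ together with $\|\sigma^{(1)}\|_{\infty}\le C_L$ (the constant $C_L$ absorbed as elsewhere in the paper), and then $|\vw_{k'}^\T\vx_i|\le\|\vw_{k'}\|_{\infty}\|\vx_i\|_1\le\sqrt{d}\,\|\vw_{k'}\|_{\infty}$ since $\|\vx_i\|_2$ is of order one by \Cref{assumption...GenericData}. This yields $\nu\eps\,|a_{k'}|\,|\sigma(\eps\vw_{k'}^\T\vx_i)|\le\sqrt{d}\,\nu\eps\,|a_{k'}|\|\vw_{k'}\|_{\infty}\le\sqrt{d}\,\nu\eps\,p_{k'}^2\le\sqrt{d}\,\nu\eps\,p_{\max}^2$ directly from the definition of $p_{k'}$ (no AM--GM needed, unlike the $q_k$ case); summing over the $m$ hidden neurons produces the factor $m$. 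One additional sup-to-Euclidean conversion on the remaining outer factor — a second $|\vw_k^\T\vx_i|\le\sqrt d\,\|\vw_k\|_{\infty}$ in the $f_k$ bound, versus only $\|\vx_i\|_{\infty}=\fO(1)$ in the $\vg_k$ bound — accounts for the full $d$ in the $f_k$ estimate and the single $\sqrt d$ in the $\vg_k$ estimate.

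For the linearization-remainder term I would invoke Taylor's theorem with integral remainder, using $\sigma(0)=0$, $\sigma^{(1)}(0)=1$ and $\|\sigma^{(2)}\|_{\infty}\le C_L$ from \Cref{Assumption....ActivationFunctions}: $\big|\tfrac{\sigma(\eps u)}{\eps}-u\big|\le\tfrac{C_L}{2}\eps u^2$ and $|1-\sigma^{(1)}(\eps u)|\le C_L\eps|u|$. Substituting $u=\vw_k^\T\vx_i$ and bounding $|\vw_k^\T\vx_i|\le\sqrt d\,\|\vw_k\|_{\infty}$ as above gives the $\eps d\|\vw_k\|_{\infty}^2$ contribution in the $f_k$ bound and the $\eps\sqrt d\,\|\vw_k\|_{\infty}|a_k|$ contribution in the $\vg_k$ bound. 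Collecting the two parts and using $|y_i|\le c=\fO(1)$ finishes the estimate.

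There is no genuine obstacle here: this is an \emph{a priori} bound of precisely the same shape as \Cref{append...prop...CellProblemEstimates...Wlag}, and the only point needing care is the norm bookkeeping, namely that each occurrence of an inner product $\vw_{k'}^\T\vx_i$ costs a $\sqrt d$ when rewritten through $\|\vw_{k'}\|_{\infty}$, so that $f_k$ (two such occurrences) picks up $d$ while $\vg_k$ (one occurrence, the other slot carrying $\|\vx_i\|_{\infty}=\fO(1)$) picks up only $\sqrt d$, relative to the $\vw$-lag statement. I would also make sure the constants $C_L$ and $c$ are absorbed consistently, just as in the proof of the companion proposition.
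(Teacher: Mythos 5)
Your proposal is correct and follows essentially the same route as the paper's own proof: the same split of $f_k$ and $\vg_k$ into the interaction term via $e_i+y_i=\sum_{k'}\nu a_{k'}\sigma(\eps\vw_{k'}^\T\vx_i)$ and the linearization remainder, the same bounds $|\sigma(\eps u)|\leq \eps|u|$ and the Taylor estimates on $\sigma$, $\sigma^{(1)}$ at $0$, and the same sup-norm bookkeeping producing $d$ in the $f_k$ bound and $\sqrt{d}$ in the $\vg_k$ bound. Your observation that $|a_{k'}|\Norm{\vw_{k'}}_{\infty}\leq p_{k'}^2\leq p_{\max}^2$ needs no AM--GM step (unlike the $q_k$ case) matches what the paper does.
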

%%%%%%%%%%%%%%%%%%%%%%%%%%%%%%%%%%%%%%%%%%%%%%%%%%%%%%%%%%%%
%%%%%%%%%%%%%%%%%%%%%%%%%%%%%%%%%%%%%%%%%%%%%%%%%%%%%%%%%%%%
%%%%%%%%%%%%%%%%%%%%%%%%%%%%%%%%%%%%%%%%%%%%%%%%%%%%%%%%%%%%
\begin{proof}
We obtain that 
%%%%%%%%%%%%%%%%%%%%%%%%%%%%%%%%%%%%%%%%%%%%%%%%%%%%%%%%%%%%
\begin{align*}
%%%%%%%%%%%%%%%%%%%%%%%%%%%%%%%%%%%%%%%%%%%%%%%%%%%%%%%%%%%%
\Abs{f_k}&=\Abs{\frac{1}{n}\sum_{i=1}^n \left((e_i+y_i)\frac{\sigma(\eps{\vw}_k^\T\vx_i)}{\eps}+y_i   {\vw}_k^\T\vx_i-y_i\frac{\sigma(\eps{\vw}_k^\T\vx_i)}{\eps}\right)}\\
%%%%%%%%%%%%%%%%%%%%%%%
&=\Abs{\frac{1}{n}\sum_{i=1}^n \left(\left(\sum_{k=1}^m \nu a_k \sigma(\eps{\vw}_k^\T\vx_i)\right)\frac{\sigma(\eps{\vw}_k^\T\vx_i)}{\eps}+y_i   {\vw}_k^\T\vx_i-y_i\frac{\sigma(\eps{\vw}_k^\T\vx_i)}{\eps}\right)}\\
%%%%%%%%%%%%%%%%%%%%%%%%
&\leq \frac{1}{n}\sum_{i=1}^n \left(\left(\sum_{k=1}^m \nu\eps \Abs{a_k} \Norm{{\vw}_k}_2 \right)\Norm{{\vw}_k}_2+  \eps ({\vw}_k^\T\vx_i)^2 \right)\\
%%%%%%%%%%%%%%%%%%%%%%%
&\leq\frac{1}{n}\sum_{i=1}^n \left(\left(\sum_{k=1}^m \nu\eps \Abs{a_k}\sqrt{d}\Norm{{\vw}_k}_{\infty} \right)\sqrt{d}\Norm{{\vw}_k}_{\infty}+  \eps d\Norm{{\vw}_k}_{\infty}^2\right)  \\
%%%%%%%%%%%%%%%%%%%%%%%
& \leq dm\nu\eps r_{\max}^2\Norm{{\vw}_k}_{\infty}+\eps d \Norm{{\vw}_k}_{\infty}^2,
%%%%%%%%%%%%%%%%%%%%%%%%%%%%%%%%%%%%%%%%%%%%%%%%%%%%%%%%%%%%
%%%%%%%%%%%%%%%%%%%%%%%%%%%%%%%%%%%%%%%%%%%%%%%%%%%%%%%%%%%%
%%%%%%%%%%%%%%%%%%%%%%%%%%%%%%%%%%%%%%%%%%%%%%%%%%%%%%%%%%%%
%%%%%%%%%%%%%%%%%%%%%%%%%%%%%%%%%%%%%%%%%%%%%%%%%%%%%%%%%%%% 
\end{align*}
%%%%%%%%%%%%%%%%%%%%%%%%%%%%%%%%%%%%%%%%%%%%%%%%%%%%%%%%%%%%
and
%%%%%%%%%%%%%%%%%%%%%%%%%%%%%%%%%%%%%%%%%%%%%%%%%%%%%%%%%%%%
%%%%%%%%%%%%%%%%%%%%%%%%%%%%%%%%%%%%%%%%%%%%%%%%%%%%%%%%%%%%
\begin{align*}
%%%%%%%%%%%%%%%%%%%%%%%%%%%%%%%%%%%%%%%%%%%%%%%%%%%%%%%%%%%%
\Norm{\vg_k}_{\infty}&=\Norm{\frac{1}{n}\sum_{i=1}^n \left((e_i+y_i){a}_k\sigma^{(1)}(\eps {\vw}_k^\T\vx_i)\vx_i+y_i   {a}_k\vx_i-y_i{a}_k\sigma^{(1)}(\eps {\vw}_k^\T\vx_i)\vx_i\right)}_{\infty}\\
%%%%%%%%%%%%%%%%%%%%
&=\Norm{\frac{1}{n}\sum_{i=1}^n \left(\left(\sum_{k=1}^m \nu a_k \sigma(\eps{\vw}_k^\T\vx_i)\right){a}_k\sigma^{(1)}(\eps {\vw}_k^\T\vx_i)\vx_i+y_i   {a}_k\vx_i-y_i{a}_k\sigma^{(1)}(\eps {\vw}_k^\T\vx_i)\vx_i\right)}_{\infty}\\
%%%%%%%%%%%%%%%%%%%%
&\leq \frac{1}{n}\sum_{i=1}^n \left(\left(\sum_{k=1}^m \nu\eps \Abs{a_k} \Norm{{\vw}_k}_2 \right)\Abs{a_k}+  \eps \Norm{{\vw}_k}_2\Abs{a_k} \right)\\
%%%%%%%%%%%%%%%%%%%%
&\leq \frac{1}{n}\sum_{i=1}^n \left( \left(\sum_{k=1}^m \nu\eps \Abs{a_k}\sqrt{d}\Norm{{\vw}_k}_{\infty} \right)\Abs{a_k}+  \eps\sqrt{d}\Norm{{\vw}_k}_{\infty}\Abs{a_k}\right)\\
%%%%%%%%%%%%%%%%%%%%%%%
 &\leq \sqrt{d}m\nu\eps r_{\max}^2\Abs{a_k}+\eps \sqrt{d}\Norm{{\vw}_k}_{\infty}\Abs{a_k}.
%%%%%%%%%%%%%%%%%%%%%%%%%%%%%%%%%%%%%%%%%%%%%%%%%%%%%%%%%%%%
%%%%%%%%%%%%%%%%%%%%%%%%%%%%%%%%%%%%%%%%%%%%%%%%%%%%%%%%%%%%
%%%%%%%%%%%%%%%%%%%%%%%%%%%%%%%%%%%%%%%%%%%%%%%%%%%%%%%%%%%%
%%%%%%%%%%%%%%%%%%%%%%%%%%%%%%%%%%%%%%%%%%%%%%%%%%%%%%%%%%%% 
\end{align*}
%%%%%%%%%%%%%%%%%%%%%%%%%%%%%%%%%%%%%%%%%%%%%%%%%%%%%%%%%%%%
%%%%%%%%%%%%%%%%%%%%%%%%%%%%%%%%%%%%%%%%%%%%%%%%%%%%%%%%%%%%
%%%%%%%%%%%%%%%%%%%%%%%%%%%%%%%%%%%%%%%%%%%%%%%%%%%%%%%%%%%% 
\end{proof}
%%%%%%%%%%%%%%%%%%%%%%%%%%%%%%%%%%%%%%%%%%%%%%%%%%%%%%%%%%%%
%%%%%%%%%%%%%%%%%%%%%%%%%%%%%%%%%%%%%%%%%%%%%%%%%%%%%%%%%%%%
We denote a useful quantity
%%%%%%%%%%%%%%%%%%%%%%%%%%%%%%%%%%%%%%%%%%%%%%%%%%%%%%%%%%%%
\begin{equation} 
\psi(t):= \sup_{0\leq s\leq t} {p}_{\max}(s).
\end{equation}
%%%%%%%%%%%%%%%%%%%%%%%%%%%%%%%%%%%%%%%%%%%%%%%%%%%%%%%%%%%%
Then directly from \Cref{append...lemma..Initialization}, we have with probability at least $1-\delta$ over the choice of $\vtheta^0$,
%%%%%%%%%%%%%%%%%%%%%%%%%%%%%%%%%%%%%%%%%%%%%%%%%%%%%%%%%%%%
\begin{equation}
\max\limits_{k\in[m]}\left\{\abs{a_k^0},\;\norm{\vw_k^0}_{\infty}\right \}\leq \sqrt{2\log\frac{2m(d+1)}{\delta}},
\end{equation}
%%%%%%%%%%%%%%%%%%%%%%%%%%%%%%%%%%%%%%%%%%%%%%%%%%%%%%%%%%%%
hence
%%%%%%%%%%%%%%%%%%%%%%%%%%%%%%%%%%%%%%%%%%%%%%%%%%%%%%%%%%%%
\begin{equation} \label{apend...eq...text...PSI}
 \psi(0)\leq\sqrt{2(d+1)\log\frac{2m(d+1)}{\delta}},
\end{equation}
%%%%%%%%%%%%%%%%%%%%%%%%%%%%%%%%%%%%%%%%%%%%%%%%%%%%%%%%%%%%
and  for all $k\in[m]$ and any time $t>0$,
%%%%%%%%%%%%%%%%%%%%%%%%%%%%%%%%%%%%%%%%%%%%%%%%%%%%%%%%%%%%
%%%%%%%%%%%%%%%%%%%%%%%%%%%%%%%%%%%%%%%%%%%%%%%%%%%%%%%%%%%%
\begin{align*}
\Abs{f_k}&\leq dm\nu\eps \psi^3(t)+\eps d \psi^2(t),\\
%%%%%%%%%%%%%%%%%%%%%%%%%%%%%%
\Norm{\vg_k}_{\infty}&\leq \sqrt{d}m\nu\eps \psi^3(t)+\eps \sqrt{d}\psi^2(t).    
\end{align*}
%%%%%%%%%%%%%%%%%%%%%%%%%%%%%%%%%%%%%%%%%%%%%%%%%%%%%%%%%%%%  

We are hereby to conduct some simple calculations. Let $\alpha>0$, and there exists $t_{\alpha}>0$, such that 
%%%%%%%%%%%%%%%%%%%%%%%%%%%%%%%%%%%%%%%%%%%%%%%%%%%%%%%%%%%%
\[
r^2(t_{\alpha})-r^{-2}(t_{\alpha})=2m^{-\alpha},
\]
%%%%%%%%%%%%%%%%%%%%%%%%%%%%%%%%%%%%%%%%%%%%%%%%%%%%%%%%%%%%
then we obtain immediately that 
%%%%%%%%%%%%%%%%%%%%%%%%%%%%%%%%%%%%%%%%%%%%%%%%%%%%%%%%%%%%  
\begin{align*}
r^2(t_{\alpha})&=\exp(\Norm{\vz}_2 t_{\alpha}) =\sqrt{1+m^{-\alpha}}+   m^{-\alpha},
\end{align*}
%%%%%%%%%%%%%%%%%%%%%%%%%%%%%%%%%%%%%%%%%%%%%%%%%%%%%%%%%%%%  
and 
\[
  t_{\alpha}\lesssim \log(\sqrt{1+   m^{-\alpha}}+m^{-\alpha})\sim \frac{3}{2}m^{-\alpha}\sim\fO(1),
\]
with 
\[
  t_{\alpha}\gtrsim \log(\sqrt{1+   m^{-\alpha}}+m^{-\alpha})\sim \frac{3}{2}m^{-\alpha}\sim\Omega(1).
\]
Moreover,
%%%%%%%%%%%%%%%%%%%%%%%%%%%%%%%%%%%%%%%%%%%%%%%%%%%%%%%%%%%% 
\begin{align*}
%%%%%%%%%%%%%%%%%%%%%%%%%%
\frac{1}{ {2}}r^2(t_{\alpha})+\frac{1}{ {2}}r^{-2}(t_{\alpha})&=\sqrt{1+m^{-\alpha}},\\
%%%%%%%%%%%%%%%%%%%%%%%%%%
\frac{1}{ {2}}r^2(t_{\alpha})-\frac{1}{ {2}}r^{-2}(t_{\alpha})&=m^{-\alpha},\\
%%%%%%%%%%%%%%%%%%%%%%%%%%
\int_{0}^{t_{\alpha}} \frac{1}{ {2}}r^2(t_{\alpha}-s)+\frac{1}{ {2}}r^{-2}(t_{\alpha}-s)\D s&=\frac{m^{-\alpha}}{\Norm{\vz}_2},\\
%%%%%%%%%%%%%%%%%%%%%%%%%%
\int_{0}^{t_{\alpha}} \frac{1}{ {2}}r^2(t_{\alpha}-s)-\frac{1}{ {2}}r^{-2}(t_{\alpha}-s)\D s&=\frac{\sqrt{1+m^{-\alpha}}-1}{\Norm{\vz}_2},
%%%%%%%%%%%%%%%%%%%%%%%%%%
%%%%%%%%%%%%%%%%%%%%%%%%%%
\end{align*}
%%%%%%%%%%%%%%%%%%%%%%%%%%%%%%%%%%%%%%%%%%%%%%%%%%%%%%%%%%%% 
hence 
%%%%%%%%%%%%%%%%%%%%%%%%%%%%%%%%%%%%%%%%%%%%%%%%%%%%%%%%%%%%
\begin{equation}\label{both}
\begin{aligned}
%%%%%%%%%%%%%%%%%%%%%%%%%%
\Abs{a_k}&\leq \psi(0)+m^{-\alpha}\left(m\nu\eps \psi^3(t_{\alpha})+\eps\psi^2(t_{\alpha})\right),\\
%%%%%%%%%%%%%%%%%%%%%%%%%%
\Norm{\vw_k}_{\infty}&\leq \frac{\nu}{\eps}m^{-\alpha}\psi(0)+\frac{\nu}{\eps}m^{-\alpha}\left(m\nu\eps \psi^3(t_{\alpha})+\eps\psi^2(t_{\alpha})\right),
%%%%%%%%%%%%%%%%%%%%%%%%%%
%%%%%%%%%%%%%%%%%%%%%%%%%%
%%%%%%%%%%%%%%%%%%%%%%%%%%
%%%%%%%%%%%%%%%%%%%%%%%%%%
%%%%%%%%%%%%%%%%%%%%%%%%%%
%%%%%%%%%%%%%%%%%%%%%%%%%%
\end{aligned}
\end{equation}
%%%%%%%%%%%%%%%%%%%%%%%%%%%%%%%%%%%%%%%%%%%%%%%%%%%%%%%%%%%  
For some $\beta>1$,  we define
%%%%%%%%%%%%%%%%%%%%%%%%%%%%%%%%%%%%%%%%%%%%%%%%%%%%%%%%%%%%
\begin{equation}
\widetilde{T}_{\mathrm{eff}, \beta}: = \inf\left\{t>0 \mid m\nu\eps\psi^3(t)+\eps\psi^2(t)>m^{-\widetilde{\tau}},\quad \widetilde{\tau}=-\frac{\gamma'}{\beta} \right\},
\end{equation}
%%%%%%%%%%%%%%%%%%%%%%%%%%%%%%%%%%%%%%%%%%%%%%%%%%%%%%%%%%%%
then for $m$ large enough, as $\gamma >1$, based on \eqref{apend...eq...text...PSI}, we choose $\beta$ satisfying \[\min\left\{\gamma-1, \frac{1}{2}\right\}> -\frac{\gamma'}{\beta},\]
%%%%%%%%%%%%%%%%%%%%%%%%%%%%%%%%%%%%%%%%%%%%%%%%%%%%%%%%%%%%
then, we obtain that
\begin{equation*}
m\nu\eps\psi^3(0)+\eps\psi^2(0)\leq  m\nu\eps\left(2(d+1)\log\frac{2m(d+1)}{\delta}\right)^{\frac{3}{2}}+m^{-\frac{1}{2}}\left(2(d+1)\log\frac{2m(d+1)}{\delta}\right) \leq m^{-\frac{\gamma-1}{2\beta}},
\end{equation*}
%%%%%%%%%%%%%%%%%%%%%%%%%%%%%%%%%%%%%%%%%%%%%%%%%%%%%%%%%%%%
hence $\widetilde{T}_{\mathrm{eff}, \beta}\geq 0$.
%%%%%%%%%%%%%%%%%%%%%%%%%%%%%%%%%%%%%%%%%%%%%%%%%%%%%%%%%%%%

We observe further that  by taking the $\infty$-norm on both sides of \eqref{both}, and by taking supreme over the  index $k$ and  time $0\leq t\leq \min\left\{t_{\alpha}, \widetilde{T}_{\mathrm{eff}, \beta}\right\}$   on both sides,  and  for  large enough $m$,   the following holds 
%%%%%%%%%%%%%%%%%%%%%%%%%%%%%%%%%%%%%%%%%%%%%%%%%%%%%%%%%%%%
\begin{align*}
%%%%%%%%%%%%%%%%%%%%%%%%%%
\psi(t)\leq \frac{\nu}{\eps} m^{-\alpha}\psi(0) +\frac{\nu}{\eps} m^{-\alpha}m^{-\frac{\gamma'}{\beta}}\leq   m^{-\gamma'-\alpha}\sqrt{2(d+1)\log\frac{2m(d+1)}{\delta}},
%%%%%%%%%%%%%%%%%%%%%%%%%%
%%%%%%%%%%%%%%%%%%%%%%%%%%
%%%%%%%%%%%%%%%%%%%%%%%%%%
%%%%%%%%%%%%%%%%%%%%%%%%%%
%%%%%%%%%%%%%%%%%%%%%%%%%%    
\end{align*}
%%%%%%%%%%%%%%%%%%%%%%%%%%%%%%%%%%%%%%%%%%%%%%%%%%%%%%%%%%%  
now we shall choose $\alpha>0$ and $\beta>1$, such that 
%%%%%%%%%%%%%%%%%%%%%%%%%%%%%%%%%%%%%%%%%%%%%%%%%%%%%%%%%%%%
\[ t_{\alpha}\leq  \widetilde{T}_{\mathrm{eff}, \beta},\]
%%%%%%%%%%%%%%%%%%%%%%%%%%%%%%%%%%%%%%%%%%%%%%%%%%%%%%%%%%%%
which is equivalent to solve out the relation
%%%%%%%%%%%%%%%%%%%%%%%%%%%%%%%%%%%%%%%%%%%%%%%%%%%%%%%%%%%%  
\[
m^{1-\gamma}m^{-3\gamma'-3\alpha}+m^{-\frac{1}{2}}m^{-2\gamma'-2\alpha}\leq m^{\frac{\gamma'}{\beta}}.
\]
%%%%%%%%%%%%%%%%%%%%%%%%%%%%%%%%%%%%%%%%%%%%%%%%%%%%%%%%%%%
Then, we choose $\alpha$ satisfying
%%%%%%%%%%%%%%%%%%%%%%%%%%%%%%%%%%%%%%%%%%%%%%%%%%%%%%%%%%%%
\[
0<\gamma'-\alpha\leq \min\left\{ \frac{1}{3}\left(\gamma-1+\frac{\gamma'}{\beta}\right), \frac{1}{2}\left(\frac{1}{2}+\frac{\gamma'}{\beta}\right)\right\},
\]
%%%%%%%%%%%%%%%%%%%%%%%%%%%%%%%%%%%%%%%%%%%%%%%%%%%%%%%%%%%%  
the existence of $\alpha$ can be guaranteed since $\gamma'>0,$ and\[\min\left\{ \frac{1}{3}\left(\gamma-1+\frac{\gamma'}{\beta}\right), \frac{1}{2}\left(\frac{1}{2}+\frac{\gamma'}{\beta}\right)\right\}\geq \frac{\gamma'}{6\beta}>0.\]
We observe that  
%%%%%%%%%%%%%%%%%%%%%%%%%%%%%%%%%%%%%%%%%%%%%%%%%%%%%%%%%%%%
\begin{equation}
\widetilde{T}_{\mathrm{eff}, \beta}\geq      t_{\alpha},\quad \widetilde{T}_{\mathrm{eff}, \beta}\sim \Omega(1).
\end{equation}
%%%%%%%%%%%%%%%%%%%%%%%%%%%%%%%%%%%%%%%%%%%%%%%%%%%%%%%%%%%%  
WLOG, we choose $\Bar{\beta}:=\max\left\{-1024\frac{\gamma'}{\gamma-1}, -512 \gamma'\right\}$, and  $\Bar{\alpha}$ accordingly, and finally we have
%%%%%%%%%%%%%%%%%%%%%%%%%%%%%%%%%%%%%%%%%%%%%%%%%%%%%%%%%%%%
\begin{theorem}[Condensed regime, $a$-lag regime]\label{append...thm..CondensedRegime...a-Einent}
Given any $\delta\in(0,1)$, under \Cref{Assumption....ActivationFunctions}, \Cref{assumption...GenericData} and \Cref{assump...LimitExistence}, if $\gamma> 1$ and $ \gamma'< 0$, 
then	with probability at least $1-\delta$ over the choice of $\vtheta^0$,
we have  
%%%%%%%%%%%%%%%%%%%%%%%%%%%%%%%%%%%%%%%%%%%%%%%%%%%%%%%%%%%% 
\begin{equation}\label{append...eq...thm...Condense...ALag...PartOne}
\lim_{m\to+\infty} \sup\limits_{t\in[0, \widetilde{T}_{\mathrm{eff}, \Bar{\beta}}]}\mathrm{RD}(\vtheta_{\vw}(t))=+\infty,
\end{equation}
%%%%%%%%%%%%%%%%%%%%%%%%%%%%%%%%%%%%%%%%%%%%%%%%%%%%%%%%%%%%
and
%%%%%%%%%%%%%%%%%%%%%%%%%%%%%%%%%%%%%%%%%%%%%%%%%%%%%%%%%%%% 
\begin{equation}\label{append...eq...thm...Condense...ALag...PartTwo}
\lim_{m\to+\infty} \sup\limits_{t\in[0,\widetilde{T}_{\mathrm{eff}, \Bar{\beta}}]}\frac{\Norm{\vtheta_{\vw, \vz}(t) }_2}{\Norm{\vtheta_{\vw}(t)}_2} =1.
\end{equation}
%%%%%%%%%%%%%%%%%%%%%%%%%%%%%%%%%%%%%%%%%%%%%%%%%%%%%%%%%%%%
\end{theorem}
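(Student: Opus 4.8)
The plan is to follow the same architecture as the proof of \Cref{append...thm..CondensedRegime...W-Einent...PartOne}, with the $2$-energy bookkeeping there replaced by the $\infty$-energy $\psi$ appropriate to this regime, and to pin down a single instant $t_{\Bar\alpha}$ inside the good interval $[0,\widetilde T_{\mathrm{eff},\Bar\beta}]$ at which the weights have already moved macroscopically and have aligned with $\hat\vz$. First I would fix $\Bar\alpha>0$ and $\Bar\beta>1$ as in the construction preceding the statement, so that in particular $t_{\Bar\alpha}\le\widetilde T_{\mathrm{eff},\Bar\beta}$, $\widetilde T_{\mathrm{eff},\Bar\beta}\sim\Omega(1)$, and $\tfrac{\nu}{\eps}m^{-\Bar\alpha}=m^{-\gamma'-\Bar\alpha}\to\infty$ (using $\gamma'<0$). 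Then I would record the variation-of-constants representation \eqref{append...eq...text...Condense...Difference...RealDynamicsSolution}, the splitting $\vtheta_{\vw}=\Bar\vtheta_{\vw}+\widetilde\vtheta_{\vw}$ into linear (power-iteration) part and residual, and the orthogonal decomposition $\vtheta_{\vw}=\vtheta_{\vw,\vz}+\vtheta_{\vw,\vz^{\perp}}$ applied to each of $\vtheta_{\vw}$, $\Bar\vtheta_{\vw}$, $\widetilde\vtheta_{\vw}$. The relevant bookkeeping facts are $\widetilde\vtheta_{\vw}(0)=\vzero$ (hence $\vtheta_{\vw}(0)=\Bar\vtheta_{\vw}(0)$), that $\Bar\vtheta_{\vw,\vz^{\perp}}$ is frozen in $t$, and $\Norm{\widetilde\vtheta_{\vw,\vz}(t)}_2,\Norm{\widetilde\vtheta_{\vw,\vz^{\perp}}(t)}_2\le\Norm{\widetilde\vtheta_{\vw}(t)}_2$.

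Next I would show the residual is negligible on $[0,\widetilde T_{\mathrm{eff},\Bar\beta}]$. There, by the definition of $\widetilde T_{\mathrm{eff},\Bar\beta}$, one has $m\nu\eps\psi^3(t)+\eps\psi^2(t)\le m^{-\widetilde\tau}$ with $\widetilde\tau=-\gamma'/\Bar\beta>0$, so \Cref{append...prop...CellProblemEstimates...A-lag} bounds $\Abs{f_k}$ and $\Norm{\vg_k}_{\infty}$ uniformly in $k$ by a quantity of order $m^{-\widetilde\tau}$. Plugging these into the integral term of \eqref{append...eq...text...Condense...Difference...RealDynamicsSolution}, using $\Norm{\exp((t-s)\mA)}_{2\to 2}\le\exp((t-s)\Norm{\vz}_2)$, that $t_{\Bar\alpha}$ is of order one, and the a priori bound \eqref{apend...eq...text...PSI} on $\psi(0)$, I would obtain on $[0,t_{\Bar\alpha}]$ the estimate $\Norm{\widetilde\vtheta_{\vw}(t)}_2\lesssim\sqrt m\,\tfrac{\nu}{\eps}m^{-\Bar\alpha}\,m^{-c}$ for some $c>0$ — i.e.\ of strictly smaller order than the linear part treated below. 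That $\psi$ itself obeys, on the closed interval, the a priori bound just used is secured by the standard continuity/maximality (bootstrap) argument, exactly as in \Cref{append...thm..CondensedRegime...W-Einent...PartOne}.

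Then I would treat the linear part. Since $\{a_k^0,\langle\vw_k^0,\hat\vz\rangle\}_{k\in[m]}$ are i.i.d.\ $\fN(0,1)$, four applications of \Cref{append...thm...BernsteinInequality} give, with probability at least $1-\delta$ over $\vtheta^0$, the bounds $\tfrac12\le\tfrac1m\sum_k(a_k^0)^2\le\tfrac32$, $\tfrac12\le\tfrac1m\sum_k\langle\vw_k^0,\hat\vz\rangle^2\le\tfrac32$, $\tfrac d2\le\tfrac1m\Norm{\vtheta_{\vw}^0}_2^2\le\tfrac{3d}2$, and $\Abs{\tfrac1m\sum_k a_k^0\langle\vw_k^0,\hat\vz\rangle}\le\tfrac14$. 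Evaluating the explicit solution of \Cref{append...prop...CondensedRegime...LinearODESolution} at $t=t_{\Bar\alpha}$, where $r^2(t_{\Bar\alpha})-r^{-2}(t_{\Bar\alpha})=2m^{-\Bar\alpha}$, the $\hat\vz$-component of $\Bar\vtheta_{\vw}$ acquires the term $\tfrac{\nu}{\eps}m^{-\Bar\alpha}a_k^0\hat\vz$, which dominates all other contributions because $\tfrac{\nu}{\eps}m^{-\Bar\alpha}\to\infty$, whence $\tfrac1m\Norm{\Bar\vtheta_{\vw,\vz}(t_{\Bar\alpha})-\Bar\vtheta_{\vw,\vz}(0)}_2^2\gtrsim(\nu/\eps)^2m^{-2\Bar\alpha}$ and likewise $\tfrac1m\Norm{\Bar\vtheta_{\vw,\vz}(t_{\Bar\alpha})}_2^2\gtrsim(\nu/\eps)^2m^{-2\Bar\alpha}$. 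Combining with $\Norm{\vtheta_{\vw}(0)}_2\sim\sqrt{md}$ and the residual estimate, a triangle-inequality split gives $\mathrm{RD}(\vtheta_{\vw}(t_{\Bar\alpha}))\gtrsim m^{-\gamma'-\Bar\alpha}\to+\infty$, which proves \eqref{append...eq...thm...Condense...ALag...PartOne} since $t_{\Bar\alpha}\le\widetilde T_{\mathrm{eff},\Bar\beta}$. For \eqref{append...eq...thm...Condense...ALag...PartTwo} I would write $\big(\Norm{\vtheta_{\vw,\vz}(t)}_2/\Norm{\vtheta_{\vw}(t)}_2\big)^2=\Norm{\vtheta_{\vw,\vz}(t)}_2^2/\big(\Norm{\vtheta_{\vw,\vz}(t)}_2^2+\Norm{\vtheta_{\vw,\vz^{\perp}}(t)}_2^2\big)$ and, at $t=t_{\Bar\alpha}$, bound the numerator below by $\Norm{\Bar\vtheta_{\vw,\vz}(t_{\Bar\alpha})}_2-\Norm{\widetilde\vtheta_{\vw}(t_{\Bar\alpha})}_2\gtrsim\sqrt m\,m^{-\gamma'-\Bar\alpha}$ and $\Norm{\vtheta_{\vw,\vz^{\perp}}(t_{\Bar\alpha})}_2$ above by $\Norm{\Bar\vtheta_{\vw,\vz^{\perp}}(0)}_2+\Norm{\widetilde\vtheta_{\vw}(t_{\Bar\alpha})}_2\lesssim\sqrt{md}$; since $m^{-\gamma'-\Bar\alpha}\to\infty$ the ratio at $t_{\Bar\alpha}$ is $1-o(1)$, and as it never exceeds $1$, the supremum over $[0,\widetilde T_{\mathrm{eff},\Bar\beta}]$ tends to $1$.

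The hard part will be the bookkeeping of exponents that makes all the pieces fit: one must choose $\Bar\alpha>0$ and $\Bar\beta>1$ so that simultaneously $\widetilde T_{\mathrm{eff},\Bar\beta}$ remains of order one, the blow-up time $t_{\Bar\alpha}$ lies inside $[0,\widetilde T_{\mathrm{eff},\Bar\beta}]$, and the residual estimate is of strictly smaller order than the linear growth $m^{-\gamma'-\Bar\alpha}$. These requirements become a chain of inequalities relating $\gamma$, $\gamma'$, $\Bar\alpha$, $\Bar\beta$ — precisely those worked out before the statement, culminating in an admissible choice such as $\Bar\beta=\max\{-1024\gamma'/(\gamma-1),-512\gamma'\}$ — and verifying that all of them can indeed be met under $\gamma>1$ and $\gamma'<0$ is the real content of the proof. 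Once that admissible window is secured, the two substantive steps are routine: the residual bound is a Gronwall-type application of the variation-of-constants formula together with \Cref{append...prop...CellProblemEstimates...A-lag}, and the growth/alignment estimate is a direct evaluation of the explicit linear solution combined with Gaussian concentration.
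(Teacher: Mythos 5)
Your proposal is correct and follows essentially the same route as the paper's own proof: the same pre-theorem construction of $t_{\Bar{\alpha}}$ and $\widetilde{T}_{\mathrm{eff},\Bar{\beta}}$, the same linear-plus-residual splitting of the variation-of-constants solution with the $\vz$/$\vz^{\perp}$ decomposition, the residual controlled via \Cref{append...prop...CellProblemEstimates...A-lag} inside the effective time, Bernstein concentration for the initial Gaussians, and evaluation at $t=t_{\Bar{\alpha}}$ where the dominant term $\frac{\nu}{\eps}m^{-\Bar{\alpha}}a_k^0\hat{\vz}$ drives both the blow-up of $\mathrm{RD}$ and the alignment ratio. The only nitpick is your final bound $\Norm{\vtheta_{\vw,\vz^{\perp}}(t_{\Bar{\alpha}})}_2\lesssim\sqrt{md}$, which by your own residual estimate should read $\lesssim\sqrt{md}+\sqrt{m}\,m^{-\gamma'-\Bar{\alpha}-c}$; since this is still of strictly smaller order than the numerator $\sqrt{m}\,m^{-\gamma'-\Bar{\alpha}}$, the conclusion is unaffected (the paper's own bound has the same extra term).
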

%%%%%%%%%%%%%%%%%%%%%%%%%%%%%%%%%%%%%%%%%%%%%%%
%%%%%%%%%%%%%%%%%%%%%%%%%%%%%%%%%%%%%%%%%%%%%%%%%%%%%%%%%%%%
As the details are almost the same as the ones in \Cref{append...thm..CondensedRegime...W-Einent...PartOne}, the proof of \Cref{append...thm..CondensedRegime...a-Einent} is written in a  slightly sketchy way.
%%%%%%%%%%%%%%%%%%%%%%%%%%%%%%%%%%%%%%%%%%%%%%%%%%%%%%%%%%%%
\begin{proof}
We observe that
%%%%%%%%%%%%%%%%%%%%%%%%%%%%%%%%%%%%%%%%%%%%%%%%%%%%%%%%%%%%
\begin{align*}
 &~~\left(\vtheta_{\vw}(t_{\Bar{\alpha}})-\vtheta_{\vw}(0)\right)_k \\
 %%%%%%%%%%%%%%%%%%%%%%%%%%%
 &=\frac{\nu}{\eps}\left(\frac{1}{ {2}}r^2(t_{\Bar{\alpha}})-\frac{1}{ {2}}r^{-2}(t_{\Bar{\alpha}})\right)a_k^0\hat{\vz} +\left(\frac{1}{ {2}}r^2(t_{\Bar{\alpha}})+\frac{1}{ {2}}r^{-2}(t_{\Bar{\alpha}})-1\right)\left<\vw_k^0, \hat{\vz}\right>\hat{\vz}\\
%%%%%%%%%%%%%%%%%%%%%%%%
&~~+\int_{0}^{t_{\Bar{\alpha}}}\left(\frac{1}{ {2}}r^2(t_{\Bar{\alpha}}-s)-\frac{1}{ {2}}r^{-2}(t_{\Bar{\alpha}}-s)\right)f_k(s)\D s\hat{\vz}\\
%%%%%%%%%%%%%%%%%%%%%%%%
&~~+\frac{\nu}{\eps}\int_{0}^{t_{\Bar{\alpha}}}\left(\frac{1}{ {2}}r^2(t_{\Bar{\alpha}}-s)+\frac{1}{ {2}}r^{-2}(t_{\Bar{\alpha}}-s)\right)\left<\vg_k(s), \hat{\vz}\right>\D s\hat{\vz}\\
%%%%%%%%%%%%%%%%%%%%%%%%%%
&~~+\frac{\nu}{\eps}\int_0^{t_{\Bar{\alpha}}}\left[\vg_k(s)-\left<\vg_k(s), \hat{\vz}\right>\hat{\vz}\right]\D s,
%%%%%%%%%%%%%%%%%%%%%%%%%%
%%%%%%%%%%%%%%%%%%%%%%%%%%
%%%%%%%%%%%%%%%%%%%%%%%%%%
\end{align*} 
%%%%%%%%%%%%%%%%%%%%%%%%%%%%%%%%%%%%%%%%%%%%%%%%%%%%%%%%%%%%
so we obtain that with probability at least $1-\delta$ over the choice of $\vtheta^0$ and   large enough $m$,  for any $0\leq t \leq  {t}_{\Bar{\alpha}}$, the following holds:
%%%%%%%%%%%%%%%%%%%%%%%%%%%%%%%%%%%%%%%%%%%%%%%%%%%%%%%%%%%%
%%%%%%%%%%%%%%%%%%%%%%%%%%%%%%%%%%%%%%%%%%%%%%%%%%%%%%%%%%%%  
\begin{align*}
&~~\frac{1}{m}\Norm{\vtheta_{\vw}(t)-\vtheta_{\vw}(0)}^2_2\\
%%%%%%%%%%%%%%%%%%%%%%%%%%
&\geq m^{-\gamma' -\Bar{\alpha}}\left(\frac{1}{2}-\sqrt{d}\max_{k\in[m]}\Norm{\vg_k}_{\infty}\right) -\left(\sqrt{1+m^{-\Bar{\alpha}}}-1\right)\left(\frac{3d}{2}+\sqrt{d}\max_{k\in[m]}\Abs{f_k}\right)\\
%%%%%%%%%%%%%%%%%%%%%%%%%%
&\geq m^{-\gamma' -\Bar{\alpha}}\left(\frac{1}{2}-\sqrt{d}m^{-\widetilde{\tau}}\right)-\frac{1}{2}m^{ -\Bar{\alpha}}\left(\frac{3d}{2}+\sqrt{d}m^{-\widetilde{\tau}}\right),
%%%%%%%%%%%%%%%%%%%%%%%%%%
%%%%%%%%%%%%%%%%%%%%%%%%%%
%%%%%%%%%%%%%%%%%%%%%%%%%%
\end{align*}
%%%%%%%%%%%%%%%%%%%%%%%%%%%%%%%%%%%%%%%%%%%%%%%%%%%%%%%%%%%%
hence the ratio
%%%%%%%%%%%%%%%%%%%%%%%%%%%%%%%%%%%%%%%%%%%%%%%%%%%%%%%%%%%%
\begin{align*}
&~~\left(\frac{\Norm{{\vtheta}_{\vw}(t)-{\vtheta}_{\vw}(0)}_2}{\Norm{{\vtheta}_{\vw}(0)}_2}\right)^2 =\frac{\frac{1}{m}\Norm{{\vtheta}_{\vw}(t)-{\vtheta}_{\vw}(0)}^2_2}{\frac{1}{m}\Norm{{\vtheta}_{\vw}(0)}_2^2}\\ 
%%%%%%%%%%%%%%%%%%%%%%%%%%
&\geq \frac{2}{3d}\left[{m^{-\gamma' -\Bar{\alpha}}\left(\frac{1}{2}-\sqrt{d}m^{-\widetilde{\tau}}\right)-\frac{1}{2}m^{ -\Bar{\alpha}}\left(\frac{3d}{2}+\sqrt{d}m^{-\widetilde{\tau}}\right)} \right],
%%%%%%%%%%%%%%%%%%%%%%%%%%
%%%%%%%%%%%%%%%%%%%%%%%%%%
\end{align*}
%%%%%%%%%%%%%%%%%%%%%%%%%%%%%%%%%%%%%%%%%%%%%%%%%%%%%%%%%%%%
by taking limit, we obtain that  for any $0\leq t \leq  {t}_{\Bar{\alpha}}$
%%%%%%%%%%%%%%%%%%%%%%%%%%%%%%%%%%%%%%%%%%%%%%%%%%%%%%%%%%%%
\[
\lim_{m\to\infty}\frac{\Norm{{\vtheta}_{\vw}(t)-{\vtheta}_{\vw}(0)}_2}{\Norm{{\vtheta}_{\vw}(0)}_2}=+\infty.
\]
%%%%%%%%%%%%%%%%%%%%%%%%%%%%%%%%%%%%%%%%%%%%%%%%%%%%%%%%%%%%
Moreover, since
%%%%%%%%%%%%%%%%%%%%%%%%%%%%%%%%%%%%%%%%%%%%%%%%%%%%%%%%%%%%
\begin{align*}
 \left(\vtheta_{\vw, \vz}(t_{\Bar{\alpha}})\right)_k &=\frac{\nu}{\eps}\left(\frac{1}{ {2}}r^2(t_{\Bar{\alpha}})-\frac{1}{ {2}}r^{-2}(t_{\Bar{\alpha}})\right)a_k^0 +\left(\frac{1}{ {2}}r^2(t_{\Bar{\alpha}})+\frac{1}{ {2}}r^{-2}(t_{\Bar{\alpha}})\right)\left<\vw_k^0, \hat{\vz}\right>\\
%%%%%%%%%%%%%%%%%%%%%%%%
&~~+\int_{0}^{t_{\Bar{\alpha}}}\left(\frac{1}{ {2}}r^2(t_{\Bar{\alpha}}-s)-\frac{1}{ {2}}r^{-2}(t_{\Bar{\alpha}}-s)\right)f_k(s)\D s \\
%%%%%%%%%%%%%%%%%%%%%%%%
&~~+\frac{\nu}{\eps}\int_{0}^{t_{\Bar{\alpha}}}\left(\frac{1}{ {2}}r^2(t_{\Bar{\alpha}}-s)+\frac{1}{ {2}}r^{-2}(t_{\Bar{\alpha}}-s)\right)\left<\vg_k(s), \hat{\vz}\right>\D s,\\
%%%%%%%%%%%%%%%%%%%%%%%%%%
\left(\vtheta_{\vw, \vz^{\perp}}(t_{\Bar{\alpha}})\right)_k &= \vw_k^0-\left<\vw_k^0, \hat{\vz}\right>\hat{\vz}+\frac{\nu}{\eps}\int_0^t\left[\vg_k(s)-\left<\vg_k(s), \hat{\vz}\right>\hat{\vz}\right]\D s,
%%%%%%%%%%%%%%%%%%%%%%%%%%
%%%%%%%%%%%%%%%%%%%%%%%%%%
\end{align*} 
%%%%%%%%%%%%%%%%%%%%%%%%%%%%%%%%%%%%%%%%%%%%%%%%%%%%%%%%%%%%
so we obtain that with probability at least $1-\delta$ over the choice of $\vtheta^0$ and   large enough $m$,  for any $0\leq t \leq  {t}_{\Bar{\alpha}}$, the following holds:
%%%%%%%%%%%%%%%%%%%%%%%%%%%%%%%%%%%%%%%%%%%%%%%%%%%%%%%%%%%%
\begin{align*}
% %%%%%%%%%%%%%%%%%%%%%%%%%%
\frac{1}{m}\Norm{\vtheta_{\vw, \vz}(t)}^2_2
% %%%%%%%%%%%%%%%%%%%%%%%%%%
&\geq m^{-\gamma' -\Bar{\alpha}}\left(\frac{1}{2}-\sqrt{d}\max_{k\in[m]}\Norm{\vg_k}_{\infty}\right) -\sqrt{1+m^{-\Bar{\alpha}}}\left(\frac{3d}{2}+\sqrt{d}\max_{k\in[m]}\Abs{f_k}\right)\\
% %%%%%%%%%%%%%%%%%%%%%%%%%%
&\geq m^{-\gamma' -\Bar{\alpha}}\left(\frac{1}{2}-\sqrt{d}m^{-\widetilde{\tau}}\right)-2\left(\frac{3d}{2}+\sqrt{d}m^{-\widetilde{\tau}}\right),\\
% %%%%%%%%%%%%%%%%%%%%%%%%%%
\frac{1}{m}\Norm{\vtheta_{\vw, \vz^{\perp}}(t)}^2_2&\leq \frac{3d}{2}+m^{-\gamma'-{\Bar{\alpha}}} \sqrt{d}\max_{k\in[m]}\Norm{\vg_k}_{\infty}\leq  \frac{3d}{2}+m^{-\gamma'-{\Bar{\alpha}}}\sqrt{d}m^{-\widetilde{\tau}}.
% %%%%%%%%%%%%%%%%%%%%%%%%%%
 \end{align*}
%%%%%%%%%%%%%%%%%%%%%%%%%%%%%%%%%%%%%%%%%%%%%%%%%%%%%%%%%%%%  
By taking $t={t}_{\Bar{\alpha}}$, we observe that  $\frac{1}{m}\Norm{{\vtheta}_{\vw, \vz}(t)}^2_2$ is of order at least $ m^{{-\gamma'-\Bar{\alpha}}}$, while $\frac{1}{m}\Norm{{\vtheta}_{\vw, \vz^{\perp}}(t)}^2_2$  is of order at most one, which finishes the proof. 
%%%%%%%%%%%%%%%%%%%%%%%%%%%%%%%%%%%%%%%%%%%%%%%%%%%%%%%%%%%%
%%%%%%%%%%%%%%%%%%%%%%%%%%%%%%%%%%%%%%%%%%%%%%%%%%%%%%%%%%%%  
%%%%%%%%%%%%%%%%%%%%%%%%%%%%%%%%%%%%%%%%%%%%%%%%%%%%%%%%%%%%
%%%%%%%%%%%%%%%%%%%%%%%%%%%%%%%%%%%%%%%%%%%%%%%%%%%%%%%%%%%%
%%%%%%%%%%%%%%%%%%%%%%%%%%%%%%%%%%%%%%%%%%%%%%%%%%%%%%%%%%%%
%%%%%%%%%%%%%%%%%%%%%%%%%%%%%%%%%%%%%%%%%%%%%%%%%%%%%%%%%%%%  
%%%%%%%%%%%%%%%%%%%%%%%%%%%%%%%%%%%%%%%%%%%%%%%%%%%%%%%%%%%%
%%%%%%%%%%%%%%%%%%%%%%%%%%%%%%%%%%%%%%%%%%%%%%%%%%%%%%%%%%%%
%%%%%%%%%%%%%%%%%%%%%%%%%%%%%%%%%%%%%%%%%%%%%%%%%%%%%%%%%%%%  
%%%%%%%%%%%%%%%%%%%%%%%%%%%%%%%%%%%%%%%%%%%%%%%%%%%%%%%%%%%%
%%%%%%%%%%%%%%%%%%%%%%%%%%%%%%%%%%%%%%%%%%%%%%%%%%%%%%%%%%%%
%%%%%%%%%%%%%%%%%%%%%%%%%%%%%%%%%%%%%%%%%%%%%%%%%%%%%%%%%%%%  
%%%%%%%%%%%%%%%%%%%%%%%%%%%%%%%%%%%%%%%%%%%%%%%%%%%%%%%%%%%%
%%%%%%%%%%%%%%%%%%%%%%%%%%%%%%%%%%%%%%%%%%%%%%%%%%%%%%%%%%%%
%%%%%%%%%%%%%%%%%%%%%%%%%%%%%%%%%%%%%%%%%%%%%%%%%%%%%%%%%%%%  
%%%%%%%%%%%%%%%%%%%%%%%%%%%%%%%%%%%%%%%%%%%%%%%%%%%%%%%%%%%%    
\end{proof}
%%%%%%%%%%%%%%%%%%%%%%%%%%%%%%%%%%%%%%%%%%%%%%%%%%%%%%%%%%%%  
\end{document}